\title{Boosting with the Logistic Loss is Consistent}
\def\1{\mathds 1}
\def\R{\mathbb R}
\def\Z{\mathbb Z}
\def\bbE{\mathbb E}
\def\bbL{\mathbb L}
\def\bfe{\mathbf e}
\def\cC{\mathcal C}
\def\cD{\mathcal D}
\def\cF{\mathcal F}
\def\cH{\mathcal H}
\def\cL{\mathcal L}
\def\cO{\mathcal O}
\def\cR{\mathcal R}
\def\cS{\mathcal S}
\def\cV{\mathcal V}
\def\cX{\mathcal X}
\def\cZ{\mathcal Z}
\def\SPAN{\textup{span}}
\def\nf{\nabla f}
\def\supp{\textup{supp}}
\def\dom{\textup{dom}}
\newcommand{\ip}[2]{\left\langle #1, #2 \right \rangle}
\newcommand{\argmin}{\operatornamewithlimits{arg\,min}}
\newcommand\Fvc{\cF_{\textup{vc}}}
\newcommand\Fds{\cF_{\textup{ds}}}
\newcommand\Fb{\cF_{\textup{b}}}
\newcommand\Lbds{\bbL_{\textup{lg}}}
\newcommand\Ltd{\bbL_{\textup{2d}}}
\newcommand\llog{\ell_{\textup{log}}}
\newcommand\lruss{\ell_{\textup{russ}}}
\newcommand\ttil{{\tilde \tau}}
\newcommand\hcL{\widehat \cL}
\newcommand\nhcL{\nabla \widehat\cL}
\newcommand\hmu{{\widehat \mu}}
\newcommand\hcR{\widehat \cR}
\newcommand{\proofref}[1]{$^{\hyperref[#1]{[\textup{pg.}\pageref{#1}]}}$}
\newenvironment{proofof}[1]{\begin{proof}\textbf{(of {#1})}}{\end{proof}}
\begin{document}

\maketitle

\begin{abstract}
    This manuscript provides optimization guarantees,
    generalization bounds,
    and statistical consistency results
    for AdaBoost variants which replace the exponential
    loss with the logistic and similar losses
    (specifically, twice differentiable
    convex losses which are Lipschitz and
    tend to zero on one side).

    The heart of the analysis is to show that, in lieu of explicit regularization and
    constraints, the structure of the problem is fairly rigidly controlled by the
    source distribution itself. 
    The first control of this type 
    is in the separable case, where a
    distribution-dependent relaxed weak learning
    rate induces speedy convergence with high probability over any sample.
    Otherwise, in the nonseparable case,
    the convex surrogate risk itself exhibits distribution-dependent levels of curvature,
    and consequently the algorithm's output has small norm with high
    probability.
\end{abstract}





\begin{keywords}
Boosting, additive logistic regression, coordinate descent, convex analysis.
\end{keywords}

\section{Introduction}
Boosting algorithms form accurate predictors by combining many simple ones.
These methods are practically effective \citep{caruana_empirical},
theoretically alluring \citep{schapire_wl},
and continue to be the topic of extensive research
\citep{schapire_freund_book_final}.

The most popular scheme, AdaBoost \citep{freund_schapire_adaboost},
was eventually revealed to be a coordinate descent method applied to
a convex empirical risk minimization problem \citep{breiman}.
Due to the lack of regularization and constraints, this optimization problem eschews the typical
structure which leads to a fast-converging, well-conditioned optimization problem: it
typically fails to have
minimizers (let alone possessing compact level sets or strong convexity),
and the simple predictors (weak learners)
can be linearly dependent, meaning the Hessian is singular.
Consequently, fairly customized convergence analyses must be
developed~\citep{freund_schapire_adaboost,collins_schapire_singer_adaboost_bregman,
mukherjee_rudin_schapire_adaboost_convergence_rate,
primal_dual_boosting_arxiv}, with data-dependent quantities dictating behavior.
This, however, can be a great boon: one such quantity, the
\emph{weak learning rate}---a measure of the compatibility of the weak learners to
the target function---allows for linear convergence in settings far removed from the
strong convexity typical of fast convergence in convex optimization.

Difficulties also arise on the statistical side: each round typically selects a
new hypothesis
from some VC class, and the method is consequently building hypotheses
in the linear span, which generally has infinite VC dimension,
and is thus statistically unstable \citep[Theorem 14.3]{DGL}.
It was therefore the topic of great research to establish consistency of AdaBoost
\citep{zhang_yu_boosting,jiang_boosting_consistency}, a question finally closed by
\citet{bartlett_traskin_adaboost}.

AdaBoost originally used the exponential loss, however much practical and theoretical research
has been devoted to the logistic loss
\citep{friedman_hastie_tibshirani_statboost,lafferty_logistic,
collins_schapire_singer_adaboost_bregman}, both due to intuitive appeal (e.g., less attention
to outliers), and statistical connections (e.g., consistency of maximum likelihood
\citep[Section 17]{ferguson_large_sample_theory}).
Even so, this choice has not been subjected
to the same intensive consistency study as the exponential loss, and as discussed by
\citet[Section 4]{bartlett_traskin_adaboost}, current analyses for the exponential loss
do not carry over.

\subsection{Outline}

The primary goal of this manuscript is to close the gap with the exponential loss;
namely, boosting with losses similar to the logistic loss is consistent under the
same assumptions as those is assumed for the exponential loss
\citep[Corollary 9]{bartlett_traskin_adaboost},
moreover with comparable rates
\citep[Theorem 12.2]{schapire_freund_book_final}.



The algorithm and related notation are detailed in \Cref{sec:notation}.
To fit practical regimes, both the selection of simple predictors
(also termed \emph{weak learners} and coordinates)
and step size may be approximate;
crucially, however, the analysis covers the case of unconstrained step sizes.
The usual early stopping threshold is employed: $m^a$ iterations are performed,
where $m$ is the sample size and $a\in(0,1)$ is a scalar parameter to the algorithm.
Lastly, rather than simply outputting the final predictor, the method returns the iterate
which achieved the smallest classification error.
While perhaps unnecessary, this choice leads to a pleasantly simple
convergence analysis in the separable case.


The general consistency result is presented in
\Cref{sec:general}, along with a sketch of the analysis.  As usual, the Borel-Cantelli Lemma
is used to convert finite sample guarantees into a consistency result; the finite
sample guarantees themselves are split into two cases:
a separable case in \Cref{sec:separable},
and a nonseparable case in \Cref{sec:nonseparable}.
In either case, 
when using the logistic loss,
the classification risk will decay roughly as
$m^{-c}$ for some $c<1$.


Proofs are only outlined in the body, with details deferred to the
appendices.

\subsection{Related Work}
On the general topic of AdaBoost, both the original papers
\citep{schapire_wl,yoav_boost_by_majority,freund_schapire_adaboost}
as well as the textbook by the original authors \citep{schapire_freund_book_final}
are indispensable.

Additive logistic regression was introduced by
\citet{friedman_hastie_tibshirani_statboost},
with extensive additional discussion appearing shortly
thereafter
\citep{friedman_gradient_boosting,lafferty_logistic,
mason_baxter_bartlett_frean_functionalgrad}.
The particular method studied in this manuscript, which is essentially
AdaBoost but with the exponential loss replaced by losses similar to the
logistic loss,
was shown to produce a sequence of empirical risks converging
to the infimum
by \citet{collins_schapire_singer_adaboost_bregman},
with (optimization) rates in the general case coming later \citep{primal_dual_boosting_arxiv},
and (optimization) rates in the margin case coming earlier \citep{duffy_helmbold_boost}.

The consistency of AdaBoost was first analyzed under various regularization strategies.
Most notably, the work of
\citet{blv_regularized_boosting} and \citet{bv_regularized_boosting_again}
studied the solutions of penalized estimators; the former work in particular achieving
excellent finite sample guarantees, with convex risk decaying roughly as $\cO(m^{-1/2})$
(where $m$ is the sample size), with improvements under various noise conditions.
This work, however, did not demonstrate tractable algorithms to produce these estimators,
which was a goal of the work by \citet{zhang_yu_boosting}; namely, there it is shown
that merely constraining the step size taken by an AdaBoost-style scheme (with
a variety of losses) suffices to achieve a convex risk
rate of roughly $\cO(m^{-1/4})$ (in the case of the logistic loss),
which includes the effect of approximate solutions
produced by the algorithm.  As will be discussed later, the present work, in the
nonseparable case, fits well with the development by \citet{zhang_yu_boosting}.

Two works give a consistency analysis of AdaBoost without any algorithmic modifications,
under the condition that the algorithm is stopped after $m^a$ iterations
(with arbitrary $a\in(0,1)$).
The first such analysis,
due to \citet{bartlett_traskin_adaboost}, was focused on establishing consistency,
and established the convex risk decays roughly as $\cO(1 /\sqrt{\ln(m)})$;
the analysis depends on a curvature lower bound,
which follows from a lower bound on the convex risk since the exponential
loss is equal to its derivative.
This derivative structure is of course not present with the logistic loss, and
the present analysis must find another way.
A streamlined consistency analysis of AdaBoost appears in the textbook of
\citet[Theorem 12.2]{schapire_freund_book_final}, with a rate of roughly
$\cO(m^{-1/9})$ (by choosing $a:=5/9$); the analysis is short and clean, but it is not
clear how to decouple the exponential loss.

In the separable case, the analysis here relies upon ideas from weak learnability,
just as with the original analysis of AdaBoost (under margin assumptions)
\citep{freund_schapire_adaboost}.  The relaxed notion of margin here is very close
to the quantity $\textup{AvgMin}_k$ as developed by
\citet[Section 4.1]{shai_singer_weaklearn_linsep};
the main contrasting point is that the present manuscript is concerned with statistical
properties, and in particular how these relaxed margin properties behave under
sampling.
The optimization analysis in the separable case here shares ideas both with
the original AdaBoost analysis
\citep{freund_schapire_adaboost},
but also with the literature on hard cores
\citep{russell_hardcore,satyen_hardcore}; one distinction is that the latter methods
take the target weak learning rate as input, whereas here (and in general with
\emph{ada}ptive \emph{boost}ing), it must be found by the algorithm.  Interestingly,
the loss function implicit in the boosting algorithm due to
\citet[Proof of Lemma 1]{russell_hardcore} achieves
superior constants to the logistic loss 
in \Cref{fact:sep:basic}; nearly the
same loss was presented and praised by
\citet[see the definition at the end of Section 4.6]{zhang_convex_consistency}.


As stated previously, the nonseparable case fits well with the scheme laid
down by \citet{zhang_yu_boosting}, where the algorithm is modified to constrain step
sizes.  Indeed, the analysis here first establishes that the iterates are well-behaved
with exactly the sorts of norm bounds needed by the analysis
of \citet{zhang_yu_boosting} (compare for instance the summability conditions
\citep[Equation (4)]{zhang_yu_boosting} with \Cref{fact:nonsep:controls}).
In order to produce these results, the present work uses a dual optimum as
a witness to the difficulty of the convex risk problem over the source distribution; this technique follows structural
properties of boosting laid in the finite-dimensional case by \citet{primal_dual_boosting_arxiv}.
That convergence analysis appears statistically unstable, and the subsequent analysis here follows
a similar path to the one by \citet{zhang_yu_boosting}, with additional
help from \citet{bartlett_traskin_adaboost}.  One interesting distinction between
the present work and those by \citet{bartlett_traskin_adaboost} and
\citet{zhang_yu_boosting} is that the latter two require a more strenuous
algorithm: the weak learner and step size selection must be performed simultaneously.
Decoupling these does not appear to impact the rates, however, this distinction
prevents those results from being directly invoked here, meaning they must instead be
reworked.

Lastly, note that the translation between convex and classification risks
follows standard results on classification calibration as first
developed by
\citet{zhang_convex_consistency},
and later extended by \citet{bartlett_jordan_mcauliffe}.



\section{Notation and Algorithm}
\label{sec:notation}

Let $\cH$ be the collection of weak learners, where each $h\in \cH$
is a function of the form $h : \cX \to [-1,+1]$,
with $\cX$ being an abstract instance space,
and the crucial property of the output space $[-1,+1]$ is that it is bounded.
Given any weighting $\lambda$ of $\cH$ satisfying $\sum_{h\in \cH} |\lambda(h)| < \infty$,
define the function
\[
    (H\lambda)_x := (H\lambda)(x) := \sum_{h \in \cH} h(x) \lambda(h).
\]
Since $\sum_h \lambda(h)$ is absolutely convergent and $\sup_{x,h} |h(x)| \leq 1$,
then
$H\lambda$ is well-defined.

Let $\Lambda$ denote the space of all absolutely
convergent weightings over $\cH$; formally, $\Lambda$ is the Banach space $L^1(\rho)$, where $\rho$ is
the counting measure over $\cH$.  In this way, $H$ can be viewed as a function from $\Lambda$
to the vector space of bounded functions over $\cX$.  The algorithm itself only considers
finite sets of hypotheses over a finite sample,
and thus $H$ can be viewed as a matrix, but the Banach space generalization will
be useful when considering the abstract problem over the distribution.

For additional
convenience, define a second function
\[
    (A\lambda)_{x,y} := (A\lambda)(x,y) := -y(H\lambda)_x := -y \sum_{h\in \cH} h(x) \lambda(h),
\]
which is again well-defined.
Let $\bfe_h\in \Lambda$ denote the weighting
placing unit weight on a fixed $h\in\cH$,
and zero weight elsewhere.
  For more properties of these Banach spaces, as well
as the linear operators $H$ and $A$,
please see \Cref{sec:linops}.

The basic measure of the complexity of $\cH$ is its VC dimension.
\begin{definition}
    Let $\Fvc$ contain all classes $\cH$ of finite VC dimension,
    denoted $\cV(\cH) < \infty$.
\end{definition}


The source distribution over $\cX\times \{-1,+1\}$ will always be denoted by $\mu$,
with a factorization (disintegration) into a marginal $\mu^\cX$ over $\cX$
and conditional $\Pr(Y=1|X = x)$, the latter considered as a function over $\cX$.
When a sample $\{(x_i,y_i)\}_{i=1}^m$ is available, $\hmu$ will denote
the corresponding empirical measure.
Many results hold for arbitrary probability measures
over $\cX\times \{-1,+1\}$, in which case the variable $\nu$ will be adopted; the
$\sigma$-algebra over $\cX$ is always the Borel $\sigma$-algebra (and it is
tacitly supposed $\cX$ is a topological space).
With the measures defined,
a second notion of class complexity
is as follows.
\begin{definition}
    Let $\Fds(\nu)$ contain every class $\cH$ whose linear span $\SPAN(\cH)$
    is dense (in the $L^1(\nu)$ topology) in the collection of all
    bounded measurable functions over $\cX$.
\end{definition}
Conditions similar to those defining $\Fds(\nu)$ are usually called
\emph{dense class assumptions}
\citep[Condition 1, Denseness]{bartlett_traskin_adaboost},
or \emph{completeness assumptions} \citep[Definition 1]{breiman_infinity_theory};
for a more extensive discussion of these conditions, please see \Cref{sec:Fds}; for the
time being, the important point is that reasonable elements of $\Fvc \cap \Fds(\nu)$
exist; in particular, the following result provides that if $\cX = \R^d$, then
decision lists and decision trees with $d$ axis-aligned splits suffice.
\begin{proposition}
    \label[proposition]{fact:Fds:cubicles}
    Suppose $\cX=\R^d$,
    and let $\nu$ be a Borel probability measure over $\cX$.
    If $\SPAN(\cH)$ contains all indicators of
    products of half-open intervals of the form $\times_{i=1}^d [a_i,b_i)$,
    where $a_i < b_i$, then $\cH\in \Fds(\nu)$.
\end{proposition}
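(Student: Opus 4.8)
The plan is a standard $L^1(\nu)$ density argument, exploiting that $\SPAN(\cH)$ already contains every half-open box indicator and that $\nu$ is finite. First I would reduce to approximating indicators of arbitrary Borel sets: given a bounded measurable $f$ with $|f|\le M$ and $\epsilon>0$, slice $[-M,M]$ into finitely many intervals $I_1,\dots,I_k$ of length $<\epsilon$, set $B_j:=f^{-1}(I_j)$ and pick $c_j\in I_j$; then $s:=\sum_{j=1}^k c_j\1_{B_j}$ satisfies $\|f-s\|_\infty<\epsilon$, hence $\|f-s\|_{L^1(\nu)}\le\epsilon$ since $\nu$ is a probability measure. Because the $L^1(\nu)$-closure of a linear subspace is again a linear subspace, it therefore suffices to show $\1_B\in\cl(\SPAN(\cH))$ for every Borel $B\subseteq\R^d$.

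For that step I would invoke Dynkin's $\pi$-$\lambda$ theorem. Let $\cG:=\{B\in\mathcal{B}(\R^d):\1_B\in\cl(\SPAN(\cH))\}$. I claim $\cG$ is a $\lambda$-system. It contains $\R^d$: the hypothesis gives $\1_{[-n,n)^d}\in\SPAN(\cH)$, and $\|\1_{\R^d}-\1_{[-n,n)^d}\|_{L^1(\nu)}=\nu(\R^d\setminus[-n,n)^d)\to 0$ by finiteness of $\nu$. It is closed under complements, since $\1_{\R^d\setminus B}=\1_{\R^d}-\1_B$ and $\cl(\SPAN(\cH))$ is a subspace. And it is closed under countable disjoint unions: for pairwise disjoint $B_n\in\cG$, the finite sums $\sum_{n\le N}\1_{B_n}=\1_{\bigcup_{n\le N}B_n}$ lie in $\cl(\SPAN(\cH))$ and converge in $L^1(\nu)$ to $\1_{\bigcup_n B_n}$ because $\nu(\bigcup_{n>N}B_n)\to 0$. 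Since the half-open boxes, together with $\emptyset$, form a $\pi$-system generating $\mathcal{B}(\R^d)$ and lie in $\cG$ by hypothesis, Dynkin's theorem yields $\cG=\mathcal{B}(\R^d)$, which completes the proof.

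I do not expect a genuine obstacle; this is textbook measure theory. The two places needing a little care are (i) the use of finiteness of $\nu$ — beyond dominating $\|\cdot\|_{L^1(\nu)}$ by $\|\cdot\|_\infty$, it is what lets the \emph{ring} of finite box-unions (which is not an algebra, as it lacks $\R^d$) still reach $\1_{\R^d}$ and hence complements — and (ii) appealing to the $\pi$-$\lambda$ theorem rather than attempting to verify directly that $\cG$ is a $\sigma$-algebra. A functional monotone-class argument would be an equally valid route.
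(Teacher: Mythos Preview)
Your argument is correct and complete: the simple-function reduction followed by a Dynkin $\pi$-$\lambda$ argument on $\cG:=\{B:\1_B\in\cl(\SPAN(\cH))\}$ is a clean, self-contained route, and you handled the two delicate points (reaching $\1_{\R^d}$ from bounded boxes via finiteness of $\nu$, and closure under countable disjoint unions) carefully.

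The paper takes a different path. Rather than showing that every Borel indicator is approximable, it invokes Lusin's theorem to replace an arbitrary bounded measurable $g$ by a compactly supported continuous $h$ with $\nu([h\neq g])$ small, then uses uniform continuity of $h$ on its compact support to approximate $h$ in $L^1(\nu)$ by a finite weighted sum of box indicators. So the paper's proof is analytic (Lusin + uniform continuity) and produces the approximant in one explicit pass, while yours is measure-theoretic (monotone-class style) and establishes the stronger intermediate fact that $\cl(\SPAN(\cH))$ contains every Borel indicator. Your approach is arguably more elementary in that it avoids Lusin's theorem and the attendant regularity of Borel measures on $\R^d$; the paper's approach is more constructive and sidesteps the $\sigma$-algebra bookkeeping. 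Either is perfectly adequate here.
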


Given a loss function $\ell:\R \to \R_+$ (where $\R_+$ denotes nonnegative reals,
and later $\R_{++}$ will denote positive reals), a first version of the relevant
optimization problem over the source distribution is
\[
    \bar\cL
    := \inf \left\{ \int \ell\left(-y(H\lambda)_x\right) d\mu(x,y) :
    \lambda \in \Lambda\right\}
    = \inf \left\{ \int \ell(A\lambda) d\mu : \lambda \in \Lambda\right\},
\]
where $\bar\cL$ denotes the optimal value, and the final expression both exhibits the practice
of dropping integration variables, and the convenience of $A$.
For further simplification, define the simplified surrogate risk functions
\[
    \cL(A\lambda) := \int \ell(A\lambda)d\mu
    \qquad
    \textup{and}
    \qquad
    \hcL(A\lambda) := \int \ell(A\lambda)d\hmu
    =
    \frac 1 m \sum_{i=1}^m \ell((A\lambda)_{x_i,y_i}),
\]
meaning $\hcL$ denotes the usual empirical risk.
The classes of loss functions considered here are as follows.

\begin{definition}
    Let $\Ltd$ denote twice continuously differentiable convex losses.
    Additionally, let $\Lbds$ contain all differentiable convex
    Lipschitz losses $\ell:\R\to\R_+$
    with tightest Lipschitz constant $\beta_2 :=
    \sup_{x\neq y} |\ell(x) -\ell(y)|/ |x-y|$
    as follows.
    First, every $\ell$ has
    $\ell' \in [0,\beta_2]$ everywhere
    and
    $\ell'\in [\beta_1,\beta_2]$ over $\R_+$ for some $0 < \beta_1 \leq \beta_2$.
    Second, every $\ell$ has Lipschitz gradients with (tightest) parameter $B_2$, meaning
    $B_2 := \sup_{x,y\in \R} (\ell'(y) - \ell'(x))/(y-x) < \infty$.
%
%
\end{definition}
Although the most general guarantees require $\ell \in \Lbds\cap \Ltd$,
the separable case needs only $\ell\in \Lbds$,
which allows consideration of
an interesting piecewise quadratic loss
$\lruss(x) := 0.5(x+1)^2\1[-1 < x < 0] + (x+0.5) \1[0 \leq x]$,
which was used by
\citet[Proof of Lemma 1]{russell_hardcore} in the study of hard cores.
Both $\lruss$ and the logistic loss $\llog(x) := \ln(1+\exp(x))$ are
in $\Lbds$, whereas $\llog$ and the exponential loss are within $\Ltd$ (cf. \Cref{fact:L:examples}).


Let $\cR$ denote the classification risk, meaning
\[
    \cR(H\lambda) := \int \left(\1[y = +1 \land (H\lambda)_x < 0]
     + \1[y = -1 \land (H\lambda)_x \geq 0]\right)d\mu(x,y).
\]
Analogously to $\cL$, let $\hcR$ denote empirical classification risk,
and $\bar \cR$ denote optimal classification risk over $\SPAN(\cH)$.
Notice that these definitions embed the fact that boosting procedures provide
a real-valued function $H\lambda$, which is then thresholded to produce a binary
classifier.

Borrowing from the probability literature, brackets denote a shorthand for sets; for
instance $[y(H\lambda)_x > 0] = \{(x,y) \in \cX\times \{-1,+1\} : y(H\lambda)_x > 0\}$
is the subset of $\cX\times\{-1,+1\}$
where $\lambda$ achieves strictly positive margins.  When the variables are clear, they
will be suppressed; e.g.,
$[p > 0] = [p(x,y) > 0] = \{(x,y)\in\cX\times \{-1,+1\} : p(x,y) > 0\}$.

\subsection{Algorithm}

The algorithm itself is spelled out in \Cref{alg:alg:alg}.  As the method is coordinate
descent applied to $\hcL\circ A$, the relevant gradient term is
$A^\top \nhcL(A\lambda) = \nabla (\hcL\circ A)(\lambda)$ (which can be computed from
the sample; see \Cref{subsec:linesearch} for details).
The scalar $\rho \in(0,1)$ allows for approximate weak learner selection,
and furthermore the step size also has some flexibility, though as stated previously,
the unconstrained case is the tricky one.  Lastly, note
that the iterate achieving the best classification error is
returned.

\begin{algorithm}[t!]
\floatconts
  {alg:alg:alg}%
   {\caption{$\textsc{boost}.$\label{alg:alg:alg}}}
   {
       \KwIn{loss $\ell$ and empirical measure $\hmu$ (granting $m$, $\hcL$, $\nhcL$),
           hypothesis class $\cH$ (granting $A$, $A^\top$),
        stopping and coordinate search parameters $a\in(0,1)$ and $\rho\in (0,1)$.
        }
    \KwOut{Coefficient vector $\hat\lambda$.}
    \BlankLine
    Initialize $\lambda_0 := 0$.\;

    \For{$t = 1,2,\ldots,\lceil m^{\alpha}\rceil:$}{
        Choose approximate best coordinate (weak learner) $h_t\in \cH$ satisfying \;
        \[
        |\nhcL(A\lambda_{t-1})^\top A \bfe_{h_t}| \geq
        \rho \sup_{h\in \cH}
        |\nhcL(A\lambda_{t-1})^\top A \bfe_{h}|
        =
        \rho\|\nhcL(A\lambda_{t-1})^\top A\|_\infty.
        \]
        Set descent direction $v_t \in \{\pm \bfe_{h_t}\}$, whereby\;
        \[
        -\rho\|A^\top \nhcL(A\lambda_{t-1})\|_\infty
        \geq
        \nhcL(A\lambda_{t-1})^\top A v_t
        \geq -\|A^\top \nhcL(A\lambda_{t-1})\|_\infty.
        \]

        \SetKwBlock{WAT}{}{}
        Set $\bar\alpha_t := \argmin_{\alpha>0}
        \hcL(A(\lambda_{t-1} + \alpha v_t))$,
        and choose a step $\alpha_t$ as follows:\WAT{
        \textbf{option 1:} If $\bar\alpha_t<\infty$, set $\alpha_t=\bar\alpha_t$
        (i.e., make an optimal unconstrained step).

        \textbf{option 2:} If $\ell \in \Lbds$, choose any
            $\alpha_t \in \left[
        -\nhcL(A\lambda_{t-1})^\top A v_t/B_2, \bar \alpha_t\right)$.

        \textbf{option 3:} Choose any $\alpha_t$ satisfying the Wolfe conditions
        (please see \Cref{rem:wolfe}.)
        }
        Update $\lambda_t := \lambda_{t-1} + \alpha_t v_t$.\;
    }
    \KwRet $\hat\lambda$ achieving the best classification error
(i.e., $\hcR(H\hat\lambda) = \min_{i \in [\lceil m^a \rceil]} \hcR(H\lambda_i)$).
    }
\end{algorithm}

\section{Consistency Statement and Analysis Sketch}
\label{sec:general}

The analysis considers two cases: either $\bar\cL = 0$ (\emph{separable})
or $\bar\cL > 0$ (\emph{nonseparable}).
(By \Cref{fact:why_say_separable}, $\bar \cL=0$ implies finite samples
have a separating choice $\lambda\in\Lambda$ almost surely.)

When the instance is separable, the improvement in objective value $\cL(A\lambda)$
in early iterations may be lower bounded by a margin-based quantity related to
the classical weak learning rate; while this quantity is a random variable,
with high probability it can be lower bounded by the analogous quantity over the distribution
(which will be shown positive iff the instance is separable).
The bulk of the analysis is in constructing and controlling this quantity; the optimization
and generalization analysis thereafter is straightforward, yielding a rate
of roughly $\cO(1/m^{1/3})$ when $a=2/3$.

When the instance is not separable, every weak learner makes a fair number of mistakes,
and thus the algorithm makes more hesitant progress.
Concretely, with high probability,
the norms of the iterates are bounded, and moreover the quantity
$m^{-1}\sum_i \ell''((A\lambda)_{x_i,y_i})$, which is roughly the Hessian
in axis-aligned directions (and relevant to coordinate descent), is also lower bounded.
This in turn allows adaptation of the optimization analysis due to
\citet{zhang_yu_boosting}.
While the rate in this case is still roughly $\cO(1/m^{1/c})$, unfortunately the exponent
$c>1$ depends both on $\mu$ and on $\cH$ (but is of course finite).

As a final point of interest,
each case, in order to respectively
establish either fast decrease or the norm constraints,
considers the behavior of the reweighted
average margins
\begin{equation}
    \int (A\lambda) p d\mu = \int (A\lambda)_{x,y} p(x,y) d\mu(x,y),
    \label{eq:pdm}
\end{equation}
where $\lambda\in \Lambda$ and $p\in L^\infty(\nu)$.
In the separable case, this quantity is studied for a single good $\lambda$
as $p$ varies, whereas the nonseparable
case studies a single bad $p$ as $\lambda$ varies.


Combining these finite sample results with the Borel-Cantelli Lemma
gives the following.

\begin{theorem}
    \label{fact:consistency}
    Let loss $\ell\in\Lbds\cap \Ltd$,
    probability measure $\mu$ over $\cX\times \{-1,+1\}$,
    binary class $\cH\in\Fvc\cap\Fds(\mu^{\cX})$,
    and any stopping parameter $a\in(0,1)$ be given.
    Let $\hat\lambda_m$ denote the output of \Cref{alg:alg:alg}
    when run on $m$ examples,
    and let $\cR_\star$ to denote the Bayes error rate.
    Then
    $\cR(A\hat\lambda_m) \to \cR_\star$ almost surely as $m\to\infty$.
\end{theorem}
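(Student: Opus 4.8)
The plan is to assemble the theorem from three ingredients and then invoke the Borel--Cantelli Lemma: (i) a reduction of excess classification risk to excess convex risk through classification calibration, which together with $\cH\in\Fds(\mu^\cX)$ also pins $\bar\cR=\cR_\star$; (ii) uniform convergence of the empirical classification risk $\hcR$ to $\cR$ (and of $\hcL$ to $\cL$) over the class of functions producible in $\lceil m^a\rceil$ rounds, whose VC complexity grows slowly because $a<1$ and $\cV(\cH)<\infty$; and (iii) the per-case finite-sample guarantees of \Cref{sec:separable} (used when $\bar\cL=0$) and \Cref{sec:nonseparable} (used when $\bar\cL>0$). Since $\bar\cL$ is a fixed functional of $(\mu,\cH)$, exactly one case is in force.

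First I would dispose of approximation error. Because $\ell$ is Lipschitz, $\lambda\mapsto\cL(A\lambda)$ is $L^1(\mu)$-continuous, so $\cH\in\Fds(\mu^\cX)$ makes $\bar\cL$ equal to the infimum of the convex risk over all measurable functions $\cX\to\R$; and since $\ell$ is convex with $\ell'(0)>0$ it is classification-calibrated, so by the cited calibration results (Zhang; Bartlett--Jordan--McAuliffe) there is a nondecreasing $\psi$ with $\psi(0)=0$ and $\psi(\theta)>0$ for $\theta>0$ obeying
\[
    \psi\bigl(\cR(H\lambda)-\cR_\star\bigr)\ \le\ \cL(A\lambda)-\bar\cL \qquad (\lambda\in\Lambda).
\]
In particular $\bar\cR=\cR_\star$, and any control $\cL(A\lambda)-\bar\cL\le\eta$ yields $\cR(H\lambda)-\cR_\star\le\psi^{-1}(\eta)\to0$ as $\eta\to0$; when $\bar\cL=0$ this already forces $\cR_\star=0$.

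Next comes the statistical and optimization step. Each iterate $\lambda_t$ with $t\le\lceil m^a\rceil$ is supported on at most $\lceil m^a\rceil$ members of $\cH$, so $H\hat\lambda_m$ lies in the class $\cG_m$ of thresholded linear combinations of at most $\lceil m^a\rceil$ members of $\cH$, which has VC dimension $O\bigl(m^a\,\cV(\cH)\log(m^a\cV(\cH))\bigr)$; standard VC deviation bounds then supply $\gamma_m=\tilde O(m^{(a-1)/2})\to0$ and failure probabilities $\delta_m$ with $\sum_m\delta_m<\infty$ so that, off an event of probability $\delta_m$, $\sup_{g\in\cG_m}|\hcR(g)-\cR(g)|\le\gamma_m$ (and, by the same reasoning applied to the corresponding bounded real-valued class, $|\hcL(A\lambda)-\cL(A\lambda)|$ is controlled over the radius reached by the iterates, which in the nonseparable case is bounded with high probability by \Cref{sec:nonseparable}). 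Invoking the appropriate per-case analysis now gives, off a further event of summable probability, some iterate $\lambda_{i^\star}$, $i^\star\le\lceil m^a\rceil$, with $\hcR(H\lambda_{i^\star})\le\cR_\star+\epsilon_m$ where $\epsilon_m\to0$: in the separable case directly, the returned iterate having empirical error $O(m^{-1/3})$ (for $a=2/3$) by the distribution-dependent relaxed weak-learning rate with $\cR_\star=0$; in the nonseparable case by the Zhang--Yu-style optimization analysis, which (using the norm bounds just mentioned) drives $\cL(A\lambda_{i^\star})-\bar\cL\to0$, whence the displayed calibration inequality and the $\hcR$-deviation bound give $\hcR(H\lambda_{i^\star})\le\cR_\star+\epsilon_m$. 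Because $\hat\lambda_m$ minimizes empirical classification risk among the iterates, on the intersection of these good events $\hcR(H\hat\lambda_m)\le\hcR(H\lambda_{i^\star})\le\cR_\star+\epsilon_m$, hence $\cR(H\hat\lambda_m)\le\hcR(H\hat\lambda_m)+\gamma_m\le\cR_\star+\epsilon_m+\gamma_m=:\cR_\star+\zeta_m$ with $\zeta_m\to0$. Finally, for each fixed $\epsilon>0$ eventually $\zeta_m<\epsilon$, so $\sum_m\Pr[\cR(H\hat\lambda_m)>\cR_\star+\epsilon]<\infty$; Borel--Cantelli gives $\limsup_m\cR(H\hat\lambda_m)\le\cR_\star+\epsilon$ almost surely, and intersecting over a sequence $\epsilon\downarrow0$, together with $\cR\ge\cR_\star$ always, yields $\cR(H\hat\lambda_m)\to\cR_\star$ almost surely.

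The genuine difficulty lies entirely in step (iii), which this proof treats as a black box: in the separable case, constructing the distribution-dependent relaxed weak-learning rate, showing it is positive exactly when $\bar\cL=0$ and concentrates under sampling, and running a weak-learnability-style convergence argument that tolerates unconstrained steps; in the nonseparable case, using a dual optimum of the population convex-risk problem as a witness to the curvature of $\cL$ to obtain high-probability norm bounds on the unconstrained iterates, and then re-deriving a Zhang--Yu-type rate that permits decoupled weak-learner and step-size selection. Within the proof of the theorem proper, the only subtle point is that the class $\cG_m$ carrying $H\hat\lambda_m$ grows with $m$, so one must check that $a<1$ makes its VC complexity grow slowly enough that $\gamma_m\to0$ while the deviation failure probability remains summable.
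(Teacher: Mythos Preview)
Your proposal is correct and mirrors the paper's proof closely: both split on whether $\bar\cL=0$ or $\bar\cL>0$, invoke the corresponding finite-sample guarantee (\Cref{fact:sep:basic} or \Cref{fact:nonsep:basic}) with summable failure probability $\delta_m=1/m^2$, and apply Borel--Cantelli. The paper's version is terser because it packages the calibration step and the VC deviation bound for $\hat\lambda$ inside \Cref{fact:nonsep:basic} (parts 2 and 3) rather than stating them as separate ingredients, but the logical structure is the same.
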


\section{The Separable Case ($\bar\cL = 0$)}
\label{sec:separable}

The rates in the case $\bar \cL = 0$ will depend on the
following quantity $\gamma_\epsilon(\nu)$, which directly embeds the reweighted
margin expression in \cref{eq:pdm}.


\begin{definition}
    Let $\nu$ be any probability measure over $\cX\times \{-1,+1\}$ (relevant choices
    are $\mu$ and $\hmu$), and let $\epsilon \in [0,1]$ be given.
    Define a permissible set of densities (with respect to $\nu$)
    \[
        \cD_\epsilon(\nu) :=
        \{
            p \in L^1(\nu)
            :
            p\geq 0\ \nu\textup{-a.e.},
            \|p\|_1 = 1,
            \|p\|_\infty \leq 1 / \epsilon
        \},
    \]
    with the convention $1/0 = \infty$ in the case $\epsilon=0$.
    Additionally define
    \[
        \gamma_\epsilon(\nu) :=
        \inf_{p\in \cD_\epsilon(\nu)}
        \sup \left\{
            - \int (A\lambda)p d\nu
            :
            \lambda \in \Lambda, \|\lambda\|_1\leq 1
        \right\}.
    \]
    (When $\nu$ is a discrete measure, $\gamma_{\epsilon}(\nu)$ is almost equivalent to
    $\textup{AvgMin}_k$ as developed by \citet[Section 4.1]{shai_singer_weaklearn_linsep}.)
\end{definition}



This quantity will play a role analogous to the \emph{weak learning rate} in
AdaBoost, which guarantees the algorithm makes speedy progress in certain
separable cases. The correspondence between these two quantities will occupy much
of this section; but first, note primary guarantee in the separable case.

\begin{theorem}\label[theorem]{fact:sep:basic}
    Let $\ell\in\Lbds$ (with parameters $\beta_1,\beta_2,B_2$)
    and
    any $\cH\in\Fvc$ be given,
    and suppose $\bar \cL = 0$.
    Let any error tolerance $\epsilon \in (0,1]$
    and any confidence parameter $\delta \in (0,1]$ be given,
    and for convenience set $\epsilon' := \epsilon\beta_1/\beta_2$;
    by these choices, $\gamma_{\epsilon'}(\mu) > 0$.
    Suppose \Cref{alg:alg:alg} is run with stopping parameter $a\in(0,1)$,
    and the sample size $m$ satisfies
    \[
        m \geq \max\left\{
            \frac {2}{(\epsilon \gamma_{\epsilon'}(\mu))^2}
            \ln \left(\frac 4 \delta\right)
            \ ,\
            \left(
                \frac {24B_2\ell(0)}{
                    (\rho\beta_1\epsilon\gamma_{\epsilon'}(\mu))^2
                }
            \right)^{1/a}
        \right\}.
    \]
    Then, with probability at least $1-\delta$, the algorithm's
    output $\hat\lambda$ satisfies
    \[
        \cR(H\hat\lambda)
        \leq \epsilon
        + 2 \sqrt{
            \epsilon \frac {(\cV(\cH)+1)\ln(2em) + \ln(8/\delta)}{m^{1-a}}
        }
        + 4 \frac {(\cV(\cH)+1)\ln(2em)  + \ln(8/\delta)}{m^{1-a}}.
    \]
\end{theorem}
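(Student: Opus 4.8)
The plan is to establish the bound in three stages, combined by a union bound at the end: (A) a purely empirical optimization guarantee showing that, under the stated sample size, the returned iterate satisfies $\hcR(H\hat\lambda)\le\epsilon$; (B) a concentration argument transferring a quantitative positive lower bound on $\gamma_{\epsilon'}(\hmu)$ from the population quantity $\gamma_{\epsilon'}(\mu)>0$ (the latter supplied by the hypotheses via $\bar\cL=0$), so that stage (A) genuinely applies with the stated $\hmu$-free sample size; and (C) a uniform-convergence step upgrading $\hcR(H\hat\lambda)\le\epsilon$ to the claimed bound on $\cR(H\hat\lambda)$. Allotting probability $\delta/2$ to each of the failure events of (B) and (C) yields the theorem, and accounts for the $\ln(4/\delta)$ and $\ln(8/\delta)$ terms.

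For stage (A), the key observation is that the coordinate-descent gradient, after normalization, is a permissible density. Fix an iteration $t$ and suppose $\hcR(H\lambda_{t-1})>\epsilon$. Every misclassified sample point has $(A\lambda_{t-1})_{x_i,y_i}\ge 0$, hence $\ell'((A\lambda_{t-1})_{x_i,y_i})\ge\beta_1$, so the average gradient magnitude $\frac1m\sum_i\ell'((A\lambda_{t-1})_{x_i,y_i})$ exceeds $\epsilon\beta_1$; together with $\ell'\le\beta_2$ everywhere, the density $\hat p$ on the sample proportional to $i\mapsto\ell'((A\lambda_{t-1})_{x_i,y_i})$ lies in $\cD_{\epsilon'}(\hmu)$, which is exactly why $\epsilon':=\epsilon\beta_1/\beta_2$. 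By the definition of $\gamma_{\epsilon'}(\hmu)$ there is then a unit-$\ell_1$-norm $\lambda$ with $-\int(A\lambda)\hat p\,d\hmu\ge\gamma_{\epsilon'}(\hmu)$, and unwinding the normalization gives $\|A^\top\nhcL(A\lambda_{t-1})\|_\infty\ge\epsilon\beta_1\gamma_{\epsilon'}(\hmu)$. Since $\|v_t\|_1=1$ and $\sup_x|h_t(x)|\le1$, the map $\alpha\mapsto\hcL(A(\lambda_{t-1}+\alpha v_t))$ is $B_2$-smooth, so the usual descent-lemma computation, whose sufficient-decrease conclusion each of step options 1--3 delivers, gives $\hcL(A\lambda_{t-1})-\hcL(A\lambda_t)\ge(\rho\|A^\top\nhcL(A\lambda_{t-1})\|_\infty)^2/(2B_2)\ge(\rho\beta_1\epsilon\gamma_{\epsilon'}(\hmu))^2/(2B_2)$. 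Because $\hcL\ge0$ and $\hcL(A\lambda_0)=\ell(0)$, the number of iterations with $\hcR(H\lambda_{t-1})>\epsilon$ is at most $2B_2\ell(0)/(\rho\beta_1\epsilon\gamma_{\epsilon'}(\hmu))^2$; once $\lceil m^a\rceil$ exceeds this by a small constant factor (to absorb that $\lambda_0$ is not an output candidate), some candidate iterate, hence the returned $\hat\lambda$, has $\hcR(H\hat\lambda)\le\epsilon$.

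Stage (B) is where I expect the real work. The obstacle is that $\gamma_{\epsilon'}$ is a nested infimum-over-densities / supremum-over-weightings, so naive uniform convergence over $\cD_{\epsilon'}$ is both painful and lossy (it costs a spurious factor $1/\epsilon'$). Instead I would exploit the bilinear structure: the inner supremum over $\{\|\lambda\|_1\le1\}$ is attained at a coordinate, so the relevant weighting set is effectively finite-dimensional and a minimax theorem applies, giving $\gamma_{\epsilon'}(\nu)=\sup_{\|\lambda\|_1\le1}\inf_{p\in\cD_{\epsilon'}(\nu)}\big(-\int(A\lambda)p\,d\nu\big)$; for fixed $\lambda$ the inner infimum has the closed \emph{expected shortfall} form equal to the average of the worst $\epsilon'$-fraction of the margins $y(H\lambda)_x$ under $\nu$ (this is the $\textup{AvgMin}$ connection). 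Choosing $\lambda^\star$ to nearly attain the population supremum fixes a \emph{single} bounded function $A\lambda^\star$, and the worst-$\epsilon'$-fraction average of a single bounded function concentrates from $\mu$ to $\hmu$ via one Hoeffding/DKW-type inequality, which is precisely the form $\sqrt{\ln(4/\delta)/(2m)}\le\epsilon\gamma_{\epsilon'}(\mu)$ entering the sample-size condition. This yields $\gamma_{\epsilon'}(\hmu)\ge c\,\gamma_{\epsilon'}(\mu)$ for an absolute constant $c$ with probability at least $1-\delta/2$, exactly what stage (A) needs (and pins the constant $24$ against the $2$ of the descent bound). This is the \textbf{main obstacle}: controlling the fluctuation of the relaxed weak-learning rate under sampling, which the remaining steps treat essentially as a black-box positive constant.

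Stage (C) is routine. The returned $\hat\lambda$ is supported on at most $\lceil m^a\rceil$ weak learners, so $\mathrm{sign}(H\hat\lambda)$ belongs to the class $\cF$ of sign-patterns of linear combinations of $\lceil m^a\rceil$ members of $\cH$; applying Sauer's lemma coordinatewise gives $\ln\Pi_\cF(2m)\le c'\lceil m^a\rceil(\cV(\cH)+1)\ln(2em)$ for an absolute constant $c'$. A relative-deviation (Bernstein-type) VC inequality then gives, with probability at least $1-\delta/2$ and simultaneously over $\cF$, $\cR(f)\le\hcR(f)+2\sqrt{\hcR(f)\,\eta_m}+4\eta_m$ with $\eta_m\le\big((\cV(\cH)+1)\ln(2em)+\ln(8/\delta)\big)/m^{1-a}$ after absorbing constants and using $\lceil m^a\rceil\le2m^a$; substituting $\hcR(H\hat\lambda)\le\epsilon$ from stage (A) produces exactly the stated inequality. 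Intersecting the two high-probability events of stages (B) and (C) completes the proof.
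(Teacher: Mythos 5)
Your decomposition (optimization progress via the empirical weak-learning rate, then one-sided concentration of $\gamma_{\epsilon'}$, then a relative-deviation VC step) is exactly the paper's, and stages~(A) and~(C) are on target. Two points of comparison with the paper's actual argument. First, on constants: you claim all three step-size options deliver a per-round decrease of $\rho^2\|A^\top\nhcL(A\lambda_{t-1})\|_\infty^2/(2B_2)$, but the Wolfe option (with the paper's $c_1=1/3$, $c_2=1/2$) only delivers $c_1(1-c_2)\rho^2\|A^\top\nhcL(A\lambda_{t-1})\|_\infty^2/B_2 = \rho^2\|\cdot\|_\infty^2/(6B_2)$ (cf.\ \Cref{fact:singlestep:wolfe}); the paper therefore uses $6B_2$ uniformly in \Cref{fact:sep:iterhelper}, and its $24B_2$ is $6B_2\cdot 4$ with the factor $4$ from squaring the factor-of-two loss $\gamma_{\epsilon'}(\hmu)\geq\gamma_{\epsilon'}(\mu)/2$ guaranteed by the sample-size condition. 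Your own accounting (``pins the 24 against the 2'') does not close up. Second, on stage~(B), you correctly identify that after swapping the $\sup$/$\inf$ the inner infimum is an expected-shortfall functional of the single bounded random variable $(A\lambda^\star)$, and that its concentration is the crux. The paper makes this precise not via a DKW/CDF-integration route but by explicitly constructing the $\mu$- and $\hmu$-optimal dual weightings $p_\epsilon^\mu, p_\epsilon^\hmu$ (both near-indicator threshold functions, cf.\ \Cref{fact:sep:dualopt_univariate,fact:sep:dualopt_multivariate}), applying a single Hoeffding bound to $\|p_\epsilon^\mu\|_{L^1(\hmu)}$, and then controlling $\int|p_\epsilon^\mu - p_\epsilon^\hmu|\,d\hmu$ using that structure; this is what produces the clean $\epsilon^{-1}\sqrt{\ln(2/\delta)/(2m)}$ deviation of \Cref{fact:gamma_epsilon:deviations}. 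Your sketch is compatible with this but underestimates the bookkeeping: the quantile/threshold determining the ``worst $\epsilon'$-fraction'' is itself random, and the paper's two-witness argument is precisely how it avoids a spurious $1/\epsilon'$ blow-up beyond the single stated one.
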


To simplify this bound, first note that, for the logistic loss, $B_2=1/4$,
$\beta_1 = 1/2$, and $\beta_2 = 1$ (cf. \Cref{fact:L:examples}).
Ignoring these terms, as well as $\ell(0)=1$, $\rho$ (which can be set to 1/2),
and $\gamma_{\epsilon'}(\mu)$; the choices $a=1/2$ and $\epsilon :=\cO(m^{-a/2}) =
\cO(m^{-1/4})$
grant that $\cO(1/\epsilon^2)$ iterations suffice to achieve classification
risk $\cO(1/\sqrt{m})$,
whereas the choices $a=2/3$ and $\epsilon := \cO(m^{-a/2}) = \cO(m^{-1/3})$
provide that $\cO(1/\epsilon^3)$ suffice to achieve error $\cO(m^{-1/3})$.


\subsection{The Quantity $\gamma_\epsilon(\nu)$}

To develop the meaning and necessity of $\gamma_\epsilon(\nu)$, first recall
the classical definitions associated with \emph{weak learnability}
(adjusted here so that ``binary'' means $\{-1,+1\}$ and not $\{0,1\}$).
\begin{definition}(\citet[Chapter 2]{schapire_freund_book_final}.)
    A class $\cH$ is \emph{weakly PAC-learnable with rate $\gamma$} if for any measure $\nu$ over
    $\cX\times \{-1,+1\}$, there exists $h\in \cH$ with $\int yh(x) d\nu(x,y) \geq \gamma$.
    Additionally, class $\cH$ and empirical measure $\nu$ are
    \emph{empirically weakly learnable with rate $\gamma(\nu)$} if there
    exists $h\in \cH$ so that $\int yh(x) p(x,y) d\nu(x,y) \geq \gamma(\nu)$
    for every reweighting $p$ of measure $\nu$.
\end{definition}

The definitions of $\gamma$ and $\gamma(\nu)$ are close to the definition of
$\gamma_\epsilon(\nu)$: the latter replaces the quantifiers and inequalities
with explicit infima and suprema, which grants the following correspondence.
\begin{proposition}
    \label[proposition]{fact:gamma:basic:equiv}
    Let class $\cH$,
    probability measure $\mu$ (over $\cX\times\{-1,+1\}$),
    and empirical counterpart $\hmu$ be given.
    Then the weak PAC-learning rate $\gamma$ satisfies $\gamma \leq \gamma_0(\mu)$,
    and the empirical weak learning rate $\gamma(\hmu)$ satisfies
    $\gamma(\hmu) \leq \gamma_0(\hmu)$.
\end{proposition}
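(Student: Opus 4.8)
The plan is to first re-express $\gamma_0(\nu)$ so that its inner supremum over $\lambda\in\Lambda$ collapses to a supremum over single hypotheses $h\in\cH$, and then to obtain each of the two inequalities by feeding a suitably reweighted measure into the appropriate weak-learnability hypothesis.

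\textbf{Reducing the inner supremum.} Fix a probability measure $\nu$ over $\cX\times\{-1,+1\}$ and $p\in\cD_0(\nu)$. For $\lambda\in\Lambda$, write $-(A\lambda)_{x,y} = y(H\lambda)_x = \sum_{h\in\cH}\lambda(h)\,yh(x)$. Since $\sum_h|\lambda(h)| = \|\lambda\|_1 < \infty$, $|h(x)|\le 1$, and $\int|p|\,d\nu = \|p\|_1 = 1$, the integrand $\sum_h|\lambda(h)yh(x)p(x,y)|$ is $(\rho\times\nu)$-integrable, so Fubini permits exchanging the sum and the integral:
\[
    -\int (A\lambda)p\,d\nu = \sum_{h\in\cH}\lambda(h)\,c_h(p),\qquad c_h(p) := \int y h(x)\,p(x,y)\,d\nu(x,y),
\]
and $|c_h(p)|\le\|p\|_1 = 1$. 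By the $\ell^1$--$\ell^\infty$ duality underlying $\Lambda$ (i.e. $\sup\{\langle\lambda,c\rangle : \|\lambda\|_1\le 1\} = \|c\|_\infty$, approached by one-hot $\pm\bfe_h$ at hypotheses nearly achieving $\sup_h|c_h|$), the inner supremum equals $\sup_{h\in\cH}|c_h(p)|$, whence
\[
    \gamma_0(\nu) = \inf_{p\in\cD_0(\nu)}\ \sup_{h\in\cH}\ |c_h(p)|.
\]
I would also record that $\gamma_0(\nu)\ge 0$ always, since $\lambda = 0$ is admissible in the inner supremum and gives value $0$.

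\textbf{The two inequalities.} For the PAC claim, assume $\gamma > 0$ (otherwise $\gamma\le 0\le\gamma_0(\mu)$ is immediate). Each $p\in\cD_0(\mu)$ defines a probability measure $\nu_p$ by $d\nu_p = p\,d\mu$; weak PAC-learnability applied to $\nu_p$ produces $h\in\cH$ with $c_h(p) = \int yh\,d\nu_p \ge \gamma$, hence $\sup_{h'}|c_{h'}(p)| \ge \gamma$. As this holds for every $p\in\cD_0(\mu)$, taking the infimum and invoking the displayed formula gives $\gamma_0(\mu)\ge\gamma$. For the empirical claim, the reweightings of $\hmu$ are precisely the densities $p\in\cD_0(\hmu)$ (the sup-norm constraint is vacuous when $\epsilon=0$), and empirical weak learnability supplies a single $h^\star\in\cH$ with $c_{h^\star}(p)\ge\gamma(\hmu)$ simultaneously for all such $p$; assuming w.l.o.g. $\gamma(\hmu) > 0$, this gives $\sup_h|c_h(p)| \ge c_{h^\star}(p) \ge \gamma(\hmu)$ for every $p\in\cD_0(\hmu)$, and taking the infimum yields $\gamma_0(\hmu)\ge\gamma(\hmu)$.

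I do not expect a serious obstacle: the statement is essentially bookkeeping once the inner supremum has been identified with $\sup_{h}|c_h(p)|$. The only steps meriting care are the Fubini exchange and the $\ell^1$--$\ell^\infty$ duality in the reduction (both routine given $\|p\|_1 = 1$ and $|h|\le 1$), observing that reweightings of $\nu$ coincide exactly with the elements of $\cD_0(\nu)$, and remembering to dispatch the degenerate case in which the weak-learning rate is nonpositive.
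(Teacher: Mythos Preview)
Your proposal is correct and follows essentially the same approach as the paper: both arguments observe that each $p\in\cD_0(\nu)$ defines a probability measure $p\,d\nu$ to which the weak-learnability hypothesis applies, and both rely on the identification of the inner supremum over $\lambda$ with $\sup_{h\in\cH}|\int yh\,p\,d\nu|$ (you prove this explicitly via Fubini and $\ell^1$--$\ell^\infty$ duality, while the paper simply uses the one-sided inclusion $\{\pm\bfe_h\}\subset\{\|\lambda\|_1\le 1\}$, which is all that is needed). Your case split on the sign of $\gamma$ is harmless but unnecessary, since $\gamma\le c_h(p)\le|c_h(p)|$ holds regardless.
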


The following example highlights why $\gamma_0(\nu)$ can be problematic,
even when $\bar \cL=0$.



\begin{example}[Nightmare scenario \#1]
    \label[example]{ex:sep:nightmare}
    Suppose $\cX = (0,1]$, and
    \[
        \Pr[ y = +1 | x] = \begin{cases}
        1 &\textup{when $x \in(1/(i+1),1/i]$ for some odd integer $i$},
        \\
        0 &\textup{when $x \in(1/(i+1),1/i]$ for some even (positive) integer $i$}.
        \end{cases}
    \]
    Let $\cH$ consist of threshold functions (decision stumps).
    Given any integer $k$, a combination of thresholds may be constructed
    which is correct on $k$ intervals, and thus $\bar \cL =0$ by considering
    $k\to\infty$.
    Unfortunately, the norm of these solutions also grows unboundedly,
    suggesting
    $\gamma$ and $\gamma(\mu)$ are tiny.
    Indeed, consider a distribution over $\cX$ which is uniform on $k$ of the intervals,
    and zero elsewhere.  Any threshold is incorrect on nearly half of these intervals,
    and by considering $k\to\infty$, it follows that $\gamma \leq \gamma_0(\mu) = 0$.
\end{example}

In precise terms, this nightmare, and suggested sequence of distributions, provide
the following property.
\begin{proposition}
    \label[proposition]{fact:gamma:basic:bad}
    There exist choices for $\cH$ and $\mu$ so that $\bar \cL =0$,
    but $\gamma \leq \gamma_0(\mu) = 0$ and, with any probability $1-\delta$
    and sample size $m$ large enough that
    $k:= \lfloor m^{1/4} / (3\sqrt{2\ln(4/\delta)})\rfloor$ satisfies $k \geq 2$,
    then
    $\gamma(\hmu) \leq \gamma_0(\hmu) \leq \gamma_\epsilon(\hmu)
    \leq \cO((\ln(4/\delta) + \ln(m))^{1/2}  / m^{1/4})$,
    where $\epsilon = \cO(1/k) = \cO(m^{-1/4}/ \ln(4/\delta))$ and the $\cO(\cdot)$
    only suppresses terms independent of $m$ and $\delta$.
\end{proposition}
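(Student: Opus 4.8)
The idea is to make \Cref{ex:sep:nightmare} precise: take $\cX=(0,1]$ with $I_i:=(\tfrac1{i+1},\tfrac1i]$, let $\cH$ be the decision stumps on $\R$ (thresholds of both orientations, constants $\pm1$ included), put the deterministic alternating labels $\Pr[y=1\mid x]=\1[i\text{ odd}]$ on $I_i$, and choose the marginal $\mu^\cX(I_i):=\tfrac1{i(i+1)}=\tfrac1i-\tfrac1{i+1}$; this sums to $1$, and its crucial feature is that nearby intervals have comparable mass, with $|\mu^\cX(I_i)-\mu^\cX(I_{i+1})|=\tfrac2{i(i+1)(i+2)}\le 2/i^3$. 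For $\bar\cL=0$ (we are automatically in the regime $\inf_x\ell(x)=0$, e.g.\ $\ell=\llog$): for each $k$ pick $\lambda_k$, built from $k$ thresholds and a constant, so that $H\lambda_k$ realizes the alternating pattern (value $(-1)^{i+1}$) on $I_1,\dots,I_k$ and is identically $0$ on the tail $\bigcup_{i>k}I_i$; this is possible because those $k+1$ weak learners span the space of step functions with the given breakpoints. Then $\cL\bigl(A(c\lambda_k)\bigr)\le\ell(-c)+\ell(0)\,\mu^\cX\bigl(\bigcup_{i>k}I_i\bigr)=\ell(-c)+\tfrac{\ell(0)}{k+1}$, which vanishes on sending $c\to\infty$ and then $k\to\infty$.

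Next, $\gamma_0(\mu)=0$, hence $\gamma\le\gamma_0(\mu)=0$ by \Cref{fact:gamma:basic:equiv}: for each $k$ let $p^{(k)}$ be the $\mu$-density of the measure putting mass $1/k$ on each of $k$ consecutive intervals. Any stump $h$ is $+1$ on a prefix of this block and $-1$ on a suffix, mixed on at most one interval; since the labels alternate, the prefix- and suffix-sums of the $y_i$ each lie in $\{-1,0,1\}$ and the mixed interval contributes at most $1/k$, so $\bigl|\int y h\,p^{(k)}\,d\mu\bigr|\le 3/k$ for every $h$, whence $\gamma_0(\mu)\le\inf_k 3/k=0$ (and $\gamma_0(\mu)\ge0$ trivially). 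The chain $\gamma(\hmu)\le\gamma_0(\hmu)\le\gamma_\epsilon(\hmu)$ is then free: the first inequality is again \Cref{fact:gamma:basic:equiv}, and $\epsilon\mapsto\gamma_\epsilon(\hmu)$ is nondecreasing because $\cD_\epsilon(\hmu)$ shrinks as $\epsilon$ grows.

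The work is the upper bound on $\gamma_\epsilon(\hmu)$. Put $k:=\lfloor m^{1/4}/(3\sqrt{2\ln(4/\delta)})\rfloor$, let $\epsilon$ be a small fixed multiple of $1/k$, and take the block $S:=I_k\cup\cdots\cup I_{2k-1}$, so $\mu^\cX(S)=\tfrac1k-\tfrac1{2k}=\tfrac1{2k}$. Because $S$ lives at index $\ge k$, the same pairing/telescoping estimate as above---now with true masses $\mu^\cX(I_i)\le 1/k^2$ and consecutive differences $\le 2/k^3$ in place of the flat weight $1/k$---bounds every stump's true weighted margin by $\bigl|\int_S y h\,d\mu\bigr|\le\cO(1/k^2)$ (at most $k/2$ adjacent pairs each contributing $\le 2/k^3$, plus one split interval $\le1/k^2$, for each orientation). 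Now condition on the standard probability-$(1-\delta)$ uniform-convergence event for the (bounded, finite-VC) class $\{(x,y)\mapsto y h(x)\1_S(x):h\in\cH\}\cup\{\1_S\}$, of VC-subgraph dimension $\cO(\cV(\cH))$ (a constant here): this yields simultaneously $\hmu(S)\ge\mu^\cX(S)-\cO\bigl(\sqrt{(\ln m+\ln(1/\delta))/m}\bigr)\ge\epsilon$ (valid once $m$ is large, as recorded by $k\ge2$, since $k=o(\sqrt m)$) and $\sup_h\bigl|\int_S y h\,d\hmu\bigr|\le\cO(1/k^2)+\cO\bigl(\sqrt{(\ln m+\ln(1/\delta))/m}\bigr)$. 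Taking $\hat p:=\1_S/\hmu(S)\in\cD_\epsilon(\hmu)$ and using linearity of $\lambda\mapsto\int(A\lambda)\hat p\,d\hmu$ over the $\ell^1$-ball (optimum at a vertex $\pm\bfe_h$),
\[
    \gamma_\epsilon(\hmu)\ \le\ \frac1{\hmu(S)}\sup_{h\in\cH}\left|\int_S y h\,d\hmu\right|\ \le\ \cO(k)\left(\cO\!\left(\tfrac1{k^2}\right)+\cO\!\left(\sqrt{\tfrac{\ln m+\ln(1/\delta)}{m}}\right)\right),
\]
and substituting $k$ of order $m^{1/4}/\sqrt{\ln(4/\delta)}$ collapses both summands to $\cO\bigl(\sqrt{\ln(4/\delta)+\ln m}/m^{1/4}\bigr)$, with $\epsilon$ of order $1/k$, i.e.\ of order $m^{-1/4}$ up to logarithmic factors, as claimed.

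\textbf{Main obstacle.} The delicate point is the two-sided tension in choosing $S$: it must sit far enough out (index $\gtrsim k$) that its intervals carry nearly equal mass---so that alternation forces every stump's weighted margin down to $\cO(1/k^2)$ rather than only $\cO(1/k)$---yet it must still carry empirical mass of order $1/k$ so that $\1_S/\hmu(S)$ obeys $\|\cdot\|_\infty\le 1/\epsilon$ with $\epsilon$ of order $1/k$. The choice $\mu^\cX(I_i)=1/(i(i+1))$ together with block index of order $k$ is exactly what reconciles these two demands, and carefully carrying out the $\cO(1/k^2)$ margin estimate (handling the split interval and both stump orientations) is the one genuinely delicate computation; everything else reduces to a single standard VC uniform-convergence bound and the two ``free'' inequalities above.
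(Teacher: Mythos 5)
Your proof is correct, but it takes a genuinely different route from the paper's in the choice of dual witness. Both arguments use the alternating-interval instance with Lebesgue marginal $\mu^\cX(I_i)=1/(i(i+1))$, and both aim to exhibit a density $p$ with $\|p\|_\infty=\cO(k)$ against which every stump has near-zero reweighted margin, then upgrade from $\mu$ to $\hmu$ via a single uniform-convergence step. The paper's witness is the \emph{nonflat} density $p_k(x):=i(i+1)/(2k)$ on $I_i$ for $1\le i\le 2k$, which makes $\mu$ look uniform over those $2k$ intervals; the alternating labels then force stump margins $\le 2/k$ by simple counting (roughly $k+2$ correct vs.\ $k-2$ incorrect intervals of equal weight $1/(2k)$), and the empirical step combines Hoeffding (to renormalize $p_k$) with \Cref{fact:pdc} (a Rademacher contraction bound for reweighted margins). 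You instead take the \emph{flat} witness $\1_S/\hmu(S)$ with $S=I_k\cup\cdots\cup I_{2k-1}$: because that block sits at indices $\ge k$, successive masses differ only by $\cO(1/k^3)$, so the alternation forces the \emph{true} stump margins on $S$ down to $\cO(1/k^2)$ --- a second-order cancellation the paper never invokes --- and the normalization $1/\hmu(S)\approx 2k$ then recovers the same $\cO(1/k)$ term. What your route buys is a structurally simpler witness (a normalized block indicator rather than a custom reweighting) and a more elementary deviation argument (a single VC bound on bounded functions, no reweighted-margin Lipschitz machinery); what it costs is precisely the delicate pairing/alternating-sum estimate you flag. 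Both paths land on $\cO(\sqrt{\ln(4/\delta)+\ln m}/m^{1/4})$ with $\epsilon=\cO(1/k)$, matching the statement.
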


But something is wrong here --- \Cref{ex:sep:nightmare} seems quite easy!
The reason $\gamma$ indicates otherwise is
that it simply tries too hard: \Cref{ex:sep:nightmare}
is easy if giving up on an $\epsilon$-fraction of the data is acceptable.  This reasoning
leads to the relaxation $\gamma_\epsilon(\nu)$, which, in contrast
to \Cref{fact:gamma:basic:bad}, carries the following guarantee.

\begin{proposition}
    \label[proposition]{fact:gamma_epsilon:basic}
    Let probability $\mu$ over $\cX\times \{-1,+1\}$ and class $\cH$ be given.
    \begin{enumerate}
        \item Let loss $\ell\in \Lbds$ be given.
        Then $\bar \cL = 0$ iff $\gamma_\epsilon(\mu) > 0$ for all $\epsilon > 0$.
        \item Let any $\epsilon>0$, confidence parameter $\delta \in (0,1]$,
            and empirical measure $\hmu$ be given.
            Then with probability at least $1-\delta$,
            \[
                \gamma_\epsilon(\hmu) \geq
                \gamma_{\epsilon}(\mu)
                - \frac 1 {\epsilon} \sqrt{
                    \frac{1}{2m} \ln \left (\frac 2 \delta\right)
                }.
            \]
    \end{enumerate}
\end{proposition}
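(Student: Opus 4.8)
The plan is to set up two reformulations of $\gamma_\epsilon(\nu)$ that drive both parts. Since $\lambda\mapsto-\int(A\lambda)\,p\,d\nu=\sum_h\lambda(h)\int yh(x)p(x,y)\,d\nu$ is linear and $\sup_{\|\lambda\|_1\le1}\sum_h\lambda(h)c_h=\sup_h|c_h|$, one has $\gamma_\epsilon(\nu)=\inf_{p\in\cD_\epsilon(\nu)}\sup_{h\in\cH}\bigl|\int yh(x)p(x,y)\,d\nu\bigr|$. For finite $\nu$ the inner supremum is weak-$*$ lower semicontinuous on $L^\infty(\nu)$ while $\cD_\epsilon(\nu)$ is convex and weak-$*$ compact (Banach--Alaoglu, using that $\{p\ge0\}$, $\{\|p\|_\infty\le1/\epsilon\}$, $\{\int p\,d\nu=1\}$ are weak-$*$ closed), so the outer infimum is attained; hence $\gamma_\epsilon(\nu)=0$ iff there is a $p\in\cD_\epsilon(\nu)$ with $\int yh(x)p(x,y)\,d\nu=0$ for all $h\in\cH$. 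I would also record, via Sion's minimax theorem (bilinear objective, $\cD_\epsilon(\nu)$ weak-$*$ compact and convex, $\{\|\lambda\|_1\le1\}$ convex) together with the Rockafellar--Uryasev identity for the level-$\epsilon$ expected shortfall, that with $f_\lambda:=y(H\lambda)_x=-(A\lambda)\in[-1,1]$,
\[
\gamma_\epsilon(\nu)=\sup_{\|\lambda\|_1\le1}\ \inf_{p\in\cD_\epsilon(\nu)}\int f_\lambda\,p\,d\nu=\sup_{\|\lambda\|_1\le1}\ \sup_{t\in\R}\Bigl(t-\frac1\epsilon\int (t-f_\lambda)_+\,d\nu\Bigr).
\]

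For Part 1 I would argue each direction by contraposition with the first reformulation. If $\gamma_{\epsilon_0}(\mu)=0$, take $p\in\cD_{\epsilon_0}(\mu)$ with $\int yh\,p\,d\mu=0$ for every $h$; summing over $h$ (valid since $\|\lambda\|_1<\infty$) gives $\int(A\lambda)\,p\,d\mu=0$ for all $\lambda\in\Lambda$, so, $p\,d\mu$ being a probability measure, Jensen and convexity of $\ell$ yield $\int\ell(A\lambda)\,p\,d\mu\ge\ell(0)$, whence $\int\ell(A\lambda)\,d\mu\ge\epsilon_0\int\ell(A\lambda)\,p\,d\mu\ge\epsilon_0\ell(0)$ because $\ell\ge0$ and $0\le\epsilon_0 p\le1$ a.e.; as this holds for all $\lambda$, $\bar\cL\ge\epsilon_0\ell(0)>0$. (One uses $\ell(0)>0$: on $\Lbds$, $\ell'$ is continuous — a differentiable convex function has continuous derivative — with $\ell'(0)\ge\beta_1>0$, so $\ell'>0$ near $0$, hence $\ell(0)=\int_{-\infty}^0\ell'\,ds>0$ under the normalization $\inf_x\ell(x)=0$ expressed by ``tends to zero on one side''.) Conversely, if $\bar\cL>0$, I would note that $\lambda\mapsto\cL(A\lambda)=\int\ell(A\lambda)\,d\mu$ is convex and $\beta_2$-Lipschitz on $\Lambda$ (as $\|A\delta\|_\infty\le\|\delta\|_1$), hence finite and continuous, so Fenchel--Rockafellar duality holds with no gap and with dual attainment: $\bar\cL=\max\{-\cL^*(\xi):A^\top\xi=0\}$, realized by some $\xi^*$ with $\int yh(x)\,\xi^*(x,y)\,d\mu=0$ for all $h$ and $\cL^*(\xi^*)=-\bar\cL<0$. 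Testing the conjugate $\cL^*(\xi)=\sup_g\int(g\xi-\ell(g))\,d\mu$ against $g=\pm N\1_S$ shows a finite $\cL^*(\xi)$ forces $0\le\xi\le\beta_2$ a.e., and that $\cL^*(0)=-\inf_g\int\ell(g)\,d\mu=0$; hence $\xi^*\ge0$, $\|\xi^*\|_\infty\le\beta_2$, $\xi^*\not\equiv0$, and $p:=\xi^*/\|\xi^*\|_1$ lies in $\cD_{\epsilon_0}(\mu)$ with $\epsilon_0:=\|\xi^*\|_1/\|\xi^*\|_\infty>0$ and annihilates $\cH$ as above, so $\gamma_{\epsilon_0}(\mu)=0$. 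The hard part of Part 1 is this extraction: a priori the dual optimum lives in the bidual of $L^\infty(\mu)$, and one must verify its singular part vanishes so that it is represented by an $L^1(\mu)$ density — which I would handle via the conjugacy theory of integral functionals, or by finite-dimensional approximation of $\cH$ in the spirit of the finite-dimensional treatment of \citet{primal_dual_boosting_arxiv}; everything else is routine once the reformulation is available.

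For Part 2, I would fix $\eta>0$ and choose an $\eta$-maximizer of the second reformulation at $\nu=\mu$: a $\lambda^*$ with $\|\lambda^*\|_1\le1$ and an accompanying threshold $t^*$ — the level-$\epsilon$ quantile of $f_{\lambda^*}$ under $\mu$, which lies in $(0,1]$ whenever $\gamma_\epsilon(\mu)>0$ (the case $\gamma_\epsilon(\mu)=0$ being immediate) — such that $t^*-\frac1\epsilon\int(t^*-f_{\lambda^*})_+\,d\mu\ge\gamma_\epsilon(\mu)-\eta$. The crucial point is that $\lambda^*$ and $t^*$ depend only on $\mu$ and $\cH$, not on the sample. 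Keeping only $\lambda^*$ and $t^*$ on the empirical side,
\[
\gamma_\epsilon(\hmu)\ \ge\ \inf_{p\in\cD_\epsilon(\hmu)}\int f_{\lambda^*}\,p\,d\hmu\ =\ \sup_{t\in\R}\Bigl(t-\frac1\epsilon\int (t-f_{\lambda^*})_+\,d\hmu\Bigr)\ \ge\ t^*-\frac1\epsilon\int (t^*-f_{\lambda^*})_+\,d\hmu,
\]
so $\gamma_\epsilon(\mu)-\gamma_\epsilon(\hmu)\le\frac1\epsilon\bigl(\int(t^*-f_{\lambda^*})_+\,d\hmu-\int(t^*-f_{\lambda^*})_+\,d\mu\bigr)+\eta$. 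The integrand $(t^*-f_{\lambda^*}(x,y))_+$ is a single fixed bounded function, so a one-sided concentration inequality — Hoeffding, or, via $(t^*-f)_+=\int_{-1}^{t^*}\1[f<s]\,ds$, the DKW--Massart inequality applied to the sub-level sets of $f_{\lambda^*}$ to obtain the sharp constant — bounds this deviation by a $\sqrt{\tfrac1{2m}\ln(2/\delta)}$-type quantity with probability at least $1-\delta$; letting $\eta\downarrow0$ finishes. The main obstacle here is the licence to fix a single non-random population witness $\lambda^*$ (and threshold $t^*$): this is precisely what forces the weak-$*$ compactness of $\cD_\epsilon$ and Sion's theorem, and the $\eta$-slack because the $\ell^1$-ball is not weak-$*$ compact; once the witness is secured, the statistics collapse to concentrating one bounded empirical average.
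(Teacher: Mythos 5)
For Part~1 your route coincides with the paper's: both directions extract a dual witness $p$ with $\|A^\top p\|_\infty=0$ from the attained infima in the two duality lemmas. Your forward contrapositive (``$\gamma_{\epsilon_0}(\mu)=0\Rightarrow\bar\cL>0$'') via Jensen on the probability measure $p\,d\mu$, followed by $\int\ell(A\lambda)\,d\mu\ge\epsilon_0\int\ell(A\lambda)\,p\,d\mu\ge\epsilon_0\,\ell(0)$, is more elementary than the paper's argument (which detours through strict negativity of $\ell^*$ on $(0,c)$) and even yields the quantitative bound $\bar\cL\ge\epsilon_0\,\ell(0)$ for free. The bidual concern you flag in the reverse direction is well-placed, and is precisely what the paper defuses by setting up the primal with $A:\Lambda\to L^1(\nu)$: the Lipschitz property makes $\int\ell$ finite and continuous on all of $L^1(\nu)$, so that $(\int\ell)^*=\int\ell^*$ and the Fenchel dual is attained in $L^\infty(\nu)$, not in some larger dual (see \Cref{fact:cL:conjugacy,fact:duality}). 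Your sketch lands in the same place.

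For Part~2 you take a genuinely different route. The paper fixes a near-maximizing $\lambda$, constructs the explicit threshold-density witnesses $p_\epsilon^\mu$ and $p_\epsilon^\hmu$ on the population and sample, observes that their difference has constant sign on the sample, and bounds $\int|p_\epsilon^\mu-p_\epsilon^\hmu|\,d\hmu = |\int p_\epsilon^\mu\,d\hmu - 1|$ by two-sided Hoeffding on the $[0,1/\epsilon]$-valued function $p_\epsilon^\mu$, which gives the stated constant directly. You instead invoke the Rockafellar--Uryasev identity to reduce the inner infimum over $p\in\cD_\epsilon$ to a univariate supremum over thresholds $t$, fix a population pair $(\lambda^*,t^*)$, and concentrate the single bounded function $(t^*-f_{\lambda^*})_+$. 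This sidesteps the paper's careful alignment of the two threshold densities and is the cleaner exposition. The tradeoff is the constant: $(t^*-f_{\lambda^*})_+$ takes values in $[0,t^*+1]\subseteq[0,2]$, so one-sided Hoeffding gives deviation $\tfrac{t^*+1}{\epsilon}\sqrt{\tfrac{\ln(1/\delta)}{2m}}$ rather than the stated $\tfrac{1}{\epsilon}\sqrt{\tfrac{\ln(2/\delta)}{2m}}$; for small $\delta$ this is worse by a factor approaching $t^*+1\le 2$. The DKW--Massart alternative you suggest does not recover the paper's constant: writing $(t^*-f_{\lambda^*})_+=\int_{-1}^{t^*}\1[f_{\lambda^*}<s]\,ds$, the length-$(t^*+1)$ factor reappears when integrating the CDF deviation over $s$, independently of the concentration step. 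This is harmless for the consistency argument downstream, but as written your proof establishes a slightly weaker inequality than the proposition states.
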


In order to prove this result, and also a few other components in the proof
of \Cref{fact:sep:basic}, the following dual representation of $\gamma_\epsilon(\nu)$ is
used.
A similar result was proved by
\citet[see the quantity $\textup{AvgMin}_k$]{shai_singer_weaklearn_linsep}
in the case of measures with finite support
and finite cardinality hypothesis classes; the proof here invokes Sion's Minimax
Theorem \citep{komiya_sion}, which operates in fairly general topological vector
spaces.

\begin{lemma}
    \label[lemma]{fact:gamma_epsilon:duality:body}
    Let probability measure $\nu$ over $\cX\times \{-1,+1\}$,
    any $\cH$,
    and any $\epsilon \in (0,1]$ be given.
    Then
    \begin{align*}
        \gamma_\epsilon(\nu)
        &=
        \min_{p\in\cD_\epsilon(\nu)}
        \sup\left\{
            \int (A\lambda) p d\nu
            :
            \lambda \in \Lambda,
            \|\lambda\|_1\leq 1
        \right\}
        \\
        &=
        \sup\left\{
            \min_{p\in\cD_\epsilon(\nu)}
            \int  (A\lambda) p d\nu
            :
            \lambda \in \Lambda,
            \|\lambda\|_1\leq 1
        \right\}
        \\
        &=
        \min_{p\in\cD_\epsilon(\nu)}
        \|A^\top p\|_\infty,
    \end{align*}
    where $A^\top$ is the unique adjoint operator to $A$ (cf. \Cref{fact:AT}),
    and
    \[
        \|A^\top p\|_\infty = \sup\left\{
            \left|\int y h(x) p(x,y) d\nu(x,y)\right|
            :
            h\in \cH
        \right\}.
    \]
\end{lemma}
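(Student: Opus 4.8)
The plan is to flip a sign using symmetry of the coefficient ball, apply Sion's minimax theorem to interchange the outer $\inf$ over densities with the inner $\sup$ over $\lambda$, and then evaluate the remaining inner supremum via $L^1(\rho)$--$L^\infty(\rho)$ duality through the adjoint $A^\top$. For the sign flip: the set $\{\lambda\in\Lambda:\|\lambda\|_1\le 1\}$ is symmetric under $\lambda\mapsto-\lambda$, so $\sup_{\|\lambda\|_1\le 1}-\int(A\lambda)p\,d\nu=\sup_{\|\lambda\|_1\le1}\int(A\lambda)p\,d\nu$, which identifies the definition of $\gamma_\epsilon(\nu)$ with the first displayed expression once the outer $\inf$ is shown to be attained.

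For the minimax swap, consider the bilinear form $f(p,\lambda):=\int(A\lambda)p\,d\nu$ on $\cD_\epsilon(\nu)\times\{\lambda\in\Lambda:\|\lambda\|_1\le1\}$. Since $\nu$ is a probability measure it is $\sigma$-finite, so $L^\infty(\nu)$ is the norm dual of $L^1(\nu)$, and I would give $\cD_\epsilon(\nu)$ the induced weak-$*$ topology. In this topology $\cD_\epsilon(\nu)$ is compact: it is contained in the radius-$1/\epsilon$ ball of $L^\infty(\nu)$ (weak-$*$ compact by Banach--Alaoglu) and it is weak-$*$ closed, because $p\ge0$ $\nu$-a.e. (an intersection of closed half-spaces $\{p:\langle p,g\rangle\ge0\}$ over $g\ge0$ in $L^1(\nu)$), $\int p\,d\nu=1$ (a closed hyperplane, as the constant $1$ lies in $L^1(\nu)$ by finiteness of $\nu$), and $\|p\|_\infty\le1/\epsilon$ are each preserved under weak-$*$ limits; it is convex and nonempty (e.g., $p\equiv1$, valid since $\epsilon\le1$). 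Moreover $|(A\lambda)_{x,y}|\le\|\lambda\|_1$ since $\sup_{x,h}|h(x)|\le1$, so $A\lambda$ is bounded, hence $A\lambda\in L^1(\nu)$ and $p\mapsto f(p,\lambda)$ is weak-$*$ continuous (in particular lower semicontinuous) and linear, hence quasiconvex; likewise $\lambda\mapsto f(p,\lambda)$ is norm-continuous on $\Lambda$ (as $|f(p,\lambda)|\le\|\lambda\|_1\|p\|_1$), hence upper semicontinuous, and linear, hence quasiconcave. Sion's minimax theorem in the topological-vector-space form of \citet{komiya_sion} then gives $\min_{p\in\cD_\epsilon(\nu)}\sup_{\|\lambda\|_1\le1}f(p,\lambda)=\sup_{\|\lambda\|_1\le1}\min_{p\in\cD_\epsilon(\nu)}f(p,\lambda)$, which is precisely the first two lines; the outer $\min$ is legitimate since $p\mapsto\sup_\lambda f(p,\lambda)$ is a supremum of weak-$*$ continuous functions, hence weak-$*$ lower semicontinuous on the weak-$*$ compact set $\cD_\epsilon(\nu)$, and attains its infimum there.

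For the third line, fix $p\in\cD_\epsilon(\nu)$ and pass through the adjoint (\Cref{fact:AT}): expanding and interchanging the (absolutely convergent) sum with the integral, $f(p,\lambda)=\sum_{h\in\cH}\lambda(h)(A^\top p)(h)$ where $(A^\top p)(h)=\int(A\bfe_h)p\,d\nu=-\int yh(x)p(x,y)\,d\nu$, so $|(A^\top p)(h)|\le\|p\|_1=1$, giving $A^\top p\in L^\infty(\rho)$ with $\|A^\top p\|_\infty=\sup_{h\in\cH}|\int yh(x)p(x,y)\,d\nu|$, the stated formula. The $L^1(\rho)$--$L^\infty(\rho)$ duality then yields $\sup_{\|\lambda\|_1\le1}\sum_h\lambda(h)(A^\top p)(h)=\|A^\top p\|_\infty$, the supremum being approached by point masses $\pm\bfe_h$ at near-maximizers of $|(A^\top p)(h)|$. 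Minimizing over $p$ and noting that $p\mapsto\|A^\top p\|_\infty$ is again a supremum of weak-$*$ continuous functions, hence weak-$*$ lower semicontinuous, so attains its minimum on $\cD_\epsilon(\nu)$, closes the chain.

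The main obstacle is the functional-analytic bookkeeping surrounding Sion's theorem: choosing the topology on $\cD_\epsilon(\nu)$ so that it is simultaneously compact and keeps $f$ semicontinuous on the correct side, verifying weak-$*$ closedness of the three defining constraints, and pairing quasiconvexity/lower semicontinuity with the minimized variable and quasiconcavity/upper semicontinuity with the maximized variable exactly as Sion requires. Everything else --- the sign flip, the $\ell^1$--$\ell^\infty$ duality, and the attainment of the two minima --- is routine once this setup is fixed.
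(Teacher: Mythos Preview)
Your proposal is correct and follows essentially the same approach as the paper: weak-$*$ compactness of $\cD_\epsilon(\nu)$ via Banach--Alaoglu, Sion's minimax theorem (in the form of \citet{komiya_sion}) to interchange $\inf$ and $\sup$, and the adjoint identity $\sup_{\|\lambda\|_1\le1}\int(A\lambda)p\,d\nu=\|A^\top p\|_\infty$ from \Cref{fact:AT}. The paper factors the weak-$*$ compactness into a separate proposition (\Cref{fact:cDeps_basic}) and derives the lower semicontinuity of $p\mapsto\|A^\top p\|_\infty$ by recognizing it as a Fenchel conjugate (\Cref{fact:AT}), whereas you verify both inline by the same underlying mechanisms (closed half-spaces, suprema of weak-$*$ continuous functionals); the content is the same.
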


In order to use this to prove the first part of \Cref{fact:sep:basic},
first note that whenever $\gamma_\epsilon(\nu)=0$,
there exists a dual element $p\in \cD_\epsilon(\nu)$ certifying this property, which in
turn can be related to the duality structure of $\cL$ (presented
later in \Cref{fact:duality:body}), and gives the result.
For the second part of \Cref{fact:sep:basic},
similarly the infimum in the definition
of $\gamma_\epsilon(\nu)$ can be removed
by considering a single bad certificate $p\in\cD_{\epsilon}(\mu)$,
and the supremum can be removed by
considering a single good $\lambda \in \Lambda$.  The certificate $p$ can be
shown to have a simple structure (it emphasizes margin violations for the fixed good
$\lambda$), and in turn the deviations are easy to control.

\subsection{Proof Sketch of \Cref{fact:sep:basic}}

The pieces are in place to establish the finite sample guarantees in \Cref{fact:sep:basic}.
First, note the following empirical risk guarantee.

\begin{lemma}
    \label[lemma]{fact:sep:iterhelper}
    Let any $\ell \in \Lbds$, empirical measure $\hmu$, and $\cH$ be given.
    Suppose \Cref{alg:alg:alg} is run with any of the three step size choices
    for $T$ iterations, let $\epsilon_t = \hcR(H\lambda_t)$
    denote the classification error of $H\lambda_t$,
    and set $\epsilon'_t := \epsilon_t\beta_1/\beta_2$ for convenience.
    Then
    \[
        \hcL(A\lambda_T) \leq
        \ell(0)
        -
        \sum_{t=1}^{T}
        \frac{(\rho\beta_1\epsilon_{t-1}\gamma_{\epsilon_{t-1}'}(\hmu))^2}{6B_2}.
    \]
\end{lemma}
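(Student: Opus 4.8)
The plan is a standard per-iteration descent argument: I will show that each round $t$ of \Cref{alg:alg:alg} decreases the empirical risk $\hcL\circ A$ by at least $(\rho\beta_1\epsilon_{t-1}\gamma_{\epsilon_{t-1}'}(\hmu))^2/(6B_2)$, and then telescope, using $\hcL(A\lambda_0)=\hcL(0)=\ell(0)$ since $\lambda_0=0$. Two ingredients are needed: a lower bound on the magnitude of the directional derivative $g_t := \nhcL(A\lambda_{t-1})^\top A v_t$ along the chosen descent direction, and a guarantee that, whichever of the three step-size rules is used, the realized decrease is at least a constant multiple of $g_t^2/B_2$.

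For the second ingredient, consider the restriction $\psi_t(\alpha) := \hcL(A(\lambda_{t-1}+\alpha v_t))$, which is convex; since $\ell'$ is $B_2$-Lipschitz and $|(Av_t)_{x_i,y_i}|\le 1$ (because $v_t=\pm\bfe_{h_t}$ and $|h_t|\le 1$), the derivative $\psi_t'$ is $B_2$-Lipschitz, giving the descent inequality $\psi_t(\alpha)\le \psi_t(0)+\alpha g_t + B_2\alpha^2/2$. Minimizing the right-hand side at $\alpha = -g_t/B_2 > 0$ yields $\psi_t(-g_t/B_2)\le \psi_t(0)-g_t^2/(2B_2)$; since $\psi_t$ is convex with unconstrained minimizer at $\bar\alpha_t$ and hence nonincreasing on $[0,\bar\alpha_t)$, options~1 and~2 (whose chosen $\alpha_t$ equals $\bar\alpha_t$ or lies in $[-g_t/B_2,\bar\alpha_t)$) both attain at least this decrease, i.e. $\hcL(A\lambda_t)\le\hcL(A\lambda_{t-1})-g_t^2/(2B_2)$. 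For option~3 the standard consequence of the Wolfe conditions, invoked from \Cref{rem:wolfe}, gives a decrease of the form $c\, g_t^2/B_2$ for an absolute constant $c\ge 1/6$; this looser constant is exactly why the statement carries $6B_2$ rather than $2B_2$ in the denominator.

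For the first ingredient I construct an explicit density certifying $\gamma$. Write $Z_t := \int \ell'(A\lambda_{t-1})\,d\hmu = \tfrac1m\sum_i \ell'((A\lambda_{t-1})_{x_i,y_i})\ge 0$, and when $Z_t>0$ set $p_t := \ell'(A\lambda_{t-1})/Z_t$, a function on the sample with $p_t\ge 0$ and $\int p_t\,d\hmu = 1$. Every example misclassified by $H\lambda_{t-1}$ has $(A\lambda_{t-1})_{x_i,y_i} = -y_i(H\lambda_{t-1})_{x_i}\ge 0$, hence $\ell'((A\lambda_{t-1})_{x_i,y_i})\ge\beta_1$ (as $\ell'\ge\beta_1$ on $\R_+$); since the fraction of such examples is $\epsilon_{t-1}=\hcR(H\lambda_{t-1})$, this gives $Z_t\ge\beta_1\epsilon_{t-1}$. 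Together with $\ell'\le\beta_2$ everywhere, $\|p_t\|_\infty\le\beta_2/Z_t\le\beta_2/(\beta_1\epsilon_{t-1})=1/\epsilon_{t-1}'$, so $p_t\in\cD_{\epsilon_{t-1}'}(\hmu)$. Now the adjoint identity $A^\top\nhcL(A\lambda_{t-1}) = Z_t\, A^\top p_t$ together with the duality formula $\gamma_\epsilon(\hmu)=\min_{p\in\cD_\epsilon(\hmu)}\|A^\top p\|_\infty$ of \Cref{fact:gamma_epsilon:duality:body} gives $\|A^\top\nhcL(A\lambda_{t-1})\|_\infty = Z_t\|A^\top p_t\|_\infty\ge Z_t\gamma_{\epsilon_{t-1}'}(\hmu)\ge\beta_1\epsilon_{t-1}\gamma_{\epsilon_{t-1}'}(\hmu)$. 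The approximate coordinate-selection rule then yields $|g_t|\ge\rho\|A^\top\nhcL(A\lambda_{t-1})\|_\infty\ge\rho\beta_1\epsilon_{t-1}\gamma_{\epsilon_{t-1}'}(\hmu)$; when $Z_t=0$ (equivalently $\epsilon_{t-1}=0$) the claimed round-$t$ bound is vacuous and can be skipped.

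Combining the two pieces gives $\hcL(A\lambda_t)\le\hcL(A\lambda_{t-1}) - (\rho\beta_1\epsilon_{t-1}\gamma_{\epsilon_{t-1}'}(\hmu))^2/(6B_2)$ for each $t$, and summing over $t=1,\dots,T$ with $\hcL(A\lambda_0)=\ell(0)$ gives the statement. The main obstacle is not any individual inequality but getting the bookkeeping exactly right: the normalization constant $Z_t$ plays a double role (making $p_t$ a legitimate element of $\cD_{\epsilon_{t-1}'}(\hmu)$ while simultaneously rescaling $\|A^\top\nhcL(A\lambda_{t-1})\|_\infty$), the sign and tie-breaking conventions must be handled so that misclassified margins genuinely land in $\R_+$, and all three step-size options must be covered uniformly, with the Wolfe case being the one that forces the constant $6$. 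The remaining steps are routine.
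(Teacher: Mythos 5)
Your proof is correct and takes essentially the same route as the paper's: you normalize $\nhcL(A\lambda_{t-1})$ (your $Z_t$ is $\|\nhcL(A\lambda_{t-1})\|_1$), verify membership in $\cD_{\epsilon'_{t-1}}(\hmu)$ via the bounds $\beta_1\le\ell'\le\beta_2$ and the fact that misclassified points have nonnegative $A\lambda_{t-1}$, invoke the $\min_p\|A^\top p\|_\infty$ duality form of $\gamma_\epsilon$, and plug the resulting lower bound on $\|A^\top\nhcL(A\lambda_{t-1})\|_\infty$ into the single-step line-search guarantees before telescoping. The only small addition over the paper is your explicit disposal of the degenerate $Z_t=0$ case, which the paper leaves tacit.
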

Notice that this result indicates that the convex risk decreases quickly in the presence
of classification errors.
The proof, sketched as follows, is fairly straightforward.  First, standard properties
of the line search choices show that $\hcL$ drops in round $t$ proportionally
to $\|A^\top\nhcL(A\lambda_{t-1})\|_\infty$.
Considering $\nhcL(A\lambda_{t-1})$ as a reweighting of $\hmu$,
this expression appears in the dual form of dual form of $\gamma_{\epsilon}(\hmu)$ as
presented in \Cref{fact:gamma_epsilon:duality:body}.
In order to make the correspondence precise, $\nhcL(A\lambda_{t-1})$ must be rescaled
to unit norm; but, by the Lipschitz property, the rescaling is by at most $\beta_1\epsilon_{t-1}$!  After some algebra, and summing across all iterations, the result follows.

From here, there is little to do.  By \Cref{fact:sep:iterhelper}, until some iteration
has low error, progress is quick.  The selection rule
(returning $\hat\lambda$ with minimal
classification risk) ensures there are no problems if the classification risk happens
to go back up, and
\Cref{fact:gamma_epsilon:basic} allows $\gamma_\epsilon(\mu)$ to replace
$\gamma_\epsilon(\hmu)$.
As this reasoning provides a direct guarantee on the empirical classification
risk, standard uniform convergence techniques give the result.

\section{The Nonseparable Case ($\bar\cL > 0$)}
\label{sec:nonseparable}

When $\bar\cL >0$, the essential object will be an optimum to the convex dual of
the central optimization problem $\inf_\lambda \cL(A\lambda)$, specified as follows.

\begin{proposition}
    \label[proposition]{fact:duality:body}
    Let loss $\ell\in \Lbds$ (with tightest Lipschitz parameter $\beta_2$),
    class $\cH$,
    and probability measure $\nu$ over $\cX\times \{-1,+1\}$ be given.
    Then
    \[
        \inf \left\{ \int \ell(A\lambda) d\nu : \lambda \in \Lambda\right\}
        =
        \max\left\{ - \int \ell^*(p)
            : p \in L^\infty(\nu), p\in [0,\beta_2]\ \nu\textup{-a.e.},
            \|A^\top p\|_\infty = 0
        \right\},
    \]
    where $\ell^*$ is the Fenchel conjugate to $\ell$, and the adjoint
    $A^\top$ is as in \Cref{fact:gamma_epsilon:duality:body} and \Cref{fact:AT}.
    Additionally, the dual optimum $\bar p$ satisfies
    $\mu([\bar p \geq \tau]) \geq \tau$,
    where $\tau > 0$ whenever the optimal value $\bar\cL_\nu$ is positive,
    and moreover $\tau$ has the explicit form
    $\tau := \ell^{-\star}(-\bar\cL_\nu)/2$,
    where $\ell^{-\star}$ is the (well-defined) inverse of $\ell^*$ along
    $[0, \ell'(0)]$.
\end{proposition}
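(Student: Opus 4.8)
The plan is to treat the statement as two separate claims: (i) the Fenchel-type equality for $\bar\cL_\nu := \inf_{\lambda\in\Lambda}\cL(A\lambda)$, and (ii) the structural property $\nu([\bar p\geq\tau])\geq\tau$ of a dual optimum $\bar p$. For (i) I would invoke Fenchel--Rockafellar duality for the composition $\cL\circ A$; for (ii) I would combine the concavity of $-\ell^*$ with the fact that a dual optimum certifies the value $\bar\cL_\nu$.

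For (i), I would view $\cL=\int\ell(\cdot)\,d\nu$ as an integral functional on $L^1(\nu)$. Since $\ell$ is convex, nonnegative, and globally $\beta_2$-Lipschitz, $\cL$ is convex, finite everywhere (indeed $\cL(g)\leq\ell(0)+\beta_2\|g\|_1$), and globally Lipschitz, hence continuous, on $L^1(\nu)$, while $A\colon\Lambda\to L^1(\nu)$ is bounded linear (with $\|A\lambda\|_1\leq\|A\lambda\|_\infty\leq\|\lambda\|_1$). Continuity of $\cL$ at a point of $\Im A$ (e.g.\ at the origin) supplies the Fenchel--Rockafellar qualification, so strong duality holds and the dual supremum is attained:
\[
    \inf_{\lambda\in\Lambda}\cL(A\lambda)
    =\max\bigl\{-\cL^*(p)\ :\ p\in(L^1(\nu))^*=L^\infty(\nu),\ A^\top p = 0\bigr\}.
\]
I would then identify $\cL^*$ through the standard conjugation theorem for integral functionals: $\cL^*(p)=\int\ell^*(p)\,d\nu$, with the implicit requirement $\ell^*(p)<\infty$ $\nu$-a.e., which forces $p\in[0,\beta_2]$ $\nu$-a.e.\ because $\dom\ell^*\subseteq[0,\beta_2]$ ($\ell$ is nonnegative and nondecreasing and $\ell'\in[0,\beta_2]$). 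Finally, $A^\top p=0$ in $\Lambda^*=L^\infty(\rho)$ is the same as $\|A^\top p\|_\infty=0$, and $\|A^\top p\|_\infty$ has the stated description of the adjoint (\Cref{fact:AT}); this yields the displayed equality.

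For (ii), let $\bar p$ be any dual optimum, so $\int\ell^*(\bar p)\,d\nu=-\bar\cL_\nu$ and $0\leq\bar p\leq\beta_2$ $\nu$-a.e. Put $\tau':=\ell^{-\star}(-\bar\cL_\nu)$, so that $\tau'\in[0,\ell'(0)]$ with $-\ell^*(\tau')=\bar\cL_\nu$, and $\tau'>0$ exactly when $\bar\cL_\nu>0$ (using $\inf\ell=0$ and $\bar\cL_\nu\leq\ell(0)$, the latter since $\lambda=0$ is feasible). On $[0,\ell'(0)]$ the concave function $-\ell^*$ is nondecreasing (as $\ell'(0)$ minimizes $\ell^*$), and its superdifferential at $\tau'$ equals $-\partial\ell^*(\tau')=\{-x:\ell'(x)=\tau'\}\subseteq[0,\infty)$ (any such $x\leq 0$ since $\ell'$ is nondecreasing and $\tau'\leq\ell'(0)$). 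Picking $g\geq 0$ in this set and integrating the concave subgradient inequality $-\ell^*(p)\leq\bar\cL_\nu+g(p-\tau')$ at $p=\bar p$ gives $g\bigl(\int\bar p\,d\nu-\tau'\bigr)\geq 0$. Away from the degenerate case $\bar\cL_\nu=\ell(0)$ --- where $\tau'=\ell'(0)$, so $-\ell^*(\bar p)=\max(-\ell^*)$ $\nu$-a.e.\ forces $\bar p=\ell'(0)$ $\nu$-a.e.\ and the claim holds trivially --- one has $g>0$, hence $\int\bar p\,d\nu\geq\tau'$. Splitting this integral using $0\leq\bar p\leq\beta_2$ and $\tau:=\tau'/2$ then gives $\tau'\leq\tau\,\nu([\bar p<\tau])+\beta_2\,\nu([\bar p\geq\tau])\leq\tau+\beta_2\,\nu([\bar p\geq\tau])$, i.e.\ $\nu([\bar p\geq\tau])\geq\tau'/(2\beta_2)$; for the logistic loss $\beta_2=1$ and this is exactly the claimed $\tau$ (in general it carries the harmless factor $1/\beta_2$).

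I expect the main obstacle to be the functional-analytic bookkeeping in (i): dualizing over $L^1(\nu)$ rather than $L^\infty(\nu)$ so that the dual optimum is an honest $L^\infty(\nu)$ function rather than a finitely additive measure, checking the constraint qualification (given for free by the global Lipschitzness of $\ell$), and correctly invoking the integral-functional conjugation theorem together with the identification $\dom\ell^*\subseteq[0,\beta_2]$. Part (ii) is comparatively soft; its only delicate points are the well-definedness of $\ell^{-\star}$ (which requires $\ell^*$ strictly decreasing on $[0,\ell'(0)]$, equivalently $\ell'$ strictly increasing there) and isolating the degenerate case $\bar\cL_\nu=\ell(0)$.
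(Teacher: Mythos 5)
Your proposal is correct and follows essentially the same two-lemma structure the paper uses: Fenchel--Rockafellar duality with a constraint qualification to obtain strong duality with dual attainment over $L^\infty(\nu)$ (the paper cites Zalinescu's Corollary 2.8.5, condition (vii); you use continuity of $\cL$ at a point of $\Im A$, an interchangeable standard qualification), the integral-functional conjugation $\cL^*(p)=\int\ell^*(p)\,d\nu$ with $\dom\ell^*\subseteq[0,\beta_2]$ (the paper's \Cref{fact:cL:conjugacy}), and then a convexity-plus-Markov argument for the structure of $\bar p$.

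The one place you diverge is in how you extract $\int\bar p\,d\nu\geq\tau'$. The paper (in \Cref{fact:duality:dualopt:whatevers}) applies Jensen's inequality together with the fact that $\ell^*_{[0,\ell'(0)]}$ is a decreasing bijection, which handles all $\bar\cL_\nu\in[0,\ell(0)]$ uniformly; your concave-subgradient route is logically equivalent but needs $g>0$, forcing the separate degenerate case $\bar\cL_\nu=\ell(0)$. More interesting is that you carry the bound $\bar p\leq\beta_2$ honestly through the Markov step and arrive at $\nu([\bar p\geq\tau])\geq\tau/\beta_2$. The paper's own proof of \Cref{fact:duality:dualopt:whatevers} implicitly bounds $p$ by $1$ (in the step $\int_{[p\geq f(-r)/2]} p \leq \nu([p\geq f(-r)/2])$), which is harmless for $\llog$ and $\lruss$ where $\beta_2=1$, but for general $\ell\in\Lbds$ a factor of $1/\beta_2$ is dropped; your version's accounting is the tighter one.
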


The strategy in the nonseparable case is to exhibit curvature in the objective
function (i.e., a lower bound on the second-order expression
$m^{-1} \sum_i \ell((A\lambda)_{x_i,y_i})$),
and the dual optimum $\bar p$ will be provide the mechanism.
Making these statement precise is the topic of this section, however, for the time
being, note that the dual problem resembles a maximum entropy problem, where
the constraint $\|A^\top p\|_\infty=0$
requires reweightings (including $\bar p$) to decorrelate all predictors from the target,
and the objective function $-\int\ell^*$ prefers weightings which are large and
close to uniform (cf. \Cref{fact:loss:basic}; in the case of the logistic loss $\llog$,
these statements are fairly concrete:
$\llog^*(\phi) = \phi\ln(\phi) + (1-\phi)\ln(1-\phi)$,
the Fermi-Dirac Entropy).

\begin{theorem}
    \label[theorem]{fact:nonsep:basic}
    Let loss $\ell \in \Lbds\cap \Ltd$,
    binary class $\cH \in \Fvc$,
    probability measure $\mu$ over $\cX\times \{-1,+1\}$ with empirical
    counterpart $\hmu$ corresponding to a sample of size $m$,
    time horizon $t\leq m^a$ with $a\in (0,1)$,
    and any confidence $\delta \in (0,1]$ be given.
    Suppose $\bar\cL>0$,
    and let $\bar p\in L^\infty(\mu)$ denote the dual optimum as
    in \Cref{fact:duality:body}, with corresponding real number $\tau$
    so that $\mu([p\geq \tau])\geq \tau$.
    Define the quantities
    \begin{align*}
        c
        &:= \frac {16 \ell(0)}{\tau \ell'(0)}
        \max\left\{1,  \frac 1 \tau \right\}
        \max\left\{1,  \|p\|_\infty \right\},
        &B_1
        &:= \frac \tau 8 \inf_{z\in [-c,+c]} \ell''(z),
        \\
        R_i
        &:= \sqrt{i} \sqrt{\frac{\ell(0)\max\{5,2B_1/B_2\}}{\rho^2 B_1}},
    \end{align*}
    and suppose the sample size
    is large enough to satisfy
    $m \geq (2 /\tau^2)\ln(4/\delta)$ and
    \[
        \frac{2(R_t + 2c)\|\bar p\|_\infty}{m^{1/2}}\left(
            2\sqrt{2\cV(\cH)\ln(m+1)} +\sqrt{2\ln(4/\delta)}
        \right) \leq \frac {c\ttil^2}{8}
    \]
    (which happens for all large $m$ since $\lim_{m\to\infty} R_t\sqrt{\ln(m) / m}
    = \lim_{m\to\infty} \sqrt{\ln(m) / m^{1-a}} = 0$).
    Then it holds that the above values $\tau$, $c$, $B_1$, and
    $R_i$ (for $0< i \leq t$) are all positive, and moreover the following
    statements hold simultaneously with probability at least $1- \delta$.
    \begin{enumerate}
        \item The final coefficient vector $\lambda_t\in\Lambda$ satisfies
            \begin{align*}
                \cL(A\lambda_{t})
                &\leq \inf_{\|\lambda\|_1 \leq R_{t-1}} \cL(A\bar\lambda)
                + m^{-a/4}
                + R_{t-1} \sqrt{\frac 2 m \ln\left( \frac 6 \delta\right)}
                \\
                &\qquad
                + \frac {2\beta_2 R_t}{m^{1/2}}
                \left(2\sqrt{2\cV(\cH)\ln(m+1)} + \ell(R_t)\sqrt{2\ln(6/\delta)}\right)
                \\
                &\qquad
                + \ell(0)
                \left(
                    \frac
                    {\|\bar \lambda\|_1}
                    {\|\bar \lambda\|_1 + \rho m^{a/4} / (4B_2R_1)}
                \right)^{9B_2/ (B_1\rho^3)}.
            \end{align*}
        \item If $\cH\in \Fds(\mu^\cX)$ (where $\mu^\cX$ is the marginal of $\mu$ over
            $\cX$), and letting $\cR_\star$ denote the Bayes error rate,
            there exists $\psi:\R\to\R$ satisfying
            $
                \psi\left(
                    \cR(A\lambda_t) - \cR_\star
                \right)
                \leq
                 \cL(A\lambda_t) - \bar \cL 
            $
            and $\psi(z) \to 0$ as $z\to 0$. (For instance, when $\ell=\llog$,
            then $\psi(z) = z^2/2$.)
        \item The returned coefficients $\hat\lambda$ satisfy
            \begin{align*}
                \cR(H\hat\lambda)
                &\leq \cR(H\lambda_t)
                + 4 \sqrt{
                    \hcR(H\lambda_t)\frac {(\cV(\cH)+1)\ln(2em) + \ln(24/\delta)}{m^{1-a}}
                }
                \\
                &\qquad
                + 8 \frac {(\cV(\cH)+1)\ln(2em)  + \ln(24/\delta)}{m^{1-a}}.
            \end{align*}
    \end{enumerate}
\end{theorem}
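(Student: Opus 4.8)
All three conclusions rest on one structural fact, which I would establish first: with high probability the iterates satisfy $\|\lambda_t\|_1 \le R_t$ for all $t$ up to the horizon, and over the ball $\{\|\lambda\|_1 \le R_t\}$ the empirical objective $\hcL\circ A$ has curvature bounded below along coordinate directions, meaning $m^{-1}\sum_i \ell''((A\lambda)_{x_i,y_i})$ is of order $B_1$ (recall $(Av)_{x,y}\in[-1,1]$, so this quantity governs the second derivative of every line search). The norm bound and the curvature bound reinforce one another, so I would run them together by an induction on $t$: once curvature of order $B_1$ is certified on the radius-$R_t$ ball, monotonicity of $\hcL$ along the algorithm forces each step to decrease $\hcL$ by at least a fixed multiple of $B_1\alpha_s^2$, hence $\sum_{s\le t}\alpha_s^2$ is at most a fixed multiple of $\ell(0)/B_1$, and Cauchy--Schwarz then gives $\|\lambda_t\|_1 \le \sum_{s\le t}|\alpha_s| \le \sqrt{t}\,\big(\sum_{s\le t}\alpha_s^2\big)^{1/2}\le R_t$, which keeps the next iterate inside the region where curvature was certified.

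\textbf{Curvature from the dual optimum (the crux).} The curvature bound is where the source distribution does the work, and the mechanism is the dual optimum $\bar p$ from \Cref{fact:duality:body}. On the set $S := [\bar p \ge \tau]$, which carries $\mu$-mass at least $\tau > 0$, one has $\tau \le \bar p \le \beta_2$; since $\bar p$ is tied to near-minimizers of $\cL$ through $\bar p \approx \ell'(A\lambda)$ (Fenchel duality) and $\ell'$ is increasing with $\ell'(z)\to 0$ as $z\to-\infty$, the lower bound $\bar p \ge \tau$ pins the relevant margins $(A\lambda)_{x,y}$ away from $-\infty$ on $S$, while near-optimality of $\cL$ together with the norm bound pins them away from $+\infty$; hence on a large fraction of $S$ the argument lies in $[-c,+c]$, where $\ell''\ge \inf_{z\in[-c,+c]}\ell''(z)$, giving population curvature at least $\tau\inf_{z\in[-c,+c]}\ell''(z) = 8B_1$ by the definition of $B_1$. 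The displayed hypothesis on $m$ is exactly what makes this survive sampling: it forces the empirical versions of the $\bar p$-reweighted quantities --- in particular $\|A^\top\bar p\|_\infty$, which vanishes under $\mu$ --- to deviate by at most $c\tau^2/8$ uniformly over $h\in\cH$ and over $\|\lambda\|_1\le R_t$, through a Rademacher/VC estimate for the class $\{(x,y)\mapsto yh(x)\bar p(x,y)\}$ (complexity controlled by $\|\bar p\|_\infty\sqrt{\cV(\cH)\ln m}/\sqrt m$) together with a covering argument for $\ell'\circ A$ over the radius-$R_t$ ball (Lipschitz composition, whence the dependence on $R_t$). \emph{This transfer of the population dual optimum into a bound on the empirical sample path is the main obstacle}, both because of the circularity between curvature and norm control and because $\bar p$ is defined over $\mu$ yet must constrain behavior over $\hmu$.

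\textbf{Optimization rate and passage to the population risk (part 1).} With the norm and curvature bounds in hand, I would bound $\hcL(A\lambda_t)$ by adapting the coordinate-descent analysis of \citet{zhang_yu_boosting}: the curvature bound plays the role of their constrained step size, yielding a rate against the best bounded-norm competitor plus the residual $\ell(0)\left(\frac{\|\bar\lambda\|_1}{\|\bar\lambda\|_1 + \rho m^{a/4}/(4B_2R_1)}\right)^{9B_2/(B_1\rho^3)}$ (a boosting-style term, geometric in flavor but polynomial because the optimum may be at infinity), the $m^{-a/4}$ summand being its value after $t$ steps with $t$ comparable to $m^a$. I would then pass from the empirical bound $\hcL(A\lambda_t) \le \inf_{\|\lambda\|_1 \le R_{t-1}}\hcL(A\bar\lambda) + (\textup{rate})$ to the population bound by two uniform-convergence estimates over the class $\{(x,y)\mapsto\ell((A\lambda)_{x,y}) : \|\lambda\|_1 \le R_t\}$ --- one to move $\hcL$ to $\cL$ at $\lambda_t$, one to bound $\inf\hcL \le \inf\cL + (\textup{dev})$ --- whose complexity is of order $\beta_2 R_t\sqrt{\cV(\cH)\ln m}/\sqrt m$ (a Lipschitz loss composed with bounded linear combinations of a VC class), producing the two deviation summands in the displayed bound; splitting $\delta$ across these events and the curvature event finishes part 1.

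\textbf{Calibration and the classification bound (parts 2 and 3).} Part 2 is the surrogate-to-classification comparison of \citet{zhang_convex_consistency} and \citet{bartlett_jordan_mcauliffe}: $\ell$ is convex, differentiable, and classification-calibrated (since $\ell'(0)\ge\beta_1>0$), so its $\psi$-transform $\psi$ satisfies $\psi(\cR(f)-\cR_\star)\le\cL(f)-\cL^\star$ for the unconstrained optimal surrogate risk $\cL^\star$; and $\cH\in\Fds(\mu^\cX)$ forces $\cL^\star = \bar\cL$ (denseness makes the surrogate infimum over $\SPAN(\cH)$ equal the unconstrained one) and $\bar\cR = \cR_\star$, which gives the stated inequality, with $\psi(z)=z^2/2$ when $\ell=\llog$. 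Part 3 is a standard relative (Vapnik-type) deviation bound, following \citet{bartlett_traskin_adaboost}: each iterate $\lambda_i$ with $i\le\lceil m^a\rceil$ thresholds a linear combination of at most $\lceil m^a\rceil$ members of $\cH$, so every classifier $x\mapsto\1[(H\lambda_i)_x\ge 0]$ lies in a class of VC dimension of order $\cV(\cH)m^a\ln m$; a multiplicative deviation inequality uniform over that class, combined with the defining property $\hcR(H\hat\lambda)\le\hcR(H\lambda_t)$ of the returned iterate, yields the displayed bound of part 3 on an event of probability $1-\cO(\delta)$. A final union bound over the constantly many events above delivers all three conclusions simultaneously with probability at least $1-\delta$.
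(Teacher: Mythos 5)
Your overall architecture matches the paper's: prove a norm bound $\|\lambda_i\|_1\le R_i$ and a curvature lower bound $m^{-1}\sum_i\ell''\ge B_1$ simultaneously by an induction coupling the two, plug into a Zhang--Yu-style coordinate descent rate, then pass to the population risk by Rademacher/VC uniform convergence, calibration, and a relative VC bound for the returned iterate. Parts 2 and 3 as you describe them track the paper's proof closely.

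However, the curvature mechanism you propose has a genuine gap. You derive curvature from the primal--dual stationarity relation $\bar p\approx\ell'(A\lambda)$: since $\bar p\ge\tau$ on $S=[\bar p\ge\tau]$ and $\ell'(z)\to 0$ as $z\to-\infty$, you conclude the margins are pinned from one side. But this relation is an optimality condition which could only hold at a primal optimum, and the primal infimum is typically not attained (indeed the paper emphasizes this lack of minimizers, which is the whole difficulty). More importantly, the iterates $\lambda_0,\lambda_1,\ldots$ are not near-optimal --- at round $0$ they have $\hcL=\ell(0)$, and the task of the theorem is precisely to show they eventually approach $\bar\cL$ --- so a curvature bound valid only for near-minimizers cannot seed your induction. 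The paper instead uses \emph{only} the decorrelation constraint $\|A^\top\bar p\|_\infty=0$, which gives the exact identity $\int(A\lambda)\bar p\,d\mu=0$ for \emph{every} $\lambda\in\Lambda$, regardless of optimality. Via \Cref{fact:nonsep:ptil:smallness}, this decorrelation forces: if a $(1-\tau/8)$-fraction of margins exceed $c$ in absolute value, then either the negative ones already make $\hcL\ge 2\ell(0)$ directly, or the positive ones, after rebalancing by $\int(A\lambda)\bar p\,d\hmu\approx 0$, imply enough negative ones to again force $\hcL\ge 2\ell(0)$. The contrapositive --- that any bounded-norm $\lambda$ with $\hcL(A\lambda)<2\ell(0)$ (a condition satisfied trivially by every iterate) places at least a $\tau/8$-fraction of margins in $[-c,+c]$ --- is what yields $B_1$. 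This avoids any primal--dual stationarity and applies from iteration $0$ onward. Your proposal also asks for a covering-number estimate for $\ell'\circ A$ over the $R_t$-ball; the paper's \Cref{fact:pdc} sidesteps this by proving concentration of the linear functional $\lambda\mapsto\int(A\lambda)\bar p$ directly, which is why $\ell'$ never enters the deviation bound and the $R_t$ dependence is linear rather than through a Lipschitz composition. Finally, the paper needs an extra intermediate step you do not mention: \Cref{fact:nonsep:control_linesearch} handles the fact that the line search examines $\lambda_{t-1}+\alpha v_t$ for unbounded $\alpha$, so the curvature guarantee must extend slightly past $R_t$ (to $R_t+2c$) --- this is why $R_t+2c$ appears in the sample-size hypothesis.
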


This bound is inferior to the guarantee in the separable case; while it is still of
the form $1/m^{1/c}$, the exponent $1/c$ is distribution-dependent.
The source of weakness is the optimization guarantee (cf.
\Cref{fact:nonsep:opt:zhangyu_style}), which is brute-forced and should be improvable.

\subsection{Curvature}

Recall that the dual optimum $\bar p$ satisfies
$\|A^\top \bar p\|_\infty$, which implies $\int (A\lambda)\bar p d\mu = 0$ for
every $\lambda\in\Lambda$ (cf. \Cref{fact:AT}).  To see how this helps locate
bad examples and produce curvature, note the rearrangement
\[
    - \int_{[A\lambda < 0]} (A\lambda) \bar p d\nu
    =
    \int_{[A\lambda \geq 0]} (A\lambda) \bar p d\nu,
\]
meaning $\nu$ has been reweighted by $\bar p$ so that negative and positive
margins are equal (in a sense, $\bar p$ renders every $\lambda \in\Lambda$ equivalent
to random guessing).  Since $\bar p$ is fairly well-behaved (it is within
$[0,\beta_2]$ $\mu$-a.e. (where $\beta_2$ is the Lipschitz constant for $\ell$),
and is fairly flat since $\mu([\bar p \geq \tau]) \geq \tau$), then some algebra
allows the removal of $\bar p$ from the above display, which yields the statement:
if $A\lambda$ has many good margins, it also has many bad margins.  This constrains
the norms of solutions found by the algorithm, and generates curvature in the sense
that progress in any direction quickly leads to $\cL$ increasing.

Of course, $\bar p$ could have instead been directly constructed from the presence of noise,
but then the results would not be applicable to cases where $\nu$ itself is noiseless,
but $\cH$ is simply very weak.  The following example emphasizes this role of noise,
but also shows that the above development overlooked the effect of sampling.

\begin{example}[Nightmare scenario \#2]
    \label[example]{ex:nonsep:nightmare}
    Pick any $\cX$, (marginal) distribution $\mu^\cX$ over $\cX$,
    hypothesis class $\cH\in \Fds(\nu^\cX)$,
    and any $\bar\lambda \in \Lambda$.  Define the conditional
    density $\Pr[y =+1| x]$ to be 0.9 when $(H\bar\lambda)_x \geq 0$,
    and 0.1 otherwise when $(H\bar\lambda)_x < 0$.  By this construction, $H\bar\lambda$
    attains
    the Bayes error rate (which is 0.1), and every other $H\lambda$ does at best this well.
    Any weighting $\lambda$ with favorable convex risk $\cL(A\lambda)$ will necessarily
    have a small norm in consequence of the guaranteed 10\% classification error.

    Unfortunately, finite samples look slightly different.
    Suppose $\cX=\R^d$ and $\mu^{\cX}$ is absolutely continuous with respect to Lebesgue
    measure.  With probability 1, a random sample of any size
    will contain no noise, and $\SPAN(\cH)$ has
    a perfect predictor $H\tilde\lambda$ (over the sample);
    in particular, nothing inhibits the norms of solutions over $\hcL$.
\end{example}

In this example, the good predictor $H\tilde\lambda$ is potentially very complex,
as it is fitting noise.  The solution here will be to only control those predictors
with small norms; note that this deviation inequality embeds the reweighted average
margin expression from \cref{eq:pdm}.

\begin{lemma}
    \label[lemma]{fact:pdc:body}
    Let probability measure $\mu$ over $\cX\times \{-1,+1\}$ with empirical counterpart
    $\hmu$,
    any hypothesis class $\cH\in \Fvc$,
    reweighting $p \in L^\infty(\mu)$ with $\|A^\top p \|_\infty = 0$,
    and norm bound $C$ be given.
    Then, with probability at least $1-\delta$,
    $p\in [0,\|p\|_\infty]$ $\hmu$-a.e., and
    \[
        \sup_{\substack{ \lambda \in \Lambda \\ \|\lambda\|_1\leq C}}
        \left|
        \int (A \lambda) p d\hmu
        \right|
        \leq \frac {2C\|p\|_\infty}{m^{1/2}}
        \left(2\sqrt{2\cV(\cH)\ln(m+1)} + \sqrt{2\ln(1/\delta)}\right).
    \]
\end{lemma}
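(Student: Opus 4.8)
The plan is to collapse the supremum over $\lambda$ to a supremum over $\cH$ using the constraint $\|A^\top p\|_\infty = 0$, and then recognize what remains as an ordinary VC-type uniform deviation, handled by the standard symmetrization argument.

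\emph{Step 1 (reduction via the adjoint).} Fix a representative of $p$. Since $p\ge 0$ and $p\le\|p\|_\infty$ hold $\mu$-a.e., the set $N := \{(x,y): p(x,y)\notin[0,\|p\|_\infty]\}$ has $\mu(N)=0$, so a union bound over the $m$ draws gives $\hmu(N)=0$ with probability $1$; this is already the first claim. For the main bound, expand, for $\|\lambda\|_1\le C$,
\[
    \int (A\lambda)p\,d\hmu = -\sum_{h\in\cH}\lambda(h)\int y h(x)p(x,y)\,d\hmu,
\]
where the exchange of the $\ell^1$-summation with the (finite) sum defining $\int\cdot\,d\hmu$ is justified by $\sum_h|\lambda(h)|<\infty$ and $|yh(x)p(x,y)|\le\|p\|_\infty$ off $N$. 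Taking the supremum over the $\ell^1$-ball, whose support functional is the $\ell^\infty$-norm, yields $\sup_{\|\lambda\|_1\le C}|\int(A\lambda)p\,d\hmu| = C\sup_{h\in\cH}|\int yh(x)p(x,y)\,d\hmu|$. On the other hand, $\|A^\top p\|_\infty = 0$ means precisely $\int yh(x)p(x,y)\,d\mu = 0$ for every $h\in\cH$ (cf.\ \Cref{fact:AT} and the expression for $\|A^\top p\|_\infty$ in \Cref{fact:gamma_epsilon:duality:body}). Writing $g_h(x,y) := y h(x)p(x,y)$, it therefore suffices to bound $\sup_{h\in\cH}|\hmu(g_h)-\mu(g_h)|$ and multiply by $C$.

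\emph{Step 2 (symmetrization and Massart).} Each $g_h$ satisfies $|g_h|\le\|p\|_\infty$ off $N$. By the standard symmetrization inequality for the absolute centered empirical process, with $\sigma_i$ i.i.d.\ Rademacher,
\[
    \bbE\sup_{h\in\cH}|\hmu(g_h)-\mu(g_h)|
    \le 2\,\bbE\,\bbE_\sigma\sup_{h\in\cH}\left|\frac 1m\sum_{i=1}^m\sigma_i y_i h(x_i)p(x_i,y_i)\right|
    = 2\,\bbE\,\bbE_\sigma\sup_{h\in\cH}\left|\frac 1m\sum_{i=1}^m\sigma_i p(x_i,y_i)h(x_i)\right|,
\]
the last step using $\sigma_i y_i \stackrel d= \sigma_i$. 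Conditioned on the sample, the vectors $\big(p(x_i,y_i)h(x_i)\big)_{i=1}^m$ for $h\in\cH$ range over a set of cardinality at most the growth function $(m+1)^{\cV(\cH)}$, each of Euclidean norm at most $\|p\|_\infty\sqrt m$; Massart's finite-class lemma then bounds the inner conditional expectation by $\tfrac 1m\|p\|_\infty\sqrt m\sqrt{2\ln(2(m+1)^{\cV(\cH)})}$, and since $\cV(\cH)\ge 1$ the $\ln 2$ is absorbed to give $\bbE\sup_{h\in\cH}|\hmu(g_h)-\mu(g_h)| \le \frac{2\|p\|_\infty}{m^{1/2}}\cdot 2\sqrt{2\cV(\cH)\ln(m+1)}$.

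\emph{Step 3 (concentration and conclusion).} Replacing one sample point changes $\sup_{h\in\cH}|\hmu(g_h)-\mu(g_h)|$ by at most $2\|p\|_\infty/m$, so McDiarmid's inequality gives, with probability at least $1-\delta$,
\[
    \sup_{h\in\cH}|\hmu(g_h)-\mu(g_h)|
    \le \bbE\sup_{h\in\cH}|\hmu(g_h)-\mu(g_h)| + 2\|p\|_\infty\sqrt{\frac{\ln(1/\delta)}{2m}}
    \le \frac{2\|p\|_\infty}{m^{1/2}}\Big(2\sqrt{2\cV(\cH)\ln(m+1)} + \sqrt{2\ln(1/\delta)}\Big),
\]
using $\sqrt{\ln(1/\delta)/2}\le\sqrt{2\ln(1/\delta)}$. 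Multiplying by $C$ and intersecting with the probability-$1$ event of Step 1 proves the lemma. There is no genuine obstacle here; the only points needing care are (i) the passage from the supremum over the $\ell^1$-ball to a supremum over $\cH$, which is exactly where $\|A^\top p\|_\infty=0$ enters to \emph{center} the empirical process, and (ii) bookkeeping the constants through symmetrization, Massart, and McDiarmid so the stray $\ln 2$'s and $\sqrt 2$'s fold into the stated form. Measurability of the suprema is handled by the usual conventions for VC classes, invoked silently as elsewhere in the paper.
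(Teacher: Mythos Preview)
Your proof is correct and uses the same core ingredients as the paper (symmetrization, a VC/Massart bound, McDiarmid), but the route differs in one respect worth noting. The paper actually proves the more general \Cref{fact:pdc} (which bounds $\sup_{\|\lambda\|_1\le C}|\int (A\lambda)p\,d\hmu - \int(A\lambda)p\,d\mu|$ without the hypothesis $\|A^\top p\|_\infty=0$) and obtains \Cref{fact:pdc:body} by specialization; to do so it keeps the supremum over $\lambda$ throughout, treats $(A\lambda)_{x,y}\,p(x,y)$ as a per-example Lipschitz loss, and invokes a per-coordinate Lipschitz contraction principle followed by the convex-hull Rademacher identity $R(\{A\lambda:\|\lambda\|_1\le C\}) = C\,R(\cH)$. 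You instead collapse the supremum over the $\ell^1$-ball to a supremum over $\cH$ \emph{first}, via the algebraic $\ell^1/\ell^\infty$ duality, and then run the standard VC deviation argument directly on $\cH$; the condition $\|A^\top p\|_\infty=0$ is what lets you center the resulting empirical process. Your route is a bit more elementary (no contraction lemma needed), while the paper's route yields the uncentered deviation bound as a byproduct; the final constants match either way.
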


Armed with these tools, the structure of the nonseparable problem is as follows.
Note that the term $B_1$ is the aforementioned curvature lower bound,
and furthermore the facts
$\sum_i \alpha_i = \infty$ and $\sum_i \alpha_i^2 < \infty$ mean that the step sizes
exactly fit the constrained step size regime studied by
\citet[Equation (4)]{zhang_yu_boosting}.

\begin{lemma}
    \label[lemma]{fact:nonsep:controls}
    Suppose the setting and quantities in the preamble of \Cref{fact:nonsep:basic};
    the following statements hold simultaneously with probability at least $1- \delta/2$.
    \begin{enumerate}
        \item
            Every $\lambda\in \Lambda$ with
            $\|\lambda\|_1 \leq R_t + 4c$
            and
            $\hcL(A\lambda) < 2\ell(0)$
            has
            $m^{-1} \sum_{i=1}^m \ell''((A\lambda)_{x_i,y_i}) \geq B_1$.
        \item
            For every choice of step size,
            $\|\lambda_i\|_1 \leq R_i$
            and
            \[
                \alpha_i^2
                \leq
                \min\left\{
                \frac {9\|A^\top \nhcL(A\lambda_{i-1})\|_\infty^2}{4\rho^2 B_1^2}
                    \ ,  \ 
                \frac {\max\{5,2B_2/B_1\}(\hcL(A\lambda_{i-1}) - \hcL(A\lambda_i))}
                {\rho^2 B_1}
            \right\}
            .
            \]
        \item
            Let $\bar\lambda \in \Lambda$ with
            $\|\bar\lambda\|_1 \leq R_1 \sqrt{t-1} = R_{t-1}$
            and
            $\epsilon := \min_{i\in [t-1]} \hcL(A\lambda_i) - \hcL(A\bar\lambda) \geq 0$
            be arbitrary.
            For every choice of step size,
            \[
                \alpha_i
                \geq \frac {\rho (\hcL(A\lambda_{i-1}) - \hcL(A\bar\lambda))}
                {2B_2 (\|\bar\lambda\|_1 + R_1\sqrt{i-1})}
                \qquad
                \textup{and}
                \qquad
                \sum_{i=1}^t \alpha_i \geq \frac {\rho \epsilon \sqrt{t-1}} {4B_2R_1}.
            \]
    \end{enumerate}
\end{lemma}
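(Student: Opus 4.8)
The plan is to prove the three statements in the order listed, each built on its predecessor, with the failure probability $\delta/2$ split between (i) a Hoeffding bound for the single fixed set $[\bar p\geq\tau]$, controlled by the hypothesis $m\geq(2/\tau^2)\ln(4/\delta)$, and (ii) a uniform deviation bound supplied by \Cref{fact:pdc:body} applied with $p=\bar p$ and norm radius $\approx R_t$, controlled by the second sample-size hypothesis in the preamble of \Cref{fact:nonsep:basic}. For Part~1 (curvature) the witness is the dual optimum $\bar p$ of \Cref{fact:duality:body}: it satisfies $\|A^\top\bar p\|_\infty=0$, hence $\int(A\lambda)\bar p\,d\mu=0$ for all $\lambda$, and it is flat, $\mu([\bar p\geq\tau])\geq\tau$ with $\bar p\leq\|\bar p\|_\infty\leq\beta_2$ and $\tau>0$ (from $\bar\cL>0$), which together with $\inf_{z\in[-c,c]}\ell''(z)>0$ already yields positivity of $\tau,c,B_1,R_i$. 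I would run the whole argument over $\hmu$, using $|\int(A\lambda)\bar p\,d\hmu|\leq\epsilon_m$ (event (ii), $\epsilon_m\to0$) as a surrogate for the exact identity over $\mu$, and $\hmu([\bar p\geq\tau])\geq\tau/2$ (event (i)). Fix $\lambda$ with $\|\lambda\|_1\leq R_t+4c$ and $\hcL(A\lambda)<2\ell(0)$. Since $\ell$ is increasing and convex, $\ell(c)\hmu([A\lambda>c])\leq\hcL(A\lambda)<2\ell(0)$, and $c$ is chosen large enough that $\ell(c)\geq16\ell(0)/\tau$ (which holds because $\ell(c)\geq\ell(0)+\ell'(0)c$ and $c\geq16\ell(0)/(\tau\ell'(0))$), so $\hmu([A\lambda>c])\leq\tau/8$; on the other side, splitting the near-zero balance identity at $A\lambda=0$ and using $A\lambda\leq\ell(A\lambda)/\ell'(0)$ on $[A\lambda\geq0]$ gives $\int_{[A\lambda<0]}|A\lambda|\bar p\,d\hmu\leq 2\ell(0)\|\bar p\|_\infty/\ell'(0)+\epsilon_m$, and bounding the left side below by $c\tau\,\hmu([A\lambda<-c]\cap[\bar p\geq\tau])$ while invoking the size of $c$ (where the $\max\{1,1/\tau\}\max\{1,\|\bar p\|_\infty\}$ factor earns its keep) yields $\hmu([A\lambda<-c]\cap[\bar p\geq\tau])\leq\tau/8$. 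Subtracting both from $\hmu([\bar p\geq\tau])\geq\tau/2$ leaves $\hmu([|A\lambda|\leq c]\cap[\bar p\geq\tau])\geq\tau/4$, whence $m^{-1}\sum_i\ell''((A\lambda)_{x_i,y_i})\geq(\tau/4)\inf_{z\in[-c,c]}\ell''(z)\geq B_1$ after discarding points outside $[-c,c]$.

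For Part~2 (norm and step-size bounds) I proceed by induction on $i$ with invariant $\|\lambda_i\|_1\leq R_i$. Since \Cref{alg:alg:alg} is a descent method, $\hcL(A\lambda_{i-1})\leq\hcL(A\lambda_0)=\ell(0)$, and by convexity of $\alpha\mapsto g_i(\alpha):=\hcL(A(\lambda_{i-1}+\alpha v_i))$ the same bound holds along the segment up to the line-search point (for all three step options), so the loss constraint in Part~1 never binds along the step. Because $\cH$ is binary, $(Av_i)_{x,y}^2=1$, so $g_i''(\alpha)=m^{-1}\sum_j\ell''((A(\lambda_{i-1}+\alpha v_i))_{x_j,y_j})$ is exactly the quantity Part~1 bounds below by $B_1$ wherever $\|\lambda_{i-1}+\alpha v_i\|_1\leq R_t+4c$, while $g_i''\leq B_2$ everywhere by smoothness. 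Combining this local $B_1$-strong convexity with $B_2$-smoothness and the defining properties of the three step choices gives both halves of the displayed bound on $\alpha_i^2$: the first, $\alpha_i\leq 3\|A^\top\nhcL(A\lambda_{i-1})\|_\infty/(2\rho B_1)$, from $0=g_i'(\bar\alpha_i)\geq g_i'(0)+B_1\alpha_i$; the second, $\hcL(A\lambda_{i-1})-\hcL(A\lambda_i)\geq\rho^2B_1\alpha_i^2/\max\{5,2B_2/B_1\}$, from the standard one-dimensional progress inequality. Summing the second half telescopes to $\sum_{j\leq i}\alpha_j^2\leq\max\{5,2B_2/B_1\}\ell(0)/(\rho^2B_1)=R_1^2$, so $\|\lambda_i\|_1\leq\sum_{j\leq i}\alpha_j\leq\sqrt{i}\,(\sum_{j\leq i}\alpha_j^2)^{1/2}\leq R_1\sqrt i=R_i$ by Cauchy--Schwarz, closing the induction.

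For Part~3 (progress lower bound), with $\|\lambda_{i-1}\|_1\leq R_1\sqrt{i-1}$ from Part~2 and the first-order characterization of convexity, $\|A^\top\nhcL(A\lambda_{i-1})\|_\infty\geq(\hcL(A\lambda_{i-1})-\hcL(A\bar\lambda))/\|\bar\lambda-\lambda_{i-1}\|_1\geq(\hcL(A\lambda_{i-1})-\hcL(A\bar\lambda))/(\|\bar\lambda\|_1+R_1\sqrt{i-1})$, so the directional derivative along $v_i$ has magnitude at least $\rho$ times this; a $B_2$-smooth line search of any of the three types takes a step of size at least the directional-derivative magnitude over $B_2$, which is the first claimed inequality (up to the stated factor). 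Summing over $i\leq t$, bounding $\hcL(A\lambda_{i-1})-\hcL(A\bar\lambda)\geq\epsilon$ for $i\leq t$ and each denominator by $2B_2\cdot 2R_1\sqrt{t-1}$, gives $\sum_{i\leq t}\alpha_i\geq t\rho\epsilon/(4B_2R_1\sqrt{t-1})\geq\rho\epsilon\sqrt{t-1}/(4B_2R_1)$.

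I expect Part~1 to be the main obstacle: extracting a quantitative lower bound on the mass of moderate-margin examples from the soft, two-sided identity $\int(A\lambda)\bar p\,d\hmu\approx0$ — and in particular calibrating $c$ so that the large-positive-margin and large-negative-margin failure modes are simultaneously excluded while keeping $B_1>0$ — is the delicate conceptual step, and it must be arranged so that its sampling cost is absorbed by the sample-size hypotheses of \Cref{fact:nonsep:basic}. A secondary difficulty is unwinding the mild self-reference in the Part~2 induction, where using the curvature along a step presupposes that the step stays inside $\{\|\lambda\|_1\leq R_t+4c\}$; this is handled by first arguing on the truncated segment and then checking that the line-search point cannot escape it once $m$ (hence $R_t$) is large enough, which is exactly what the sample-size hypotheses guarantee.
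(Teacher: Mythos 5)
Your proof of Part~1 is a valid reorganization of the paper's argument: instead of going through the contrapositive lemmas \Cref{fact:nonsep:ptil:smallness} and \Cref{fact:nonsep:control_linesearch}, you directly split $[\bar p\geq\tau]$ into the three margin regimes $[A\lambda>c]$, $[A\lambda<-c]$, $[|A\lambda|\leq c]$ and bound the first two away using the same ingredients ($\ell(c)\geq 16\ell(0)/\tau$ from $\ell(z)\geq\ell(0)+\ell'(0)z$, and the near-balance $|\int(A\lambda)\bar p\,d\hmu|\leq c\tau^2/8$). A small arithmetic slip: your bound on $\hmu([A\lambda<-c]\cap[\bar p\geq\tau])$ actually comes out as $\tau/8+\tau/8=\tau/4$ (one $\tau/8$ from the deviation term $D/(c\tau)$, one from $2\ell(0)\|\bar p\|_\infty/(\ell'(0)c\tau)$), not $\tau/8$; the leftover is then $\hmu([|A\lambda|\leq c]\cap[\bar p\geq\tau])\geq\tau/2-\tau/8-\tau/4=\tau/8$, which matches the $\tau/8$ factor inside $B_1$ exactly and so the conclusion still holds. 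Your Part~3 argument is essentially the paper's \Cref{fact:stepsize:basic_lb}, with a crude denominator bound in place of the integral estimate; it gives the same constant.

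The genuine gap is in Part~2, and you flag it yourself but your proposed fix does not work. You prove Part~1 with a norm constraint on $\lambda$ itself ($\|\lambda\|_1\leq R_t+4c$), and then you want to invoke the curvature along the entire line-search ray from $\lambda_{i-1}$ to $\bar\alpha_i$. But $\bar\alpha_i$ can be much larger than $R_t+4c-\|\lambda_{i-1}\|_1$: the inductive bound gives $\|\lambda_{i-1}\|_1\leq R_{i-1}$ with $R_t-R_{i-1}=R_1(\sqrt t-\sqrt{i-1})$ possibly as small as $O(R_1/\sqrt t)$, while the within-ball quadratic bound only controls $\bar\alpha_i$ by $\|A^\top\nhcL\|_\infty/B_1\leq\beta_2/B_1$, which is a distribution-dependent constant that does not shrink with $m$ and easily exceeds $4c$. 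So ``the line-search point cannot escape once $m$ is large'' is false as stated, and arguing on the truncated segment alone cannot close the loop because the minimizer of the restricted quadratic need not be the minimizer of $g_i$. What the paper actually establishes (and what \Cref{fact:stepsize:basic_ub} requires as its hypothesis) is the curvature bound for \emph{every} line-search candidate $\lambda'=\lambda+\alpha\bfe_h$ with $\hcL(A\lambda')<2\ell(0)$ and $\|\lambda\|_1\leq R_t$, with no norm restriction on $\lambda'$; this is the content of \Cref{fact:nonsep:control_linesearch}, and its proof hinges on the binary structure of $\cH$: along a single coordinate, the set $[|H\lambda'|\geq c]$ stops changing once $|\alpha|$ leaves a bounded window, so a candidate $\lambda'$ far out on the ray inherits the margin profile of a truncated $\lambda''$ with $\|\lambda-\lambda''\|_1\leq 2c$, and convexity of $\hcL$ then transfers the loss lower bound from $\lambda''$ to $\lambda'$. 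Your plan is missing precisely this constancy-plus-convexity step, and without it the Part~2 induction does not close.
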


\subsection{Proof of \Cref{fact:nonsep:basic}}
The convergence analysis due to \citet{zhang_yu_boosting} can be adjusted to the
present setting (where step and coordinate selection are decoupled), yielding the
following guarantee.  Note that \Cref{fact:nonsep:controls} also allows the application
of the analysis due to \citet{bartlett_traskin_adaboost} (again with decoupling
modifications), however this leads to a rate of roughly $\cO(1 / \sqrt{\ln(m)})$.

\begin{lemma}
    \label[lemma]{fact:nonsep:opt:zhangyu_style}
    Let $\ell \in \Lbds \cap \Ltd$ with Lipschitz gradient
    parameter $B_2$,
    binary class $\cH$,
    time horizon $t$,
    and empirical probability measure $\hmu$ 
    be given.
    Let $\bar\lambda\in\Lambda$ be arbitrary,
    and
    suppose there exists $c_3>0$
    with
    $c_3 \alpha_i \leq \|A^\top \nhcL(A\lambda_{i-1})\|_\infty$ for
    for
    all $0 \leq i \leq t$.
    Then
    \[
        \hcL(A\lambda_{t}) - \hcL(A\bar\lambda)
        \leq
        \left(\hcL(A\lambda_{0}) - \hcL(A\bar\lambda)\right)
        \left(
            \frac
            {\|\bar \lambda\|_1}
            {\|\bar \lambda\|_1 + \sum_{i\leq t} \alpha_i}
        \right)^{6B_2/ (c_3\rho^2)}.
    \]
\end{lemma}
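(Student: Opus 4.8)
The plan is to mimic the restricted-step convergence analysis of \citet{zhang_yu_boosting}, carried out with the coordinate choice and the line search decoupled. Throughout, abbreviate $\epsilon_i := \hcL(A\lambda_i) - \hcL(A\bar\lambda)$, $g_i := \|A^\top \nhcL(A\lambda_{i-1})\|_\infty$, $w := \|\bar\lambda\|_1$, and $s_i := \sum_{j\leq i}\alpha_j$ (so $s_0 = 0$, and $\|\lambda_i\|_1 \leq s_i$ since $\lambda_0 = 0$ and each $v_j$ has unit $\ell_1$-norm). Each of the three step options is a genuine descent step, so $i\mapsto \hcL(A\lambda_i)$ is nonincreasing; since moreover the factor $(w/(w+s_t))^{6B_2/(c_3\rho^2)}$ lies in $[0,1]$ and vanishes only when $w=0$ (in which case $\epsilon_0 = 0$ already), the asserted inequality is immediate whenever $\epsilon_k \leq 0$ for some $k\leq t$. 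Hence I may assume $\epsilon_i > 0$ for all $0\leq i \leq t$; by the convexity bound recorded below this also forces $g_i > 0$, so each $v_i$ is a strict descent direction and $\alpha_i > 0$.

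The first two ingredients are per-step. (i) \emph{Progress.} For every step, $\hcL(A\lambda_{i-1}) - \hcL(A\lambda_i) \geq \rho^2 g_i^2/(c_0 B_2)$ for an absolute constant $c_0$. This follows from the quadratic upper bound furnished by the $B_2$-Lipschitz gradient along the ray $\alpha\mapsto \hcL(A(\lambda_{i-1}+\alpha v_i))$ (using $\|A\bfe_{h_i}\|_{L^2(\hmu)}\leq 1$) together with the $\rho$-approximate-coordinate property $|\nhcL(A\lambda_{i-1})^\top A v_i| \geq \rho g_i$: for option 1 it is the standard exact-line-search estimate; for option 2 one uses that $\hcL$ is nonincreasing on $[0,\bar\alpha_i)$ and that $\alpha_i$ already lies past the minimizer $g'/B_2$ of the quadratic majorant (with $g' := |\nhcL(A\lambda_{i-1})^\top A v_i|$), which reduces to the same estimate; for option 3 it is the classical consequence of the Wolfe conditions. (ii) \emph{Duality gap.} By convexity of $\hcL\circ A$ and $\ell_\infty$–$\ell_1$ duality, $\epsilon_{i-1} \leq \ip{A^\top \nhcL(A\lambda_{i-1})}{\lambda_{i-1}-\bar\lambda} \leq g_i\|\lambda_{i-1}-\bar\lambda\|_1 \leq g_i(w+s_{i-1})$.

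Next I would fuse (i), (ii), and the hypothesis $c_3\alpha_i \leq g_i$ into a multiplicative recursion $\epsilon_i \leq \epsilon_{i-1}(1 - K\alpha_i/(w+s_{i-1}))$ with $K := 6B_2/(c_3\rho^2)$; this is the step where the bookkeeping of \citet{zhang_yu_boosting} is needed — apportioning the quadratic gain $g_i^2$ between one factor controlled through $g_i \geq c_3\alpha_i$ and one through $g_i \geq \epsilon_{i-1}/(w+s_{i-1})$, while carefully tracking the constants emerging from the smoothness term. Given the recursion, since $\epsilon_{i-1}>0$ I pass to $\epsilon_i \leq \epsilon_{i-1}\exp(-K\alpha_i/(w+s_{i-1}))$ via $1-x\leq e^{-x}$, multiply over $i=1,\dots,t$, and invoke $\alpha_i/(w+s_{i-1}) \geq \ln((w+s_i)/(w+s_{i-1}))$ (from $\ln(1+x)\leq x$) to obtain $\sum_{i\leq t}\alpha_i/(w+s_{i-1}) \geq \ln((w+s_t)/w)$; hence $\epsilon_t \leq \epsilon_0\exp(-K\ln((w+s_t)/w)) = \epsilon_0(w/(w+s_t))^K$, which is the claim.

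The main obstacle I anticipate is precisely the assembly of this recursion with the stated exponent $6B_2/(c_3\rho^2)$: a direct chaining of the descent lemma with (ii) yields an exponent that is the wrong power of the data-dependent constants, so one must follow the sharper route of \citet{zhang_yu_boosting} (their restricted-step greedy boosting analysis) rather than a plain convex-optimization estimate. A lesser, routine chore is verifying that all three step options genuinely satisfy (i) and keep $\hcL$ nonincreasing; twice differentiability ($\ell\in\Ltd$) is what licenses the quadratic majorants used throughout.
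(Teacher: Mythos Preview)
Your proposal is the paper's proof: the same per-step descent bound $\epsilon_{i-1}-\epsilon_i\geq \rho^2 g_i^2/(6B_2)$ from \Cref{fact:singlestep:quadub,fact:singlestep:wolfe}, the same convexity gap $\epsilon_{i-1}\leq g_i\|\lambda_{i-1}-\bar\lambda\|_1$, the same factorization $g_i^2\geq c_3\alpha_i\cdot \epsilon_{i-1}/(w+s_{i-1})$, the same $1-x\leq e^{-x}$ passage, and the same telescoping $\sum_i\alpha_i/(w+s_{i-1})\geq\ln((w+s_t)/w)$. The ``assembly'' you flag as the main obstacle is in the paper exactly this one-line direct chaining and nothing sharper; it produces the multiplicative constant $c_4=c_3\rho^2/(6B_2)$, so the exponent the paper's own derivation actually yields is $c_3\rho^2/(6B_2)$ rather than the reciprocal $6B_2/(c_3\rho^2)$ printed in the statement --- your worry about matching the stated exponent is about a typo, not a gap in the method.
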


From here, there is little to do: the conditions for this rate are met with high
probability thanks to \Cref{fact:nonsep:controls}, and the rest is standard uniform
convergence.

\acks{This manuscript exists thanks to  valuable comments and support from
    Akshay Balsubramani, Sanjoy Dasgupta, Daniel Hsu, Alexander Rakhlin,
    Robert Schapire,
Karthik Sridharan, and the COLT 2013 reviewers.}

\addcontentsline{toc}{section}{References}
\bibliography{ab}

\appendix

\section{Spaces and Linear Operators}
\label{sec:linops}

As stated in \Cref{sec:notation}, $H$ and $A$ are mappings which produce bounded functions;
the bulk of the analysis, however,
considers them as producing functions over $L^1(\nu^\cX)$ and
$L^1(\nu)$ as follows (where $\nu$ is a probability distribution over $\cX\times \{-1,+1\}$).



\begin{lemma}
    \label[lemma]{fact:AH_basic}
    Let $\nu$ be a probability measure over $\cX\times \{-1,+1\}$,
    and let $\nu^\cX$ denote the marginal distribution over $\cX$.
    \begin{enumerate}
        \item The definition of $H$ and $A$ is valid for arbitrary
            weightings $\lambda \in \Lambda$; in particular,
            $\supp(\lambda)$ is countable,
            and
            \begin{align*}
                (H\lambda)_x
                &= \int h(x) \lambda(h) d\rho(h)
                = \sum_{h\in\supp(\lambda)} (H\lambda)_x \lambda(h),
                \\
                (A\lambda)_{x,y}
                &= -y\int (H\lambda)_x \lambda(h) d\rho(h)
                = -y\sum_{h\in\supp(\lambda)} (H\lambda)_x \lambda(h),
            \end{align*}
        \item $H$ and $A$ are linear operators.
        \item $H : \Lambda \to L^1(\nu^\cX)$
            and $A : \Lambda \to L^1(\nu)$
            are continuous linear operators (with unit norm).
    \end{enumerate}
\end{lemma}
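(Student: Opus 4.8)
The plan is to dispatch the three claims in order, each a routine consequence of the fact that $\Lambda = L^1(\rho)$ for $\rho$ the counting measure on $\cH$ together with $\sup_{x,h}|h(x)|\le 1$. For part (1), I would first recall that any $\lambda\in L^1(\rho)$ has countable support: since $\sum_{h\in\cH}|\lambda(h)|<\infty$, for each positive integer $n$ the set $S_n := \{h : |\lambda(h)|\ge 1/n\}$ has cardinality at most $n\|\lambda\|_1$, hence is finite, and $\supp(\lambda)=\bigcup_{n\ge 1}S_n$ is countable. Then, for fixed $x$, the family $(h(x)\lambda(h))_{h\in\cH}$ is absolutely summable because $|h(x)\lambda(h)|\le|\lambda(h)|$, so $(H\lambda)_x = \sum_{h\in\cH}h(x)\lambda(h)$ is an unconditionally convergent series equal to $\sum_{h\in\supp(\lambda)}h(x)\lambda(h)$, and — integration against the counting measure being exactly summation of an absolutely integrable integrand — equal to $\int h(x)\lambda(h)\,d\rho(h)$; multiplying through by $-y$ with $|y|=1$ gives the stated identities for $A\lambda$. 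Measurability of $x\mapsto(H\lambda)_x$ (and of $(x,y)\mapsto(A\lambda)_{x,y}$) follows since it is the pointwise limit of the finite partial sums $\sum_{h\in F}h(x)\lambda(h)$ over an increasing exhaustion $F$ of $\supp(\lambda)$, each measurable under the standing (tacit) measurability of the individual weak learners.

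For part (2), absolute summability of each series lets me rearrange $\sum_h h(x)(a\lambda(h)+b\lambda'(h)) = a\sum_h h(x)\lambda(h) + b\sum_h h(x)\lambda'(h)$, so $H(a\lambda+b\lambda') = aH\lambda + bH\lambda'$ pointwise, hence as elements of $L^1(\nu^\cX)$, and applying $-y(\cdot)$ gives the same for $A$. For part (3), I would estimate, for $\lambda\in\Lambda$,
\[
\|H\lambda\|_{L^1(\nu^\cX)} = \int |(H\lambda)_x|\,d\nu^\cX \le \int \sum_{h\in\cH}|h(x)|\,|\lambda(h)|\,d\nu^\cX = \sum_{h\in\cH}|\lambda(h)|\int|h|\,d\nu^\cX \le \sum_{h\in\cH}|\lambda(h)| = \|\lambda\|_1 ,
\]
where the sum–integral interchange is Tonelli's theorem (nonnegative integrand) and the last inequality uses $\int|h|\,d\nu^\cX\le 1$ since $\nu^\cX$ is a probability measure. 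Thus $H$ is continuous with $\|H\|\le 1$, and the bound is tight: $\|H\bfe_h\|_{L^1(\nu^\cX)} = \int|h|\,d\nu^\cX$, so $\|H\| = \sup_{h\in\cH}\int|h|\,d\nu^\cX$, which is $1$ whenever $\cH$ contains a $\{-1,+1\}$-valued hypothesis. For $A$, since $|y|=1$ the marginalization over $\{-1,+1\}$ gives $\|A\lambda\|_{L^1(\nu)} = \int|(H\lambda)_x|\,d\nu = \int|(H\lambda)_x|\,d\nu^\cX = \|H\lambda\|_{L^1(\nu^\cX)}$, so the same bound and tightness transfer verbatim.

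The only genuinely delicate points are the measurability bookkeeping in part (1), which relies on the standing assumption that each $h\in\cH$ is Borel measurable, and the Tonelli interchange in part (3); everything else is manipulation of absolutely convergent series, so I expect no real obstacle. The one phrasing subtlety is that ``unit norm'' should be read as ``operator norm at most one,'' attained under the mild condition that $\cH$ contains a $\pm1$-valued hypothesis.
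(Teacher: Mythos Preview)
Your proposal is correct and follows essentially the same route as the paper: countable support from $\lambda\in L^1(\rho)$ with $\rho$ counting measure, well-definedness and linearity from absolute summability together with $|h(x)|\le 1$, and the operator norm bound from the same pointwise estimate. You are somewhat more careful than the paper on a couple of points---explicitly invoking Tonelli for the sum/integral interchange, noting measurability of $H\lambda$ as a pointwise limit of finite partial sums, and observing that ``unit norm'' is really ``operator norm at most one'' with equality only when $\cH$ contains a $\{-1,+1\}$-valued hypothesis---but these are refinements of the same argument, not a different approach.
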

\begin{proof}
    If $\lambda \in \Lambda$, then $\int |\lambda(h)|d\rho(h)| < \infty$, and since
    $\rho$ is a counting measure, it follows that $\supp(\lambda)$ is countable.
    Furthermore, for any $x,y,h$,  $|h(x)| \leq |-yh(x)| \leq 1$, and
    thus the rescalings $h(x)\lambda$ and $-yh(x)\lambda)$ are both in $\Lambda$,
    and in particular
    \[
        (H\lambda)_x
        = \int h(x) \lambda(h) d\rho(h)
        = \sum_{h\in\supp(\lambda)} (H\lambda)_x \lambda(h),
    \]
    and similarly for $A$.

    It follows by definition (and another check for integrability)
    that $(H(a\lambda_1 + b\lambda_2))_x = a(H\lambda_1)_x + b(H\lambda_2)_x$,
    and thus $H$ is a linear operator; the proof for $A$ is the same.

    Lastly, $H$ is continuous with unit norm, since boundedness of each $h$ combined
    with $\nu^\cX$ being a probability measure gives
    \begin{align*}
        \sup \left\{ \|H\lambda\| : \lambda\in \Lambda, \|\lambda\|_1 \leq 1\right\}
        &=
        \sup \left\{ \int (H\lambda)d\nu^\cX : \lambda\in \Lambda, \|\lambda\|_1 \leq 1\right\}
        \\
        &\leq
        \sup \left\{ \int \sum_{h\in \supp(\lambda)}  h(x) \lambda(h)d\nu^\cX(x) :
        \lambda\in \Lambda, \|\lambda\|_1 \leq 1\right\}
        \\
        &\leq
        \sup \left\{ \sup_{x,h} |h(x)| \left|\sum_{h\in \supp(\lambda)}\lambda(h)\right| :
        \lambda\in \Lambda, \|\lambda\|_1 \leq 1\right\}
        \\
        &=
        1.
    \end{align*}
    (The proof for $A$ is the same, since $y\in \{-1,+1\}$ implies $|yh(x)| = |h(x)|$.)
\end{proof}

Note, $H$ and $A$ may also be defined as Bochner (or similar) integrals.

Next, to develop the adjoint of $A^\top$, relevant dual spaces need to be
established (the adjoint of $H^\top$ does not appear, but is similar).

\begin{lemma}
    \label[lemma]{fact:banach_duality}
    If $\nu$ is a probability measure over $\cX\times \{-1,+1\}$,
    then $L^1(\nu)^*$ is isometrically isomorphic to $L^\infty(\nu)$, and in particular
    for every $Q\in L^1(\nu)^*$ there exists $q\in L^\infty(\nu)$ so that
    $Q(f) = \int qfd\nu$ for every $f\in L^1(\nu)$.  Similarly,
    recalling $\Lambda = L^1(\rho)$ where $\rho$ is counting measure over some class
    $\cH$, the dual $L^1(\rho)^*$ is isometrically isomorphic to $L^\infty(\rho)$,
    and once again elements of $L^1(\rho)^*$ can be written as integrals over $\rho$
    with an element of $L^\infty(\rho)$.
\end{lemma}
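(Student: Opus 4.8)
The plan is to reduce both claims to the classical fact that $L^1(\mu)^*$ is isometrically isomorphic to $L^\infty(\mu)$ whenever $\mu$ is $\sigma$-finite (or, more generally, decomposable), and in particular to observe that nothing beyond the standard argument is needed. The embedding $L^\infty(\mu)\hookrightarrow L^1(\mu)^*$ via $g\mapsto (f\mapsto \int gf\,d\mu)$ is immediate from H\"older's inequality and has operator norm at most $\|g\|_\infty$; the reverse norm bound follows by testing the functional against a normalized, sign-adjusted indicator of a set on which $|g|$ exceeds $\|g\|_\infty - \epsilon$. Thus the only real content is surjectivity: every bounded linear functional arises this way.

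For the probability measure $\nu$: since $\nu$ is finite, every indicator $\1_E$ lies in $L^1(\nu)$, so $E\mapsto Q(\1_E)$ is a finitely additive set function; countable additivity follows from continuity of $Q$ together with dominated convergence (for disjoint $E_k$, the partial sums of $\1_{E_k}$ converge in $L^1(\nu)$ to $\1_{\bigcup_k E_k}$), and the resulting signed measure is absolutely continuous with respect to $\nu$ because $\nu(E)=0$ forces $\|\1_E\|_{L^1(\nu)}=0$ and hence $Q(\1_E)=0$. The Radon--Nikodym theorem then yields a density $q\in L^1(\nu)$ with $Q(\1_E)=\int_E q\,d\nu$; linearity extends this to simple functions, essential boundedness of $q$ by $\|Q\|$ comes from the indicator-testing argument above, and density of simple functions in $L^1(\nu)$ together with continuity of both sides upgrades $Q(f)=\int qf\,d\nu$ to all $f\in L^1(\nu)$.

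For the counting measure $\rho$ over $\cH$, the same conclusion holds even though $\rho$ need not be $\sigma$-finite, and this is where I expect the only subtlety to lie; it is dissolved by the fact, already recorded in \Cref{fact:AH_basic}, that every $\lambda\in L^1(\rho)$ has countable support. Concretely, given $Q\in L^1(\rho)^*$, set $q(h):=Q(\bfe_h)$; testing $Q$ against $\pm\bfe_h$ gives $|q(h)|\le\|Q\|$, so $q\in L^\infty(\rho)$. For any $\lambda\in L^1(\rho)$ with support $\{h_1,h_2,\ldots\}$, the partial sums $\sum_{k\le n}\lambda(h_k)\bfe_{h_k}$ converge to $\lambda$ in $L^1(\rho)$, so linearity and continuity of $Q$ give $Q(\lambda)=\sum_k \lambda(h_k)q(h_k)=\int q\lambda\,d\rho$, while the reverse norm inequality $\|q\|_\infty\ge\|Q\|$ again follows by testing against a single well-chosen $\bfe_h$ (or directly from H\"older). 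In short, the $\sigma$-finite argument applies verbatim once one restricts attention to the countable support that $L^1(\rho)$ is forced to live on.
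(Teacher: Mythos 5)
Your proof is correct, but the paper itself does not prove this lemma from scratch: it simply cites Folland's textbook, namely Theorem~6.15 for the $\sigma$-finite case (the probability measure $\nu$) and Exercises~3.15 and~6.25 for the general counting-measure case. You instead give a self-contained argument, and the two routes are worth contrasting. For the probability measure, your Radon--Nikodym derivation is precisely the content of the textbook theorem, so nothing new is gained there beyond transparency. For the counting measure $\rho$, your reduction to countable supports is exactly the right way to dissolve the apparent obstruction that $\rho$ may fail to be $\sigma$-finite when $\cH$ is uncountable; the observation that any $\lambda \in L^1(\rho)$ is supported on a countable set (already recorded in the paper's \Cref{fact:AH_basic}) does all the work, since truncated partial sums converge in $L^1(\rho)$ and continuity of the functional then forces the integral representation with the pointwise-defined $q(h):=Q(\bfe_h)$. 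This buys an explicit, elementary argument in place of an exercise citation. One small slip in your write-up: in the counting-measure paragraph, you attribute the inequality $\|q\|_\infty \geq \|Q\|$ to testing against a well-chosen $\bfe_h$, but testing against $\pm\bfe_h$ gives the other direction ($\|q\|_\infty \leq \|Q\|$, which you already have); the direction $\|Q\|\leq\|q\|_\infty$ follows from H\"older as you parenthetically note, so the conclusion is unaffected, but the attribution should be swapped.
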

\begin{proof}
    The first relationship follows since $\nu$ is a probability measure and thus
    $\sigma$-finite
    \citep[Theorem 6.15]{folland},
    and the second is a general property of counting measures (even though the cardinality
    of $\cH$ may preclude $\rho$ from being $\sigma$-finite)
    \citep[Exercises 3.15 and 6.25]{folland}.
\end{proof}

\begin{remark}
    This manuscript \emph{always} identifies the above dual spaces by the provided
    isometric isomorphism, a fact which will be crucial in the
    convex duality theory of $\cL$
    (cf. \Cref{fact:duality}).
\end{remark}

Lastly, the adjoint $A^\top$ has the following structure.

\begin{lemma}
    \label[lemma]{fact:AT}
    Let probability measure $\nu$ over $\cX\times \{-1,+1\}$
    and any $\cH$ be given.
\begin{enumerate}
\item
    Considering $A$ as a linear operator from $\Lambda$ to $L^1(\nu)$,
    its adjoint $A^\top: \Lambda^* \to L^1(\nu)^*$
    is the unique continuous linear operator satisfying $(A^\top p)(\lambda)
    = \int (A\lambda) p$, where $p\in L^\infty(\nu)$ and $\lambda \in \Lambda$
    (and dual spaces have been identified via isomorphism
    as in \Cref{fact:banach_duality}).
%
\item
    Again identifying $A^\top p$ for $p\in L^\infty(\nu)$ with an element
    of $L^\infty(\rho)$,
    \begin{align*}
        \|A^\top p\|_\infty
        &= \sup\left\{
            \left|\int y h(x) p(x,y) d\nu(x,y)\right|
            :
            h\in \cH
        \right\}
        \\
        &= \sup\left\{
            \int (A\lambda)_{x,y} p(x,y) d\nu(x,y)
            :
            \lambda \in \Lambda, \|\lambda\|_1 \leq 1
        \right\}.
    \end{align*}
\item
    The map $p \mapsto \|A^\top p\|_\infty$
    is a convex function over $L^\infty(\nu)$, and is lower semi-continuous in the
    weak* topology (i.e., the weak topology induced on $L^\infty(\nu)$ by $L^1(\nu)$).
    \end{enumerate}
\end{lemma}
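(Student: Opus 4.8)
The plan is to read \Cref{fact:AT} as three successive unpackings of the definition of a Banach-space adjoint, leaning throughout on the dual-space identifications of \Cref{fact:banach_duality} and on \Cref{fact:AH_basic}, which says $A:\Lambda\to L^1(\nu)$ is a continuous linear operator of unit norm. For part 1 I would simply invoke the standard construction of the adjoint: for a bounded linear operator $A:X\to Y$ between Banach spaces there is a unique bounded linear map $A^\top:Y^*\to X^*$ characterized by $(A^\top q)(x)=q(Ax)$, and it satisfies $\|A^\top\|=\|A\|$ (here $=1$). Specializing $X=\Lambda=L^1(\rho)$ and $Y=L^1(\nu)$, and identifying $Y^*$ with $L^\infty(\nu)$ and $X^*$ with $L^\infty(\rho)$ via \Cref{fact:banach_duality} so that $p\in L^\infty(\nu)$ corresponds to the functional $f\mapsto\int pf\,d\nu$, the defining relation reads $(A^\top p)(\lambda)=\int(A\lambda)p\,d\nu$; uniqueness and continuity are inherited from the general construction.

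For part 2 the key observation is that $A^\top p$, viewed through \Cref{fact:banach_duality} as an element of $L^\infty(\rho)$, is precisely the function on $\cH$ obtained by evaluating the functional $A^\top p$ at the basis weightings $\bfe_h$: namely $(A^\top p)(\bfe_h)=\int(A\bfe_h)p\,d\nu=-\int yh(x)p(x,y)\,d\nu(x,y)$. Since $\rho$ is the counting measure, the $L^\infty(\rho)$ norm is the supremum over $h\in\cH$ of the absolute value of this expression, which is the first displayed equality. For the second equality I would use the isometry of \Cref{fact:banach_duality} once more to write $\|A^\top p\|_\infty=\|A^\top p\|_{\Lambda^*}=\sup\{|(A^\top p)(\lambda)|:\|\lambda\|_1\le1\}=\sup\{|\int(A\lambda)p\,d\nu|:\|\lambda\|_1\le1\}$, and then remove the absolute value: the unit ball of $\Lambda$ is symmetric and $A$ is linear, so replacing $\lambda$ by $-\lambda$ flips the sign of $\int(A\lambda)p\,d\nu$, whence the supremum of the signed quantity already equals the supremum of its absolute value.

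For part 3, convexity is immediate since $A^\top$ is linear and $\|\cdot\|_\infty$ is a norm, so $p\mapsto\|A^\top p\|_\infty$ is a convex function composed with a linear map. For weak* lower semicontinuity I would use the representation just derived, $\|A^\top p\|_\infty=\sup\{\int(A\lambda)p\,d\nu:\lambda\in\Lambda,\|\lambda\|_1\le1\}$: for each fixed $\lambda$, \Cref{fact:AH_basic} gives $A\lambda\in L^1(\nu)$, so $p\mapsto\int(A\lambda)p\,d\nu$ is by definition a continuous functional in the weak* topology that $L^1(\nu)$ induces on $L^\infty(\nu)$; a pointwise supremum of weak* continuous functions is weak* lower semicontinuous, finishing the lemma.

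I expect no genuine obstacle here, as the argument is functional-analytic bookkeeping; the only points needing care are (i) keeping the two identifications $L^1(\nu)^*\cong L^\infty(\nu)$ and $L^1(\rho)^*\cong L^\infty(\rho)$ consistent, as flagged in the remark following \Cref{fact:banach_duality}, so that ``$A^\top p$'' unambiguously names a bounded function on $\cH$, and (ii) ensuring the weak* continuity in part 3 is argued on the predual side, i.e.\ using that $A\lambda$ lands in $L^1(\nu)$ rather than merely in $L^\infty(\nu)$.
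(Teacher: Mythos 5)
Your proposal is correct, and the first two parts run along essentially the same lines as the paper: invoke the general adjoint construction for bounded operators between Banach spaces, identify duals via \Cref{fact:banach_duality}, and read off $\|A^\top p\|_\infty$ by evaluating at the coordinate weightings $\bfe_h$ under the counting-measure interpretation of $L^\infty(\rho)$.

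You do diverge in two places, both in a direction that shortens the argument. For the second equality of part 2, the paper re-derives the chain $\sup_h|\cdots|=\sup_{\|\lambda\|_1\le1}\int(A\lambda)p$ by hand, expanding $A\lambda$ over its countable support and appealing to dominated convergence; you instead observe that the isometry $\Lambda^*\cong L^\infty(\rho)$ already forces $\|A^\top p\|_{L^\infty(\rho)}=\sup_{\|\lambda\|_1\le1}|(A^\top p)(\lambda)|=\sup_{\|\lambda\|_1\le1}|\int(A\lambda)p\,d\nu|$, and then drop the absolute value by symmetry of the unit ball. That avoids the DCT step entirely and is cleaner, though the paper's explicit verification doubles as a concrete check of the adjoint relation. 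For part 3 the paper identifies $p\mapsto\|A^\top p\|_\infty$ as the Fenchel conjugate of an indicator function on $L^1(\nu)$ and quotes that conjugates are weak* lower semi-continuous; you instead use that for each fixed $\lambda$ the functional $p\mapsto\int(A\lambda)p$ is weak*-continuous (because $A\lambda\in L^1(\nu)$), and that a supremum of continuous functions is lower semi-continuous. Both are valid; your route is the more elementary one, while the paper's conjugate viewpoint foreshadows the Fenchel duality machinery it deploys shortly afterward in the proof of \Cref{fact:duality}. No gap either way.
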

\begin{proof}
    \begin{enumerate}
        \item
    Recall by \Cref{fact:AH_basic} that $A$ is a continuous linear operator;
    the basic properties of $A^\top$ follow
    by properties of adjoints of continuous linear operators
    \citep[Theorem 4.10]{rudin_functional} combined with the isometric isomorphism of
    the relevant dual spaces as provided by \Cref{fact:banach_duality}.


\item
    Let $p\in L^\infty(\nu)$ be given.
    Since the isometric isomorphism provided by
    \Cref{fact:banach_duality} allows $A^\top p$ to be identified with an element of
    $L^\infty(\rho)$, the operator norm of $A^\top p$ is simply the $L^\infty(\rho)$
    norm of the element it has been identified with by the isomorphism.
    Since $\rho$ is a counting measure,
    letting $\bfe_h\in \Lambda$ be an indicator function for a single $h\in\cH$
    (it is 1 on $h$ and 0 elsewhere),
    using the above adjoint relation $(A^\top p)(\bfe_h) = \int (A\bfe_h) p$,
    and using the definition of norms on $L^\infty(\rho)$,
    \begin{align*}
        \|A^\top p\|_\infty
        &= \inf
        \left\{a \geq 0 : \rho( \{ h\in \cH : |(A^\top p)(\bfe_h)| > a\}) = 0\right\}
        \\
        &= \inf
        \left\{a \geq 0 : \rho( \{ h\in \cH : \left|\int -y h(x)p(x,y)d\nu(x,y)\right| > a\}) = 0\right\}
        \\
        &= \inf
        \left\{a \geq 0 : \forall h\in \cH, \left|\int -y h(x)p(x,y)d\nu(x,y)\right| \leq a\right\}
        \\
        &= \sup_{h\in\cH}\left|\int -y h(x)p(x,y)d\nu(x,y)\right|,
    \end{align*}
    where the last equality can established by
    noting the domain of the infimum includes all $a\geq 0$ satisfying
    $a\geq \sup_{h\in\cH}\left|\int -y h(x)p(x,y)d\nu(x,y)\right|$,
    but no values satisfying $a< \sup_{h\in\cH}\left|\int -y h(x)p(x,y)d\nu(x,y)\right|$.

    Next, to show
    \begin{align*}
        &\sup\left\{
            \left|\int y h(x) p(x,y) d\nu(x,y)\right|
            :
            h\in \cH
        \right\}
        \\
        &\qquad\qquad= \sup\left\{
            \int (A\lambda)_{x,y} p(x,y) d\nu(x,y)
            :
            \lambda \in \Lambda, \|\lambda\|_1 \leq 1
        \right\},
    \end{align*}
    one direction is immediate, since
    positive and negative copies $\pm \bfe_h$ of the indicator elements satisfy
    $\pm \bfe_h \in \Lambda$ and $\|\pm \bfe_h\|_1 = 1$.
    For the other direction,
    let $\tau > 0$ be arbitrary,
    and choose any $\lambda_\tau\in \Lambda$ which is within $\tau$ of the
    supremum on the right side of the display.
    Then,
    since $|\supp(\lambda)|$ is countable (via \Cref{fact:AH_basic}),
    and since $\|\lambda_\tau\|_1 \leq 1$ implies
    $\|A\lambda_\tau\|_1 \leq 1$, the dominated convergence
    theorem \citep[Theorem 2.25 (summation form)]{folland} may be applied (with dominating
    function 1), and
    \begin{align*}
        &\sup\left\{
            \int (A\lambda)_{x,y} p(x,y) d\nu(x,y)
            :
            \lambda \in \Lambda, \|\lambda\|_1 \leq 1
        \right\}
        \\
        &\qquad\leq
        \tau +
        \int (A\lambda_\tau)_{x,y} p(x,y) d\nu(x,y)
        \\
        &\qquad=
        \tau +
        \int \sum_{h\in \supp(\lambda_\tau)}-y h(x) \lambda_\tau(h) p(x,y) d\nu(x,y)
        \\
        &\qquad=
        \tau + \sum_{h\in \supp(\lambda_\tau)}
        \lambda_\tau(h)
        \int -y h(x) p(x,y) d\nu(x,y)
        \\
        &\qquad\leq \tau + \|\lambda_\tau\|_1\sup\left\{
            \left|\int y h(x) p(x,y) d\nu(x,y)\right|
            :
            h\in \cH
        \right\};
    \end{align*}
    since $\|\lambda_\tau\|_1 \leq 1$,
    and since $\tau>0$ was arbitrary, the result follows.
%

\item
    For the last part, define a convex indicator over $L^1(\nu)$ as
    \[
        \iota(f) = \begin{cases}
            0 & \textup{when }
            f \in \{ A\lambda : \lambda \in \Lambda, \|\lambda\|_1\leq 1\}, \\
            \infty & \textup{otherwise}.
        \end{cases}
    \]
    (Note that $\iota$ is not necessarily lower semi-continuous over $L^1(\nu)$,
    since as discussed shortly in \Cref{fact:H:notclosed}, the subspace $A\Lambda$ might not
    be closed.)
    The conjugate of $\iota$ is, for any $p\in L^\infty(\nu)$,
    \begin{align*}
        \iota^*(p)
        &=
        \sup\left\{ \int fp : \exists \lambda \in \Lambda, \|\lambda\|_1 \leq 1\centerdot
        f = A\lambda\right\}
        \\
        &=
        \sup\left\{ \int (A\lambda)p : \lambda \in \Lambda, \|\lambda\|_1 \leq 1\right\}
        \\
        &=
        \|A^\top p\|_\infty,
    \end{align*}
    where the last step used the earlier equalities for $A^\top$.
    Since $p\mapsto \|A^\top p\|_\infty$ is the conjugate of a convex function,
    it is lower semi-continuous in the weak* topology \citep[Theorem 2.3.1(i)]{zalinescu}.
    \end{enumerate}
\end{proof}

Lastly, note the following properties of the sets $H\Lambda$ and $A\Lambda$.

\begin{lemma}
    \label[lemma]{fact:H:notclosed}
    Let any $\cH$ and any probability measure $\nu$ over $\cX\times \{-1,+1\}$ with
    marginal $\nu^\cX$ over $\cX$ be given.  Then $H\Lambda$ and $A\Lambda$ are subspaces,
    but it is possible that neither is closed in its respective
    $L^1(\nu^\cX)$ and $L^1(\nu)$
    topology
    (indeed, \Cref{ex:sep:nightmare} provides the counterexample).
\end{lemma}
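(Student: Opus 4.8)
The claim has two parts. That $H\Lambda$ and $A\Lambda$ are subspaces is immediate, since $H$ and $A$ are linear operators by \Cref{fact:AH_basic} and the image of a linear map is a subspace. For the non-closedness, the plan is to first reduce the statement about $A\Lambda$ to one about $H\Lambda$. Define $\Phi : L^1(\nu^\cX) \to L^1(\nu)$ by $(\Phi f)(x,y) := -y f(x)$; this is linear, and since $|y| = 1$ and the marginal of $\nu$ over $\cX$ equals $\nu^\cX$, it is an isometry, $\|\Phi f\|_{L^1(\nu)} = \int |f(x)| \, d\nu(x,y) = \int |f(x)| \, d\nu^\cX(x) = \|f\|_{L^1(\nu^\cX)}$. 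Hence $\Phi(L^1(\nu^\cX))$ is a closed subspace of $L^1(\nu)$ and $\Phi$ is a homeomorphism onto it. As $A\lambda = \Phi(H\lambda)$ for every $\lambda\in\Lambda$, we have $A\Lambda = \Phi(H\Lambda)$, so $A\Lambda$ is closed in $L^1(\nu)$ if and only if $H\Lambda$ is closed in $L^1(\nu^\cX)$. It therefore suffices to exhibit a single pair $(\cH,\nu)$ for which $H\Lambda$ is not closed.

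For the counterexample I would use the setting of \Cref{ex:sep:nightmare}: $\cX = (0,1]$, $\cH$ the class of threshold functions, and --- fixing the marginal, which that example leaves open --- $\nu^\cX$ any probability measure on $(0,1]$ assigning positive mass to each interval $I_i := (1/(i+1), 1/i]$ (for instance Lebesgue measure), with $\nu$ the corresponding joint law. Let $g$ be the $\{-1,+1\}$-valued target of that example, $g := 2\Pr[Y = 1 \mid \cdot] - 1$, so $g$ is $+1$ on $I_i$ for odd $i$ and $-1$ for even $i$. The first half of the argument shows $g$ lies in the $L^1(\nu^\cX)$-closure of $\SPAN(\cH) = H\Lambda$: since $\1[x \in (a,b]] = \tfrac12\left(\textup{sign}(x-a) - \textup{sign}(x-b)\right) \in \SPAN(\cH)$ for $0 \le a < b \le 1$, the truncations $f_k := \sum_{i=1}^k (-1)^{i+1} \1[x \in I_i]$ lie in $H\Lambda$ and satisfy $\|f_k - g\|_{L^1(\nu^\cX)} \le \nu^\cX((0, 1/(k+1)]) \to 0$.

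The second half, showing $g \notin H\Lambda$, is where I expect the main obstacle, and the plan is a bounded-variation argument. By \Cref{fact:AH_basic}, for any $\lambda \in \Lambda$ the function $H\lambda$ is the pointwise sum $\phi(x) := \sum_{h\in\cH} \lambda(h) h(x)$, which converges absolutely at every $x$ (dominated by $\|\lambda\|_1$, the support of $\lambda$ being countable). Each threshold function has total variation at most $2$ on $(0,1]$, so the finite partial sums of $\phi$ have total variation at most $2\|\lambda\|_1$, and by lower semicontinuity of total variation under pointwise limits $\phi$ itself has total variation at most $2\|\lambda\|_1 < \infty$; in particular the one-sided limit $\lim_{x\to 0^+}\phi(x)$ exists. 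But if $\phi = g$ holds $\nu^\cX$-a.e., then since $\nu^\cX(I_i) > 0$ we may choose $x_i \in I_i$ with $\phi(x_i) = g(x_i) = (-1)^{i+1}$, and $x_i \to 0$ contradicts the existence of that limit; hence $g \notin H\Lambda$, so $H\Lambda$, and with it $A\Lambda$, is not closed. The one point needing care is the transfer of the uniform bound on the total variation of the partial sums to $\phi$ (lower semicontinuity of total variation under pointwise convergence, which then forces the limit at $0$); the rest --- the subspace remark, the isometry reduction, the truncation estimate, and the contradiction with the accumulating sign changes of $g$ --- is routine.
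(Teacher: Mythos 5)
Your proof is correct and follows the same outline as the paper's: subspace-ness from linearity of $H$ and $A$, and non-closure via the oscillating-sign construction from \Cref{ex:sep:nightmare}. However, you have filled in the one step the paper leaves entirely unjustified: the paper asserts that the $L^1$-limit of the truncations $f_k$ (a countably infinite alternating sum of interval indicators) is ``not in $H\Lambda$'' without explanation, whereas your bounded-variation argument --- any $H\lambda$ has total variation at most $2\|\lambda\|_1$ on $(0,1]$ (via lower semicontinuity of TV under the absolutely convergent pointwise sum), hence a one-sided limit at $0$, which the target $g$ cannot have since it hits $\pm 1$ on every $I_i$ --- is precisely the missing reason and is correct. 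You also correctly noted and repaired the fact that \Cref{ex:sep:nightmare} does not pin down a marginal (any $\nu^\cX$ giving positive mass to each $I_i$ works). The isometry $\Phi f := -y f(x)$ used to transfer the conclusion from $H\Lambda$ to $A\Lambda$ is a slightly cleaner packaging than the paper's parallel construction $g_k(x,y):=f_k(x)$; both are valid, and the only direction needed (non-closure of $H\Lambda$ implies non-closure of $A\Lambda$) follows from $\Phi$ being a linear isometry with $A\Lambda = \Phi(H\Lambda)$.
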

\begin{proof}
    Since $\Lambda$ is a Banach space and $H$ and $A$ are linear operators, it follows
    that $H\Lambda$ and $A\Lambda$ are subspaces.

    For the lack of closure, consider the setting of
    \Cref{ex:sep:nightmare},
    and in particular building a sequence of functions $\{f_k\}_{k=1}^\infty$ which
    are a combination of $k$ thresholds, and predict correctly on the last $k$ intervals.
    This sequence has a limit point in $L^1(\mu^\cX)$ (in particular, it is a countable
    sum of indicators over intervals), but no such function is in $H\Lambda$, which
    is therefore not closed in $L^1(\nu^\cX)$.  To obtain a similar result for $A\lambda$,
    define $g_k(x,y) := f_k(x)$.
\end{proof}

\section{The Family of Dense Classes $\Fds(\nu)$}
\label{sec:Fds}

As the goal of a consistency analysis is to show that the Bayes predictor is
approximated arbitrarily finely, necessarily the function class considered by
a purportedly consistent algorithm must be very large.

As discussed in \Cref{sec:notation}, one choice is the class $\Fds(\nu)$ of functions
dense according to $L^1(\nu)$ in the family of bounded measurable functions.  A partial
survey of density assumptions in other work is as follows.
\begin{itemize}
    \item
        \citet[Definition 1]{breiman_infinity_theory}
        works with a similar definition: the relevant metric is $L^2(\nu)$,
        and the closure must contain $L^2(\nu)$, where $\nu$ is constrained to
        be continuous with respect to Lebesgue measure.
        By contrast, the metric for $\Fds(\nu)$
        is $L^1(\nu)$, where $\nu$ is an arbitrary measure over the Borel $\sigma$-algebra,
        and the closure of the class must contain bounded measurable functions,
        which are a subspace of $L^\infty(\nu)$, which is contained within $L^2(\nu)$.

        \Cref{fact:Fds:cubicles}, which will be proved shortly, states that it suffices
        for $\SPAN(\cH)$ to contain boxes formed by half-open intervals.
        This result was stated by \citet[Proposition 1]{breiman} with an
        abbreviated proof for his setting of Lebesgue-continuous measures,
        thus the present result can be taken as merely
        proving that result with slightly more generality and verbosity.
    \item
        The closest assumption and family of results to those here were provided
        by \citet[Section 4]{zhang_convex_consistency}; while an analog to
        \Cref{fact:Fds:cubicles} is not shown there, the proofs rely on a form of
        Lusin's Theorem, which is used in \Cref{fact:Fds:cubicles} as well;
        indeed, the proofs here owe their existence to those earlier ones by
        \citet[Section 4]{zhang_convex_consistency}.
    \item
        Another approach, suggested by
        \citet[Theorem 1 and subsequent remarks]{bv_regularized_boosting_again},
        and later used by
        \citet[Condition 1]{bartlett_traskin_adaboost}
        and
        \citet[eq. (12.11)]{schapire_freund_book_final},
        is to require the weaker condition that
        \[
            \inf \{ \cL(A\lambda) : \lambda \in \Lambda \}
            =
            \inf \left\{ \int \ell (-yf(x)) d\nu(x,y) :
            f\ \textup{measurable from $\cX$ to $\R$}\right\};
        \]
        for a verification that this property is indeed weaker, see
        \Cref{fact:Fds_weakening}.
        \citet[Lemma 1]{bv_regularized_boosting_again} show that this assumption
        is satisfied by classes whose convex hull contains indicators of all Borel sets,
        and thus \Cref{fact:Fds_weakening} can be considered a simplification which
        suffices to grant consistency with more computationally tractable classes (like
        decision lists and trees).
\end{itemize}

As discussed above, the essential property of $\Fds(\nu)$ is that it implies the weaker
condition used by
\citet[Theorem 1 and subsequent remarks]{bv_regularized_boosting_again},
which in turn is directly needed for the
classification calibration methods in the consistency proof (cf.
\Cref{fact:consistency}).
The Lipschitz condition here is not crucial,
and for instance can be removed by adjusting $\Fds(\nu)$ to
require approximants to a function to carry nearly the same uniform bound.

\begin{lemma}
    \label[lemma]{fact:Fds_weakening}
    Let distribution $\nu$ over $\cX\times \{-1,+1\}$,
    class $\cH \in \Fds(\nu^\cX)$,
    and nonnegative Lipschitz convex loss $\ell$ be given (with Lipschitz constant $\beta_2$).
    Then
    \[
        \inf \{ \cL(A\lambda) : \lambda\in\Lambda\}
        =
        \inf \left\{ \int \ell (-yf(x)) d\nu(x,y)
        : f\ \textup{measurable from $\cX$ to $\R$}\right\}.
    \]
\end{lemma}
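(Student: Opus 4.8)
The inequality $\geq$ is immediate: every $A\lambda = -y(H\lambda)_x$ arises from the measurable function $f := H\lambda$, so the left infimum ranges over a subset of the functions on the right. The work is entirely in the reverse inequality $\leq$; I would fix an arbitrary measurable $f:\cX\to\R$ and a target accuracy $\eta>0$, and construct $\lambda\in\Lambda$ with $\cL(A\lambda)\leq\int\ell(-yf(x))d\nu(x,y)+\eta$.

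The plan is a three-stage approximation. First, truncate: replace $f$ by $f_M := \max\{-M,\min\{M,f\}\}$ for $M$ large. Since $\ell$ is convex and finite it is continuous, and $-y f(x)\mapsto\ell(-yf(x))$ is dominated-convergence-friendly as $M\to\infty$ provided $\int\ell(-yf)d\nu<\infty$ (if that integral is $+\infty$ there is nothing to prove); so for $M$ large, $\int\ell(-yf_M)d\nu$ is within $\eta/3$ of the target. Second, approximate the bounded measurable function $f_M$ in $L^1(\nu^\cX)$ by an element of $\SPAN(\cH)$: this is exactly the defining property of $\cH\in\Fds(\nu^\cX)$, giving $g\in\SPAN(\cH)$ with $\|f_M-g\|_{L^1(\nu^\cX)}$ as small as desired. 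Writing $g = H\lambda$ for the obvious finitely-supported $\lambda\in\Lambda$ (a finite linear combination of $\bfe_h$'s), the Lipschitz bound on $\ell$ gives
\[
    \left|\int\ell(-yf_M(x))d\nu - \int\ell(-y g(x))d\nu\right|
    \leq \beta_2\int |f_M(x)-g(x)|\,d\nu^\cX(x),
\]
using $|{-y}| = 1$ and that the $\cX$-marginal of $\nu$ is $\nu^\cX$. Choosing the $L^1$ error below $\eta/(3\beta_2)$ makes this term at most $\eta/3$. That would already close the argument — $g = H\lambda$ and $\cL(A\lambda) = \int\ell(Ag)\,d\nu$ is within $\eta$ of the target — so actually only two stages are needed once one notices truncation is not required if the Lipschitz estimate is applied directly to $f$ versus its $\SPAN(\cH)$-approximant; the one subtlety is that $\Fds(\nu^\cX)$ provides approximants only to \emph{bounded} measurable functions, so the truncation step is what lets us invoke it, and it should be kept.

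The main obstacle I anticipate is bookkeeping around integrability and the passage from $\SPAN(\cH)$ membership to a genuine $\lambda\in\Lambda$: one must check that a finite linear combination $g=\sum_j c_j h_j$ corresponds to $\lambda = \sum_j c_j\bfe_{h_j}$ with $\|\lambda\|_1 = \sum_j|c_j|<\infty$ (trivial since finite) and that $H\lambda = g$ pointwise, which follows from \Cref{fact:AH_basic}. A second mild point is ensuring the Lipschitz comparison $|\ell(a)-\ell(b)|\leq\beta_2|a-b|$ is applied under the integral with an integrable dominating function so Fubini/monotonicity is legitimate — here $\beta_2|f_M-g|\in L^1(\nu^\cX)$ handles it. No minimax or duality machinery is needed; this is purely a density-plus-Lipschitz estimate, and the only place the hypothesis $\cH\in\Fds(\nu^\cX)$ enters is the $L^1$-approximation of the truncated $f_M$.
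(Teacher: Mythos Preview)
Your proposal is correct and follows essentially the same three-step strategy as the paper: truncate $f$ to a bounded function, invoke the $\Fds(\nu^\cX)$ density assumption to approximate in $L^1(\nu^\cX)$ by some $H\lambda$, and use the Lipschitz bound on $\ell$ to control the risk difference. The only technical difference is in how the truncation step is justified: the paper does a case analysis on the asymptotic behavior of $\ell$ to produce an explicit clamping radius $r_\epsilon$ with the \emph{pointwise} bound $\ell([f]_{r_\epsilon}(z))\le\ell(f(z))+\epsilon$, whereas you pass to the limit $M\to\infty$ via dominated convergence; for the latter you should record the dominating function explicitly---since $-yf_M(x)$ always lies between $0$ and $-yf(x)$, convexity gives $\ell(-yf_M(x))\le\max\{\ell(0),\ell(-yf(x))\}\le\ell(0)+\ell(-yf(x))$, which is integrable under your standing assumption $\int\ell(-yf)\,d\nu<\infty$.
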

\begin{proof}
    One direction is immediate, since $H\lambda$ defines a family of measurable functions.

    Going the other direction,
    first define, for any measurable $f$, a clamping
    \[
        [f]_r(z) := \begin{cases}
            f(z) & \textup{when $f(z) \leq r$,}
            \\
            r & \textup{otherwise}
        \end{cases}.
    \]
    For any $\epsilon >0$, based on four cases for the structure of $\ell$, a clamping
    value $r_\epsilon$ is defined as follows in order to satisfy, for any $f$ and $z$,
    $\ell([f]_{r_\epsilon}(z)) \leq \ell(f(z)) + \epsilon$.
    \begin{itemize}
        \item If $\lim_{z\to-\infty}\ell(z) = \lim_{z\to+\infty}\ell(z) < \infty$,
            then $\ell$ is a constant function, and $r_\epsilon = 0$ suffices.
        \item If $\lim_{z\to-\infty}\ell(z) = \lim_{z\to+\infty}\ell(z) = \infty$,
            then $\ell$ has compact level sets, and in particular an $r_\epsilon$
            exists so that
            \[
                \{z : \ell(z) \leq \inf_q \ell(q) + \epsilon\}
                \subseteq \{z : |z| \leq r_\epsilon\}.
            \]
            It follows that
            $\ell([f]_{r_\epsilon}(z)) \leq \ell(f(z))$.
        \item If $\lim_{z\to-\infty}\ell(z) < \infty$ and
            $\lim_{z\to+\infty}\ell(z) = \infty$,
            then set
            \[
                r_\epsilon := \inf \{ |z| : \ell(z) \leq \inf_q \ell(q) + \epsilon\}.
            \]
            Unlike the preceding two cases, clamping here can increase the value,
            but not by more than $\epsilon$.
        \item If $\lim_{z\to-\infty}\ell(z) = \infty$ and
            $\lim_{z\to+\infty}\ell(z) < \infty$, then this case is handled by the
            preceding one by considering the reflection $z \mapsto \ell(-z)$.
    \end{itemize}
    Consequently, let $\{f_i\}_{i=1}^\infty$ be a minimizing sequence for the target
    infimum above so that
    \[
        \int \ell (-yf_i(x)) d\nu(x,y)
        \leq
        2^{-i} +
        \inf \{ \int \ell (-yf(x)) d\nu(x,y) : f\ \textup{measurable from $\cX$ to $\R$}\}.
    \]
    Each $f_i$ might not be bounded,
    so define $g_i := [f_i]_{r_{\epsilon_i}}$ where $\epsilon_i := 2^{-i}$; by this choice,
    \begin{align*}
        \int \ell (-yg_i(x)) d\nu(x,y)
        &=
        \int \ell (-y[f_i]_{r_{\epsilon_i}}(x)) d\nu(x,y)
        \\
        &\leq
        2^{-i} +
        \int \ell (-yf_i(x)) d\nu(x,y)
        \\
        &\leq
        2^{-i+1} +
        \inf\left \{ \int \ell (-yf(x)) d\nu(x,y) : f\ \textup{measurable from $\cX$ to $\R$}\right\}.
    \end{align*}
    Lastly, since $\SPAN(\cH)$ is dense in the $L^1(\nu^\cX)$ metric, let $h_i\in\SPAN(\cH)$
    satisfy $\|h_i-g_i\|_1\leq 2^{-i}$; since $\ell$ is Lipschitz with constant $\beta_2$,
    then
    \begin{align*}
        \int \ell (-yh_i(x)) d\nu(x,y)
        &\leq
        \int \ell (-yh_i(x)) d\nu(x,y)
        +
        \int (\ell (-yg_i(x)) - \ell(-yh_i(x))) d\nu(x,y)
        \\
        &\leq
        \int \ell (-yh_i(x)) d\nu(x,y)
        +
        \int \beta_2 |-yg_i(x) + yh_i(x)| d\nu(x,y)
        \\
        &\leq
        \int \ell (-yh_i(x)) d\nu(x,y)
        +\beta_2 \|g_i - h_i\|_1
        \\
        &\leq
        (2+\beta_2)2^{-i} +
        \inf \left\{ \int \ell (-yf(x)) d\nu(x,y) :
        f\ \textup{measurable from $\cX$ to $\R$}\right\},
    \end{align*}
    and the result follows.
\end{proof}

To close, the proof of \Cref{fact:Fds:cubicles},
which avoids strong structural assumptions on the measure (for instance, a relationship
to Lebesgue measure) via an invocation of Lusin's Theorem.

\begin{proofof}{\Cref{fact:Fds:cubicles}} \label{proof:Fds:cubicles}
    Let $\epsilon > 0$ and
    bounded measurable $g$ with $\|g\|_u := \sup_x |g(x)| \in (0, \infty)$
    (when $\|g\|_u$, then $g\in \SPAN(\cH)$ and the proof is complete).
    By Lusin's Theorem, there exists compactly-support
    continuous $h\in L^1(\nu)$
    which satisfies  $\nu([h \neq g]) \leq \epsilon/ (4\|g\|_u)$,
    and $\|h\|_u \leq \|g\|_u$
    \citep[Theorem 7.10]{folland}.  Let $C$ denote the compact support of $h$;
    continuity over a compact subset of $\R^d$
    means uniform continuity, and therefore let $\tau >0$
    be sufficiently small that the bounding box of $C$ may be partitioned into finitely
    many cubes of side length $\tau$ (products of half-open intervals of length $\tau$)
    so that, for any $x_1$ and $x_2$ within a single cube,
    $|h(x_1) - h(x_2)| \leq \epsilon/2$.
    Now let $f$ be a sum of indicators of these cubes, where each indicator is weighted by
    $h(x)$ with $x$ being an arbitrary point in the corresponding cube.
    By construction and since
    $\SPAN(\cH)$ contains such cubes,
    $f\in \SPAN(\cH)$, and moreover $\|f-h\|_1 \leq \epsilon/2$ since $\nu$ is a probability
    measure, which provides
    \[
        \|f - g\|_1 \leq \|f-h\|_1 + \|h-g\|_1
        \leq \epsilon/2 + \int_{[h \neq g]} |h-g|d\nu
        \leq \epsilon/2 + 2\|g\|_u \nu([h\neq g])
        \leq \epsilon.
    \]
\end{proofof}

\section{Loss Function Classes $\Lbds$ and $\Ltd$}
First, note that $\Lbds$ and $\Ltd$ contain a few useful things.

\begin{lemma}
    \label[lemma]{fact:L:examples}
    $\llog \in \Lbds$ with parameters $B_2=1/4$, $\beta_1 = 1/2$, $\beta_2 = 1$.
    $\lruss \in \Lbds$ with parameters $B_2 = \beta_1 = \beta_2 = 1$.
    Lastly, $\exp \in \Ltd$ and $\llog \in \Ltd$.
\end{lemma}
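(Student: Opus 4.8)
The plan is to verify each membership claim by explicit computation of the first (and, where relevant, second) derivative, then checking the defining inequalities of $\Lbds$ and $\Ltd$ one by one.

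First I would handle $\llog(x) = \ln(1+e^x)$. Differentiating gives $\llog'(x) = 1/(1+e^{-x})$, the logistic sigmoid, which takes values in $(0,1)$ with supremum $1$ (approached as $x\to+\infty$), so $\llog'\in[0,1]$ everywhere and the tightest Lipschitz constant is $\beta_2 = \sup_x|\llog'(x)| = 1$. On $\R_+$, monotonicity of the sigmoid gives $\llog'(x)\geq\llog'(0) = 1/2$, so $\beta_1 = 1/2$ works. Next $\llog''(x) = \llog'(x)\bigl(1-\llog'(x)\bigr)\in[0,1/4]$, with maximum $1/4$ at $x=0$; by the mean value theorem this bounds every difference quotient of $\llog'$, so $B_2 = 1/4$. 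Positivity ($\llog>0$) and convexity ($\llog''\geq 0$) are immediate, and since $\llog''$ is continuous, $\llog\in\Ltd$ as well.

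Next I would handle $\lruss$, which equals $0$ on $(-\infty,-1]$, equals $0.5(x+1)^2$ on $(-1,0)$, and equals $x+0.5$ on $[0,\infty)$. Checking that the pieces agree in value and slope at the breakpoints $x=-1$ and $x=0$ shows $\lruss$ is everywhere differentiable, with $\lruss'$ equal to $0$, $x+1$, and $1$ on the three pieces respectively. Hence $\lruss'\in[0,1]$ with supremum $1$, so $\beta_2 = 1$; on $\R_+$ we have $\lruss'\equiv 1$, so $\beta_1 = 1$; and $\lruss'$ is piecewise linear with slopes in $\{0,1\}$, so all its difference quotients are at most $1$, giving $B_2 = 1$. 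Nonnegativity and convexity (the derivative $\lruss'$ is nondecreasing) are clear. Finally, $\exp$ is smooth and convex, hence lies in $\Ltd$, and $\llog\in\Ltd$ was already established.

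The only mildly delicate point—everything else is direct substitution—is the claim that the stated $\beta_2$ is the \emph{tightest} Lipschitz constant, not merely an upper bound; this is handled by the standard fact that for a differentiable convex function the tightest Lipschitz constant equals $\sup|\ell'|$, combined with the observation that in each case this supremum equals the claimed value. Similarly, one should note in passing that $\lruss\notin\Ltd$ (its second derivative jumps) and $\exp\notin\Lbds$ (unbounded derivative), but neither of these is asserted.
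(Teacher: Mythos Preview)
Your proposal is correct and follows essentially the same route as the paper: explicit computation of $\ell'$ (and $\ell''$ where available) followed by direct verification of the $\Lbds$ and $\Ltd$ conditions. If anything you are slightly more thorough than the paper in checking differentiability of $\lruss$ at the breakpoints and in articulating why the stated constants are tight; the paper's only extra care is to verify the gradient-Lipschitz bound for $\lruss$ by checking pairs of points drawn from distinct pieces, whereas you appeal (validly) to the fact that a continuous piecewise-linear function with slopes in $\{0,1\}$ has all difference quotients bounded by $1$.
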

\begin{proof}
    For the logistic loss $\llog$, note $0 \leq \sup_x \llog''(x) \leq 1/4$,
    thus the mean value theorem grants Lipschitz gradients
    with parameter $B_2\leq 1/4$.
    $\llog$'s Lipschitz parameters are $\beta_1 = 1/2$ and $\beta_2 = 1$.

    Since $\lruss$ is not twice differentiable, gradient slopes must be checked
    manually.  To start, note
    \[
        \lruss'(x) = \begin{cases}
            0 &\textup{when $x\leq -1$},
            \\
            x+1 &\textup{when $x\in(-1,0)$},
            \\
            1 &\textup{when $x\geq 0$},
        \end{cases}
    \]
    whereby $\beta_1=\beta_2 = 1$.
    Within each line segment, the gradient slopes are 0, 1, and 0.  By manually checking
    pairs $x<y$ in the first and second, first and third, and second and third intervals,
    the tightest Lipschitz constant on the gradients is 1.

    The containments within $\Ltd$ are direct.
\end{proof}

The next two results establish the value of Lipschitz gradients: the standard
Taylor expansion inequality used in conjunction with twice differentiability is
still valid.

\begin{lemma}
    \label[lemma]{fact:Lbds:taylor_single}
    Let $\ell \in \Lbds$ with Lipschitz gradient parameter $B_2$ be given.
    Then, for any $x,y\in \R$,
    \[
        \ell(y) \leq \ell(x) + \ell'(x)(y-x) + \frac {B_2}{2}(x-y)^2.
    \]
\end{lemma}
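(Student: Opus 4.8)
The plan is to prove the estimate by the usual ``descent lemma'' argument, using only that $\ell$ is differentiable with $\ell'$ Lipschitz of constant $B_2$ (which is exactly the defining property $B_2 = \sup_{x\neq y}(\ell'(y)-\ell'(x))/(y-x) < \infty$ for $\ell\in\Lbds$, the supremum being nonnegative by convexity of $\ell$). The case $x=y$ is trivial, so fix $x\neq y$.

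First I would apply the fundamental theorem of calculus to the $C^1$ function $t\mapsto \ell(x+t(y-x))$ on $[0,1]$ (note $\ell'$ is continuous, being Lipschitz), which gives
\[
    \ell(y) - \ell(x) = \int_0^1 \ell'\bigl(x+t(y-x)\bigr)(y-x)\,dt.
\]
Subtracting the constant $\ell'(x)(y-x) = \int_0^1 \ell'(x)(y-x)\,dt$ from both sides isolates the remainder as $\int_0^1 \bigl(\ell'(x+t(y-x)) - \ell'(x)\bigr)(y-x)\,dt$.

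Next I would bound this remainder using the Lipschitz property of $\ell'$: for each $t\in[0,1]$,
\[
    \bigl|\ell'(x+t(y-x)) - \ell'(x)\bigr| \le B_2\,|t(y-x)| = B_2\, t\,|y-x|,
\]
so the remainder is at most $\int_0^1 B_2\, t\,|y-x|^2\,dt = \tfrac{B_2}{2}|y-x|^2$. Combining with the display above and using $(y-x)^2 = (x-y)^2$ yields $\ell(y) \le \ell(x) + \ell'(x)(y-x) + \tfrac{B_2}{2}(x-y)^2$, as claimed.

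There is no real obstacle here; the only point worth flagging is that $\ell$ is \emph{not} assumed twice differentiable, so one cannot invoke a second-order Taylor remainder directly (or appeal to $\ell''\le B_2$). The integral representation of $\ell(y)-\ell(x)$ in terms of $\ell'$ together with the Lipschitz bound on $\ell'$ is precisely what replaces that step, and it is robust to the loss not lying in $\Ltd$.
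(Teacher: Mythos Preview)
Your argument is correct and follows essentially the same approach as the paper: both use the fundamental theorem of calculus to write $\ell(y)-\ell(x)$ as an integral of $\ell'$, isolate the remainder $\ell'(\cdot)-\ell'(x)$, and bound it via the Lipschitz constant $B_2$. The only cosmetic difference is that you parametrize over $t\in[0,1]$ and bound the integrand in absolute value, whereas the paper integrates over $[x,y]$ directly and splits into the two cases $x\le y$ and $x>y$; your parametrization neatly absorbs that case split.
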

\begin{proof}
    Suppose $x\leq y$;
    by the mean value theorem and the definition of $B_2$,
    \begin{align*}
        \ell(y)
        &= \ell(x) + \int_{x}^y \ell'(t)dt
        \\
        &= \ell(x) + \int_{x}^y \left(\ell'(x) + \frac {\ell'(t) - \ell'(x)}{t-x}(t-x)\right)dt
        \\
        &\leq \ell(x) + \ell'(x)(y-x) + \left.B_2\left(\frac {t^2}{2} - xt\right)\right|_x^y
        \\
        &\leq \ell(x) + \ell'(x)(y-x) + \frac {B_2}{2}(x-y)^2.
    \end{align*}
    Almost identically, when $x > y$,
    \begin{align*}
        \ell(y)
        &= \ell(x) + \int_{x}^y \ell'(t)dt
        \\
        &= \ell(x) + \int_{x}^y \ell'(x)dt + \int_y^x \frac {\ell'(t) - \ell'(x)}{t-x}(x-t)dt
        \\
        &\leq \ell(x) + \ell'(x)(y-x) + \frac {B_2}{2}(x-y)^2.
    \end{align*}
\end{proof}

\begin{corollary}
    \label[corollary]{fact:Lbds:taylor}
    Let $\ell \in \Lbds$ with Lipschitz gradient parameter $B_2$ be given.
    Then, for any $x,y\in \R^m$,
    \[
        \frac 1 m\sum_i \ell(y_i)
        \leq \frac 1 m\sum_i\ell(x_i)
        + \frac 1 m \sum_i \ell'(x_i)(y_i-x_i) + \frac {B_2}{2m} \sum_i(x_i-y_i)^2.
    \]
\end{corollary}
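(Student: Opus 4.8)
The plan is to reduce immediately to the scalar inequality already established in \Cref{fact:Lbds:taylor_single}. For each coordinate index $i \in \{1,\ldots,m\}$, I would invoke that lemma with the substitution $x \gets x_i$ and $y \gets y_i$, which is legitimate since $\ell \in \Lbds$ has Lipschitz gradient with parameter $B_2$ and the lemma's hypothesis is exactly that. This yields, for every $i$,
\[
    \ell(y_i) \leq \ell(x_i) + \ell'(x_i)(y_i - x_i) + \frac{B_2}{2}(x_i - y_i)^2.
\]
Summing these $m$ inequalities and dividing both sides by $m$ produces the claimed bound; the three sums on the right split apart by linearity of the finite sum, matching the three displayed terms. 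No convexity argument beyond what was used in \Cref{fact:Lbds:taylor_single} is needed, and no interaction between coordinates arises, so the passage from the scalar to the vector statement is purely bookkeeping.

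There is essentially no obstacle here: the only thing to be careful about is that \Cref{fact:Lbds:taylor_single} is stated for \emph{arbitrary} real $x,y$ (its proof handles both $x \leq y$ and $x > y$), so no sign or ordering assumptions on the coordinate pairs $(x_i,y_i)$ are required, and the coordinatewise application is unconditionally valid. Hence the corollary follows by one line of summation once \Cref{fact:Lbds:taylor_single} is in hand.
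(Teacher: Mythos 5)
Your proposal matches the paper's proof exactly: the paper's entire argument is "It suffices to apply \Cref{fact:Lbds:taylor_single} $m$ times," which is precisely the coordinatewise application and summation you spell out. Correct, and no further comment is needed.
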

\begin{proof}
    It suffices to apply
    \Cref{fact:Lbds:taylor_single} $m$ times.
\end{proof}

Lastly, the following convexity properties of losses will be useful.
Note that the nonnegativity of $\ell^*$ is the reason losses were chosen to be increasing
functions (much of the literature uses decreasing functions); this makes the dual space
more readily interpretable as a space of reweightings.

\begin{lemma}
    \label[lemma]{fact:loss:basic}
    Suppose $\ell: \R \to \R_+$ is convex with $\lim_{z\to-\infty} \ell(z) = 0$.
    \begin{enumerate}
        \item $\ell$ is lower semi-continuous, whereby $\ell^*$ is convex
            lower semi-continuous, and $\ell = \ell^{**}$.
        \item $\ell^*(\phi) = \infty$ for $\phi < 0$, and $\ell^*(0) = 0$.
        \item Let $\beta := \sup_{x\neq y} |\ell(x) - \ell(y)|/ |x-y|$ denote the tightest
            Lipschitz constant for $\ell$.  If $\beta < \infty$,
            then
            $\ell^*(\phi) =\infty$ when $\phi > \beta$,
            and $\ell^*(\phi) < \infty$ when $\phi \in [0,\beta]$.
        \item If $g\in \partial\ell(0)$ is any subgradient of $\ell$ at the origin
            and $\ell(0) > 0$,
            then $\ell^*(\phi) < 0$ for $\phi \in (0,g)$,
            and $\ell^*$ attains its minimum value at $g$.
    \end{enumerate}
\end{lemma}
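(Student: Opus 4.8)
The plan is to establish the four parts in order, since each leans on the previous ones. For part (1), convexity of $\ell$ on all of $\R$ together with finiteness (it never takes the value $+\infty$) forces continuity on the interior of its domain, which is all of $\R$; in particular $\ell$ is lower semi-continuous. Then $\ell^*$ is convex and lower semi-continuous as a supremum of affine functions, and the biconjugate identity $\ell = \ell^{**}$ follows from the Fenchel--Moreau theorem applied to the proper convex lower semi-continuous function $\ell$. For part (2), I would compute directly from $\ell^*(\phi) = \sup_z (\phi z - \ell(z))$: taking $z\to-\infty$ and using $\ell(z)\to 0$ gives $\ell^*(\phi)\geq \sup_z \phi z$, which is $+\infty$ whenever $\phi<0$ (and also when $\phi>0$ by $z\to+\infty$ if $\ell$ grows sublinearly, but that is handled in part (3)). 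For $\ell^*(0)$: $\ell^*(0) = \sup_z(-\ell(z)) = -\inf_z \ell(z)$, and since $\ell\geq 0$ with $\ell(z)\to 0$ as $z\to-\infty$, we get $\inf_z\ell(z) = 0$, hence $\ell^*(0)=0$.

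For part (3), assume $\beta<\infty$. Lipschitzness with constant $\beta$ means every subgradient of $\ell$ lies in $[-\beta,\beta]$; combined with $\ell$ increasing to a nonnegative function tending to $0$ at $-\infty$ (so slopes are nonnegative), subgradients lie in $[0,\beta]$. For $\phi>\beta$: the Lipschitz bound gives $\ell(z)\geq \ell(0) + $ (something bounded below linearly), but more cleanly, $\phi z - \ell(z) \geq \phi z - \ell(0) - \beta z = (\phi-\beta)z - \ell(0) \to +\infty$ as $z\to+\infty$, so $\ell^*(\phi)=+\infty$. For $\phi\in[0,\beta]$: for $z\geq 0$ we have $\phi z - \ell(z)\leq \beta z - \ell(z)$, and I would bound $\ell(z)\geq \ell(z) - \ell(0)\geq 0$ plus use that $\ell$ eventually has slope close to $\beta$... actually the clean argument is $\phi z - \ell(z) \leq \phi z - \ell(0) \leq \beta z - \ell(0)$ is not obviously bounded. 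Instead, since $\ell$ is convex and Lipschitz with tightest constant $\beta$, $\ell(z) \geq \ell(0) + (\beta - o(1))z$ need not hold; the safe route is: $\phi z - \ell(z)$ for $z\leq 0$ is at most $-\ell(z)\leq 0$ (as $\phi\geq 0$), and for $z\geq 0$, $\phi z - \ell(z)\leq \beta z - \ell(z)$, which is bounded above because the convex function $z\mapsto \ell(z) - \beta z$ is nonincreasing (its slopes are $\leq 0$), hence $\ell(z) - \beta z \geq \lim_{w\to\infty}(\ell(w) - \beta w) \geq \ell(0) - \beta\cdot 0 = $ no --- rather it is bounded below by its infimum, which is finite or $-\infty$; if it were $-\infty$ then $\ell$ would grow faster than $\beta z$, contradicting the Lipschitz bound. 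So $\ell(z) - \beta z$ is bounded below, giving $\phi z - \ell(z)\leq \beta z - \ell(z) \leq C < \infty$, hence $\ell^*(\phi)<\infty$.

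For part (4), suppose $g\in\partial\ell(0)$ and $\ell(0)>0$. By the Fenchel--Young equality characterization of subgradients, $g\in\partial\ell(0)$ iff $\ell^*(g) = g\cdot 0 - \ell(0) = -\ell(0) < 0$. Since $\ell^*(0)=0$ from part (2) and $\ell^*(g)<0$, and $\ell^*$ is convex, $\ell^*$ is strictly below $0$ on the open segment $(0,g)$ (a convex function with $\ell^*(0)=0$ and $\ell^*(g)<0$ satisfies $\ell^*(tg) \leq t\ell^*(g) < 0$ for $t\in(0,1]$). For the claim that $\ell^*$ attains its minimum at $g$: this is equivalent, again by Fenchel--Young, to $0\in\partial\ell^*(g)$, i.e. $g\in\partial\ell(0)$, which holds by hypothesis; alternatively, $0\in\partial\ell^*(g)$ iff $g$ minimizes $\ell^*$, and the dual relation $\partial\ell^* = (\partial\ell)^{-1}$ (valid since $\ell$ is convex lower semi-continuous) gives this directly. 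The main obstacle I anticipate is the careful bookkeeping in part (3) to show finiteness of $\ell^*$ on $[0,\beta]$ without over-claiming growth rates of $\ell$; everything else is a routine application of standard convex-conjugate facts (Fenchel--Moreau, Fenchel--Young equality, monotonicity of subgradients).
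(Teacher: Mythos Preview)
Parts (1), (2), and (4) are correct and essentially match the paper. The gap is in part (3), for finiteness of $\ell^*$ on $[0,\beta]$. You reduce to showing $z\mapsto\ell(z)-\beta z$ is bounded below on $z\geq 0$ and claim that if it tended to $-\infty$ then ``$\ell$ would grow faster than $\beta z$, contradicting the Lipschitz bound.'' This is backwards: $\ell(z)-\beta z\to-\infty$ means $\ell$ grows \emph{slower} than $\beta z$, which is perfectly compatible with the Lipschitz bound $\ell(z)\leq\ell(0)+\beta z$. Concretely, set $\ell(z)=0$ for $z\leq-1$ and $\ell(z)=(z+1)-\ln(z+2)$ for $z\geq-1$; this $\ell$ is convex, nonnegative, vanishes at $-\infty$, and has tightest Lipschitz constant $\beta=1$, yet $z-\ell(z)=\ln(z+2)-1\to+\infty$. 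So your upper bound $\phi z-\ell(z)\leq\beta z-\ell(z)$ is useless, and in fact $\ell^*(\beta)=+\infty$ here.

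The paper's route for $\phi\in[0,\beta)$ is different and does work: since $\beta$ is the \emph{tightest} constant, for any $r<\beta$ one can find $x>y$ with $(\ell(x)-\ell(y))/(x-y)>r$, hence some $h\in\partial\ell(x)$ with $h>r$, and the Fenchel--Young equality gives $\ell^*(h)=hx-\ell(x)<\infty$; convexity of $\ell^*$ together with $\ell^*(0)=0$ then yields finiteness on all of $[0,\beta)$. For the endpoint $\phi=\beta$ the paper appeals to lower semi-continuity of $\ell^*$, but the example above shows that step does not follow under the bare hypotheses of the lemma (lower semi-continuity only gives $\ell^*(\beta)\leq\liminf_{\phi\uparrow\beta}\ell^*(\phi)$ in the wrong direction); it does hold, however, for the specific losses $\llog$ and $\lruss$ used downstream.
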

\begin{proof}
%
    Since $\ell$ is finite everywhere, it is continuous (thus lower semi-continuous),
    and thus $\ell = \ell^{**}$ and $\ell^*$ is convex lower semi-continuous
    \citep[Theorem 12.2]{ROC}.

    For any $x\in \R$ and subgradient $g_x\in \partial \ell(x)$,
    $\ell(0) \geq \ell(x) + g(0-x)$.
    Since $\lim_{z\to-\infty}\ell(z) = 0$ and $\ell$ is convex,
    it follows that $\ell$ is nondecreasing, meaning $g \geq 0$,
    and thus, for any $\phi < 0$,
    \begin{align*}
        \ell^*(\phi)
        &
        = \sup_{x\in\R} \phi x - \ell(x)
        \geq \sup_{x\in\R} \phi x - \ell(0) - g_xx
        \geq \sup_{x < 0} (\phi - g_x) x - \ell(0)
        = \infty.
    \end{align*}

    Additionally, since $\inf_x \ell(x) = 0$,
    \[
        \phi^*(0) = \sup_x 0\cdot x - \ell(x) = - \inf_x\ell(x) = 0.
    \]

    Next, suppose $\ell$ has tightest Lipschitz parameter $\beta$,
    whereby the any subgradient $g_x$ at a point $x$ satisfies $|g_x| \leq \beta$.
    Consequently, proceeding just as in the study of the case $\phi < 0$,
    for any $\phi > \beta$,
    \begin{align*}
        \ell^*(\phi)
        \geq \sup_{x\in\R} \phi x - \ell(0) - g_xx
        \geq \sup_{x > 0} (\phi -\beta) x - \ell(0)
        =\infty.
    \end{align*}
    On the other hand, let $\beta' \in (0,\beta)$ be arbitrary,
    whereby there must exist $x > y$ with
    \[
        r <  \frac {\ell(x) - \ell(y)}{x-y}
    \]
    (where the absolute values were dropped since $x> y$ and $\ell$ is nondecreasing).
    Taking any $h\in \partial\ell(x)$, note
    \[
        r <  \frac {\ell(x) - \ell(y)}{x-y} \leq \frac {\ell(x) - (\ell(x) + h(y-x))}{x-y}
        = h.
    \]
    Consequently, by the Fenchel-young inequality,
    \[
        \ell^*(h) = hx - \ell(x) < \infty.
    \]
    Since $\ell^*$ is convex, it is finite over a convex set.
    Since $r$ was arbitrary, it follows that $\ell^*$ is finite over $[0,\beta)$.
    Since $\ell^*$ is lower semi-continuous, it must also hold that $\ell^*(\beta) < \infty$.

    For the final property, let $g\in \ell(0)$ be given;
    by the Fenchel-Young inequality and $\ell(0) > 0$,
    \[
        \ell^*(g) = 0\cdot g - \ell(0) < 0.
    \]
    Since $\ell^*(0) = 0$ and $\ell^*$ is closed and convex, the first part follows.
    For the second part, since $\ell$ is closed and convex, $g\in \partial \ell(0)$
    implies $0 \in \partial \ell^*(g)$ \citep[Theorem 23.5]{ROC}, which is precisely
    the first order optimality condition \citep[Proposition 3.1.5]{borwein_lewis}.
\end{proof}

As a final basic result about $\ell$, note that the terminology ``separable'' is
at least somewhat justified.
\begin{proposition}
    \label[proposition]{fact:why_say_separable}
    Suppose $\ell: \R \to \R_{++}$ is convex with $\lim_{z\to-\infty} \ell(z) = 0$,
    and let any $\cH$ and any probability measure $\nu$ over $\cX\times \{-1,+1\}$
    be given.
    Suppose $\inf_\lambda \int \ell(A\lambda)d\nu = 0$.
    \begin{enumerate}
        \item For any $\epsilon > 0$, there exists $\lambda_\epsilon\in \Lambda$ so
            that $\nu([A\lambda_\epsilon \leq -1]) \geq 1-\epsilon$.
        \item With probability 1 over the draw of a sample $\{(x_i,y_i)\}_{i=1}^m$
            (for any $m<\infty$),
            there exists $\lambda\in\Lambda$ so that $(A\lambda)_{x_i,y_i} \leq -1$ for every $i$.
        \item In general, there does not exist $\lambda\in\Lambda$
            so that $\nu([A\lambda \leq 0]) = 1$ (indeed,
            \Cref{ex:sep:nightmare} provides
            a counterexample).
    \end{enumerate}
\end{proposition}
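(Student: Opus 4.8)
The plan is to treat the three items in turn; parts 1 and 2 both rest on a single Markov-inequality estimate, while part 3 is an explicit obstruction realized by \Cref{ex:sep:nightmare}. Throughout I would use that a convex loss with $\lim_{z\to-\infty}\ell(z)=0$ is nondecreasing --- otherwise convexity would drive $\ell$ to $+\infty$ at $-\infty$, contradicting the hypothesis (cf.\ the proof of \Cref{fact:loss:basic}) --- and that $\ell(-1)>0$ since $\ell$ takes values in $\R_{++}$.

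For part 1 (the case $\epsilon\geq1$ being vacuous, e.g.\ via $\lambda_\epsilon:=0$), since $\inf_\lambda\int\ell(A\lambda)d\nu=0$ I would fix a sequence $(\lambda_n)\subseteq\Lambda$ with $\int\ell(A\lambda_n)d\nu\to0$ and apply Markov's inequality at threshold $\ell(-1)$ to get $\nu([\ell(A\lambda_n)\geq\ell(-1)])\leq\int\ell(A\lambda_n)d\nu/\ell(-1)$. Monotonicity of $\ell$ gives $[\ell(A\lambda_n)<\ell(-1)]\subseteq[A\lambda_n<-1]\subseteq[A\lambda_n\leq-1]$, hence $\nu([A\lambda_n\leq-1])\geq1-\int\ell(A\lambda_n)d\nu/\ell(-1)$; choosing $n$ with $\int\ell(A\lambda_n)d\nu<\epsilon\,\ell(-1)$ and setting $\lambda_\epsilon:=\lambda_n$ completes this part.

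For part 2, I would apply part 1 with $\epsilon=2^{-k}$ to obtain, for each $k\geq1$, a \emph{fixed} $\lambda_k\in\Lambda$ with $\nu([A\lambda_k>-1])\leq2^{-k}$. The key point is that if a sample $\{(x_i,y_i)\}_{i=1}^m$ admits no $\lambda\in\Lambda$ with $(A\lambda)_{x_i,y_i}\leq-1$ for all $i$, then in particular this fixed $\lambda_k$ fails to separate it, i.e.\ some index $i$ has $(A\lambda_k)_{x_i,y_i}>-1$; a union bound gives this probability at most $m2^{-k}$. Since this bound holds for every $k$, the probability the sample is not margin-separable is at most $\inf_k m2^{-k}=0$, so almost every sample is margin-separable, as claimed. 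I expect this to be the main obstacle: the naive route of building a separating $\lambda$ by combining per-point witnesses does not work, since such witnesses need not combine (two points can each be individually classifiable while the pair is not); the resolution is that separability of $\nu$ \emph{itself} supplies a single countable list $\{\lambda_k\}$ of uniform near-separators with summable failure probabilities, so one of them handles the whole sample almost surely.

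For part 3 I would instantiate \Cref{ex:sep:nightmare} with a marginal $\mu^\cX$ on $(0,1]$ giving positive mass to every subinterval (e.g.\ uniform), so that $\bar\cL=0$ as recorded there, and argue that no $\lambda\in\Lambda$ can force $(A\lambda)_{x,y}\leq-1$ for $\nu$-a.e.\ $(x,y)$. Assume one did and put $g:=H\lambda$; being a countable combination of thresholds (each of total variation $2$), $g$ has total variation $\leq2\|\lambda\|_1<\infty$ and is a step function with countably many jump and threshold locations. Since a.e.\ point of $I_i:=(1/(i+1),1/i]$ has label $(-1)^{i+1}$, the assumption forces $(-1)^{i+1}g\geq1$ a.e.\ on $I_i$, so, using the step-function structure, $g$ equals a constant of sign $(-1)^{i+1}$ and magnitude $\geq1$ on some positive-measure subinterval of each $I_i$; consequently $g$ oscillates between values $\geq1$ and $\leq-1$ across the infinitely many $I_i$, forcing infinite total variation --- a contradiction. (The hypothesis $\bar\cL=0$ enters only to certify that this is an admissible instance.)
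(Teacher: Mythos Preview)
Your arguments for parts 1 and 2 are correct but differ from the paper's. For part 1 the paper passes to a sequence with $\int\ell(A\lambda_i)d\nu\to 0$, invokes Egoroff's theorem to get uniform convergence of $\ell(A\lambda_i)$ to $0$ on a set of measure $\geq 1-\epsilon$, and then uses $\ell>0$ together with $\lim_{z\to-\infty}\ell(z)=0$ to force $A\lambda_i\to-\infty$ uniformly there. Your Markov-inequality argument is more elementary and avoids Egoroff entirely; it also sidesteps a small wrinkle in the paper's version (the passage from $L^1$-convergence to a.e.\ convergence needs a subsequence). For part 2 the two proofs are essentially the same idea phrased dually: the paper computes $\Pr[\forall i,\,(A\lambda_\epsilon)_{x_i,y_i}\leq -1]\geq (1-\epsilon)^m$ and lets $\epsilon\downarrow 0$, whereas you bound the complementary event by $m\cdot 2^{-k}$ and let $k\to\infty$.

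For part 3 there are two points worth flagging. First, you prove nonexistence for the event $[A\lambda\leq -1]$, whereas the statement is written with $[A\lambda\leq 0]$; note that $\lambda=0$ gives $A\lambda\equiv 0$ and hence $\nu([A\lambda\leq 0])=1$ trivially, so the literal statement is not what is intended, and your reading (aligned with parts 1 and 2) is the sensible one. The paper's own proof here is a one-line pointer to \Cref{ex:sep:nightmare} and \Cref{fact:gamma:basic:bad} (``every element of $\SPAN(\cH)$ makes some mistakes''), so your bounded-variation argument is considerably more detailed than what the paper supplies. Second, a small gap in your argument: $g=H\lambda$ need not be a step function in the sense of being locally constant, since the countably many thresholds in $\supp(\lambda)$ may be dense in some $I_i$, so you cannot directly extract a positive-measure subinterval on which $g$ is constant. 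The fix is to work with bounded variation alone: $V(g)\leq 2\|\lambda\|_1$ as you note, so $g$ has at most countably many discontinuities; pick a continuity point $p_i\in I_i$ for each $i$ (possible since $I_i$ is uncountable), use density of the full-measure set $\{(-1)^{i+1}g\geq 1\}$ in $I_i$ together with continuity at $p_i$ to get $(-1)^{i+1}g(p_i)\geq 1$, and then the partition $\{p_i\}_{i\leq N}$ gives variation $\geq 2(N-1)\to\infty$.
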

\begin{proof}
    Let $\epsilon>0$ be given,
    and choose $\{\lambda_i\}_{i=1}^\infty$ so that
    $\int \ell(A\lambda)d\nu \leq 1/i$.
    Since $\ell(A\lambda_i) \to 0$ $\nu$-a.e.,
    by Egoroff's theorem there exists $S$ with $\nu(S)\geq 1-\epsilon$
    so that $\ell(A\lambda_i) \to 0$ uniformly on $S$
    \citep[Theorem 2.33]{folland}.  But since $\ell>0$ everywhere
    and $\lim_{z\to-\infty}\ell(z) = 0$ and $\ell$ is convex,
    it must be the case that $(A\lambda_i) \to -\infty$ uniformly on $S$,
    and so there exists $i$ with $A\lambda_i \leq -1$ on $S$, which gives the first
    result.

    For the second result, take any $\epsilon>0$, and choose $\lambda_\epsilon$ as
    granted by the first part.
    Let $\widehat \nu$ denote the empirical measure over the provided sample;
    then
    \[
        \Pr\left[\forall i\centerdot (A\lambda_\epsilon)_{x_i,y_i} \leq -1 \right]
            =
            \nu\left([A\lambda_\epsilon \leq -1]\right)^m \geq (1-\epsilon)^m.
    \]
    Since $\epsilon >0$ was arbitrary, the second result follows.

    For the third result, recall that \Cref{ex:sep:nightmare} (whose properties
    are provided in \Cref{fact:gamma:basic:bad}) gave an instance
    where every element of $\SPAN(\cH)$ makes some mistakes.
\end{proof}








\section{Duality Properties of $\gamma_\epsilon$}

In order to develop $\gamma_\epsilon(\nu)$, the set $\cD_\epsilon(\nu)$ must first
be studied.

\begin{proposition}[Basic properties of $\cD_\epsilon(\nu)$]
    \label[proposition]{fact:cDeps_basic}
    Let $\nu$ be an arbitrary probability measure over $\cX\times \{-1,+1\}$,
    and let $\epsilon \in [0,1]$ be arbitrary.
    The set $D_\epsilon(\nu)$ has the following properties.
    \begin{enumerate}
        \item
            $D_\epsilon(\nu)$ is convex.
        \item
            $D_\epsilon(\nu)$ is
            closed in the $L^1(\nu)$ topology.
        \item If $\epsilon > 0$,
            then $D_\epsilon(\nu)$
            is closed in the $L^\infty(\nu)$ topology,
            and also closed in the weak* topology
            (i.e., the weak topology induced upon $L^\infty(\nu)$ by
            $L^1(\nu)$).
        \item $D_\epsilon(\nu)$ is compact in the weak* topology on $L^\infty(\nu)$
            (as discussed in the preceding point).
        \item
            $D_\epsilon(\nu)$ is not guaranteed to be compact in the $L^1(\nu)$ or
            $L^\infty(\nu)$ topologies; indeed, it is not compact
            when $\epsilon = 1/2$,
            $\cX = [0,1]$,
            the marginal distribution $\nu^\cX$ is uniform on $[0,1]$, and
            the conditional distribution $\Pr(Y=1 | X=x)$ is arbitrary.
    \end{enumerate}
\end{proposition}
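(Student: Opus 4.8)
The plan is to dispatch the five claims one at a time, each reducing to a short standard fact about $L^1$ and $L^\infty$ over a probability space. For part 1, I would observe that $\cD_\epsilon(\nu)$ is the intersection of the convex cone $\{p: p\geq 0\ \nu\textup{-a.e.}\}$, the affine hyperplane $\{p : \int p\,d\nu = 1\}$ (whose intersection with that cone is exactly the constraint $\|p\|_1 = 1$, since then $\|p\|_1 = \int p\,d\nu$), and the norm ball $\{p : \|p\|_\infty \leq 1/\epsilon\}$ --- which, under the stated convention, is all of $L^1(\nu)$ when $\epsilon = 0$ --- and all three sets are convex. For part 2, I would take $p_n \to p$ in $L^1(\nu)$ with each $p_n \in \cD_\epsilon(\nu)$: continuity of the norm gives $\|p\|_1 = \lim_n \|p_n\|_1 = 1$, and passing to a subsequence converging $\nu$-a.e.\ shows the pointwise constraints $p\geq 0$ and $|p| \leq 1/\epsilon$ survive in the limit.

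For part 3 assume $\epsilon>0$, so $\cD_\epsilon(\nu) \subseteq L^\infty(\nu)$. Closedness in $L^\infty$ is immediate from part 2: since $\nu$ is a probability measure, $\|\cdot\|_1 \leq \|\cdot\|_\infty$, so the inclusion $L^\infty(\nu) \hookrightarrow L^1(\nu)$ is continuous, and $\cD_\epsilon(\nu)$ is the preimage under it of the $L^1$-closed set $\cD_\epsilon(\nu)$, hence $L^\infty$-closed. For weak*-closedness I would exhibit $\cD_\epsilon(\nu)$ as an intersection of weak*-closed sets: $\{p : \int p\,d\nu = 1\}$ is the preimage of a point under the weak*-continuous functional $p\mapsto \int p\,d\nu$ (valid since $1\in L^1(\nu)$, as $\nu$ is finite); $\{p : p\geq 0\ \nu\textup{-a.e.}\}$ equals $\bigcap_{f\in L^1(\nu),\, f\geq 0} \{p : \int pf\,d\nu\geq 0\}$, an intersection of weak*-closed half-spaces (for the nontrivial inclusion, test against $f := \1_{[p < -1/n]}$); and $\{p : \|p\|_\infty\leq 1/\epsilon\}$ is the ball of radius $1/\epsilon$ in $L^1(\nu)^* = L^\infty(\nu)$ (cf.\ \Cref{fact:banach_duality}), which is weak*-closed by Banach--Alaoglu. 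The step to watch here is to resist invoking Mazur's theorem to pass from norm-closedness and convexity directly to weak*-closedness: Mazur gives closedness in the weak topology of $L^\infty$ generated by $(L^\infty)^*$, which is strictly finer than the weak* topology generated by $L^1$, so the explicit half-space/ball representation is the safe route, and I expect this to be the only genuinely delicate point.

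Part 4 then follows at once: for $\epsilon>0$, Banach--Alaoglu makes the ball $\{p : \|p\|_\infty\leq 1/\epsilon\}$ weak*-compact, and $\cD_\epsilon(\nu)$ is a weak*-closed subset of it by part 3. For part 5 I would write down an explicitly $L^1$-separated sequence inside $\cD_{1/2}(\nu)$. Identifying functions on $[0,1]$ with elements of $L^1(\nu)$ that do not depend on $y$, set $p_n(x,y) := 1 + r_n(x)$, where $r_n$ is the $n$-th Rademacher function on $[0,1]$ (so $r_n$ is $\{-1,+1\}$-valued, $\int_0^1 r_n = 0$, and $\int_0^1 |r_n - r_m| = 1$ for $n\neq m$). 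Then $p_n\geq 0$, $\|p_n\|_\infty = 2 = 1/\epsilon$, and, since $\nu^\cX$ is uniform and $p_n$ is $y$-free, $\|p_n\|_1 = \int_0^1(1 + r_n)\,dx = 1$, so $p_n\in\cD_{1/2}(\nu)$; but $\|p_n - p_m\|_1 = 1$ whenever $n\neq m$, so no subsequence is $L^1$-Cauchy, hence none converges, and $\cD_{1/2}(\nu)$ is not compact in $L^1(\nu)$. Since the identity map from $\cD_{1/2}(\nu)$ with the $L^\infty$ metric to $\cD_{1/2}(\nu)$ with the $L^1$ metric is continuous, $L^\infty$-compactness would force $L^1$-compactness, so $\cD_{1/2}(\nu)$ is not $L^\infty$-compact either; the conditional $\Pr(Y=1\mid X=x)$ plays no role because the $p_n$ were chosen to depend only on $x$.
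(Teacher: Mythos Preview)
Your proof is correct and, for parts 1, 2, 4, 5, and the $L^\infty$-closedness half of part 3, runs parallel to the paper's: both take sequences (or subsequences converging a.e.) to verify the defining constraints, both invoke Banach--Alaoglu for part 4, and both build a uniformly separated Rademacher-type sequence $p_n = 1 + r_n$ inside $\cD_{1/2}(\nu)$ for part 5 (the paper writes the same functions as $2\cdot\1[\text{even dyadic block}]$).

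The one substantive divergence is the weak*-closedness argument in part 3. The paper passes from ``convex and $L^\infty$-norm closed'' directly to ``weak*-closed'' by citing Rudin's Theorem~3.12, which is Mazur's theorem and only yields closedness in the \emph{weak} topology $\sigma(L^\infty,(L^\infty)^*)$, not the coarser weak* topology $\sigma(L^\infty,L^1)$. Your instinct to avoid that shortcut is exactly right: norm-closed bounded convex sets in a dual space need not be weak*-closed (e.g., the norm-closed convex hull of the unit vectors in $\ell^1 = c_0^*$ excludes $0$, yet $e_n \to 0$ weak*). Your explicit representation of $\cD_\epsilon(\nu)$ as an intersection of the weak*-closed level set $\{\int p\,d\nu = 1\}$, the weak*-closed half-spaces $\bigcap_{f\geq 0}\{\int pf\,d\nu \geq 0\}$, and the Alaoglu ball is the clean way to get weak*-closedness, and in fact patches a gap in the paper's argument.
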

\begin{proof}
    \begin{enumerate}
        \item
            For convexity,
            let any $\alpha \in (0,1)$ and $p_1,p_2\in\cD_\epsilon(\nu)$ be given,
            and define sets $N_j := p_j^{-1}((-\infty,0))$ for $j\in \{1,2\}$,
            where necessarily $\nu(N_j) = 0$.  The goal is to show
            $p := \alpha p_1 + (1-\alpha)p_2 \in \cD_\epsilon(\nu)$.

            Define $N := N_1 \cup N_2$ (where again $\nu(N) = 0)$.
            First, for any $(x,y) \in N^c$,
            \[
                p(x,y) = \alpha p_1(x,y) + (1-\alpha)p_2(x,y) \geq \alpha\cdot 0 + (1-\alpha)\cdot 0
                = 0,
            \]
            whereby it follows that $p \geq 0$ $\nu$-a.e..
            Second,
            \[
                \|p\|_\infty \leq \alpha \|p_1\|_\infty + (1-\alpha)\|p_2\|_\infty \leq 1/\epsilon,
            \]
            again using the convention $1/\infty = 0$, whereby $\|p\|_\infty \leq 1/\epsilon$ as desired.
            Lastly,
            \[
                \|p\|_1
                = \int |\alpha p_1 + (1-\alpha)p_2|
                = \int_{N^c} (\alpha p_1 + (1-\alpha)p_2)
                = \alpha \|p_1\|_1 + (1-\alpha)\|p_2\|_1
                = 1,
            \]
            meaning all conditions are met, and $p\in D_\epsilon(\nu)$.
            Since $\alpha, p_1$, and $p_2$ were arbitrary, it follows that $D_\epsilon(\nu)$
            is convex.
        \item
            For closure within $L^1(\nu)$,
            since $L^1(\nu)$ is a metric space, it is first countable, and thus
            it suffices to check that any sequence $\{p_j\}_{j=1}^\infty$
            with $p_j\in \cD_\epsilon(\nu)$ and $p_j\to p\in L^1(\nu)$
            satisfies $p\in \cD_\epsilon(\nu)$ \citep[Proposition 4.6]{folland}.
            Given any such sequence $\{p_j\}_{j=1}^\infty$,
            choose a subsequence $\{q_i\}_{i=1}^\infty$ so
            that $q_i\to p$ $\nu$-a.e. \citep[Corollary 2.32]{folland}.

            Let $N_p$ be the (null) set of points for which convergence fails,
            and additionally, for each $i$, define $N_i := q_i^{-1}((-\infty,0))$;
            lastly, set $N := N_p \cup (\cup_i N_i)$, where again $\nu(N) = 0$.
            Thus for any $(x,y) \in N^c$,
            \begin{align*}
                p(x,y)
                &= q_i(x,y) + (p(x,y) - q_i(x,y))
                \\
                &\geq q_i(x,y) - |p(x,y) - q_i(x,y)|
                \\
                &\geq \liminf_{i\to\infty} q_i(x,y) - |p(x,y) - q_i(x,y)|
                \\
                &\geq 0,
            \end{align*}
            thus $p\geq 0$ $\nu-a.e.$.
            Additionally,
            \[
                \|p\|_1
                = \int |p|
                = \int_N p
                = \int_N p_i + \int_N (p_i - p)
                = \|p_i\|_1 + \int (p_i - p),
            \]
            whereby
            \[
                \big|
                    \|p\|_1 - \|p_i\|_1
                \big|
                = \left| \int (p_i - p)\right|
                \leq \| p_i - p\|_1
                \to 0,
            \]
            and $\|p\|_1 = 1$ as desired.

            For the last property, if $\epsilon = 0$, there is nothing to show,
            thus suppose $\epsilon \in (0,1]$, set $P_i := q_i^{-1}((1/\epsilon,\infty])$,
            and $Z:=N_p \cup (\cup_i P_i)$, whereby it follows that
            \[
                \nu(Z) = 0,
                \qquad
                q_i \to p \textup{ over } Z^c,
                \qquad
                q_i \leq 1/\epsilon \textup{ over } Z^c.
            \]
            Then, for any $(x,y)\in Z^c$,
            \[
                p(x,y)
                = q_i(x,y) + (p(x,y) - q_i(x,y))
                \leq \limsup_{i\to\infty}  q_i(x,y) + |p(x,y) - q_i(x,y)|
                \leq 1/\epsilon,
            \]
            which establishes $\|p\|_\infty \leq 1/\epsilon$, and
            thus $p \in \cD_\epsilon(\nu)$.
        \item
            Note firstly that if $\epsilon = 0$, then $\cD_\epsilon(\nu)$ can contain
            members which are not elements of $L^\infty(\nu)$, and thus discussing this
            set in the $L^\infty(\nu)$ topology does not make sense.  For the remainder
            of this case, suppose $\epsilon>0$.

            Just as in the case of $L^1(\nu)$, for $L^\infty(\nu)$ it suffices to
            let a sequence $\{p_i\}_{i=1}^\infty\subseteq \cD_\epsilon(\nu)$
            be given with $p_i\to p$ in the $L^\infty(\nu)$ topology,
            and to show that $p\in \cD_\epsilon(\nu)$.
            Notice however, since $\nu$ is a probability measure, that
            \[
                \|p_i - p\|_1
                = \int |p_i - p_1|
                \leq \int \|p_i - p\|_\infty
                = \|p_i - p\|_\infty,
            \]
            meaning $p_i \to p$ in $L^1(\nu)$ as well, which by the preceding case
            provides that $p\in \cD_\epsilon(\nu)$ as desired.

            Lastly, since $\cD_\epsilon(\nu)$ is convex and
            additionally closed according to $L^\infty(\nu)$,
            then it is also weak* closed \citep[Theorem 3.12]{rudin_functional}.
        \item
            Again suppose $\epsilon > 0$, and define
            \[
                B_\epsilon := \{ p \in L^\infty(\nu) : \|p\|_\infty \leq 1/\epsilon\}.
            \]
            By Alaoglu's Theorem \citep[Theorem 5.18]{folland},
            $B_1$ is compact in the weak* topology,
            thus $B_\epsilon = \epsilon^{-1} B_1$ is weak*-compact as well.
            The result follows since $\cD_\epsilon(\nu)$ is a weak*-closed subset
            of $B_\epsilon$, and closed subsets of compact sets are compact
            \citep[Theorem 4.22]{folland}.
        \item
            Noncompactness can be understood from the fact that norm balls are in general
            not compact, but an explicit construction is provided for completeness.
            Since both $L^1(\nu)$ and $L^\infty(\nu)$
            are metric spaces, to prove non-compactness,
            it suffices to prove $\cD_{1/2}(\nu)$ is not totally bounded.
            In particular, a countably infinite subset of $\cC\subset \cD_{1/2}(\nu)$ will
            be constructed satisfying the property
            $(f,g) \in \cC\times \cC$ with $f\neq g$ implies
            $\|f-g\|_1 = 1/2$ and $\|f-g\|_\infty = 2$, which suffices to
            show that $\cC$ (and thus $\cD_{1/2}(\nu)$) is not totally bounded
            (in either metric) for the following reason.
            Let $S$ be any finite subset of $L^1(\nu)$ or $L^\infty(\nu)$.
            Since $\cC$ and $S$ have respectively infinite and finite cardinalities,
            there must exist $h\in S$ which is a closest element in $S$
            to two distinct functions
            $f\neq g$ in $\cC$.  Let $\|\cdot\|$ denote either norm under consideration,
            and note that
            \[
                1/2 \leq \|f-g\| \leq \|h-f\| + \|h-g\| \leq 2\max\{\|h-f\|, \|h-g\|\},
            \]
            which means that one of these two distances is at least $1/4$.  Since
            $S$ was an arbitrary finite set, it follows that there is no finite
            set of balls of radius $1/8$ which covers $\cC$, and thus
            $\cC$ and $\cD_{1/2}(\nu)$ are not totally bounded according to either norm.

            The construction is as follows.  For every positive integer $i\in \Z_{++}$,
            define the function 
            \begin{align*}
                f_i(x,y)
                &:= 2\sum_{j=0}^{2^i-1}
                \1\left[x \in [(2j)2^{-i-1}, (2j+1)2^{-i-1})\right].
            \end{align*}
            Define $\cC := \{f_i : i \in \Z_{++}\}$.
            By construction, $\cC\subset \cD_{1/2}(\nu)$
            (i.e., $\|f_i\|_1 = 1$ and $\|f_i\|_\infty = 2$),
            and moreover $i\neq j$ implies $f_i$ and $f_j$ disagree on exactly half
            of their support, which yields
            $\|f_i-f_j\|_1 = 1/2$.
            and
            $\|f_i-f_j\|_\infty = \|f_i\|_\infty = 2$.
    \end{enumerate}
\end{proof}

With the structure of $\cD_\epsilon(\nu)$ established, the basic duality structure of
$\gamma_\epsilon(\nu)$ follows.  Note that the value of establishing the
weak*-compactness of $\cD_\epsilon(\nu)$ is to grant an application of Sion's minimax
Theorem without making any topological assumptions on $\cH$ (or rather, on the subspace
$H\Lambda$).  Additionally,
\Cref{fact:gamma_epsilon:duality:body} in \Cref{sec:separable} is a combination of this
result and part of \Cref{fact:AT}.

\begin{lemma}
    \label[lemma]{fact:gamma_epsilon:duality}
    Let probability measure $\nu$ over $\cX\times \{-1,+1\}$,
    any $\cH$,
    and any $\epsilon \in [0,1]$ be given.
    Then
    \begin{align*}
        \gamma_\epsilon(\nu)
        &=
        \min_{p\in\cD_\epsilon(\nu)}
        \sup\left\{
            \int (A\lambda) p d\nu
            :
            \lambda \in \Lambda
        \right\}
        \\
        &=
        \sup\left\{
            \min_{p\in\cD_\epsilon(\nu)}
            \int  (A\lambda) p d\nu
            :
            \lambda \in \Lambda
        \right\}
        \\
        &=
        \min_{p\in\cD_\epsilon(\nu)}
        \|A^\top p\|_\infty,
    \end{align*}
    where $\|A^\top p\|_\infty$ is discussed in \Cref{fact:AT}.
\end{lemma}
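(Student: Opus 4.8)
I would prove all three expressions equal by reducing everything to one genuine step — a minimax interchange — with the rest being identifications against results already in hand. Write $f(\lambda,p):=\int(A\lambda)p\,d\nu$ for $\|\lambda\|_1\leq 1$ and $p\in\cD_\epsilon(\nu)$; since $|A\lambda|\leq\|\lambda\|_1\leq 1$ pointwise (\Cref{fact:AH_basic}) and $\|p\|_1=1$, this is a bilinear form valued in $[-1,1]$. The third equality is then essentially free: the second part of \Cref{fact:AT} gives $\sup\{f(\lambda,p):\|\lambda\|_1\leq 1\}=\|A^\top p\|_\infty$ for each fixed $p$, so $\min_p\sup_\lambda f(\lambda,p)=\min_p\|A^\top p\|_\infty$ as soon as I know that infimum is attained. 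Also, by symmetry of $\{\|\lambda\|_1\leq 1\}$ under $\lambda\mapsto-\lambda$, $\sup_\lambda(-\int(A\lambda)p\,d\nu)=\sup_\lambda f(\lambda,p)$, so the definition of $\gamma_\epsilon(\nu)$ already equals $\inf_p\sup_\lambda f(\lambda,p)$. Thus the real content is (i) attainment of this infimum and (ii) exchanging $\inf_p\sup_\lambda$ with $\sup_\lambda\inf_p$.

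\textbf{Attainment.} For (i) I would lean on \Cref{fact:cDeps_basic} (convexity of $\cD_\epsilon(\nu)$ and, for $\epsilon>0$, its weak*-compactness in $L^\infty(\nu)$) together with the third part of \Cref{fact:AT}: the map $p\mapsto\|A^\top p\|_\infty=\sup_\lambda f(\lambda,p)$ is a supremum of the weak*-continuous functionals $p\mapsto f(\lambda,p)$ (continuous because $A\lambda\in L^1(\nu)$), hence weak*-lower semicontinuous. A weak*-lsc function on a weak*-compact set attains its infimum, so $\inf_p\|A^\top p\|_\infty=\min_p\|A^\top p\|_\infty$. The same weak*-continuity shows the inner $\min$ over $p$ in the middle line is legitimate, for each fixed $\lambda$.

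\textbf{Minimax interchange.} For (ii), note one cannot take the shortcut of arguing through closedness of $A\Lambda$ in $L^1(\nu)$, since that subspace need not be closed (\Cref{fact:H:notclosed}); this is why the general topological-vector-space form of Sion's theorem, \citet{komiya_sion}, is the right tool. I would apply it to $-f$ over the product of the convex set $\{\lambda\in\Lambda:\|\lambda\|_1\leq 1\}$ (carrying the $L^1(\rho)$ norm topology) and the convex weak*-compact set $\cD_\epsilon(\nu)$ (carrying the weak* topology), with $\cD_\epsilon(\nu)$ in the maximizing role of that theorem: for fixed $\lambda$, $p\mapsto-f(\lambda,p)$ is affine and weak*-continuous (quasiconcave, usc); for fixed $p$, $\lambda\mapsto-f(\lambda,p)$ is a bounded linear functional (quasiconvex, norm-continuous, hence lsc). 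Sion's theorem gives $\inf_\lambda\max_p(-f(\lambda,p))=\max_p\inf_\lambda(-f(\lambda,p))$; negating, and once more using the $\lambda\mapsto-\lambda$ symmetry, this rearranges to
\[
    \min_{p\in\cD_\epsilon(\nu)}\ \sup_{\|\lambda\|_1\leq 1} f(\lambda,p)\ =\ \sup_{\|\lambda\|_1\leq 1}\ \min_{p\in\cD_\epsilon(\nu)} f(\lambda,p).
\]
Combining with the identifications from the first paragraph completes the case $\epsilon>0$.

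\textbf{The $\epsilon=0$ case, and the main obstacle.} For $\epsilon=0$ the set $\cD_0(\nu)$ is $L^1(\nu)$-closed but not compact, so I would finish by a truncation argument: any $p\in\cD_0(\nu)$ is an $L^1(\nu)$-limit of its renormalized truncations, which lie in $\cD_\epsilon(\nu)$ for small $\epsilon$, and since $\|A^\top(p-q)\|_\infty\leq\|p-q\|_1$ the map $p\mapsto\|A^\top p\|_\infty$ is $L^1$-Lipschitz, so $\inf_{p\in\cD_0}\|A^\top p\|_\infty=\inf_{\epsilon>0}\gamma_\epsilon(\nu)=\gamma_0(\nu)$ (with ``$\min$'' read as ``$\inf$'' if the infimum is unattained). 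I expect the only real obstacle to be the careful plumbing of the minimax step: choosing the two \emph{different} topologies so that $\cD_\epsilon(\nu)$ is genuinely compact (weak*) while $f$ stays continuous in $\lambda$ (norm), verifying all of Sion's hypotheses against exactly the earlier facts (\Cref{fact:AH_basic}, \Cref{fact:AT}, \Cref{fact:cDeps_basic}), and keeping track of the sign and symmetry reductions so that the orientation delivered by the cited form of the theorem matches the one in the statement.
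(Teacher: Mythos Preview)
Your proposal is correct and follows essentially the same path as the paper: attainment via weak*-compactness of $\cD_\epsilon(\nu)$ from \Cref{fact:cDeps_basic} together with weak*-lower-semicontinuity of $p\mapsto\|A^\top p\|_\infty$ from \Cref{fact:AT}, then Sion's minimax theorem \citep{komiya_sion} with $\cD_\epsilon(\nu)$ carrying the weak* topology so that it is genuinely compact. Your separate handling of $\epsilon=0$ is extra care the paper's own proof does not spell out (and the body version \Cref{fact:gamma_epsilon:duality:body} only needs $\epsilon\in(0,1]$), but otherwise the structure, the cited lemmas, and the key minimax step are the same.
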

\begin{proofof}{\Cref{fact:gamma_epsilon:duality}}
    Before applying the duality result, it must be established that the
    various infima are attained.
    To start, consider the final expression
    $\inf_{p\in \cD_{\epsilon}(\nu)} \|A^\top\|_\infty$,
    and
    let $\{p_i\}_{i=1}^\infty$ with $p_i \in \cD_\epsilon(\nu)$ be a minimizing sequence
    to the infimum.
    Since \Cref{fact:cDeps_basic} establishes
    that $\cD_\epsilon(\nu)$ is weak*-compact, there is a subsequence $\{q_i\}_{i=1}^\infty$
    which weak*-converges to some $q\in \cD_\epsilon$
    \citep[Theorem 4.29]{folland}.
    But \Cref{fact:AT} established that $p\mapsto \|A^\top p\|_\infty$ is weak* lower
    semi-continuous,
    and since it is finite over $L^\infty(\nu)$, it is therefore weak* continuous,
    and therefore the limit point $q$ attains the infimum.
    Furthermore, \Cref{fact:AT} provides that $\|A^\top p\|_\infty$ is the same
    as the first infimand, whereby both expressions attain their minimizers and
    are equal.

    The middle expression is the easiest; once again constructing a weak*-convergent
    sequence $q_i \to q$ with $q_i\in\cD_\epsilon(\nu)$,
    the definition of weak*-convergence explicitly grants $\int q_if \to \int qf$ for every
    $f\in L^1(\nu)$, and since $A\lambda\in L^1(\nu)$
    is held fixed within this inner expression,
    it follows that $q$ attains the infimum.

    What remains is to swap minimization and maximization.
    This in turn follows by Sion's minimax theorem
    \citep{komiya_sion};
    to verify this application, note that $(p,\lambda) \mapsto \int p\lambda$ is linear
    and continuous in both parameters (indeed, this is by construction,
    since $L^\infty(\nu)$ is isometrically isomorphic to the topological dual to
    $L^1(\nu)$, and the weak* topology over $L^\infty(\nu)$ ensures that this integral
    relation is continuous for every $\lambda \in \Lambda$),
    also that $\Lambda$ is a topological vector space,
    and lastly that $\cD_\epsilon(\nu)$ is a convex compact subset of a topological vector
    space (namely, the weak* topology, and not the $L^\infty(\nu)$ topology, where
    $\cD_\epsilon(\nu)$ is not necessarily compact as per \Cref{fact:cDeps_basic}).
\end{proofof}

\section{Duality Properties of $\cL$}

Throughout this section, the identification of $L^1(\nu)^*$ with $L^\infty(\nu)$
and $L^1(\rho)^*$ with $L^\infty(\rho)$ via isometric isomorphism as provided
by \Cref{fact:banach_duality} will be central to obtaining meaningful expressions for
the various conjugates.

To start, note the convexity structure of $\int \ell$.

\begin{lemma}
    \label[lemma]{fact:cL:conjugacy}
    Let $\ell:\R\to\R_+$ be convex with $\lim_{z\to-\infty} \ell(z) = 0$
    and finite tightest Lipschitz constant
    $\beta := \sup_{x\neq y} |\ell(x) - \ell(y)|/ |x-y| < \infty$,
    and let $\nu$ be a probability measure over $\cZ := \cX\times \{-1,+1\}$.
    \begin{enumerate}
        \item If $f\in L^1(\nu)$,
            then $\int \ell(f(z)) d\nu(z)$ is well-defined and finite.
        \item
            $\int \ell$ is convex lower semi-continuous over $L^1(\nu)$.
        \item
            Its conjugate $(\int \ell)^*$ is also convex lower semi-continuous
            as a function over $L^\infty(\nu)$.
        \item
            If $p\in L^\infty(\nu)$,
            then $(\int \ell)^*(p) = \int \ell^*(p)$,
            which is finite
            $(\int \ell)^*(p)$ iff $p \in [0,\beta]$ $\nu$-a.e..
    \end{enumerate}
\end{lemma}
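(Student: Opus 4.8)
The plan is to dispatch parts~1--3 quickly and then concentrate on the identity $(\int\ell)^*=\int\ell^*$ in part~4, which is where the real work lies. For part~1: since $\ell$ is convex and finite everywhere it is continuous, so $\ell\circ f$ is measurable, and the Lipschitz bound gives $0\le \ell(f(z))\le \ell(0)+\beta|f(z)|$; as $\nu$ is a probability measure the right side integrates to $\ell(0)+\beta\|f\|_1<\infty$, so $\int\ell(f)d\nu$ is well-defined and finite. For part~2, convexity of $\int\ell$ is pointwise convexity of $\ell$ plus linearity of the integral, and for lower semi-continuity I would observe the stronger fact that $\int\ell$ is $\|\cdot\|_1$-continuous, since $|\int\ell(f)-\int\ell(g)|\le\beta\int|f-g|=\beta\|f-g\|_1$. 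Part~3 is then automatic: $(\int\ell)^*(p)=\sup_{f}\big(\int fp\,d\nu-\int\ell(f)d\nu\big)$ is a supremum of weak*-continuous affine functionals of $p$, hence convex and weak*-lower semi-continuous (and therefore also norm-lower semi-continuous) on $L^\infty(\nu)$; this is the same reasoning used for $p\mapsto\|A^\top p\|_\infty$ in \Cref{fact:AT} (cf. \citep[Theorem 2.3.1]{zalinescu}).

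For part~4, the easy inequality is $(\int\ell)^*(p)\le\int\ell^*(p)d\nu$: applying Fenchel--Young pointwise at each $z$ gives $f(z)p(z)-\ell(f(z))\le\ell^*(p(z))$, and since $\ell^*(\phi)\ge 0\cdot\phi-\ell(0)=-\ell(0)$ for every $\phi$, the function $\ell^*\circ p$ (Borel measurable, as $\ell^*$ is lower semi-continuous) has a well-defined integral in $(-\infty,+\infty]$, so integrating the pointwise bound and taking the sup over $f$ gives the claim. For the finiteness dichotomy: if $p\notin[0,\beta]$ $\nu$-a.e., then $\nu(\{p>\beta+\delta\})>0$ or $\nu(\{p<-\delta\})>0$ for some $\delta>0$, and testing with spike functions $f=\pm T\1_S$ on the offending set $S$ and letting $T\to\infty$ drives $\int(fp-\ell(f))d\nu\to+\infty$ (using $\ell(T)\le\ell(0)+\beta T$ in the first case and $\ell(-T)\le\ell(0)$, valid since $\ell$ is nondecreasing, in the second); on the other side, $\int\ell^*(p)d\nu=+\infty$ by parts~2 and~3 of \Cref{fact:loss:basic}, so both quantities are $+\infty$. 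If instead $p\in[0,\beta]$ $\nu$-a.e., then convexity of $\ell^*$ gives $\ell^*(\phi)\le\max\{\ell^*(0),\ell^*(\beta)\}=\max\{0,\ell^*(\beta)\}<\infty$ on $[0,\beta]$ (using \Cref{fact:loss:basic}), so $\ell^*\circ p$ is bounded and $\int\ell^*(p)d\nu$ is finite.

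The remaining and main step is the reverse inequality $(\int\ell)^*(p)\ge\int\ell^*(p)d\nu$ when $p\in[0,\beta]$ $\nu$-a.e., i.e.\ an interchange of integral and supremum; the obstacle is exhibiting a measurable near-maximizer $f$ that actually lies in $L^1(\nu)$. I would sidestep a general measurable-selection theorem by truncating and discretizing. Set $\ell^*_n(\phi):=\sup_{|t|\le n}(t\phi-\ell(t))$, which is finite, convex, $\ge-\ell(0)$, and increases pointwise to $\ell^*$. Fix $n$ and $\epsilon>0$; by uniform continuity of $(t,\phi)\mapsto t\phi-\ell(t)$ on the compact set $[-n,n]\times[0,\beta]$, choose a finite $\delta$-net $t_1,\dots,t_K$ of $[-n,n]$ so that $\ell^*_n(\phi)\le\max_k(t_k\phi-\ell(t_k))+\epsilon$ for all $\phi\in[0,\beta]$. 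Let $f(z):=t_{k(z)}$ where $k(z)$ is the least index attaining $\max_k(t_kp(z)-\ell(t_k))$; this is measurable (a pointwise argmax of finitely many measurable functions) and bounded by $n$, hence in $L^1(\nu)$. Then $\int(fp-\ell(f))d\nu=\int\max_k(t_kp-\ell(t_k))d\nu\ge\int\ell^*_n(p)d\nu-\epsilon$, so $(\int\ell)^*(p)\ge\int\ell^*_n(p)d\nu-\epsilon$.

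Finally I would let $\epsilon\downarrow 0$ and then $n\to\infty$: monotone convergence applied to the nonnegative increasing sequence $\ell^*_n(p)+\ell(0)$ yields $\int\ell^*_n(p)d\nu\to\int\ell^*(p)d\nu$, giving $(\int\ell)^*(p)\ge\int\ell^*(p)d\nu$. Combined with the easy inequality this proves the identity, and together with the dichotomy above it shows $(\int\ell)^*(p)$ is finite precisely when $p\in[0,\beta]$ $\nu$-a.e., completing part~4. I expect the discretization/measurability argument in the last two paragraphs to be the only genuinely delicate point; everything else is bookkeeping with the Lipschitz bound and \Cref{fact:loss:basic}.
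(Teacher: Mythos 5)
Your parts 1--3 match the paper's treatment in substance (the paper invokes Zalinescu's conjugacy theorem for part~3 where you spell out the sup-of-affine-functionals argument, but these are the same idea). For part~4, you take a genuinely different route. The paper first establishes $(\int\ell)^*(p)=\int\ell^*(p)$ for \emph{simple} functions $p$ with range in $(0,\beta)$, by picking a subgradient $q(z)\in\partial\ell^*(p(z))$ constant on each piece, and then extends by a sequence of approximations: general measurable $p\in(0,\beta)$ via density of simple functions in $L^\infty(\nu)$ together with lower semi-continuity of $(\int\ell)^*$ and dominated convergence for $\int\ell^*$, then $p\in(0,\beta)$ $\nu$-a.e., and finally $p\in[0,\beta]$ $\nu$-a.e.\ via $p_i:=(1-1/i)p+\beta/(2i)$. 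You instead fix the dual variable $p$ and approximate $\ell^*$ by the truncations $\ell^*_n(\phi)=\sup_{|t|\le n}(t\phi-\ell(t))$, discretize $[-n,n]$ by a finite $\delta$-net to produce an explicit bounded measurable near-maximizer $f$ (the least-index pointwise argmax among finitely many measurable functions), and then pass to the limit with monotone convergence. Both arguments are correct; yours handles arbitrary $p\in[0,\beta]$ $\nu$-a.e.\ in one shot and avoids the $L^\infty$-density/lsc/dominated-convergence chain, at the cost of the net construction and a measurability-of-argmax observation, while the paper's version leans more heavily on convex-analytic machinery (subgradient selection, lsc of the conjugate) and requires the cascade of approximation steps. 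Your spike-function argument for the $p\notin[0,\beta]$ case is also more elementary than the paper's version, which builds simple functions $f_c,g_c$ with $g_c\in\partial\ell(f_c)$ and uses the subgradient inequality; both deliver the same conclusion.
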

\begin{proof}
    Let $f\in L^1(\nu)$ be arbitrary.
    Since $\ell$ is convex and finite, it is continuous, so $\ell\circ f$ is measurable,
    and moreover it is nonnegative thus $\int \ell(f)$ is well-defined.
    Additionally,
    \begin{align*}
        \int \ell(f(z)) d\nu(z)
        &= \int \ell(0) d\nu(z) + \int (\ell(f(z)) - \ell(0)) d\nu(z)
        \\
        &\leq \ell(0)\nu(\cZ)  + \int \beta |f(z) - 0| d\nu(z)
        = \ell(0)\nu(\cZ) + \beta\|f\|_1 < \infty.
    \end{align*}

    Next, for any $f_1, f_2\in L^1(\nu)$ and $\alpha \in [0,1]$,
    \begin{align*}
        \int \ell(\alpha f_1(z) + (1-\alpha)f_2(z)) d\nu(z)
        &\leq
        \int \left(\alpha \ell(f_1(z)) + (1-\alpha)\ell(f_2(z))\right) d\nu(z)
        \\
        &\leq
    \alpha \int \ell(f_1(z))d\nu(z)  + (1-\alpha)\int \ell(f_2(z)) d\nu(z),
    \end{align*}
    whereby $\int \ell$ is convex.  Since it is finite over $L^1(\nu)$ (as above),
    it is necessarily lower semi-continuous.

    Since $\int \ell$ is convex lower semi-continuous, so is its
    conjugate \citep[Theorem 2.3.3]{zalinescu}, where the dual space
    $L^1(\nu)^*$ is identified with $L^\infty(\nu)$ as per the isomorphism statements
    in \Cref{fact:banach_duality}.

    The remainder of this proof will reason about the conjugate to $\int \ell$.
    First let $p\in L^\infty(\nu)$ be given with $\nu(p^{-1}([0,\beta]^c)) > 0$;
    it will follow that $(\int \ell)^*(p) = \infty$.
    Define the sets
    \[
        S_- := p^{-1}((-\infty,0))
        \qquad
        S_0 := p^{-1}([0,\beta]),
        \qquad
        S_+ p^{-1}((\beta,\infty)),
    \]
    as well as, for every $c\in \R$, the reals
    \[
        g_- \in \partial \ell(-c),
        \qquad
        g_0 \in \partial \ell(0),
        \qquad
        g_+ \in \partial \ell(+c),
    \]
    and lastly the simple functions
    \begin{align*}
        f_c(z) &:= -c \1[z\in S_-] + 0\1[z\in S_0] + c\1[z\in S_+],
        \\
        g_c(z) &:= g_- \1[z\in S_-] + g_0\1[z\in S_0] + g_+\1[z\in S_+].
    \end{align*}
    By these choices, $f_c$ and $g_c$ are measurable and within $L^1(\nu)$,
    and moreover $g_c \in \partial \ell(f_c)$ everywhere.
    As such,
    \begin{align*}
        \left(\int \ell\right)^*(p)
        &
        = \sup\left\{ \int (fp - \ell(f))d\nu : f\in L^1(\nu) \right\}
        \\
        &
        \geq
        \sup\left\{ \int (f_{c}p - \ell(f_{c}))d\nu : c\in \R \right\}
        \\
        &
        \geq
        \sup\left\{ \int (f_{c}p - \ell(0)
            + g_{c}(0 - f_c))
        d\nu(z) : c\in \R \right\}
        \\
        &
        \geq
        \sup\left\{
            c\int_{S_-} (g_c - p)d\nu
            + c \int_{S_+}(p -g_c) d\nu
        : c\in \R \right\}
        -\ell(0)\nu(\cZ)
        \\
        &
        = \infty,
    \end{align*}
    the last step following since $g_c\in [0,\beta]$ everywhere and
    $\nu(S_-\cup S_+)> 0$.  As such, $(\int \ell)^*(p) = 0$,
    and since $\int \ell^*(p) = \infty$ by
    properties of $\ell^*$ (cf. \Cref{fact:loss:basic}),
    it follows that $\int\ell^*(p) = (\int \ell)^*(p) = \infty$.

    In the remainder of the proof, suppose $p \in [0,\beta]$ $\nu$-a.e..

    Now consider the case that $p$ is a simple function with $p \in (0,\beta)$ everywhere.
    Since $\ell^*$ is finite over $[0,\beta]$ (cf. \Cref{fact:loss:basic}),
    $p$ is within the relative interior of the domain of $\ell^*$ everywhere,
    and thus $\partial \ell(p(z))$ is a nonempty set for every $z\in \cZ$
    \citep[Theorem 23.4]{ROC}.
    Consequently, construct $q$ so that $q(z) \in \partial\ell^*(p(z))$ everywhere,
    and moreover $q$ is also a simple function (i.e., pick the same subgradient along
    each of the finitely many regions composing $p$); these choices will ensure that
    there are no measurability issues  with $q$ (otherwise, the arguments pass through for
    arbitrary $p\in (0,\beta)$); additionally, $q\in L^1(\nu)$ since $\nu$ is a finite
    measure.
    Since $\ell$ is lower semi-continuous, $q(z) \in \partial \ell^*(p(z))$
    implies $p(z) \in \partial \ell^{**}(q(z)) = \partial \ell(q(z))$,
    and the Fenchel-Young inequality implies
    \[
        \ell^*(p(z)) = p(z) q(z) - \ell^{**}(q(z)) = p(z) q(z) - \ell(q(z)).
    \]
    As such,
    \begin{align*}
        \left(\int \ell\right)^*(p)
        &
        = \sup\left\{ \int (fp - \ell(f))d\nu : f\in L^1(\nu) \right\}
        \\
        &
        \geq
        \int (qp - \ell(q))d\nu
        \\
        &
        =
        \int \ell^*(p)d\nu
    \end{align*}
    Now using the fact that $p(z) \in \partial \ell(q(z))$,
    \begin{align*}
        \left(\int \ell\right)^*(p)
        &
        = \sup\left\{ \int (fp - \ell(f))d\nu : f\in L^1(\nu) \right\}
        \\
        &
        \leq \sup\left\{ \int (fp - \ell(q) - p(f-q))d\nu : f\in L^1(\nu) \right\}
        \\
        &
        = \sup\left\{ \int (pq - \ell(q))d\nu : f\in L^1(\nu) \right\}
        \\
        &
        = \int \ell^*(p);
    \end{align*}
    combining these two inequalities, $(\int \ell)^*(p) = \int \ell^*(p)$.

    Now consider the case that $p\in (0,\beta)$ is just measurable.
    Since the simple functions are dense in $L^\infty(\nu)$ \citep[Theorem 6.8]{folland},
    there exists a simple function $\phi_i\in L^\infty(\nu)$ with $\|p-\phi_i\|_\infty
    \leq 1/i$, and moreover $\phi_i$ may be clamped to the range $[1/i, \beta-1/i]$
    (with $i$ sufficiently large to make this interval nonempty),
    whereby this clamped simple function $\psi_i$ satisfies $\|p - \psi_i\|_\infty \leq
    2/i$.
    Since $\left(\int \ell\right)^*$ is lower semi-continuous,
    \[
        \left(\int \ell\right)^*(p)
        = \lim_i \left(\int \ell\right)^*(\psi_i)
        = \lim_i \int \ell^*(\psi_i)
        = \int \ell^*(p),
    \]
    where the last step used the dominated convergence theorem applied with
    dominating constant map $z \mapsto \sup_{q\in [0,\beta]} |\ell^*(q)|$, which is finite
    since $\ell^*$ is continuous over the compact set $[0,\beta]$
    (cf. \Cref{fact:loss:basic}).

    Next consider the case that measurable $p\in(0,\beta)$ $\nu$-a.e.;
    then
    $\tilde p(z) := p(z) \1[p(z) \in (0,\beta)] + (\beta/2) \1[p(z) \not \in (0,\beta)]$
    satisfies $(\int \ell)^*(p) = (\int \ell)^*(\tilde p)$ by definition of the
    conjugate (the integrals ignore measure zero sets),
    whereby $(\int \ell)^*(p) = (\int \ell)^*(\tilde p)
    = \int \ell^*(\tilde p) = \int \ell^*(p)$.

    Lastly, suppose measurable $p\in [0,\beta]$ $\nu$-a.e..
    For each $i$, define $p_i = (1-1/i) p + \beta / (2i)$.
    Then $p_i \in (0,\beta)$ $\nu$-a.e., and $\|p_i - p\|_\infty \to 0$,
    whereby the lower semi-continuity of $(\int \ell)^*$ and dominated convergence
    theorem cover this case in the same way as the move away from simple functions.

    Note lastly that these last choices provide a finite integral,
    since $\sup_{z\in [0,\beta]}|\ell^*(z)|<\infty$ as above, and $\nu$ is a
    finite measure.
\end{proof}

While the above proof (properties of $\int \ell$) may have seemed like a technical exercise,
note that these structural properties
can not be taken for granted; in particular, the following result establishes
that the $L^1(\nu)$ topology is not the correct way to study the exponential loss.
\begin{proposition}
    Let $\nu$ denote the standard Gaussian measure over $\R$,
    and define $f(x) := x^2$ and $f_i(x) := x^2 \1[|x| \leq i]$.
    Then $f_i\in L^1(\nu)$,
    $f \in L^1(\nu)$,
    and $\|f_i -f\|_1 \to 0$,
    but
    \[
        \int \exp(f_i(x)) d\nu(x) < \infty
        \qquad\textup{and}\qquad
        \int \exp(f(x)) d\nu(x) = \infty.
    \]
    In particular, $\int \exp$ is not lower semi-continuous over $L^1(\nu)$.
\end{proposition}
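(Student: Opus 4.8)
The plan is to settle the five quantitative claims one at a time --- each is a one-line consequence of a standard Gaussian estimate --- and then read off the closing sentence.

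First I would verify membership in $L^1(\nu)$ and the $L^1$-convergence. Since $\int x^2\, d\nu(x)$ is the variance of a standard normal, $\|f\|_1 = \int x^2\, d\nu = 1 < \infty$, so $f \in L^1(\nu)$; and $0 \le f_i \le f$ pointwise forces $\|f_i\|_1 \le 1$, so $f_i \in L^1(\nu)$. For the convergence, $\|f_i - f\|_1 = \int x^2\, \1[|x| > i]\, d\nu(x)$, which is the tail of the convergent integral $\int x^2\, d\nu$ and hence tends to $0$ as $i \to \infty$ (by dominated convergence with dominating function $x^2 \in L^1(\nu)$, equivalently by continuity from above of the finite measure $A \mapsto \int_A x^2\, d\nu$).

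Next, the two exponential integrals. Because $f_i$ takes values in $[0, i^2]$, we have $\exp(f_i) \le e^{i^2}$ everywhere, so $\int \exp(f_i)\, d\nu \le e^{i^2} < \infty$ since $\nu$ is a probability measure. For $f$, unfolding the density gives
\[
  \int \exp(x^2)\, d\nu(x)
  = \frac{1}{\sqrt{2\pi}} \int_{\R} e^{x^2} e^{-x^2/2}\, dx
  = \frac{1}{\sqrt{2\pi}} \int_{\R} e^{x^2/2}\, dx
  = +\infty,
\]
since $e^{x^2/2} \ge 1$ for all $x$, so the last integral already dominates $\int_{\R} 1\, dx$.

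Finally, for the closing sentence: the sequence $(f_i)$ lies in $L^1(\nu)$ and converges there to $f$, yet $\int \exp$ is finite at every $f_i$ and infinite at the limit $f$ --- exactly the configuration that \Cref{fact:cL:conjugacy} rules out when $\ell$ is Lipschitz. I do not anticipate any genuine obstacle: the whole argument is elementary, the only real content being the divergence $\int_{\R} e^{x^2/2}\, dx = \infty$ paired with the $L^1$-convergence of the truncations. The one point worth a moment's thought is the reading of ``not lower semi-continuous'': what the sequence directly establishes is that $\int \exp$ fails to be finite-valued on $L^1(\nu)$ (it is $+\infty$ at $f = x^2$), and it is this failure of finiteness --- rather than any oscillatory behaviour --- by which $\int \exp$ departs from the well-behaved convex integrand available in the Lipschitz case, so the conclusion is best read in that light.
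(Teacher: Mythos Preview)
Your computations for the five displayed claims are correct and essentially identical to the paper's: the paper also notes $\int f\,d\nu = 1$ (Gaussian variance), invokes monotone convergence for $\|f_i - f\|_1 \to 0$ (you use dominated convergence, which is equally good), bounds $\int \exp(f_i)\,d\nu \le e^{i^2}$, and computes $\int \exp(f)\,d\nu = (2\pi)^{-1/2}\int e^{x^2/2}\,dx = \infty$.

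Your hesitation about the closing sentence is well placed and in fact sharper than the paper's own treatment. The paper concludes ``the values of $\int\exp$ do not converge, and consequently $\int\exp$ is not lower semi-continuous,'' but neither half of this is right: since $f_i \uparrow f$ pointwise, monotone convergence gives $\int \exp(f_i)\,d\nu \uparrow \infty$, so the values do converge (to $+\infty$); and more generally $g \mapsto \int \exp(g)\,d\nu$ \emph{is} lower semi-continuous on $L^1(\nu)$ as an extended-real functional, by the usual Fatou argument (pass to an a.e.-convergent subsequence). So the final sentence of the proposition, read literally, is false, and your instinct to reinterpret it is correct. The operative failure --- and the one actually used downstream in the proof of \Cref{fact:duality} --- is exactly what you identify: $\int\exp$ is not finite on all of $L^1(\nu)$, so $\dom(\int\exp) \subsetneq L^1(\nu)$ and the constraint qualification $A\Lambda - \dom(\int\ell) = L^1(\nu)$ can fail. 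Stating that explicitly, as you do, is the right move.
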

\begin{proof}
    To start, $\int f d\nu = 1$ (variance of a standard Gaussian),
    and thus $\int f_i \to f$ by the monotone convergence theorem
    (and so $\| f_i - f \|_1 \to 0$).
    But
    \begin{align*}
        \int \exp(f_i(x)) d\nu(x)
        &\leq e^{i^2} \int d\nu(x) < \infty,
        \\
        \int \exp(f(x)) d\nu(x)
        &= \frac {1}{\sqrt{2\pi}}\int e ^{x^2/2} dx
        = \infty.
    \end{align*}
    It follows that there are convergent sequences within $L^1(\nu)$ for which
    the values of $\int \exp$ do not converge, and consequently $\int \exp$ is not
    lower semi-continuous over $L^1(\nu)$.
\end{proof}

Returning to Lipschitz losses, the desired duality relation follows.

\begin{lemma}
    \label[lemma]{fact:duality}
    Let $\ell:\R\to\R_+$ be convex with $\lim_{z\to-\infty} \ell(z) = 0$
    and finite tightest Lipschitz constant
    $\beta := \sup_{x\neq y} |\ell(x) - \ell(y)|/ |x-y| < \infty$.
    Additionally, let $\nu$ be a probability measure over $\cX\times \{-1,+1\}$,
    and $\cH$ be arbitrary.
    Then
    \[
        \inf \left\{ \int \ell(A\lambda) d\nu : \lambda \in \Lambda\right\}
        =
        \max\left\{ - \int \ell^*(p)
            : p \in L^\infty(\nu), p\in [0,\beta]\ \nu\textup{-a.e.},
            \|A^\top p\|_\infty = 0
        \right\}.
    \]
\end{lemma}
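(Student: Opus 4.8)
The plan is to read the left-hand side as a composite convex minimization problem and apply the Fenchel duality theorem; the right-hand side will then fall out as the dual, with the ``$\max$'' justified by the attainment clause of that theorem. Write $g := \int \ell$, regarded as a function on $L^1(\nu)$, so the primal is $\inf_{\lambda\in\Lambda} g(A\lambda)$, where (by \Cref{fact:AH_basic}) $A : \Lambda \to L^1(\nu)$ is a continuous linear operator with adjoint $A^\top : L^\infty(\nu) \to L^\infty(\rho)$ as described in \Cref{fact:AT}, all dual spaces being identified via the isometric isomorphisms of \Cref{fact:banach_duality}. By \Cref{fact:cL:conjugacy}, $g$ is convex and lower semi-continuous, and because $\ell$ has finite tightest Lipschitz constant $\beta$ the bound $|g(f_1) - g(f_2)| \leq \beta \|f_1 - f_2\|_1$ shows $g$ is in fact continuous at \emph{every} point of $L^1(\nu)$; \Cref{fact:cL:conjugacy} also gives $g^*(p) = \int \ell^*(p)$, finite precisely when $p \in [0,\beta]$ $\nu$-a.e.

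Next I would apply the Fenchel duality theorem (e.g., \citet[Theorem 3.3.5]{borwein_lewis}, or the Banach-space version in \citet{zalinescu}) to the pair $(f_0, g)$ and the linear map $A$, where $f_0 \equiv 0$ is the zero function on $\Lambda$. The theorem asserts weak duality in general, and---when $g$ is continuous at a point of $A\,\dom f_0 = A\Lambda$---both strong duality and attainment of the dual supremum. That qualification is met here because $g$ is continuous on all of $L^1(\nu)$ and $0 \in A\Lambda$; crucially, this sidesteps any interior/core requirement on $A\Lambda$, which is what makes the argument go through even though $A\Lambda$ need not be closed (cf.\ \Cref{fact:H:notclosed}). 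Since the conjugate of $f_0$ is the indicator of $\{0\} \subseteq L^\infty(\rho)$, the conclusion reads
\[
    \inf_{\lambda\in\Lambda} g(A\lambda)
    = \max\bigl\{\, -\,g^*(-p) \;:\; p \in L^\infty(\nu),\ A^\top p = 0 \,\bigr\}.
\]

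To finish, I would unwind the right side. Replacing $p$ by $-p$ does not change the constraint, since $\|A^\top(-p)\|_\infty = \|A^\top p\|_\infty$, and turns the objective into $-g^*(p) = -\int \ell^*(p)$; the constraint $A^\top p = 0$ is literally $\|A^\top p\|_\infty = 0$; and restricting to $p \in [0,\beta]$ $\nu$-a.e.\ loses nothing because $\int \ell^*(p) = +\infty$ otherwise (so such $p$ contribute $-\infty$ to the supremum). This reproduces exactly the right-hand side of the claimed identity, and the attainment clause turns the supremum into a maximum. As a consistency check, both sides are finite and nonnegative: the primal because $\ell \geq 0$ and $\lambda = 0$ is feasible with value $\ell(0)$, the dual because $p = 0$ is feasible with $\int \ell^*(0) = 0$ (\Cref{fact:loss:basic}).

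The main obstacle is simply the careful bookkeeping of the dual-space identifications and checking that the hypotheses of the infinite-dimensional Fenchel duality theorem are genuinely satisfied; the key observation---and the reason \Cref{fact:cL:conjugacy}'s continuity statement was worth proving---is that the everywhere-continuity of $\int \ell$ on $L^1(\nu)$ supplies the constraint qualification without any assumption on $\cH$ or on the closedness of $A\Lambda$.
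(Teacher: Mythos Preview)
Your proposal is correct and follows essentially the same route as the paper: both set up the primal as $\inf_\lambda\{0(\lambda) + (\int\ell)(A\lambda)\}$, invoke an infinite-dimensional Fenchel duality theorem (the paper cites \citet[Corollary 2.8.5]{zalinescu}), and use the fact that $\int\ell$ is finite/continuous on all of $L^1(\nu)$ to satisfy the constraint qualification without any structural assumption on $A\Lambda$. The only cosmetic differences are that the paper phrases the qualification as $A\Lambda - \dom(\int\ell) = L^1(\nu)$ rather than as continuity at a point of $A\Lambda$, and that the paper's choice of duality formulation avoids your explicit $p\mapsto -p$ substitution.
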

\begin{proof}
    \newcommand{\Vpp}{V_{\textup{p}}}
    \newcommand{\Vdd}{V_{\textup{d}}}
    Consider the following two Fenchel problems:
    \begin{align*}
        \Vpp &:= \inf \left\{
            \int \ell(A\lambda)d\nu + \int 0\cdot\lambda d\rho : \lambda \in \Lambda
        \right\},
        \\
        \Vdd &:= \sup\left\{
            -\int \ell^*(p) - \iota_{\{0\}}(A^\top p)
            : p\in L^\infty(\nu)
        \right\},
    \end{align*}
    where $\iota_ {\{0\}}$ is the indicator for the set $\{0\}$,
    \[
        \iota_{\{ 0\} }(\lambda) =
        \begin{cases}
            0 &\textup{when } \lambda=0, \\
       \infty &\textup{otherwise,}
        \end{cases}
    \]
    and is the conjugate to $\int 0 \cdot \lambda$.  In order to show $\Vpp = \Vdd$
    and attainment occurs in the dual,
    an appropriate Fenchel duality rule will be applied
    \citep[Corollary 2.8.5 using condition (vii)]{zalinescu},
    which requires the verification of the following properties.
    \begin{itemize}
        \item First note that $\int\ell$ and $\int\ell^*$ are both convex lower
            semi-continuous, and moreover mutually conjugate
            (cf. \Cref{fact:cL:conjugacy}).
            The function $\lambda \mapsto 0 = \int 0\lambda$
            is immediately convex lower semi-continuous (over $\Lambda$),
            and thus its conjugate
            $\iota_{\{0\}}$ is similarly convex lower semi-continuous,
            and the two are mutually conjugate \citep[Theorem 2.3.3]{zalinescu}.
        \item
            Both $L^1(\nu)$ and $\Lambda = L^1(\rho)$ are Banach and therefore
            Fr\'echet spaces. (The present proof is one of the reasons
            $\Lambda$ was taken to be a Banach space
            and not merely, say, weightings with finite support as
            used by the algorithm).
        \item
            Let $\dom(f) = \{x : f(x) <\infty\}$ denote the effective domain of a convex
            function, meaning those values where it is finite.
            As provided by \Cref{fact:cL:conjugacy}, $\dom(\int \ell) = L^1(\nu)$,
            and thus, since $A: \Lambda \to L^1(\nu)$,
            \[
                A(\dom(\lambda \mapsto 0)) - \dom\left(\int \ell\right)
                = A\Lambda + L^1(\nu)
                = L^1(\nu),
            \]
            which settles the constraint qualification.  (Recall that $A\Lambda$
            is not necessarily a closed subspace (cf. \Cref{fact:H:notclosed});
            thus further problems would occur
            here if this proof were attempted for $\ell = \exp$, as $\dom(\int \ell)$
            would not swallow the closure issues of $A\Lambda$.)
    \end{itemize}
    This completes the conditions necessary for the Fenchel duality result.
    To adjust the proof into the desired form,
    \Cref{fact:cL:conjugacy} provided that $\int\ell^*$ is finite iff its input
    lies within $[0,\beta]$ $\nu$-a.e. (thus other values may safely be discarded from
    the optimization problem, which always has feasible point $0\in L^\infty(\nu)$),
    and secondly
    $\iota_{0}(A^\top p) < \infty$ iff $\|A^\top p\|_\infty = 0$ (recall the
    form of $\|A^\top p\|_\infty$ in \Cref{fact:AT}).
\end{proof}

\section{Line Search Guarantees}
\label{subsec:linesearch}

Before proceeding with the various properties of the line searches, it is a good time
to discuss expressions involving $\nhcL$, upon which these line searches depend.
In the context of the algorithm, the sample size is
finite and $|\supp(\lambda)|<\infty$, thus
\[
    \hcL(A\lambda)
    = \frac 1 m \sum_{i=1}^m
    \left(
        \sum_{h\in\supp(h)} -y_i h(x_i) \lambda(h)
    \right)
\]
always involves only finitely many computations.  In this way, $A$ may be
simply viewed as a matrix with $m$ rows and at most
$\sup_{t \leq \lceil m^a\rceil} |\supp(\lambda_t)| \leq \lceil m^a \rceil$ columns;
furthermore, if $\cH$ is binary, $2^m$ columns suffice and are known a priori (and the
Sauer-Shelah Lemma can further reduce the dimensions).  As such,
when working with gradient computations, this manuscripts adopts the familiar notation
of the form
\begin{align*}
    \nhcL(A\lambda)^\top A\lambda'
    &= \ip{A^\top \nf(A\lambda)}{\lambda'}
    \\
    &= \ip{\nf(A\lambda)}{A\lambda'}
    \\
    &= \frac 1 m \sum_{i=1}^m
    \ell'((A\lambda)_{x_i,y_i}) (A\lambda')_{x_i,y_i}, 
\end{align*}
and moreover the matrix rule $\nabla (\hcL \circ A) (\lambda) = A^\top \nhcL(A\lambda)$
makes sense.

This manuscript never considers gradients of $\cL$
(e.g., in the sense of G\^ateux or Fr\'echet).
However, to connect the above expressions to the development of the spaces
(e.g., $L^1(\mu)$ and $\Lambda$) and linear operators (e.g., $A$ and $A^\top$)
from \Cref{sec:linops}, note firstly that $\partial \cL$ is a subset of
$L^1(\mu)\to \R$ (identified with $L^\infty(\mu)$ via \Cref{fact:banach_duality}),
meaning never a singleton since it contains $\mu$-a.e. equivalent copies of functions.
Modulo these details, $(A^\top g)$, for some $g\in \partial\cL(A\lambda)$, can be
identified with an element of $L^\infty(\rho)$ as in \Cref{fact:AT},
and thus $(A^\top g)(\lambda')$ makes sense (and indeed, by properties of the adjoint
$A^\top$ and the dual space identification from \Cref{fact:banach_duality},
$(A^\top g)(\lambda') = \int (A\lambda') g d\mu$).
Of course, these expressions are nonsense from a computational standpoint.

The remainder of this section gives basic guarantees for various line searches.

\begin{remark}[Wolfe line search]
\label[remark]{rem:wolfe}
The Wolfe line search
chooses any $\alpha_t$ which satisfies the following conditions (where this manuscript
makes the simple choice $c_1 = 1/3$ and $c_2 = 1/2$):
\begin{align}
    \hcL(A(\lambda_{t-1} + \alpha v_t))
    &\quad\leq\quad \hcL(A\lambda_{t-1})
    +\alpha c_1 \ip{A^\top\nhcL(A\lambda_{t-1})}{v_t}
    \notag\\
    &\quad\leq\quad \hcL(A\lambda_{t-1})
    -\frac {\alpha\rho}{3} \|A^\top \nabla \hcL(A\lambda_{t-1})\|_\infty,
    \label{eq:wolfe:1}
    \\
    \nabla \hcL(A(\lambda_{t-1} + \alpha v_t))^\top A v_{t}
    &\quad\geq\quad
    c_2 \ip{A^\top\nhcL(A\lambda_{t-1})}{v_t}
    \notag\\
    &\quad\geq\quad
    -\frac 1 2 \|A^\top \nabla\hcL(A\lambda_{t-1}) \|_\infty.
    \label{eq:wolfe:2}
\end{align}
The method itself may be implemented (in the convex case) similarly to
binary search~\citep[Section D.1]{primal_dual_boosting_arxiv}.
\end{remark}

\begin{lemma}
    \label[lemma]{fact:singlestep:quadub}
    Let $\ell\in\Lbds$ with Lipschitz gradient parameter $B_2$,
    and iteration $t$ be given,
    and suppose $\alpha_t$ is chosen according to one of the first two step choices
    in \Cref{alg:alg:alg}, meaning either $\alpha_t=\bar\alpha_t$
    or $\alpha_t \in \left[
        -\nhcL(A\lambda_{t-1})^\top A v_t/B_2, \bar \alpha_t\right)$.
    Then
    \begin{align*}
        \alpha_t &\geq  \frac {\rho\|A^\top\nhcL(A\lambda_{t-1})\|_\infty}{B_2},
        \\
        \hcL(A\lambda_t) &\leq \hcL(A\lambda_{t-1})
        - \frac {\rho^2\|A^\top\nhcL(A\lambda_{t-1})\|_\infty^2}{2B_2}.
    \end{align*}
\end{lemma}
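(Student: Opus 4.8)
The plan is to pass to the one-dimensional convex function $\phi(\alpha) := \hcL(A(\lambda_{t-1}+\alpha v_t))$ along the chosen descent direction and control it with the quadratic upper bound of \Cref{fact:Lbds:taylor}. Set $g := \nhcL(A\lambda_{t-1})^\top A v_t$ and $G := \|A^\top\nhcL(A\lambda_{t-1})\|_\infty$; the descent-direction rule in \Cref{alg:alg:alg} gives $g \le -\rho G \le 0$, hence $g^2 \ge \rho^2 G^2$. Applying \Cref{fact:Lbds:taylor} with coordinates $x_i = (A\lambda_{t-1})_{x_i,y_i}$ and $y_i = (A(\lambda_{t-1}+\alpha v_t))_{x_i,y_i}$, and using $|(Av_t)_{x_i,y_i}| = |h_t(x_i)| \le 1$ so that $\frac1m\sum_i (Av_t)_{x_i,y_i}^2 \le 1$, yields
\[
    \phi(\alpha) \;\le\; \phi(0) + \alpha g + \frac{B_2}{2}\alpha^2 \;=:\; q(\alpha)
\]
for every $\alpha$. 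The scalar quadratic $q$ is minimized at $\alpha^\star := -g/B_2 \ge \rho G/B_2 \ge 0$, with $q(\alpha^\star) = \phi(0) - g^2/(2B_2) \le \phi(0) - \rho^2 G^2/(2B_2)$. Also, $\phi$ is convex (a restriction of the convex $\hcL\circ A$ to a line) and differentiable.

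The first inequality follows once $\alpha_t \ge \alpha^\star$ is established under either step rule. For the \textbf{second step choice} this is immediate, since $\alpha_t$ is drawn from $[-g/B_2,\bar\alpha_t) = [\alpha^\star,\bar\alpha_t)$. For the \textbf{first step choice}, where $\alpha_t = \bar\alpha_t = \argmin_{\alpha>0}\phi(\alpha) < \infty$, I would note that $\phi'$ is itself $B_2$-Lipschitz: since $\ell'$ is $B_2$-Lipschitz and $|(Av_t)_{x_i,y_i}|\le 1$, the identity $\phi'(\alpha) = \frac1m\sum_i \ell'((A\lambda_{t-1})_{x_i,y_i} + \alpha(Av_t)_{x_i,y_i})(Av_t)_{x_i,y_i}$ gives $|\phi'(\alpha)-\phi'(\alpha')| \le B_2|\alpha-\alpha'|$, hence $\phi'(\alpha) \le \phi'(0) + B_2\alpha = g + B_2\alpha$, which is $<0$ for $\alpha \in [0,\alpha^\star)$ and $\le 0$ at $\alpha^\star$. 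So $\phi$ is strictly decreasing on $[0,\alpha^\star)$ and every minimizer over $(0,\infty)$ is at least $\alpha^\star$; in particular $\bar\alpha_t \ge \alpha^\star$. In both cases $\alpha_t \ge \alpha^\star \ge \rho G/B_2$.

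For the second inequality, under the \textbf{first step choice} simply observe $\phi(\alpha_t) = \min_{\alpha>0}\phi(\alpha) \le \phi(\alpha^\star) \le q(\alpha^\star) \le \phi(0) - \rho^2 G^2/(2B_2)$. Under the \textbf{second step choice}, $\alpha_t \in [\alpha^\star, \bar\alpha_t)$, and since $\phi$ is convex with $\bar\alpha_t$ minimizing it over $(0,\infty)$, $\phi$ is non-increasing on $[0,\bar\alpha_t]$; therefore $\phi(\alpha_t) \le \phi(\alpha^\star) \le q(\alpha^\star) \le \phi(0) - \rho^2 G^2/(2B_2)$. Recalling $\phi(\alpha_t) = \hcL(A\lambda_t)$ and $\phi(0) = \hcL(A\lambda_{t-1})$ gives the claimed drop.

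I expect the only real friction to be degenerate-case bookkeeping: checking that $\bar\alpha_t$ is well defined and that the $\argmin$ over the open half-line $(0,\infty)$ genuinely lies at or beyond $\alpha^\star$ (a short convexity observation, for which one uses $B_2>0$, valid since a nonconstant loss decaying to $0$ at $-\infty$ cannot be affine), and noting the trivial case $G=0$ makes both bounds vacuous. The analytic ingredients are exactly the quadratic descent estimate from \Cref{fact:Lbds:taylor} and the Lipschitz bound on $\phi'$, both routine.
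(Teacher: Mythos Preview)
Your proof is correct and follows essentially the same route as the paper: both establish the quadratic upper bound via \Cref{fact:Lbds:taylor}, identify its minimizer $\alpha^\star=-g/B_2$, and argue $\alpha_t\ge\alpha^\star$ by comparing the slope of $\phi$ to that of the quadratic (the paper phrases this tersely as ``this function has slopes everywhere exceeding $\hcL\circ A$'', which is exactly your Lipschitz bound $\phi'(\alpha)\le g+B_2\alpha$). Your treatment is in fact more explicit than the paper's, particularly in separating the two step choices and justifying $\bar\alpha_t\ge\alpha^\star$ via monotonicity of $\phi$ on $[0,\alpha^\star)$.
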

\begin{proof}
    By \Cref{fact:Lbds:taylor}, for every $\alpha > 0$,
    since $A$ has entries within $[-1,+1]$,
    \begin{align*}
        \hcL(A(\lambda_{t-1} + \alpha v_t))
        &\leq
        \hcL(A\lambda_{t-1}) + \alpha \ip{A^\top\nhcL(A\lambda_{t-1})}{v_t}
        + \frac {B_2}{2m}\sum_i(\alpha A v_t)^2
        \\
        &\leq
        \hcL(A\lambda_{t-1}) + \alpha \ip{A^\top\nhcL(A\lambda_{t-1})}{v_t}
        + \frac {B_2}{2m}\sum_i(\alpha A v_t)_i^2
        \\
        &\leq
        \hcL(A\lambda_{t-1}) + \alpha \ip{A^\top\nhcL(A\lambda_{t-1})}{v_t}
        + \frac {B_2\alpha^2}{2}.
    \end{align*}
    This final expression defines a univariate quadratic
    with minimum $\bar \alpha := - \ip{A^\top\nhcL(A\lambda_{t-1})}{v_t}/B_2$.
    This function has slopes everywhere exceeding $\hcL\circ A$ along
    $[\lambda_{t-1}, \lambda_t]$ (for either choice of step size),
    and so $\bar \alpha \leq \alpha_t \leq \bar\alpha_t$.
    (Indeed, these bounds give a derivation for the second step size choices.)
    To get the second guarantee, note that
    plugging
    $\bar\alpha$ into the above quadratic
    and simplifying via
    \[
        \ip{A^\top\nhcL(A\lambda_{t-1})}{v_t}^2
        \geq
        \rho^2\|A^\top\nhcL(A\lambda_{t-1})\|_\infty^2
    \]
    gives the desired minimum quadratic upper bound.
\end{proof}

\begin{lemma}
    \label[lemma]{fact:singlestep:wolfe}
    Let $\ell\in\Lbds$ with Lipschitz gradient parameter $B_2$, and iteration $t$ be given,
    and suppose $\alpha_t$ satisfies the Wolfe conditions for some $0 < c_1 < c_2 < 1$.  Then
    \begin{align*}
        \alpha_t &
        \geq \frac {\rho(1-c_2)\|A^\top\nhcL(A^\top\lambda_{t-1})\|_\infty}{B_2},
        \\
        \hcL(A\lambda_t) &\leq \hcL(A\lambda_{t-1})
        - \frac {\rho^2c_1(1-c_2)\|A^\top\nhcL(A\lambda_{t-1})\|_\infty^2}{B_2}.
    \end{align*}
\end{lemma}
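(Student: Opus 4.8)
The plan is to run the standard convergence estimate for the Wolfe line search, adapted to the coordinate-descent setup here. For brevity write $g_t := \nhcL(A\lambda_{t-1})^\top A v_t = \ip{A^\top\nhcL(A\lambda_{t-1})}{v_t}$; by the descent-direction selection in \Cref{alg:alg:alg} we have $g_t \le -\rho\|A^\top\nhcL(A\lambda_{t-1})\|_\infty \le 0$, and both asserted inequalities are trivial when $\|A^\top\nhcL(A\lambda_{t-1})\|_\infty = 0$, so I would assume $g_t < 0$. The one structural fact I would lean on is that, since $v_t\in\{\pm\bfe_{h_t}\}$ and every $h\in\cH$ maps into $[-1,+1]$, each coordinate of $Av_t$ lies in $[-1,+1]$; as $\hmu$ is a probability measure this gives $\tfrac1m\sum_i(Av_t)_i^2 \le 1$. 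All gradient pairings are interpreted as the finite coordinate sums of \Cref{subsec:linesearch}, using linearity of $A$.

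To get the step-size lower bound, I would subtract $g_t$ from both sides of the curvature condition \cref{eq:wolfe:2}, obtaining
\[
    \left(\nhcL(A\lambda_t) - \nhcL(A\lambda_{t-1})\right)^\top A v_t
    \;\ge\; (c_2 - 1)g_t \;=\; (1-c_2)(-g_t) \;>\; 0.
\]
The left-hand side equals $\tfrac1m\sum_i\bigl(\ell'((A\lambda_t)_{x_i,y_i}) - \ell'((A\lambda_{t-1})_{x_i,y_i})\bigr)(Av_t)_i$; since $\lambda_t - \lambda_{t-1} = \alpha_t v_t$, so that $(A\lambda_t)_{x_i,y_i} - (A\lambda_{t-1})_{x_i,y_i} = \alpha_t(Av_t)_i$, the definition of $B_2$ for $\ell\in\Lbds$ (equivalently, the $B_2$-Lipschitzness of $\ell'$, which holds since $\ell'$ is nondecreasing) bounds this by $\tfrac{B_2\alpha_t}{m}\sum_i(Av_t)_i^2 \le B_2\alpha_t$. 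Rearranging and using $-g_t \ge \rho\|A^\top\nhcL(A\lambda_{t-1})\|_\infty$ yields $\alpha_t \ge \rho(1-c_2)\|A^\top\nhcL(A\lambda_{t-1})\|_\infty/B_2$, which is the first claim.

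For the objective decrease, I would feed this into the sufficient-decrease condition \cref{eq:wolfe:1}: $\hcL(A\lambda_t) \le \hcL(A\lambda_{t-1}) + \alpha_t c_1 g_t \le \hcL(A\lambda_{t-1}) - \alpha_t c_1\rho\|A^\top\nhcL(A\lambda_{t-1})\|_\infty$, and substituting the just-derived lower bound on $\alpha_t$ gives $\hcL(A\lambda_t) \le \hcL(A\lambda_{t-1}) - \rho^2 c_1(1-c_2)\|A^\top\nhcL(A\lambda_{t-1})\|_\infty^2/B_2$, as claimed. I do not expect a genuine obstacle: the only points requiring care are the bookkeeping that converts the abstract gradient pairings into the finite coordinate sums (handled exactly as in \Cref{subsec:linesearch}) and the observation $|(Av_t)_i|\le 1$, which is what turns the Lipschitz-gradient estimate into the clean $B_2\alpha_t$ bound; everything else is one line of algebra.
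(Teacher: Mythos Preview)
Your proposal is correct and follows essentially the same approach as the paper: subtract the directional derivative from both sides of the curvature condition \cref{eq:wolfe:2}, bound the resulting gradient-difference pairing by $B_2\alpha_t$ using the Lipschitz property of $\ell'$ together with $|(Av_t)_i|\le 1$, and then feed the resulting step-size lower bound into the sufficient-decrease condition \cref{eq:wolfe:1}. Your presentation is in fact slightly tidier in that you invoke Lipschitzness directly rather than writing the difference quotient, thereby sidestepping the cosmetic division-by-zero when $(Av_t)_i=0$.
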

\begin{proof}
    By the definition of $B_2$ and since $A$ has entries in $[-1,+1]$,
    \begin{align*}
        \ip{A^\top\nhcL(A^\top\lambda_t) - A^\top\nhcL(A^\top\lambda_{t-1})}{v_t}
        &= \frac 1 m\sum_i (\ell'((A^\top\lambda_t)_i) - \ell'((A^\top\lambda_{t-1})_i))(Av_t)_i
        \\
        &= \sum_i \frac{(\ell'((A^\top\lambda_t)_i) - \ell'((A^\top\lambda_{t-1})_i))}
            {m(\alpha_t Av_t)_i}\alpha_t(Av_t)_i^2
        \\
        &\leq \alpha_t B_2.
    \end{align*}
    The rest of the proof is just as for standard Wolfe search guarantees
    (cf. \citet[Theorem 3.2]{nocedal_wright} or \citet[Proposition D.6]{primal_dual_boosting_arxiv}),
    and direct from the Wolfe conditions.
    First, subtracting $\ip{A^\top \nhcL(A\lambda_{t-1})}{v_t}$ from both sides
    of \cref{eq:wolfe:2} gives
    \[
        \ip{A^\top\nhcL(A^\top\lambda_t) - A^\top\nhcL(A^\top\lambda_{t-1})}{v_t}
        \geq (c_2 - 1)\ip{A^\top\nhcL(A^\top\lambda_{t-1})}{v_t},
    \]
    which can be combined with the above derivation to yield
    \[
        \alpha_t
        \geq \frac {(c_2 - 1)\ip{A^\top\nhcL(A^\top\lambda_{t-1})}{v_t}}{B_2}
        \geq \frac {\rho(1-c_2)\|A^\top\nhcL(A^\top\lambda_{t-1})\|_\infty}{B_2}
    \]
    Plugging this into \cref{eq:wolfe:1} gives
    \[
        \hcL(A\lambda_t) \leq \hcL(A\lambda_{t-1})
        - \frac {\rho^2c_1(1-c_2)\|A^\top\nhcL(A\lambda_{t-1})\|_\infty^2}{B_2}.
    \]
\end{proof}

\section{Reweighted Margin Deviations (with $p$ Fixed)}



\begin{lemma}
    \label[lemma]{fact:pdc}
    Let probability measure $\mu$ over $\cX\times \{-1,+1\}$ with empirical counterpart
    $\hmu$,
    any hypothesis class $\cH\in \Fvc$,
    reweighting $p \in L^\infty(\mu)$,
    and norm bound $C$ be given.
    Then, with probability at least $1-\delta$,
    $p\in [0,\|p\|_\infty]$ $\hmu$-a.e., and
    \[
        \sup_{\substack{ \lambda \in \Lambda \\ \|\lambda\|_1\leq C}}
        \left|
        \int (A \lambda) p d\hmu -
        \int (A\lambda) p d\mu\right|
        \leq \frac {2C\|p\|_\infty}{m^{1/2}}
        \left(2\sqrt{2\cV(\cH)\ln(m+1)} + \sqrt{2\ln(1/\delta)}\right).
    \]
\end{lemma}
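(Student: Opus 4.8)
The plan is to collapse the supremum over the infinite-dimensional ball $\{\lambda\in\Lambda:\|\lambda\|_1\le C\}$ into a pointwise supremum over $\cH$, and then run a standard symmetrization-plus-bounded-differences argument. For the first step, recall from \Cref{fact:AH_basic} that $\supp(\lambda)$ is countable and $A\lambda\in L^1(\nu)$, so the same dominated convergence step used in the proof of \Cref{fact:AT} gives, for any probability measure $\nu$ over $\cX\times\{-1,+1\}$,
\[
    \int (A\lambda)p\,d\nu=\sum_{h\in\supp(\lambda)}\lambda(h)\int\bigl(-yh(x)p(x,y)\bigr)\,d\nu(x,y).
\]
Writing $g_h(x,y):=-yh(x)p(x,y)$, subtracting the $\hmu$- and $\mu$-versions of this identity, and using the Hölder bound $|\sum_h\lambda(h)c_h|\le\|\lambda\|_1\sup_h|c_h|$,
\[
    \sup_{\|\lambda\|_1\le C}\Bigl|\int(A\lambda)p\,d\hmu-\int(A\lambda)p\,d\mu\Bigr|\le C\sup_{h\in\cH}\bigl|\hmu(g_h)-\mu(g_h)\bigr|=:C\cdot Z,
\]
so it suffices to bound $Z$ by $\frac{\|p\|_\infty}{\sqrt m}\bigl(2\sqrt{2\cV(\cH)\ln(m+1)}+\sqrt{2\ln(1/\delta)}\bigr)$, which is in fact a factor of two better than what is claimed.

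Next I would dispose of the pointwise-boundedness issue, which also yields the first assertion of the lemma. Since $p$ is a reweighting, $p\ge 0$ $\mu$-a.e., and $\|p\|_\infty$ is the $\mu$-essential supremum of $p$, so $N:=\{p<0\}\cup\{p>\|p\|_\infty\}$ has $\mu(N)=0$; as the sample is i.i.d.\ from $\mu$, this forces $\Pr[\exists i:(x_i,y_i)\in N]=0$, i.e.\ with probability one $p\in[0,\|p\|_\infty]$ $\hmu$-a.e. On this probability-one event one may replace $p$ by its truncation $\tilde p:=\max\{0,\min\{p,\|p\|_\infty\}\}$ without changing either $\hmu(g_h)$ or $\mu(g_h)$, so henceforth each $g_h$ may be assumed to take values in $[-\|p\|_\infty,\|p\|_\infty]$ everywhere.

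Then I would bound $\bbE Z$ by symmetrization and Massart's finite-class lemma, and convert to a high-probability statement with McDiarmid's inequality. Standard symmetrization gives $\bbE Z\le 2\,\bbE_{S,\sigma}\sup_h\bigl|\frac1m\sum_i\sigma_i g_h(x_i,y_i)\bigr|$; conditionally on the sample $S$, the vector $(g_h(x_i,y_i))_{i\le m}$ depends on $h$ only through $(h(x_i))_{i\le m}$, hence takes at most $|\cH_{|S}|\le(m+1)^{\cV(\cH)}$ distinct values (Sauer--Shelah, using $\cH\in\Fvc$), each of Euclidean norm at most $\|p\|_\infty\sqrt m$, so Massart's lemma yields $\bbE_\sigma\sup_h|\frac1m\sum_i\sigma_i g_h(x_i,y_i)|\le\|p\|_\infty\sqrt{2\cV(\cH)\ln(m+1)/m}$ and thus $\bbE Z\le 2\|p\|_\infty\sqrt{2\cV(\cH)\ln(m+1)/m}$. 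Finally, changing a single sample point changes each $\hmu(g_h)$, and therefore $Z$, by at most $2\|p\|_\infty/m$, so McDiarmid gives $Z\le\bbE Z+\|p\|_\infty\sqrt{2\ln(1/\delta)/m}$ with probability at least $1-\delta$; intersecting with the probability-one event and multiplying by $C$ gives the claimed inequality.

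The only step calling for genuine care---everything else being routine---is the treatment of $p$: it is controlled only $\mu$-a.e., so the truncation $\tilde p$ together with the ``sample avoids the $\mu$-null set $N$'' argument is precisely what legitimizes the uniform boundedness of $g_h$ used in both Massart's lemma and McDiarmid's inequality, while simultaneously producing the $p\in[0,\|p\|_\infty]$ $\hmu$-a.e.\ conclusion. The reduction to a supremum over $\cH$ is clean once the linear-functional structure of $\lambda\mapsto\int(A\lambda)p\,d\nu$ is recognized, and since the constants in the statement are loose no sharpness is lost at any point.
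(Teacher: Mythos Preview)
Your proof is correct and follows the same overall arc as the paper's---truncate $p$ on a $\mu$-null set to obtain pointwise boundedness, symmetrize, use Sauer--Shelah/VC, and finish with bounded differences---but the organization differs in one useful respect. The paper keeps the full convex-hull class $\{(x,y)\mapsto (A\lambda)_{x,y}\,p'(x,y):\|\lambda\|_1\le C\}$ throughout and appeals to a per-coordinate Lipschitz contraction lemma (flagged there as a ``slightly refined'' result) together with the convex-hull Rademacher identity to peel off the factors $\|p\|_\infty$ and $C$; you instead exploit the linearity of $\lambda\mapsto\int(A\lambda)p\,d\nu$ up front via H\"older to reduce immediately to $\sup_{h\in\cH}|\hmu(g_h)-\mu(g_h)|$, after which symmetrization, Sauer--Shelah, and Massart apply directly to the base class $\cH$. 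Your route sidesteps the contraction lemma entirely and is a bit more elementary, at the cost of committing earlier to the specific linear structure of the loss; the constants come out the same (indeed, as you note, with a spare factor of two).
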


\begin{proofof}{\Cref{fact:pdc}}
    First, define a simplified reweighting $p'(x,y) := p(x,y) \1[|p(x,y)| \leq \|p\|_\infty]$;
    by the definition of $\|\cdot\|_\infty$, then $p' = p$ $\mu$-a.e., and thus, with
    probability 1, any finite sample of any size has $p'$ and $p$ agreeing.  The proof
    will work with $p'$, which satisfies $\sup_{x,y} |p'(x,y)| \leq \|p\|_\infty$,
    and then close by discarding a measure zero set and thus relating to $p$.

    The main part of the proof is an almost standard application
    of Rademacher complexity techniques
    for voted classifiers \citep[Theorem 4.1 and its proof, which controls for a
        surrogate
    loss and not just the classification loss]{bbl_esaim}; the only
    modification will be to work with a loss function which is sensitive
    to each example in the sample $S = \{(x_i,y_i)\}_{i=1}^m$,
    which will require a slightly refined Lipschitz contraction principle
    for Rademacher complexities \citep[Section 22.2, Lemma 15]{shai_course}.

    Specifically, define the loss
    \[
        \phi((A\lambda)_{x,y}) :=
        \begin{cases}
            -C p'(x,y) &\textup{when $(A\lambda)_{x,y} \leq -C$},
            \\
            (A\lambda)_{x,y} p'(x,y)
            &\textup{when $|(A\lambda)_{x,y}| \leq C$},
            \\
            +C p'(x,y) &\textup{when $(A\lambda)_{x,y} \geq C$}.
        \end{cases}
    \]
    Since $\sup_{h,x} |h(x)| \leq 1$ and $\|\lambda\|_1 \leq C$, it follows that
    $|(A\lambda)_{x,y}| \leq C$, and thus the extremal cases are never encountered,
    meaning
    \[
        \phi((A\lambda)_{x,y})
        = (A\lambda)_{x,y} p'(x,y),
    \]
    and by construction $\phi$ is Lipschitz with parameter $\|p\|_\infty$ (as a function of
    $(A\lambda)$) and $\phi \circ (A\lambda)$ has uniform bound $C\|p\|_\infty$.

    As such, letting $R$ denote Rademacher complexity,
    by the Lipschitz contraction principle for per-coordinate losses
    \citep[Section 22.2, Lemma 15]{shai_course},
    behavior of Rademacher complexity on convex hulls
    \citep[Theorem 3.3]{bbl_esaim},
    and relationship between Rademacher complexity and VC dimension
    \citep[See the display after eq. (7)]{bbl_esaim},
    \[
        R(\phi \circ (A\lambda))
        \leq \|p\|_\infty R((A\lambda))
        \leq \|p\|_\infty C R(\cH)
        \leq \|p\|_\infty C \sqrt{\frac {2\cV(\cH)\ln(m+1)}{m}}.
    \]
    This handling of a per-coordinate Lipschitz loss may be inserted into a
    standard deviation bound for uniformly bounded Lipschitz losses
    \citep[Theorem 4.1 and its proof]{bbl_esaim} --- albeit with an extra factor two to
    control deviations in both directions --- and it follows,
    with probability at least $1-\delta$, that
    \[
        \sup_{\substack{ \lambda \in \Lambda \\ \|\lambda\|_1\leq C}}
        \left|
        \int (A \lambda) p' d\hmu -
        \int (A\lambda) p' d\mu\right|
        \leq \frac {2C\|p\|_\infty}{m^{1/2}}
        \left(2\sqrt{2\cV(\cH)\ln(m+1)} + \sqrt{2\ln(1/\delta)}\right).
    \]
    To complete the proof, recall that $p' = p$ $\mu$-a.e., and a measure zero event
    was discarded, whereby $p' = p$ $\hmu$-a.e. as well.
\end{proofof}



\section{Deferred Material from \Cref{sec:separable}}

\subsection{Deviations of $\gamma_\epsilon(\nu)$}

This subsection establishes the following one-sided deviation bound on
$\gamma_\epsilon(\nu)$.

\begin{lemma}
    \label[lemma]{fact:gamma_epsilon:deviations}
    Let any $\cH$,
    any $\epsilon \in(0,1]$,
    any confidence parameter $\delta \in (0,1]$,
    and any probability measure $\mu$
    with empirical counterpart $\hmu$ be given.
    Then with probability at least $1-\delta$,
    \[
        \gamma_\epsilon(\hmu) \geq
        \gamma_{\epsilon}(\mu)
        - \frac 1 {\epsilon} \sqrt{
            \frac{1}{2m} \ln \left (\frac 2 \delta\right)
        }.
    \]
\end{lemma}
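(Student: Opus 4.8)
The plan is to route everything through the min--max (dual) representation of $\gamma_\epsilon$ from \Cref{fact:gamma_epsilon:duality:body}, freeze a near-optimal \emph{direction} $\lambda^\star$ for the population problem (not a near-optimal density, for reasons explained below), and then recognize the remaining inner minimization over $\cD_\epsilon$ as a conditional-value-at-risk / superquantile functional of the bounded random variable $(A\lambda^\star)_{x,y}$. Concretely, \Cref{fact:gamma_epsilon:duality:body} gives, for every probability measure $\nu$,
\[
    \gamma_\epsilon(\nu)
    =
    \sup\Bigl\{\ \min_{p\in\cD_\epsilon(\nu)}\int (A\lambda)\,p\,d\nu \ :\ \lambda\in\Lambda,\ \|\lambda\|_1\le 1\ \Bigr\}.
\]
So, fixing $\eta>0$ and choosing $\lambda^\star$ with $\|\lambda^\star\|_1\le1$ whose value for $\nu=\mu$ exceeds $\gamma_\epsilon(\mu)-\eta$, and writing $g:=A\lambda^\star$ (a function valued pointwise in $[-1,1]$, since $\sup_{x,h}|h(x)|\le1$), I would immediately get $\gamma_\epsilon(\hmu)\ge \min_{p\in\cD_\epsilon(\hmu)}\int g\,p\,d\hmu$, so it suffices to lower bound this empirical superquantile.

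Next I would use the Rockafellar--Uryasev variational identity: for any probability measure $\nu$, $\min_{p\in\cD_\epsilon(\nu)}\int g\,p\,d\nu=\sup_{s\in\R}\bigl(s-\tfrac1\epsilon\int(s-g)_+\,d\nu\bigr)$, the minimizing $p$ placing mass $1/\epsilon$ on the lower $\epsilon$-tail of $g$ and the maximizing $s$ being an $\epsilon$-quantile of $g$. The ``$\ge$'' direction, which is all that is needed on the empirical side, is elementary: for every $s$ and every feasible $p$, $\int g\,p\,d\nu=s+\int(g-s)p\,d\nu\ge s-\tfrac1\epsilon\int(s-g)_+\,d\nu$ because $0\le p\le1/\epsilon$ and $\int p\,d\nu=1$. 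Let $s^\star$ attain the supremum for $\nu=\mu$, so $\gamma_\epsilon(\mu)-\eta\le s^\star-\tfrac1\epsilon\int(s^\star-g)_+\,d\mu$ (and $s^\star\in[-1,1]$, in fact $s^\star\ge\gamma_\epsilon(\mu)-\eta\ge -\eta$, since the mean of the lower $\epsilon$-tail cannot exceed its top). Plugging the \emph{feasible} choice $s=s^\star$ into the empirical formula gives $\gamma_\epsilon(\hmu)\ge s^\star-\tfrac1\epsilon\int(s^\star-g)_+\,d\hmu$, and subtracting,
\[
    \gamma_\epsilon(\mu)-\gamma_\epsilon(\hmu)\ \le\ \eta+\tfrac1\epsilon\Bigl(\int(s^\star-g)_+\,d\hmu-\int(s^\star-g)_+\,d\mu\Bigr).
\]
The bracketed term is the deviation of the empirical mean of a single \emph{fixed} bounded function $w:=(s^\star-g)_+\ge0$, so a one-sided Hoeffding bound for i.i.d.\ bounded variables gives, with probability at least $1-\delta$, a deviation at most $\|w\|_{\mathrm{range}}\sqrt{\tfrac1{2m}\ln(2/\delta)}$; controlling the effective range of $w$ by one and letting $\eta\downarrow0$ then yields the claimed inequality.

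The main obstacle is conceptual rather than computational: $\cD_\epsilon(\nu)$ is \emph{measure-dependent} (densities are normalized against $\nu$, and the $L^\infty$ cap is $1/\epsilon$ against $\nu$ as well), so a population-optimal density $\bar p$ cannot simply be inserted into the empirical problem --- after restriction to the sample it need not integrate to $1$ against $\hmu$, and renormalizing can violate the cap. Freezing the direction $\lambda^\star$ and the scalar $s^\star$, rather than a density, is exactly what avoids this, and it also reduces the last step to a textbook one-function Hoeffding estimate (note the target bound carries \emph{no} complexity term, confirming that a single-function argument, not uniform convergence, is intended). The only remaining delicate bookkeeping is pinning the effective range of $w=(s^\star-g)_+$ to length one: the crude bound $g\in[-1,1]$, $s^\star\in[-1,1]$ only gives $w\in[0,s^\star+1]\subseteq[0,2]$, and tightening this requires exploiting that only the lower $\epsilon$-tail of $g$ participates in both superquantiles (so $g$ may be clamped from above at $s^\star$ without changing either side), together with the placement of $s^\star$; this is the one spot where a careless argument loses a factor.
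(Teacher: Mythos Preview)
Your strategy coincides with the paper's at the structural level: both invoke the minimax representation, freeze a near-optimal direction $\lambda^\star$ for the population problem, and then reduce to a single-function concentration bound for the lower-$\epsilon$-tail functional of $g=A\lambda^\star$. Your Rockafellar--Uryasev formulation is a clean repackaging of what the paper derives by hand in \Cref{fact:sep:dualopt_univariate,fact:sep:dualopt_multivariate} (the optimal $s^\star$ is exactly the paper's threshold $c_\epsilon$, and your $w=(s^\star-g)_+$ is the integrand obtained by integrating out the optimal density).

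The gap is precisely where you flag it, and your proposed fix does not close it. Clamping $g$ from above at $s^\star$ is vacuous for this purpose: $(s^\star-g)_+=(s^\star-g\wedge s^\star)_+$ identically, so $w$ is unchanged and its range is still $[0,s^\star+1]$. Worse, near-optimality of $\lambda^\star$ forces $\text{CVaR}_\epsilon(g;\mu)\approx\gamma_\epsilon(\mu)\geq0$, hence $s^\star\geq\text{CVaR}\geq0$, so the range sits genuinely in $[1,2]$ rather than collapsing to $1$. Hoeffding on $w$ therefore delivers only $(s^\star+1)/\epsilon\cdot\sqrt{\tfrac{1}{2m}\ln(2/\delta)}$, which can be $2/\epsilon\cdot\sqrt{\cdots}$ in the worst case. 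No manipulation of $g$ alone will repair this, because $w$ itself already has range up to $2$.

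The paper does \emph{not} apply Hoeffding to $w$. Its concentration step is on a different random variable: it builds the explicit optimal densities $p_\epsilon^\mu$ and $p_\epsilon^{\hmu}$ (each of the form $\tfrac1\epsilon\1[g<c]$ plus an adjustment at the threshold $c$), applies Hoeffding to the fixed $[0,1/\epsilon]$-valued function $p_\epsilon^\mu$ to control $\bigl|\|p_\epsilon^\mu\|_{L^1(\hmu)}-1\bigr|$, and then exploits the \emph{monotone nesting} of the two threshold densities to convert that into $\int|p_\epsilon^\mu-p_\epsilon^{\hmu}|\,d\hmu=\bigl|\|p_\epsilon^\mu\|_{L^1(\hmu)}-1\bigr|$. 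Combined with $|g|\le1$, this bounds $\bigl|\int g\,p_\epsilon^{\hmu}\,d\hmu-\int g\,p_\epsilon^\mu\,d\hmu\bigr|$ by the $1/\epsilon$ deviation. So the paper's route to the stated constant is Hoeffding on a range-$1/\epsilon$ indicator-like function together with a structural comparison of two threshold densities; your route, Hoeffding on the range-$(s^\star{+}1)$ integrand, cannot reproduce that constant.
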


The difficulty in the analysis is that the definition of $\gamma_\epsilon(\nu)$
involves an
infimum over $p\in\cD_{\epsilon}(\nu)$
and a supremum over $\lambda\in \Lambda$ with $\|\lambda\|_1\leq 1$.
The proof strategy employed here is to consider a single good choice
for $\lambda$, and to consider the effect on deviations as $p$ varies.
These deviations do not appear to be amenable to the usual approach, as
$\cD_\epsilon(\nu)$ is massive: it is in general not
compact in the relevant metric topologies (cf. \Cref{fact:cDeps_basic}),
and does not obviously possess other structure granting a uniform convergence result.
The approach here is to instead identify that the dual optimum has very simple structure,
and moreover this structure is robust to sampling.

Considering again the definition of $\gamma_\epsilon(\nu)$, while it is true that
$p\in\cD_{\epsilon}(\nu)$ is defined over a potentially massive space, when placed in the
expression $\int (A\lambda)p d\nu$, all that matters is the behavior of $p$ for each
value of $A\lambda$, which ranges over $[-1, +1]$.  That is to
say, $p$ is really reweighting the univariate margin distribution of $A\lambda$,
and the best it can do is emphasize bad margins.  In particular, the following
\namecref{fact:sep:dualopt_univariate}
proves basic properties of an idealized univariate distillation of this scenario.

\begin{lemma}
    \label[lemma]{fact:sep:dualopt_univariate}
    Let a probability measure $\xi$ supported on $[-1,+1]$ and some $\epsilon\in(0,1]$ be given.
    Correspondingly define
    \begin{align*}
        S_\epsilon &:= \{ c\in [-1,+1] : \xi((-\infty, c)) \leq \epsilon\},
        \\
        c_\epsilon &:= \sup S_\epsilon,
        \\
        I_\epsilon &:= (-\infty,c_\epsilon),
        \\
        p_\epsilon(r) &:= \frac 1 \epsilon \left ( \1[r \in I_\epsilon]
        + \frac { (\epsilon - \xi(I_\epsilon) )}{\xi(\{c_\epsilon\})}
    \1[r = c_\epsilon] \right),
    \end{align*}
    with the convention $0/0 = 0$ in the definition of $p_\epsilon$.
    These objects have the following properties.
    \begin{enumerate}
        \item $S_\epsilon$ is the closed interval $[-1,c_\epsilon]$.
        \item $\xi(I_\epsilon) \leq \epsilon$, and $\xi(I_\epsilon \cup \{c_\epsilon\}) \geq \epsilon$.
        \item $\|p_\epsilon\|_1 = 1$ and $\|p_\epsilon\|_\infty \leq 1/\epsilon$.
        \item The optimization problem
            \[
                \inf \left\{ \int r p(r) d\xi(r) : p\in L^\infty(\xi), \|p\|_1 = 1, p \in [0,1/\epsilon]\ \xi\textup{-a.e.}\right\}
            \]
            is minimized at $p_\epsilon$.
    \end{enumerate}
\end{lemma}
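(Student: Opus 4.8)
The plan is to dispatch the four claims in order: the first three are routine facts about the left-continuous ``CDF'' $G(c):=\xi((-\infty,c))$, and the fourth is a one-line bathtub argument once the structure of $p_\epsilon$ is in hand. Note $G$ is nondecreasing and left-continuous, and $G(-1)=0\le\epsilon$ since $\xi$ is supported on $[-1,+1]$, so $S_\epsilon$ is a nonempty downward-closed subset of $[-1,+1]$, hence an interval with right endpoint $c_\epsilon$. To get closedness I would show $c_\epsilon\in S_\epsilon$: for every $c<c_\epsilon$ there is $c'\in S_\epsilon$ with $c<c'$, so $G(c)\le G(c')\le\epsilon$; letting $c\uparrow c_\epsilon$ and using left-continuity gives $G(c_\epsilon)=\xi(I_\epsilon)\le\epsilon$. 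This simultaneously proves Part~1 and the first inequality of Part~2. For the second inequality of Part~2, use continuity from above: $I_\epsilon\cup\{c_\epsilon\}=(-\infty,c_\epsilon]=\bigcap_{c>c_\epsilon}(-\infty,c)$, and every $c>c_\epsilon$ either fails $c\in S_\epsilon$ (so $G(c)>\epsilon$) or exceeds $1$ (so $G(c)=1\ge\epsilon$); letting $c\downarrow c_\epsilon$ yields $\xi(I_\epsilon\cup\{c_\epsilon\})\ge\epsilon$.

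For Part~3, since $p_\epsilon\ge0$ one computes $\|p_\epsilon\|_1=\int p_\epsilon\,d\xi$ directly: when $\xi(\{c_\epsilon\})>0$ the atomic term integrates to $\epsilon-\xi(I_\epsilon)$ and the total telescopes to $1$; when $\xi(\{c_\epsilon\})=0$, Part~2 squeezes $\xi(I_\epsilon)=\epsilon$, the convention $0/0=0$ kills the atomic term, and again the integral is $1$. The bound $\|p_\epsilon\|_\infty\le 1/\epsilon$ is immediate off $\{c_\epsilon\}$, and at $c_\epsilon$ it reduces to $\epsilon-\xi(I_\epsilon)\le\xi(\{c_\epsilon\})$, i.e.\ to $\xi(I_\epsilon\cup\{c_\epsilon\})\ge\epsilon$, which is Part~2 (and trivial when the atom vanishes). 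In particular $p_\epsilon$ is feasible for the optimization problem in Part~4.

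For Part~4, let $p$ be any feasible competitor and set $q:=p-p_\epsilon$; nonnegativity and unit $L^1$-norm of both densities give $\int q\,d\xi=0$, and $|rq|\le 2/\epsilon$ $\xi$-a.e.\ so all integrals below are finite. Subtracting $c_\epsilon\int q\,d\xi=0$,
\[
    \int r\,p\,d\xi-\int r\,p_\epsilon\,d\xi=\int (r-c_\epsilon)\,q(r)\,d\xi(r),
\]
and the integrand is $\ge 0$ pointwise $\xi$-a.e.: on $\{r<c_\epsilon\}$ one has $p_\epsilon=1/\epsilon\ge p$ so $q\le 0$ while $r-c_\epsilon<0$; at $r=c_\epsilon$ the factor $r-c_\epsilon$ vanishes; on $\{r>c_\epsilon\}$ one has $p_\epsilon=0\le p$ so $q\ge 0$ while $r-c_\epsilon>0$. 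Hence $\int r\,p\,d\xi\ge\int r\,p_\epsilon\,d\xi$, so $p_\epsilon$ attains the infimum.

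None of this is deep; the only real care needed anywhere is the bookkeeping around the atom $c_\epsilon$ and the $0/0$ convention — handled by the squeeze $\xi(I_\epsilon)=\epsilon$ when $\xi(\{c_\epsilon\})=0$ — and spotting the cancellation $c_\epsilon\int q=0$, which turns Part~4 into a pointwise sign check rather than a genuine rearrangement-inequality argument.
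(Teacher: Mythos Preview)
Your proof is correct. Parts~1--3 proceed essentially as in the paper: you package the continuity-of-measure arguments via the left-continuous function $G(c)=\xi((-\infty,c))$, but the substance is identical. Your handling of the degenerate case $\xi(\{c_\epsilon\})=0$ (squeezing $\xi(I_\epsilon)=\epsilon$ from Part~2) is more explicit than the paper's, which simply relies on the $0/0$ convention without spelling out why the $\|\cdot\|_1$ computation still goes through.

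For Part~4 the approaches genuinely diverge. The paper argues by rearrangement: any feasible $p$ with $\|p-p_\epsilon\|_1>0$ must, since $p_\epsilon$ saturates the constraint $1/\epsilon$ on $I_\epsilon$, shift mass from below $c_\epsilon$ to $[c_\epsilon,\infty)$, strictly increasing the objective. Your approach instead subtracts the vanishing quantity $c_\epsilon\int q\,d\xi$ to rewrite the gap as $\int (r-c_\epsilon)\,q(r)\,d\xi(r)$ and then checks signs pointwise. Your version is tighter: it treats the atom $c_\epsilon$ uniformly with the rest (the factor $r-c_\epsilon$ simply vanishes there), whereas the paper's sketch asserts ``$p<p_\epsilon$ on a positive-measure subset of $I_\epsilon$,'' which is not literally forced---mass could instead move between $\{c_\epsilon\}$ and $(c_\epsilon,\infty)$ when $p_\epsilon(c_\epsilon)<1/\epsilon$. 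That case is still fine for the conclusion, but your argument handles it without a separate check. The paper's approach does buy the strict inequality $\int r\,p_\epsilon\,d\xi<\int r\,p\,d\xi$ whenever $p\ne p_\epsilon$ in $L^1$, which your sign check does not immediately yield; however, the lemma only claims $p_\epsilon$ is a minimizer, so this extra strength is not needed.
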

\begin{proof}
    First note that $-1 \in S_\epsilon$, since $\xi$ is supported on $[-1,+1]$
    and thus $\xi((-\infty,-1)) = 0$.

    Next, $S_\epsilon$ is an interval, since if $-1 \leq c_1 \leq c_2$ and $c_2 \in S_\epsilon$,
    then $\xi((-\infty,c_1)) \leq \xi((-\infty,c_2))$ and thus $c_1 \in S_\epsilon$.

    To show that $S_\epsilon$ is indeed a closed interval, consider any increasing sequence
    $\{c_i\}_{i=1}^\infty$ with $c_i \in S_\epsilon$, thus $c_i \to c$ for some $c\in[-1,+1]$
    since $[-1,+1]$ is compact and the sequence is increasing.  Then
    \[
        (-\infty,c) = \cup_{i=1}^\infty (-\infty,c_i),
    \]
    and thus, by continuity of measures \citet[Theorem 1.8]{folland},
    \[
        \xi((-\infty,c))
        = \xi(\cup_{i=1}^{\infty} (-\infty,c_i))
        = \lim_{i\to\infty} \xi((-\infty,c_i))
        \leq \limsup_{i\to\infty} \xi((-\infty,c_i))
        \leq \epsilon,
    \]
    meaning $c\in S_\epsilon$ and $S_\epsilon$ is closed.

    Since $S_\epsilon$ is a closed interval, then $c_\epsilon = \sup S_\epsilon \in S_\epsilon$,
    and it follows by the preceding properties that $S_\epsilon = [-1,c_\epsilon]$.

    By definition, for every $c\in S_\epsilon$, it holds that $\xi((-\infty,c))\leq \epsilon$,
    thus $c_\epsilon\in S_\epsilon$ implies that $\xi(I_\epsilon) \leq \epsilon$.

    Next, for every positive integer $i\in \Z_{++}$, it holds by definition of $c_\epsilon$
    that $c_\epsilon + 1/i\not\in S_\epsilon$, and thus, again by continuity of measures
    \citet[Theorem 1.8]{folland},
    \[
        \xi([-1,c_\epsilon])
        = \xi\left(\cap_{i=1}^\infty [-1,c_\epsilon + 1/i]\right)
        \geq \liminf_{i\to\infty} \xi\left([-1,c_\epsilon + 1/i]\right)
        \geq \epsilon.
    \]

    For the norms of $p_\epsilon$
    (which is a simple function over the Borel $\sigma$-algebra),
    notice that
    \[
        \|p_\epsilon\|_1
        = \frac 1 \epsilon \left(
        \xi(I_\epsilon) + \frac { (\epsilon - \xi(I_\epsilon) )}{\xi(\{c_\epsilon\})} \xi(\{c_\epsilon\})\right)
        = 1.
    \]
    Moreover, $p_\epsilon = 1/\epsilon$ on $(-\infty,c_\epsilon)$, and $p_\epsilon = 0$ on $(c_\epsilon,\infty)$;
    to show $\|p\|_\infty \leq 1$, the behavior of $p_\epsilon$ on $c_\epsilon$ is all that needs to be
    checked.
    Since $\xi((-\infty,c_\epsilon]) \geq \epsilon$, then
    \[
        \epsilon - \xi(I_\epsilon)
        = \epsilon - \xi((-\infty,c_\epsilon]) + \xi(\{c_\epsilon\})
        \leq \xi(\{c_\epsilon\}),
    \]
    so $p_\epsilon(c_\epsilon) \leq 1/\epsilon$.
    Additionally $\xi(I_\epsilon) \leq \epsilon$ implies $p_\epsilon(c_\epsilon) \geq 0$,
    and thus $\|p\|_\infty \leq 1/\epsilon$ as desired.

    Lastly, for the minimization problem, consider any feasible $p$ (meaning
    $\|p\|_1 = 1$ and $\|p\|_\infty \leq 1/\epsilon$) with $\|p_\epsilon - p\|_1 > 0$.
    But since $p_\epsilon$ is as large as possible along $I_\epsilon$, it follows
    that $p < p_\epsilon$ for a positive measure subset of $I_\epsilon$, and $p > p_\epsilon$
    for a positive measure subset of $[c_\epsilon,\infty)$.  Consequently
    $\int r p_\epsilon(r) d\xi(r) < \int r p'(r) d\xi(r)$.  Since $p'$ was arbitrary,
    it follows that $p_\epsilon$ is a minimal choice.
\end{proof}

The task now is to map the optimization over $\cD_\epsilon(\nu)$ down to this idealized
univariate search problem.   Temporarily adopting notation from probability theory,
a first step in this direction would be to write
\[
    \int (A\lambda) pd\nu
    = \bbE_\nu((A\lambda)p)
    = \bbE_\nu( \bbE((A\lambda) p | (A\lambda) = r) ),
\]
where the latter notation signifies a conditional expectation with respect the
$\sigma$-algebra generated by events such that $(A\lambda)$ falls in some Borel subset
of $\R$ (recall that all $\sigma$-algebras here are Borel).
In some circumstances, the function $\bbE((A\lambda) p | (A\lambda) = r)$ can be
converted into integration over a function that takes $r$ as input, which would directly
allow conversion to the above univariate idealization; these techniques
generally require assumptions on $\cX\times \{-1,+1\}$ which would rather be
avoided here
\citep[Section 5.1.3, regular conditional probabilities]{durrett_prob}.
As such, the following result exhibits the desired correspondence manually,
albeit keeping the above
idea in mind.

\begin{lemma}
    \label[lemma]{fact:sep:dualopt_multivariate}
    Let any $\epsilon \in (0,1]$,
    any probability measure $\nu$ over $\cX\times \{-1,+1\}$,
    any $\cH$,
    and any $\lambda \in \Lambda$ with $\|\lambda\|_1 \leq 1$ be given.
    Define a probability measure $\xi$ over $\R$ as the pushforward of $\nu$ through
    $A\lambda$, meaning,
    for any Borel subset $S$ of $\R$,
    \[
        \xi(S) := \nu((A\lambda)^{-1}(S)).
    \]
    Then $\xi$ is supported on $[-1,+1]$, and moreover the function
    $p_{\epsilon}^\nu(x,y) := p_{\epsilon}((A\lambda)_{x,y})$,
    where $p_{\epsilon}$ is
    as defined in \Cref{fact:sep:dualopt_univariate}, is
    a (feasible) minimizer to the optimization problem
    \[
        \inf \left\{
            \int (A\lambda) p d\nu
            :
            p \in \cD_\epsilon(\nu)
        \right\}.
    \]
\end{lemma}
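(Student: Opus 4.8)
The plan is to handle three things in turn: the support claim for $\xi$, feasibility of $p_\epsilon^\nu$, and optimality, the last being the substantive step. First I would establish that $A\lambda$ is $[-1,+1]$-valued: by \Cref{fact:AH_basic} the sum $(H\lambda)_x = \sum_{h\in\supp(\lambda)} h(x)\lambda(h)$ is absolutely convergent, and since $\sup_{x,h}|h(x)|\le 1$ and $\|\lambda\|_1\le 1$, the triangle inequality gives $|(A\lambda)_{x,y}| = |(H\lambda)_x| \le \|\lambda\|_1 \le 1$ for every $(x,y)$. Hence the pushforward $\xi$ is supported on $[-1,+1]$, and in particular \Cref{fact:sep:dualopt_univariate} applies to it.

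Next, for feasibility: \Cref{fact:sep:dualopt_univariate} already gives (pointwise on $\R$) that $p_\epsilon\in[0,1/\epsilon]$ and $\|p_\epsilon\|_{L^1(\xi)}=1$. Since $p_\epsilon^\nu = p_\epsilon\circ(A\lambda)$ is a composition of measurable maps it is measurable, and it inherits the pointwise bounds, so $p_\epsilon^\nu\ge 0$ $\nu$-a.e.\ and $\|p_\epsilon^\nu\|_\infty\le 1/\epsilon$; the change-of-variables identity for pushforward measures turns $\|p_\epsilon^\nu\|_1 = \int p_\epsilon\big((A\lambda)_{x,y}\big)\,d\nu$ into $\int p_\epsilon\,d\xi = 1$. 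Thus $p_\epsilon^\nu\in\cD_\epsilon(\nu)$, and since $p_\epsilon^\nu\in L^\infty(\nu)\cap L^1(\nu)$ and $A\lambda$ is bounded, the objective $\int (A\lambda)p_\epsilon^\nu\,d\nu$ is finite.

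The main step is optimality. Writing $c_\epsilon$ for the threshold from \Cref{fact:sep:dualopt_univariate}, I would partition $\cX\times\{-1,+1\}$ (up to a $\nu$-null set) into $B_- := [(A\lambda)<c_\epsilon]$, $B_0:=[(A\lambda)=c_\epsilon]$, and $B_+:=[(A\lambda)>c_\epsilon]$. On $B_-$ we have $p_\epsilon^\nu = 1/\epsilon$, so for any $p\in\cD_\epsilon(\nu)$ the function $t:=p-p_\epsilon^\nu$ is $\le 0$ $\nu$-a.e.\ on $B_-$ (as $p\le 1/\epsilon$ $\nu$-a.e.); on $B_+$ we have $p_\epsilon^\nu=0$ and $p\ge 0$ $\nu$-a.e., so $t\ge 0$ $\nu$-a.e.\ there; on $B_0$ the two trivially agree in the relevant sense. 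In each region the sign of $t$ weakly matches the sign of $(A\lambda)-c_\epsilon$, hence $((A\lambda)-c_\epsilon)\,t\ge 0$ $\nu$-a.e., i.e.\ $(A\lambda)t \ge c_\epsilon t$ $\nu$-a.e. Integrating and using that $p$ and $p_\epsilon^\nu$ are both nonnegative with unit $L^1(\nu)$-norm,
\[
  \int (A\lambda)p\,d\nu - \int(A\lambda)p_\epsilon^\nu\,d\nu
  = \int (A\lambda)\,t\,d\nu \;\ge\; c_\epsilon\int t\,d\nu = c_\epsilon(1-1) = 0,
\]
so $p_\epsilon^\nu$ attains the infimum over $\cD_\epsilon(\nu)$.

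I expect the only delicate points to be bookkeeping: tracking the $\nu$-null sets on which $A\lambda$ might leave $[-1,+1]$ or on which $p$ might exceed $1/\epsilon$, and citing the change-of-variables formula for the pushforward $\xi$ correctly; no minimax or weak-$*$ compactness machinery is needed here, in contrast to \Cref{fact:gamma_epsilon:duality}. In essence this argument is the multivariate lift of the comparison in part~(4) of \Cref{fact:sep:dualopt_univariate}, reusing only the scalar $c_\epsilon$ and the norm bounds already proved there.
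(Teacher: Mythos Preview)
Your proof is correct and takes a genuinely different, more elementary route than the paper. The paper first shows by change of variables that $\int (A\lambda)p_\epsilon^\nu\,d\nu = \int r\,p_\epsilon(r)\,d\xi(r)$ equals the univariate optimum, and then, to show this is also the multivariate optimum, runs a discretization argument: for arbitrary $p\in\cD_\epsilon(\nu)$ and $\sigma>0$ it partitions $[-1,1]$ into intervals $I_i$ of width $\lesssim\sigma$, builds a piecewise-constant $q$ on $\R$ via $q(r) := \sum_i \xi(I_i)^{-1}\1[r\in I_i]\int_{[A\lambda\in I_i]} p\,d\nu$, checks $\|q\|_1=\|p\|_1$, $\|q\|_\infty\le\|p\|_\infty$, and $\bigl|\int(A\lambda)p\,d\nu - \int rq\,d\xi\bigr|\le\sigma$, thereby sandwiching the two infima. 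You instead lift only the scalar threshold $c_\epsilon$ and give a direct Neyman--Pearson style comparison: $p_\epsilon^\nu$ saturates the cap $1/\epsilon$ on $[A\lambda<c_\epsilon]$ and the floor $0$ on $[A\lambda>c_\epsilon]$, so $((A\lambda)-c_\epsilon)(p-p_\epsilon^\nu)\ge 0$ $\nu$-a.e., and the unit-mass constraint kills the $c_\epsilon$ term. Your argument is shorter and avoids the approximation machinery entirely; the paper's route additionally proves that the multivariate and univariate infima coincide as numbers, but that stronger statement is never used downstream (the proof of \Cref{fact:gamma_epsilon:deviations} only needs the explicit form of the minimizer, which you provide).
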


\begin{proof}
    Since $\|\lambda\|_1\leq 1$ and $A$ is a continuous linear operator with unit
    norm (cf. \Cref{fact:AH_basic}, or recall the definition of $A$ and the
    property $\sup_{x,h} |h(x)| \leq 1$), then $|(A\lambda)_{x,y}| \leq 1$, and
    thus $(x,y) \mapsto (A\lambda)_{x,y}$ maps $\cX \times \{-1,+1\}$ to $[-1,+1]$,
    and so the corresponding pushforward measure $\xi$ is supported on $[-1,+1]$.
    Therefore \Cref{fact:sep:dualopt_univariate} provides
    the structure of $p_{\epsilon}\in L^\infty(\xi)$
    attaining the minimum in
    \begin{align*}
        &
        \inf \left\{ \int r p(r) d\xi(r) : p\in L^\infty(\xi), \|p\|_1 = 1, p \in [0,1/\epsilon]\ \xi\textup{-a.e.}\right\}
        .
    \end{align*}
    Setting $p_{\epsilon}^\nu := p_{\epsilon}
    \circ (A\lambda)$ as in the statement,
    by the above optimality guarantee
    and by properties of pushforward measures
    \citep[Theorem 5.5.1]{resnick_prob},
    \begin{align}
        &
        \inf \left\{ \int r p(r) d\xi(r) : p\in L^\infty(\xi), \|p\|_1 = 1, p \in [0,1/\epsilon]\ \xi\textup{-a.e.}\right\}
        \label{eq:gamma_epsilon:dualopts:hellofriends}
        \\
        &\qquad
        = \int r p_{\epsilon}(r) d\xi(r)
        \notag\\
        &\qquad
        = \int (A\lambda) p_{\epsilon}^\nu d\mu
        \notag\\
        &\qquad
        \geq
        \inf \left\{
            \int (A\lambda) p d\nu
            :
            p \in \cD_\epsilon(\nu)
        \right\}.
        \notag
    \end{align}

    Now let $\sigma >0$ and $p\in \cD_\epsilon(\nu)$ be arbitrary.
    A corresponding element
    $q$ with $\|q\|_1 = \|p\|_1$ and $\|q\|_\infty \leq \|p\|_\infty$
    will be constructed as follows in order to satisfy
    \[
        \left| \int (A\lambda) p d\mu - \int r q(r) d\xi(r)\right| \leq \sigma.
    \]
    Cover $[-1,+1]$ with at most $1+\lceil 1/\sigma\rceil$ disjoint
    half-open intervals $\{I_i\}_{i=1}^k$
    of the form $[-1 + i\sigma', -1 + (i+1)\sigma')$ where
    $i$ is a nonnegative integer and
    $\sigma' := \sigma / (1+\lceil 1/\sigma \rceil)$.
    Define
    \[
        q(r)
        :=
        \sum_{i=1}^k \xi(I_i)^{-1} \1[r\in I_i] \int_{[A\lambda \in I_i]} p d\mu,
    \]
    with the convention $0/0 = 0$ (i.e., $q(I_i) = 0$ when $\xi(I_i) = 0$).
    By this choice, $\|q\|_\infty \leq \|p\|_\infty$, and
    \[
        \|q\|_1
        =
        \int q(r) d\xi(r)
        =
        \sum_{i=1}^k \int_{[A\lambda \in I_i]} p d\mu
        = \|p\|_1.
    \]
    More importantly, using Fubini's Theorem to interchange the
    integrals over $\xi$ and $\mu$,
    \begin{align*}
        \left|
            \int (A\lambda) p d\mu
            - \int r q(r) d\xi(r)
        \right|
        &\leq
        \sum_{i=1}^k
        \left|
            \int_{[A\lambda \in I_i]} (A\lambda) p d\mu
            - \int_{I_i} r \xi(I_i)^{-1}\left(\int_{[A\lambda \in I_i]} pd\mu\right) d\xi(r)
        \right|
        \\
        &\leq
        \sum_{i=1}^k
        \int_{[A\lambda \in I_i]}
        \left|
        (A\lambda)p
        -  p \int_{I_i} r \xi(I_i)^{-1}d\xi(r)
        \right| d\mu
        \\
        &\leq
        \sum_{i=1}^k
        \int_{[A\lambda \in I_i]} |p| |\sigma'|
        \leq \sigma.
    \end{align*}
    Since $\sigma$ and $p$ were arbitrary,
    \begin{align*}
        &\inf \left\{
            \int (A\lambda) p d\nu
            :
            p\in\cD_{\epsilon}(\nu)
        \right\}
        \\
        &\qquad
        \geq
        \inf \left\{ \int r p(r) d\xi(r) : p\in L^\infty(\xi), \|p\|_1 = 1, p \in [0,1/\epsilon]\ \xi\textup{-a.e.}\right\}
        ,
    \end{align*}
    which combined with the inequalities starting with
    \cref{eq:gamma_epsilon:dualopts:hellofriends}
    provides that $p_{\epsilon}^\nu$ is indeed
    a minimizer.
\end{proof}

With these tools in place, the proof of \Cref{fact:gamma_epsilon:deviations} follows.

\begin{proofof}{\Cref{fact:gamma_epsilon:deviations}}
    Consider the form of $\gamma_\epsilon(\nu)$ provided by
    \Cref{fact:gamma_epsilon:duality}, whereby the supremum over $\lambda\in\Lambda$ is
    on the outside.
    Let $\sigma > 0$ be arbitrary, choose $\lambda\in\Lambda$ which
    is within $\sigma>0$ of achieving the supremum,
    and let $p_{\epsilon}^\mu$ be an optimal dual element as provided by
    \Cref{fact:sep:dualopt_multivariate}, together meaning
    \begin{equation}
        \gamma_{\epsilon}(\mu)
        \leq
        \sigma + \inf_{p \in \cD_\epsilon(\mu)} \int (A\lambda) p d\mu
        =
        \sigma + \int (A\lambda)p_{\epsilon}^\mu d\mu.
        \label{eq:gamma_epsilon:dualopt:ugh1}
    \end{equation}

    Now consider the behavior of $p_{\epsilon}^\mu$ over $\hmu$.
    By construction, $\|p_{\epsilon}^\mu\|_{L^\infty(\hmu)} \leq 1/\epsilon$,
    however
    $\|p_{\epsilon}^\mu\|_{L^1(\hmu)}$ is a random variable;
    but by Hoeffding's inequality,
    with probability at least $1-\delta$,
    \[
       \Big| \|p_{\epsilon}^\mu\|_{L^1(\hmu)}
         - \|p_{\epsilon}^\mu\|_{L^1(\mu)} \Big|
         =
       \Big| \|p_{\epsilon}^\mu\|_{L^1(\hmu)}
         - 1 \Big|
        \leq
        \frac 1 \epsilon \sqrt{
            \frac{1}{2m} \ln \left (\frac 2 \delta\right)
        };
    \]
    henceforth discard this failure event.

    Next instantiate another dual optimum $p_\epsilon^\hmu$ via
    \Cref{fact:sep:dualopt_multivariate}, but now over the empirical
    measure $\hmu$; since $\lambda\in\Lambda$ is primal feasible in the definition
    of $\gamma_\epsilon(\hmu)$, and again using the form from
    \Cref{fact:gamma_epsilon:duality} with the supremum on the outside, it follows
    that
    \begin{equation}
        \gamma_{\epsilon}(\hmu) \geq \int (A\lambda) p_{\epsilon}^{\hmu} d\hmu.
        \label{eq:gamma_epsilon:dualopt:ugh2}
    \end{equation}
    Now recall the exact form of $p_{\epsilon}^\mu$ and $p_{\epsilon}^\hmu$ as
    provided by \Cref{fact:sep:dualopt_multivariate} (and more
    specifically \Cref{fact:sep:dualopt_univariate}), which are both exactly $1/\epsilon$
    up to some point, within $[0,1/\epsilon]$ at that point (potentially distinct
    for $p_\epsilon^\hmu$ and $p_\epsilon^\mu$), and zero thereafter;
    if $\|p_{\epsilon}^\mu\|_{L^1(\hmu)} \geq 1$, then
    \[
        \int |p_{\epsilon}^\mu - p_{\epsilon}^\hmu| d\hmu
        = \int p_{\epsilon}^\mu d\hmu - \int p_{\epsilon}^\hmu d\hmu
        = \int p_{\epsilon}^\mu d\hmu - 1,
    \]
    whereas $\|p_{\epsilon}^\mu\|_{L^1(\hmu)} \leq 1$ implies
    \[
        \int |p_{\epsilon}^\mu - p_{\epsilon}^\hmu| d\hmu
        = \int p_{\epsilon}^\hmu d\hmu - \int p_{\epsilon}^\mu d\hmu
        = 1 - \int p_{\epsilon}^\mu d\hmu.
    \]
    In either case, using as usual the fact
    $\sup_{x,y} |(A\lambda)_{x,y}| \leq \|\lambda\|_1 \leq 1$,
    and additionally the controls on $\|p_\epsilon^\hmu\|_{L^1(\hmu)}$ from above,
    \begin{align*}
        \left|
            \int (A\lambda) p_\epsilon^\hmu d\hmu
            -
            \int (A\lambda) p_\epsilon^\mu d\hmu
        \right|
        &\leq
        \left|
            \int p_\epsilon^\hmu d\hmu
            -
            \int p_\epsilon^\mu d\hmu
        \right|
        \\
        &\leq
        \int
        \left|
            p_\epsilon^\hmu
            -
            p_\epsilon^\mu
        \right| d\hmu
        \\
        &\leq
        \frac 1 \epsilon \sqrt{
            \frac{1}{2m} \ln \left (\frac 2 \delta\right)
        }.
    \end{align*}
    Combining this with \cref{eq:gamma_epsilon:dualopt:ugh1,eq:gamma_epsilon:dualopt:ugh2},
    \begin{align*}
        \gamma_{\epsilon}(\hmu)
        &\geq \int (A\lambda) p_{\epsilon}^{\hmu} d\hmu
        \\
        &\geq \int (A\lambda) p_{\epsilon}^{\mu} d\hmu
        -
        \frac 1 \epsilon \sqrt{
            \frac{1}{2m} \ln \left (\frac 2 \delta\right)
        }
        \\
        &\geq
        \gamma_{\epsilon}(\mu) -\sigma
        -
        \frac 1 \epsilon \sqrt{
            \frac{1}{2m} \ln \left (\frac 2 \delta\right)
        }.
    \end{align*}
    Since $\sigma > 0$ was arbitrary, the result follows.
\end{proofof}

\subsection{Other Results}
\label{sec:app:sep:other}

\begin{lemma}
    \label[lemma]{fact:gamma_epsilon:nondecreasing}
    Let $\nu$ be a probability measure on $\cX \times \{-1,+1\}$.
            If $0 \leq \epsilon_1 \leq \epsilon_2 \leq 1$,
            then
            $0 \leq \gamma_{\epsilon_1}(\nu) \leq \gamma_{\epsilon_2}(\nu) \leq 1$.
\end{lemma}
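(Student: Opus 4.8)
The plan is to read all three inequalities directly off the definition $\gamma_\epsilon(\nu) = \inf_{p\in\cD_\epsilon(\nu)}\sup\{-\int(A\lambda)p\,d\nu:\lambda\in\Lambda,\ \|\lambda\|_1\le 1\}$, after first noting that $\cD_\epsilon(\nu)$ is nonempty for every $\epsilon\in[0,1]$ — the constant density $p\equiv 1$ always qualifies, since $\nu$ is a probability measure (so $\|p\|_1=1$) and $\|p\|_\infty=1\le 1/\epsilon$, vacuously when $\epsilon=0$. Thus $\gamma_{\epsilon_1}(\nu)$ and $\gamma_{\epsilon_2}(\nu)$ are genuine infima over nonempty sets, not $+\infty$.

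For the two numerical bounds, I would fix an arbitrary $p\in\cD_\epsilon(\nu)$ and bound the inner supremum from both sides. On the one hand, the choice $\lambda=0$ is feasible and yields $-\int(A\lambda)p\,d\nu=0$, so the inner supremum is $\ge 0$. On the other hand, since $\sup_{x,h}|h(x)|\le 1$ implies $|(A\lambda)_{x,y}|\le\|\lambda\|_1\le 1$, and since $p\ge 0$ $\nu$-a.e.\ with $\|p\|_1=1$, one has
\[
-\int (A\lambda)\,p\,d\nu \;\le\; \int \bigl|(A\lambda)_{x,y}\bigr|\,p(x,y)\,d\nu(x,y) \;\le\; \|p\|_1 \;=\; 1
\]
for every feasible $\lambda$, so the inner supremum is $\le 1$. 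Passing to the infimum over $p\in\cD_\epsilon(\nu)$ preserves both bounds, giving $0\le\gamma_\epsilon(\nu)\le 1$ for every $\epsilon\in[0,1]$, in particular for $\epsilon_1$ and $\epsilon_2$.

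For monotonicity, the key observation is that $\epsilon_1\le\epsilon_2$ forces $1/\epsilon_1\ge 1/\epsilon_2$ (with the convention $1/0=\infty$), so the constraint $\|p\|_\infty\le 1/\epsilon_2$ defining $\cD_{\epsilon_2}(\nu)$ is at least as restrictive as $\|p\|_\infty\le 1/\epsilon_1$, while the other two constraints ($p\ge 0$ $\nu$-a.e., $\|p\|_1=1$) do not involve $\epsilon$; hence $\cD_{\epsilon_2}(\nu)\subseteq\cD_{\epsilon_1}(\nu)$. Since $\gamma_\epsilon(\nu)$ is the infimum of the \emph{same} functional $p\mapsto\sup\{-\int(A\lambda)p\,d\nu:\|\lambda\|_1\le 1\}$ over $\cD_\epsilon(\nu)$, enlarging the feasible set from $\cD_{\epsilon_2}(\nu)$ to $\cD_{\epsilon_1}(\nu)$ can only lower the value, i.e.\ $\gamma_{\epsilon_1}(\nu)\le\gamma_{\epsilon_2}(\nu)$. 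Combining with the previous paragraph yields $0\le\gamma_{\epsilon_1}(\nu)\le\gamma_{\epsilon_2}(\nu)\le 1$. There is no genuine obstacle here; the only points needing care are the $\epsilon=0$ convention $1/0=\infty$ and the (trivial) nonemptiness of $\cD_\epsilon(\nu)$. As an alternative, one could instead invoke the dual identity $\gamma_\epsilon(\nu)=\min_{p\in\cD_\epsilon(\nu)}\|A^\top p\|_\infty$ from \Cref{fact:gamma_epsilon:duality}, whence nonnegativity is automatic, the estimate $\|A^\top p\|_\infty\le\|p\|_1=1$ gives the upper bound, and the same set containment $\cD_{\epsilon_2}(\nu)\subseteq\cD_{\epsilon_1}(\nu)$ gives monotonicity.
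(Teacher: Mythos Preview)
Your proof is correct and follows essentially the same approach as the paper: the set containment $\cD_{\epsilon_2}(\nu)\subseteq\cD_{\epsilon_1}(\nu)$ gives monotonicity, the choice $\lambda=0$ gives the lower bound $0$, and the bound $|(A\lambda)_{x,y}|\le\|\lambda\|_1\le 1$ together with $\|p\|_1=1$ gives the upper bound $1$. Your explicit check that $\cD_\epsilon(\nu)$ is nonempty (via $p\equiv 1$) and your mention of the dual alternative via \Cref{fact:gamma_epsilon:duality} are both consonant with the paper, which likewise notes the dual route for nonnegativity.
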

\begin{proof}
    Let $0 \leq \epsilon_1 \leq \epsilon_2 \leq 1$ be given; then
    $\cD_{\epsilon_1}(\nu) \supseteq \cD_{\epsilon_2}(\nu)$ by definition,
    and thus $\gamma_{\epsilon_1}(\nu) \leq \gamma_{\epsilon_2}(\nu)$.
    Next, $\gamma_{\epsilon_1}(\nu) \geq 0$ follows by \Cref{fact:gamma_epsilon:duality}
    since $\|A^\top p\|_\infty \geq 0$, or by considering the effect of the primal player
    choosing $\lambda = 0 \in \Lambda$.
    For the upper bound, since $\sup_{h,x}|h(x)| \leq 1$, then
    $
        \gamma_{\epsilon_2}(\nu)
        \leq
        \sup \left\{
            \|\lambda\|_1
            :
            \lambda \in \Lambda, \|\lambda\|_1 \leq 1
        \right\}
        \leq 1.
    $
%
%
\end{proof}

\begin{proofof}{\Cref{fact:gamma:basic:equiv}}
    Since every $p\in L^1(\nu)$ with $\|p\|_1=1$ and $p\geq 0$ $\nu$-a.e. defines
    a probability measure $pd\nu$ (ignoring a $\nu$-null set which does not affect
    that value of integration with respect to $\nu$),
    and since $\|\pm \bfe_h\|_1 = 1$ for every
    $h\in \cH$,
    \begin{align*}
        \gamma
        &\leq
        \inf \left\{
            \sup_{h\in \cH}
            \left|
            \int yh(x) d\xi(x,y)
            \right|
            :
            \textup{$\xi$ is a Borel probability measure over $\cX\times \{-1,+1\}$}
        \right\}
        \\
        &\leq
        \inf \left\{
            \sup_{\|\lambda\|_1 \leq 1}
            \int (A\lambda)_{x,y} p(x,y) d\nu(x,y)
            :
            p\in L^1(\nu), \|p\|_1 = 1,
            p \geq 0\ \nu\textup{-a.e.}
        \right\}
        \\
        &=
        \gamma_0(\mu).
    \end{align*}
    For $\gamma(\nu)\leq \gamma_0(\nu)$ with $\nu$ a discrete measure over a finite set,
    the proof is as above (indeed with a tiny refinement, since in this case both
    $\gamma(\nu)$ and $\gamma_0(\nu)$ consider the same set of weightings over $\nu$).
\end{proofof}

\begin{proofof}{\Cref{fact:gamma:basic:bad}}
    For convenience, define $I_i := (1/(i+1), i]$.
    This proof will proceed by establishing, for every $k$, a bounded
    weighting $p_k$, which will establish an upper bound on $\gamma_\epsilon(\mu)$
    for some $\epsilon>0$ which is a function of $k$.  The result will then follow
    for $\gamma_0(\mu)$ by the monotonicity of $\gamma_\epsilon(\mu)$ as a function
    of $\mu$ (cf. \Cref{fact:gamma_epsilon:nondecreasing}), and result for
    $\gamma_0(\hmu)$ will use deviation bounds on $\gamma_\epsilon(\mu)$ and again
    the monotonicity property.

    Define $p_k$ to be positive over intervals $I_i$ with $1\leq i \leq 2k$, and zero
    elsewhere as follows.  For any $x\in I_i$ , $p_k(x,y) = i(i+1)/ (2k)$.
    By this choice,
    \[
        \int_{I_i} p(x,y) d\mu(x,y) = \frac {i(i+1)}{2k}\left(\frac 1 i - \frac 1 {i+1}\right)
        = \frac 1 {2k};
    \]
    It follows that $\|p\|_1 = 1$, $\|p\|_\infty = 2k+1$, and $p_k$ makes
    $\mu$ look like the uniform distribution over $k$ consecutive intervals.

    Now consider any hypothesis $h\in \cH$, with some threshold $r$.
    If $r$ lies outside this set of intervals, then $h$ is equally correct and
    incorrect, thus
    $\int y h(x) p(x,y) d\mu(x,y) = 0$.
    Otherwise, suppose there are $a$ intervals before the threshold, and $b$ intervals
    after it;  $h$ must be incorrect on at least $a/2-1$ of the left intervals,
    and $b/2-1$ of the right intervals; since $2k-1 \leq a+b \leq 2k$ (this proof is
    charitable), thus at least $k-2$ intervals are predicted completely incorrectly.
    Consequently,
    \[
            \int yh(x) p(x,y) d\mu(x,y) \leq \frac {k+2}{2k} - \frac {k-2}{2k}
            = \frac 2 k.
    \]
    Thus the form of $\gamma_\epsilon(\nu)$ from
    \Cref{fact:gamma_epsilon:duality}
    provides
    $\gamma_{1/2k}(\mu) \leq 2/k$, and so $\gamma_0(\mu) = 0$
    by monotonicity (\Cref{fact:gamma_epsilon:nondecreasing}),
    and $\gamma \leq 0$ by \Cref{fact:gamma:basic:equiv}.

    The remainder of the proof considers finite sample effects.
    Let $\delta >0$, and a sample of size $m \geq 2$ be given.
    Choose integer $k := \lfloor m^{1/4} / (3\sqrt{2\ln(4/\delta)})\rfloor$
    (where the lower bound on $m$ provides $k\geq 1$),
    and consider the behavior of density $p_k$, defined as above.
    Note firstly that
    $\|p_k\|_\infty \leq 2k+1 \leq 3k \leq m^{1/4} / \sqrt{2\ln(4/\delta)}$.
    Next, with probability at least
    $1-\delta/2$, Hoeffding's bound grants
    \[
        \left|
        \int p d\hmu
        - \int p d\mu
        \right|
        \leq \|p\|_\infty \sqrt{\frac 1 {2m} \ln \left(\frac 4 \delta\right)}
        \leq \frac {1}{2m^{1/4}} \leq \frac 1 2.
    \]
    Now define $p_k' := p_k / (\int p_k d\hmu)$,
    which means $\int p_k' d\hmu =1$,
    and furthermore $\|p_k'\|_\infty \leq 2\|p_k\|_\infty$.
    Since $\cH$ has VC dimension $\cV(\cH) = 2$,
    \Cref{fact:pdc} grants, with probability at least $1-\delta/2$,
    \begin{align*}
        &\sup_{h\in \cH}
        \left|
        \int y h(x) p_k'(x,y) d\hmu(x,y)
        - \int y h(x) p_k'(x,y) d\mu(x,y)
        \right|
        \\
        &\qquad\leq
        \frac {2\|p_k'\|_\infty}{\sqrt m}
        \left(
            4 \sqrt{\ln(m+1)} + \sqrt {2\ln(2/\delta)}
        \right).
    \end{align*}
    Now set $\epsilon := 1/\|p_k'\|_\infty \geq \sqrt{2\ln(4/\delta)}/ (2m^{1/4})$.
    Then $p_k' \in \cD_\epsilon(\hmu)$,
    and the above computations provide
    \[
        \gamma_{\epsilon}(\hmu)
        \leq
        \frac {16(\sqrt{\ln(m+1)} + 1)}{m^{1/4}}
        + \frac 2 k
        \leq
        \frac {16(\sqrt{\ln(m+1)} + 1)}{m^{1/4}}
        + \frac 2 {m^{1/4}/(3\sqrt{2\ln(4/\delta)}) - 1},
    \]
    and lastly \Cref{fact:gamma_epsilon:nondecreasing} and
    \Cref{fact:gamma:basic:equiv} grant
    $\gamma(\hmu) \leq \gamma_0(\hmu) \leq \gamma_{\epsilon}(\hmu)$.
\end{proofof}

\begin{proofof}{\Cref{fact:gamma_epsilon:basic}}
    \begin{enumerate}
        \item This proof will proceed by establishing the contrapositive twice,
            and then using the fact that $\bar \cL \geq 0$ and
            $\gamma_\epsilon(\mu) \geq 0$.

        If $\bar \cL > 0$, then there must exist a nonzero dual feasible point to
        the dual of $\cL$ in \Cref{fact:duality}, since \Cref{fact:loss:basic}
        grants that $\ell^*(0) =0$.  This nonzero dual feasible point $p$ satisfies
        $p\in L^\infty(\mu)$ by the form of the duality problem, and thus
        $\tilde p := p / \|p\|_1$ also has $\tilde p \in L^\infty(\mu)$.
        The dual constraint provides $\|A^\top p\|_\infty = 0$,
        thus $\|A^\top \tilde p\|_\infty = 0$,
        and so \Cref{fact:AT} grants
        $\gamma_{\epsilon}(\mu) = 0$ with the choice
        $\epsilon = 1/\|\tilde p\|_\infty$ (and $\epsilon \leq 1$
        since $\|\tilde p\|_1 = 1$ and $\mu$ a probability measure means
        $\|\tilde p\|_\infty \geq 1$).

        On the other hand, if there exists $\epsilon$ so that
        $\gamma_{\epsilon}(\mu) = 0$, then attainment in the duality formula
        in \Cref{fact:gamma_epsilon:duality} provides the existence of $p\in L^\infty(\mu)$
        with $\|p\|_1 = 1$ and $\|A^\top p\|_\infty = 0$.
        By \Cref{fact:loss:basic}, there exists $c > 0$ so that $\ell^*$ is strictly
        negative along $(0,c)$.  Consequently, $\tilde p := cp / \|p\|_\infty$
        satisfies $\|\tilde p\|_1 > 0$, and  $\tilde p \in (0,c)$ $\mu$-a.e.,
        thus $\ell^*(\tilde p)< 0$ $\mu$-a.e.,
        and also $\|A^\top \tilde p\|_\infty = 0$;
        together, it follows that
        $\bar \cL \geq -\int \ell^*(\tilde p) > 0$ as desired.

    \item
        This result is the same as \Cref{fact:gamma_epsilon:deviations}.
\end{enumerate}
\end{proofof}

\begin{proofof}{\Cref{fact:gamma_epsilon:duality:body}}
    This result is the combination of \Cref{fact:gamma_epsilon:duality}
    and \Cref{fact:AT}.
\end{proofof}

\subsection{Optimization Guarantees}



Note that the following proof does not overtly use convexity;
convexity however is used both algorithmically by the line searches (otherwise they are not
efficient), and for their guarantees
(cf. \Cref{fact:singlestep:quadub,fact:singlestep:wolfe}).

\begin{proofof}{\Cref{fact:sep:iterhelper}}
    Consider any $0\leq t \leq T-1$.
    Since $\ell\in\Lbds$,
    \[
        \|\nhcL(A\lambda_t)\|_1
        \geq
        \frac 1 m \sum_{\substack{i\in [m]\\ (A\lambda_t)_{x_i,y_i} \geq 0}}
        \ell'((A\lambda_t)_{x_i,y_i})
        \geq \epsilon_t \beta_1.
    \]
    Combining this with the fact that $\ell' \in [0,\beta_2]$,
    the vector $p_t := \nhcL(A\lambda_t) / \|\nhcL(A\lambda_t)\|_1$
    satisfies $\|p_t\|_1 =1$ and
    \[
        \|p_t\|_\infty
        \leq \frac {\beta_2}{\|\nhcL(A\lambda_t)\|_1}
        \leq \frac {\beta_2 }{\epsilon_t \beta_1}
        = \frac {1}{\epsilon_t'},
    \]
    where $\epsilon_t'$ is as provided in the statement (and $\epsilon_t'\leq 1$
    since $\epsilon_t \leq 1$ and $\beta_1\leq \beta_2$).
    Recalling the dual form $\gamma_{\epsilon_t'}(\hmu) = \min\{ \|A^\top p\|_\infty:
    p \in \cD_{\epsilon_t'}(\hmu)\}$ from \Cref{fact:gamma_epsilon:duality},
    and noting that $p_t \in \cD_{\epsilon'_t}(\hmu)$,
    \[
        \|A^\top\nhcL(A\lambda_{t})\|_\infty
        = \|\nhcL(A\lambda_t)\|_1 \|A^\top p_t\|_\infty
        \geq \epsilon_t \beta_1 \gamma_{\epsilon'_t}(\hmu).
    \]
    Plugging this into
    the single-step guarantees from the three line search choices
    (cf. \Cref{fact:singlestep:quadub,fact:singlestep:wolfe}),
    \begin{align*}
        \hcL(A\lambda_t)
        &\leq \hcL(A\lambda_{t-1})
        - \frac {\rho^2\|A^\top\nhcL(A\lambda_{t-1})\|_\infty^2}{6B_2}
        \\
        &\leq \hcL(A\lambda_{t-1})
        - \frac {(\rho\beta_1\epsilon_{t-1}\gamma_{\epsilon_{t-1}'}(\hmu))^2}{6B_2}.
    \end{align*}
    The desired result comes by summing across all iterations and noting
    $\hcL(A\lambda_0) = \hcL(0) = \ell(0)$.
\end{proofof}

\subsection{Statistical Guarantees}


\begin{proofof}{\Cref{fact:sep:basic}}\label[proof]{proof:sep:basic}
    The first step of the proof is to show $\hcR(H\hat\lambda) \leq \epsilon$.
    Thus consider the case that every iteration has $\hcR(H\lambda_t) > \epsilon$;
    by the monotonicity of $\gamma_\epsilon(\mu)$ (cf. \Cref{fact:gamma_epsilon:nondecreasing}),
    positivity of $\gamma_\epsilon(\mu)$ (cf. \Cref{fact:gamma_epsilon:basic}),
    together with the bound on $m$ (and the deviations on
    $\gamma_\epsilon(\mu)$ in \Cref{fact:gamma_epsilon:basic}),
    with probability at least $1-\delta/2$,
    \[
        \gamma_{\epsilon_t'}(\hmu)
        \geq \gamma_{\epsilon'}(\hmu)
        \geq \gamma_{\epsilon'}(\mu) -\frac 1 {\epsilon} \sqrt{
    \frac{1}{2m} \ln \left (\frac 4 \delta\right)}
    \geq \frac{\gamma_{\epsilon'}(\mu)}{2}
        > 0,
    \]
    where the last equality is also by \Cref{fact:gamma_epsilon:basic}.
    Thus, by \Cref{fact:sep:iterhelper}, and the monotonicty of $\gamma_\epsilon(\mu)$ in
    $\epsilon$, and the second lower bound on $m$ (and thus on $m^a$),
    \begin{align*}
        \hcL(A\lambda_T)
        &\leq
        \ell(0)
        - \sum_{t=1}^{T} \frac {(\rho\beta_1\epsilon_{t-1}\gamma_{\epsilon_{t-1}'}(\hmu))^2}{6B_2}
        \\
        &\leq
        \ell(0)
        - \frac {m^a(\rho\beta_1\epsilon\gamma_{\epsilon'}(\mu))^2}{24B_2}
        \\
        &\leq 0,
    \end{align*}
    a contradiction since $\ell$ is nonnegative, and moreover positive on regions
    where it makes mistakes, therefore the above indicates
    $\hcR(A\lambda_T) = 0 < \epsilon$.
    As such, thanks to the final step of \Cref{alg:alg:alg} picking out
    the iterate with lowest classification error, $\hcR(H\hat\lambda) \leq \epsilon$.


    What remains is to establish a deviation inequality.
    Let $\cH_t$ denote the hypothesis class used by predictor $\lambda_t$ (i.e.,
    $\cH_t = \{ \sum_{i=1}^t c_i h_i : c_i \in \R, h_i \in \cH\}$),
    and let $\cS_{\cH_t}(m)$ denote the corresponding shatter coefficient when $\cH_t$ is
    applied to the sample of size $m$
    \citep[Section 3]{bbl_esaim}. 
    It follows \citep[Lemma 4.5]{schapire_freund_book_final} that
    \[
        \cS_{\cH_t}(m) \leq
        \left(\frac {em}{t}\right)^t
        \left(\frac {em}{\cV(\cH)}\right)^{t\cV(\cH)}.
    \]
    Plugging this and $t \leq m^a$
    into an appropriate VC theorem and simplifying \citep[Theorem 5.1 and subsequent
    discussion]{bbl_esaim}, with probability at least $1-\delta/2$,
    \begin{align*}
        \cR(A\hat\lambda)
        &\leq \epsilon
        + 2 \sqrt{
            \epsilon \frac {\ln(\cS_{\cH_t}(2m)) + \ln(8/\delta)}{m}
        }
        + 4 \frac {\ln(\cS_{\cH_t}(2m)) + \ln(8/\delta)}{m}
        \\
        &\leq \epsilon
        + 2 \sqrt{
            \epsilon \frac {(\cV(\cH)+1)\ln(2em) + \ln(8/\delta)}{m^{1-a}}
        }
        + 4 \frac {(\cV(\cH)+1)\ln(2em)  + \ln(8/\delta)}{m^{1-a}}.
    \end{align*}
\end{proofof}



\section{Deferred Material from \Cref{sec:nonseparable}}

\subsection{Proof of \Cref{fact:duality:body}}

\begin{lemma}
    \label[lemma]{fact:duality:dualopt:whatevers}
    Let loss $\ell \in \Lbds$
    and any $g\in \partial\ell(0)$ be given.
    \begin{enumerate}
        \item The restriction of $\ell^*$ to
            $[0,g]$, denoted $\ell^*_{[0,g]}$,
            is a (decreasing) bijection between
            $[0,g]$ and $[-\ell(0),0]$.
        \item
            Let $f(z) := (\ell^*_{[0,g]})^{-1}(z)$
            denote the inverse of $\ell^*_{[0,g]}$.
            If $\nu$ is a probability measure,
            and $p \in L^\infty(\nu)$,
            and $r := - \int \ell^*(p) \in [0, \ell(0)]$,
            then $\nu([p \geq f(-r)]) \geq f(-r)$,
            with $f(-r) > 0$ iff $r > 0$.
    \end{enumerate}
\end{lemma}
\begin{proof}
    Choose any $g\in \partial \ell(0)$; by \Cref{fact:loss:basic},
    $\ell^*$ is 0 at 0, negative along $(0,g)$, and attains its minimum at $g$.
    Since $\ell^*$ is finite for every $z \in (0,g)$, then $\partial \ell^*(z)$
    exists \citep[Theorem 23.4]{ROC}, and every $x \in \partial\ell^*(z)$ satisfies
    $z\in \nabla \ell(x)$ \citep[Theorem 23.5]{ROC}, and so $\ell^*$ is strictly
    convex along
    \citep[Theorem E.4.1.2]{HULL},
    meaning $\ell^*$ is injective along $[0,g]$.  Since $\ell^*(0) = 0$,
    and $-\ell^*(g) = \ell(0)$ (by the Fenchel-Young inequality),
    and since $\ell^*$ is lower semi-continuous \citep[Theorem 12.2]{ROC},
    then $\ell^*$ is also surjective from $[0,g]$ to $[-\ell(0),0]$.

    Now let $f$ denote the inverse map (from $[-\ell(0),0]$ to $[0,g]$),
    and let probability measure $\nu$, function $p\in L^\infty(\nu)$,
    and scalar $r:= -\int\ell^*(p) \in [0,\ell(0)]$ be given (where the containment provides
    $f(-r)$ is valid).
    By Jensen's inequality,
    \[
        -r = \int \ell^*(p) \geq \ell^*\left(\int p\right);
    \]
    since $f$ is a decreasing map, this implies $\int p \geq f(-r)$.
    Furthermore,
    \[
        f(-r) \leq \int p \leq \int_{[p < f(-r)/2]} \frac{f(-r)}{2} + \int_{[p\geq f(-r)/2]} 1
        \leq \frac{f(-r)}{2} + \nu([p \geq f(-r)/2]),
    \]
    meaning $\nu([p \geq f(-r)/2]) \geq f(-r)/2$ as desired.

    Lastly, the statement $f(-r) > 0$ iff $r>0$ follows from the bijectivity
    of $\ell^*_{[0,g]}$.
\end{proof}

\begin{proofof}{\Cref{fact:duality:body}}
    The basic duality relation is provided by \Cref{fact:duality}.
    Since the optimal value satisfies $\bar\cL_\nu \in [0,\ell(0)]$ (since $\nu$ is a
    probability measure, $0\in\Lambda$ is
    primal feasible, and $\ell\geq 0$),
    then \Cref{fact:duality:dualopt:whatevers}
    may be applied with parameter $p$ being the dual
    optimum and parameter $r$ being the corresponding objective value $\bar \cL_\nu$.
\end{proofof}

\subsection{Proof of \Cref{fact:nonsep:controls}}

The first step is to use $\bar p$ to show that if $\lambda$ has low error and norm,
then $H\lambda$ will have very small margins over some positive measure set.

\begin{lemma}
    \label[lemma]{fact:nonsep:ptil:smallness}
    Let convex differentiable $\ell:\R\to\R_+$ with $\ell'(0)> 0$,
    any class $\cH$,
    and any probability measure $\nu$ over $\cX\times \{-1,+1\}$ with
    empirical counterpart $\hmu$ be given.
    Suppose the following quantities and constants exist.
    \begin{enumerate}
        \item Suppose there exists $p\in L^\infty(\mu)$ with $p\in [0, \|p\|_\infty]$
            $\hmu$-a.e.,
            and that there exists $\tau > 0$ with $\hmu([p \geq \tau]) \geq \tau/2$.
        \item Set
            \[
                c:= \frac {16 \ell(0)}{\tau \ell'(0)}
                \max\left\{1,  \frac 1 \tau \right\}
                \max\left\{1,  \|p\|_\infty \right\},
            \]
            and suppose there exist $C>0$ and $D\leq c\|p\|_\infty \tau^2/8$ so
            that every $\lambda\in\Lambda$ with $\|\lambda\|_1 \leq C$
            satisfies $|\int (A\lambda) p d\hmu| \leq D$.
    \end{enumerate}
    If $\lambda\in \Lambda$ satisfies $\|\lambda\|_1 \leq C$
    and $\hmu([ |H\lambda| \geq c]) \geq 1 - \tau/8$,
    then $\hcL(A\lambda) \geq 2\hcL(A\lambda_0) = 2\ell(0)$.
\end{lemma}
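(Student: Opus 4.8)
The plan is to argue the contrapositive: assuming $\hcL(A\lambda) < 2\ell(0)$ together with $\|\lambda\|_1 \le C$, I will contradict the hypothesis $|\int (A\lambda)p\,d\hmu| \le D$. Two elementary facts about $\ell$ drive everything. Since $\ell$ is convex with $\ell'(0) > 0$, its derivative is nondecreasing, hence at least $\ell'(0)$ on $[0,\infty)$, so $\ell$ is increasing there and the tangent-line bound at $0$ gives $\ell(z) \ge \ell(0) + \ell'(0)z \ge \ell'(0)z$ for every $z \ge 0$. Also, on the finite sample every iterate satisfies $|(A\lambda)_{x,y}| = |(H\lambda)_x| \le \|\lambda\|_1 \le C$, so all the integrals in sight are honest finite sums and the set manipulations below cause no integrability trouble.

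The first step is to show that $\hcL(A\lambda) < 2\ell(0)$ forces confidently-wrong predictions to be rare, namely $\hmu([A\lambda \ge c]) < \tau/8$. Indeed, if this failed then, using $\ell \ge 0$ off $[A\lambda \ge c]$ and $\ell(A\lambda) \ge \ell(c) \ge \ell'(0)c$ on it, we would get $\hcL(A\lambda) \ge (\tau/8)\,\ell'(0)c \ge 2\ell(0)$, since the first factor of $c$ is $16\ell(0)/(\tau\ell'(0))$ --- a contradiction. Next, because $|A\lambda| = |H\lambda|$ and the margin hypothesis gives $\hmu([|A\lambda| \ge c]) \ge 1 - \tau/8$, we get $\hmu([A\lambda \le -c]) \ge (1-\tau/8) - \tau/8 = 1 - \tau/4$; intersecting with the mass-$\tau/2$ set $[p \ge \tau]$ and using inclusion--exclusion, $\hmu([p \ge \tau] \cap [A\lambda \le -c]) \ge \tau/4$.

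Now I would estimate the reweighted margin $\int (A\lambda) p \, d\hmu$ over the two regions $[A\lambda < 0]$ and $[A\lambda \ge 0]$. On $[A\lambda < 0]$ the integrand $(A\lambda)p$ is $\le 0$ (as $p \ge 0$ $\hmu$-a.e.), and on the subset $[p \ge \tau] \cap [A\lambda \le -c]$ it is $\le -c\tau$; since that subset has $\hmu$-mass $\ge \tau/4$, the $[A\lambda<0]$ contribution is $\le -c\tau^2/4$. On $[A\lambda \ge 0]$ I would bound $(A\lambda)p \le \|p\|_\infty (A\lambda)$ and use $\ell(A\lambda) \ge \ell'(0)(A\lambda)$ together with $\ell \ge 0$ to obtain $\int_{[A\lambda\ge0]}(A\lambda)\,d\hmu \le \hcL(A\lambda)/\ell'(0) < 2\ell(0)/\ell'(0)$; the factors $\max\{1,1/\tau\}$ and $\max\{1,\|p\|_\infty\}$ in $c$ are tailored precisely so that $2\ell(0)\|p\|_\infty/\ell'(0) \le c\tau^2/8$. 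Adding the two estimates yields $\int (A\lambda)p\,d\hmu < -c\tau^2/4 + c\tau^2/8 = -c\tau^2/8$, and the assumed bound $D \le c\|p\|_\infty\tau^2/8$ then puts this below $-D$ --- contradicting $\int (A\lambda) p \, d\hmu \ge -D$ --- whence $\hcL(A\lambda) \ge 2\ell(0)$.

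The routine parts are the two convexity inequalities for $\ell$ and the set arithmetic; the one place needing care is the constant bookkeeping in the last step, i.e.\ checking that the three factors in the definition of $c$ (and the $\|p\|_\infty$ sitting inside the hypothesis on $D$) are exactly enough for the negative contribution $-c\tau^2/4$ from the good-margin region to dominate the bad-margin contribution $\le c\tau^2/8$ with enough room to beat $-D$. That is where I expect the main (purely computational) obstacle to lie.
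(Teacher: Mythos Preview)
Your argument is essentially the paper's proof, rearranged as a contrapositive. The paper splits directly on whether $\hmu([A\lambda\geq c])\geq\tau/8$: in the first case it gets $\hcL(A\lambda)\geq \ell'(0)c\tau/8\geq 2\ell(0)$ from the tangent line at $0$, and in the second it uses the bound $|\int(A\lambda)p\,d\hmu|\leq D$ to show $\|p\|_\infty\int_{[A\lambda\geq 0]}(A\lambda)\,d\hmu\geq c\tau^2/4-D$ and then pushes this through the tangent-line inequality to get $\hcL(A\lambda)\geq \frac{\ell'(0)}{\|p\|_\infty}(c\tau^2/4-D)\geq 2\ell(0)$. You run the same inequalities in the other direction: assume $\hcL(A\lambda)<2\ell(0)$, eliminate the first case, and bound $\int(A\lambda)p\,d\hmu$ below $-c\tau^2/8$ to contradict the $D$-hypothesis. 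The set arithmetic producing $\hmu([p\geq\tau]\cap[A\lambda\leq -c])\geq\tau/4$ and the estimate $\int_{[A\lambda\geq 0]}(A\lambda)\,d\hmu\leq\hcL(A\lambda)/\ell'(0)$ are identical to the paper's.

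You were right to flag the constant bookkeeping, and in fact your very last implication does not go through as written: from $D\leq c\|p\|_\infty\tau^2/8$ you cannot conclude $-c\tau^2/8<-D$ once $\|p\|_\infty>1$; what you actually need is $D\leq c\tau^2/8$. This is not a flaw in your approach but an apparent slip in the lemma statement itself: the paper's own proof has the same gap at the mirror step (the inequality $\frac{\ell'(0)}{\|p\|_\infty}(c\tau^2/4-D)\geq 2\ell(0)$ likewise requires $D\leq c\tau^2/8$), and the only place the lemma is invoked downstream verifies precisely $D\leq c\tau^2/8$ rather than $D\leq c\|p\|_\infty\tau^2/8$. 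With that corrected hypothesis, your argument is complete and matches the paper's.
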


\begin{proofof}{\Cref{fact:nonsep:ptil:smallness}}
    First consider the case that
    $\hmu([y(H\lambda)_x \leq -c]) = \hmu([A\lambda \geq c]) \geq \tau/8$.
    By subgradient rules for convex functions, since $\ell \geq 0$ and $\ell'(0)>0$,
    and using the definition of $c$,
    \begin{align*}
        \int \ell(A\lambda)d\hmu
        &\geq \int_{[y(H\lambda)_x \leq -c]} \ell(A\lambda)d\hmu
        \\
        &\geq \int_{[y(H\lambda)_x \leq -c]} \left(\ell(0) + \ell'(0)(A\lambda)\right)d\hmu
        \\
        &\geq \frac {\ell'(0)c\tau}{8}
        \\
        &\geq 2\ell(0),
    \end{align*}
    meaning $\hcL(A\lambda) \geq 2\ell(0) = 2\hcL(0) = 2\cL(A\lambda_0)$.

    Now consider the remaining possibility that $\hmu([y(H\lambda)_x \leq -c]) < \tau/8$.
    Since by assumption $\hmu([|y(H\lambda)_x| \geq c]) \geq 1- \tau/8$,
    it follows that
    $\hmu([y(H\lambda)_x \geq c]) \geq 1- \tau/4$.
    In turn, it also holds that
    \[
        \hmu([y(H\lambda)_x \geq c] \cap [p \geq \tau]) \geq \tau/4.
    \]

    Next, the definition of $D$ provides that
    \begin{align*}
        D
        &\geq \left| \int A\lambda p d\hmu\right|
        \\
        &\geq
        \int_{[y(H\lambda)_x \leq 0]} y(H\lambda)_x p(x,y)d\hmu(x,y)
        + \int_{[y(H\lambda)_x > 0]} y(H\lambda)_x p(x,y)d\hmu(x,y),
    \end{align*}
    meaning
    \begin{align*}
         \int_{[y(H\lambda)_x \leq 0]} -y(H\lambda)_x p(x,y)d\hmu(x,y)
         &\geq \int_{[y(H\lambda)_x > 0]} y(H\lambda)_x p(x,y)d\hmu(x,y)
        -D.
    \end{align*}
    As such, since $p \in [0,\|p\|_\infty]$ over the sample,
    \begin{align*}
        -\|p\|_\infty\int_{[y(H\lambda)_x \leq 0]} y(H\lambda)_x d\hmu(x,y)
        &\geq
        \int_{[y(H\lambda)_x \leq 0]} -y(H\lambda)_x p(x,y)d\hmu(x,y)
        \\
        &\geq
        \int_{[y(H\lambda)_x > 0]} y(H\lambda)_x p(x,y)d\hmu(x,y)
        -D
        \\
        &\geq
        \int_{[y(H\lambda)_x \geq c]\cap [p \geq \tau]} y(H\lambda)_x p(x,y)d\hmu(x,y)
        -D
        \\
        &\geq
        \frac {c\tau^2}{4}
        -D.
    \end{align*}
    Turning back to $\hcL$ and proceeding similarly to the earlier case,
    \begin{align*}
        \int \ell(A\lambda)d\hmu
        &\geq \int_{[y(H\lambda)_x \leq 0]} \ell(A\lambda)d\hmu
        \\
        &\geq \int_{[y(H\lambda)_x \leq 0]} \left(\ell(0) + \ell'(0)(A\lambda)\right)d\hmu
        \\
        &\geq \frac{\ell'(0)}{\|p\|_\infty}\left(\frac {c\tau^2}{4} - D\right)
        \\
        &\geq 2\ell(0),
    \end{align*}
    which again yields $\hcL(A\lambda) \geq 2\ell(0) = 2\hcL(0) = 2\hcL(A\lambda_0)$.
\end{proofof}

Next, these small margins in turn cause the line search to not look too far,
meaning the next iterate will also have some small margins.
Note that $\cH$
is assumed binary; this is in order to changes in $H\lambda$ to changes in $\lambda$.

\begin{lemma}
    \label[lemma]{fact:nonsep:control_linesearch}
    Let convex $\ell :\R\to\R_+$,
    binary $\cH$, and probability measure $\mu$ with empirical
    counterpart $\hmu$ be given.
    Let positive reals $C, c, \tau$ be given so that $\lambda \in \Lambda$
    with
    $\|\lambda_1\|_1 \leq C + 2c$ and $\hmu([|H\lambda| \geq c]) \geq 1-\tau/8$
    implies $\hcL(A\lambda) \geq 2\ell(0)$.
    Then for any $\lambda \in \Lambda$ with $\|\lambda\|_1 \leq C$
    and $\cL(A\lambda) \leq \ell(0)$,
    the set of line search candidates
    \[
        S_\lambda := \left\{
            \lambda' \in \Lambda : \hcL(A\lambda') < 2\ell(0),
            \exists \alpha \in \R, h\in \cH \centerdot \lambda' := \lambda + \alpha \bfe_h
        \right\}
    \]
    satisfies $\hmu([|H\lambda'| \geq c]) < 1-\tau/8$ for every $\lambda' \in S_\lambda$.
\end{lemma}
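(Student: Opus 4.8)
The plan is to reduce the statement to a single norm bound and then invoke the contrapositive of the lemma's hypothesis. Concretely, it suffices to show that every $\lambda' \in S_\lambda$ satisfies $\|\lambda'\|_1 \le C + 2c$: granted this, the very definition of $S_\lambda$ gives $\hcL(A\lambda') < 2\ell(0)$, so the contrapositive of the stated implication, applied with $\lambda'$ in place of $\lambda$, forces $\hmu([|H\lambda'|\ge c]) < 1-\tau/8$. Since $\lambda' = \lambda + \alpha\bfe_h$ for some $\alpha\in\R$ and $h\in\cH$, and $\|\lambda'\|_1 \le \|\lambda\|_1 + |\alpha| \le C + |\alpha|$, the task becomes: \emph{every admissible step has $|\alpha|\le 2c$}, i.e. $\hcL(A(\lambda+\alpha\bfe_h)) < 2\ell(0)$ implies $|\alpha|\le 2c$.

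First I would dispose of the degenerate case $\ell(0)=0$, where $\hcL(A\cdot)\ge 0 = 2\ell(0)$ always, so $S_\lambda=\emptyset$ and there is nothing to prove. Assuming $\ell(0)>0$, I apply the hypothesis to $\lambda$ itself — legitimate since $\|\lambda\|_1\le C\le C+2c$ and $\hcL(A\lambda)\le\ell(0)<2\ell(0)$ — whose contrapositive yields $\hmu(T) > \tau/8$ for $T := [\,|H\lambda| < c\,]$. This is the key preliminary: the start of the line search already has sub-$c$ predictions on a $>\tau/8$ chunk of the sample. Now suppose toward a contradiction that some $\lambda' = \lambda+\alpha\bfe_h\in S_\lambda$ has $|\alpha|>2c$. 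Since $\cH$ is binary ($|h|\equiv 1$), on each $x\in T$ the perturbation dominates: $|(H\lambda')_x|\ge |\alpha| - |(H\lambda)_x| > 2c - c = c$, with $(H\lambda')_x$ carrying the sign of $\alpha h(x)$; hence $(A\lambda')_{x,y}$ has magnitude $>c$ and a determined sign throughout $T$, a set of mass $>\tau/8$.

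The remaining, and main, step is to convert this into $\hcL(A\lambda')\ge 2\ell(0)$, contradicting $\lambda'\in S_\lambda$. The mechanism is the calibration concealed in the hypothesis: applied to the single-coordinate vectors $t\bfe_h$, which satisfy $|H(t\bfe_h)|\equiv|t|$ and therefore $\hmu([|H(t\bfe_h)|\ge c]) = 1$ whenever $|t|\ge c$, it forces $\hcL(A(t\bfe_h))\ge 2\ell(0)$ for all $t$ with $c\le|t|\le C+2c$, and convexity of $t\mapsto \hcL(A(t\bfe_h))$ together with $\hcL(A(0))=\ell(0)<2\ell(0)$ propagates this to all $|t|\ge c$ — no long pure-$h$ step is ever admissible. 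Feeding this into the convexity of the search line $\alpha\mapsto\hcL(A(\lambda+\alpha\bfe_h))$ (pinned at value $\le\ell(0)$ at $\alpha=0$), and using the lower bound $\hmu(T)>\tau/8$ on the mass of sample points driven to large-magnitude margins by the long step, one obtains $\hcL(A\lambda')\ge 2\ell(0)$. This contradiction gives $|\alpha|\le 2c$ for every $\lambda'\in S_\lambda$, and the opening reduction then finishes the proof. I expect exactly this last combination — parlaying ``every long pure-$h$ step is inadmissible'' plus the $>\tau/8$ mass of $T$ plus line convexity into the blow-up $\hcL(A\lambda')\ge 2\ell(0)$ — to be the chief obstacle: a purely combinatorial margin count does not suffice, because a long step could in principle push all of $T$ to the loss-vanishing side of $\ell$, and it is precisely the quantitative size of $c$, transmitted through the hypothesis via the single-coordinate directions, that forecloses this escape route.
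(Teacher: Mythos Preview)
Your reduction to the norm bound $|\alpha|\le 2c$ is stronger than the lemma's conclusion and is not implied by the hypotheses. The lemma only asserts that every $\lambda'\in S_\lambda$ has small predictions on a set of mass $>\tau/8$; it does not assert (and need not hold) that such $\lambda'$ lie within $\ell_1$-distance $2c$ of $\lambda$. Consequently the final paragraph of your argument cannot be completed---as you yourself note, the long step could push all of $T$ to the loss-vanishing side of $\ell$, and the single-coordinate information $\hcL(A(t\bfe_h))\ge 2\ell(0)$ for $|t|\ge c$ lives on a different line (through the origin) and says nothing about the search line $\alpha\mapsto\hcL(A(\lambda+\alpha\bfe_h))$ once $\lambda\neq 0$. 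There is no convexity relation linking those two lines, and no way to convert the mass bound on $T$ into a loss lower bound without sign information on the margins.

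The paper runs the contrapositive in the other direction: fix $\lambda'=\lambda+\alpha'\bfe_h$ and \emph{assume} $\hmu([|H\lambda'|\ge c])\ge 1-\tau/8$; the goal is then to produce $\hcL(A\lambda')\ge 2\ell(0)$. The key observation (this is where binarity of $\cH$ enters) is that for each sample point the map $\alpha\mapsto\1[|(H\lambda)_x+\alpha h(x)|\ge c]$ is $0$ on a single interval of width $2c$ and $1$ elsewhere; this lets one locate an intermediate step $\alpha''$ with the same sign as $\alpha'$, $|\alpha''|\le\min\{2c,|\alpha'|\}$, for which $\lambda''=\lambda+\alpha''\bfe_h$ \emph{still} satisfies $\hmu([|H\lambda''|\ge c])\ge 1-\tau/8$. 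Now $\|\lambda''\|_1\le C+2c$, so the hypothesis applies directly and gives $\hcL(A\lambda'')\ge 2\ell(0)$. Since $\lambda''$ lies strictly between $\lambda$ (where $\hcL\le\ell(0)$) and $\lambda'$ on the search line, convexity forces $\hcL(A\lambda')\ge\hcL(A\lambda'')\ge 2\ell(0)$, i.e.\ $\lambda'\notin S_\lambda$. The idea you are missing is exactly this intermediate point: rather than bounding the step length of admissible candidates, assume the large-margin property on $\lambda'$ and pull it back along the line to a point where the norm bound $C+2c$ holds automatically.
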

\begin{proof}
    Let $\lambda$ with $\|\lambda\|_1 \leq C$ be
    given, and consider any $\lambda'$ of the form
    $\lambda':= \lambda + \alpha' \bfe_h$
    for some $\alpha' \in \R$ and $h\in \cH$.
    The desired statement will be shown by contrapositive; namely,
    $\hmu([|H\lambda'|\geq c]) \geq 1-\tau/8$ implies $\lambda' \not \in S_\lambda$.

    For any example $(x,y)$, since $H$ has binary predictors, the map
    $\alpha \mapsto \1[|y(H(\lambda + \alpha \bfe_h))_x|\geq c]$ is constant for
    $\alpha > 2c$ and for $\alpha < -2c$;
    consequently, since $\hmu$ is a discrete measure over a finite set, the map
    $\alpha \mapsto \hmu([|y((H(\lambda + \alpha \bfe_h))_x|\geq c])$ is also
    constant for $\alpha > 2c$ and $\alpha < -2c$.
    As such, the existence of $\lambda'$ as above
    implies the existence of $\lambda'' := \lambda + \alpha'' \bfe_h$
    where $h\in \cH$ is as before, $\alpha''$ and $\alpha'$ have the same sign,
    and $|\alpha''| \leq \min\{2c, |\alpha'|\}$; in other words, $\lambda''$ is along
    the path from $\lambda$ to $\lambda'$, but moreover satisfies
    $\|\lambda - \lambda''\|_1 \leq 2c$.
    But this means
    $\|\lambda''\| \leq \|\lambda\|_1 + \|\lambda - \lambda''\|_1
    \leq C + 2c$, whereby the stated assumptions combined with
    $\hmu([|H\lambda'|\geq c]) \geq 1-\tau/8$
    provide $\hcL(A\lambda'') \geq 2\ell(0)$, thus $\lambda'' \not \in S_\lambda$.
    Furthermore, since $\lambda''$ is along the path from $\lambda$ to $\lambda''$,
    and $\hcL(A\lambda) \leq \ell(0)$ and
    $\hcL(A\lambda'') \geq 2\ell(0) \geq \hcL(A\lambda)$,
    it follows by convexity that $\hcL(A\lambda') \geq 2\ell(0)$,
    and thus $\lambda'' \not \in S_\lambda$ as well.
\end{proof}

These small margin controls directly give a bound on step sizes.

\begin{lemma}[{see also \citet[eq. (28)]{bartlett_traskin_adaboost}}]
    \label[lemma]{fact:stepsize:basic_ub}
    Let $\ell \in \Lbds \cap \Ltd$ be given with Lipschitz gradient parameter $B_2$,
    binary class $\cH$,
    time horizon $t$,
    and empirical probability measure $\hmu$ corresponding to a sample of size $m$
    be given.
    Let positive real $B_1>0$ be given so that
    for any $\lambda\in \Lambda$
    with
    \[
        \|\lambda\|_1
        \leq
        \sqrt{
        \frac {t\max\{5,2B_2/B_1\}(\hcL(A\lambda_{0}) - \hcL(A\lambda_t))}
        {\rho^2B_1}},
    \]
    and any line search candidate
    $\lambda' := \lambda + \alpha \bfe_h$ for $h\in\cH$, $\alpha\in\R$, and satisfying
    $\hcL(A\lambda') \leq \hcL(A\lambda)$,
    then
    $B_1 \leq \frac 1 m\sum_{i=1} \ell''((A\lambda')_i)$.
    The following properties hold.
    \begin{enumerate}
        \item For every integer $0\leq i < t$, an optimal step $\bar\alpha_i$ exists,
            and every step sizes choice satisfies
            \[
                \alpha_i^2
                \leq
                \min\left\{
                \frac {9\|A^\top \nhcL(A\lambda_{i-1})\|_\infty^2}{4\rho^2 B_1^2}
                    \ ,  \ 
                \frac {\max\{5,2B_2/B_1\}(\hcL(A\lambda_{i-1}) - \hcL(A\lambda_i))}
                {\rho^2 B_1}
            \right\}.
            \]
        \item For every integer $0\leq i < t$
            and any sequence of step size choices,
            \begin{align*}
                \|\lambda_i\|_1
                \leq \sqrt{i} \sqrt{\sum_{j=1}^i \alpha_j^2}
                &\leq
                \sqrt{i}
                \sqrt{
                \frac {\max\{5,2B_2/B_1\}(\hcL(A\lambda_{0}) - \hcL(A\lambda_i))}
                {\rho^2B_1}}
                \\
                &\leq
                \sqrt{i}
                \sqrt{
                    \frac {\ell(0)\max\{5,2B_2/B_1\}}
                {\rho^2B_1}}.
            \end{align*}
    \end{enumerate}
\end{lemma}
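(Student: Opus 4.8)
The plan is to reduce the analysis of every iteration to the one-dimensional restriction along the chosen coordinate. Fix an iteration $i$, set $\phi(\alpha):=\hcL(A(\lambda_{i-1}+\alpha v_i))$, and write $G:=\|A^\top\nhcL(A\lambda_{i-1})\|_\infty$ and $g:=-\phi'(0)=-\nhcL(A\lambda_{i-1})^\top A v_i$, so that the coordinate- and direction-selection rules of \Cref{alg:alg:alg} give $\rho G\le g\le G$. Since $\ell\in\Ltd$ the map $\phi$ is $C^2$, and since $\cH$ is binary $(Av_i)_{x_j,y_j}^2=h_i(x_j)^2=1$ at every sample point, so $\phi''(\alpha)=\frac{1}{m}\sum_j\ell''\big((A(\lambda_{i-1}+\alpha v_i))_{x_j,y_j}\big)$ is exactly the second-order quantity the hypothesis controls. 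The whole argument is one induction on $i$ whose sole purpose is to keep feeding the hypothesis an iterate of small norm: assuming $\|\lambda_{i-1}\|_1\le R$, where $R$ denotes the radius appearing in the hypothesis, I derive the two bounds on $\alpha_i$ and then deduce $\|\lambda_i\|_1\le R$.

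So suppose $\|\lambda_{i-1}\|_1\le R$. If $G=0$ then $A^\top\nhcL(A\lambda_{i-1})=0$, so $\lambda_{i-1}$ already minimizes $\hcL\circ A$, $\alpha_i=0$, and all bounds are trivial; henceforth assume $G>0$, hence $\phi'(0)=-g\le-\rho G<0$. Applied with base point $\lambda_{i-1}$ and candidate $\lambda_{i-1}+\alpha v_i$, the hypothesis gives $\phi''(\alpha)\ge B_1$ whenever $\phi(\alpha)\le\phi(0)$. Since $\phi$ is convex with $\phi'(0)<0$, the set $S:=\{\alpha\ge0:\phi(\alpha)\le\phi(0)\}$ is an interval $[0,\alpha^+]$; if $\alpha^+=\infty$, then $\phi''\ge B_1$ on all of $[0,\infty)$, forcing $\phi(\alpha)\ge\phi(0)+\phi'(0)\alpha+\frac{B_1}{2}\alpha^2\to\infty$ and contradicting $\phi\le\phi(0)$ on $S$; so $S$ is compact, $\phi$ attains $\min_{[0,\infty)}\phi$ on $S$, and since $\phi'(0)<0$ an optimal step $\bar\alpha_i\in(0,\alpha^+]$ exists. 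Because $\phi''\ge B_1$ on $[0,\bar\alpha_i]\subseteq S$, the pointwise bound $0=\phi'(\bar\alpha_i)\ge\phi'(0)+B_1\bar\alpha_i$ gives $\bar\alpha_i\le g/B_1\le G/B_1$, hence $\alpha_i\le G/B_1$ for options 1--2 (where $\alpha_i\le\bar\alpha_i$); for the Wolfe option, the sufficient-decrease condition \cref{eq:wolfe:1} (which forces $\phi(\alpha_i)<\phi(0)$, so $[0,\alpha_i]\subseteq S$) together with the strong-convexity bound $\phi(\alpha_i)-\phi(0)\ge-g\alpha_i+\frac{B_1}{2}\alpha_i^2$ gives $\frac{B_1}{2}\alpha_i^2\le(1-c_1)g\alpha_i$, i.e. $\alpha_i\le\frac{2(1-c_1)g}{B_1}=\frac{4g}{3B_1}$ with $c_1=1/3$ (\Cref{rem:wolfe}). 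In every case $\alpha_i\le\frac{3}{2\rho}\cdot\frac{G}{B_1}$ since $\rho<1$, which is the first bound. For $\Delta_i:=\phi(0)-\phi(\alpha_i)$, note $[0,\alpha_i]\subseteq S$ in every case ($\phi$ is monotone on $[0,\bar\alpha_i]$ for options 1--2, and $\phi(\alpha_i)\le\phi(0)$ with convexity covers Wolfe). For options 1--2, $\phi'(\alpha_i)\le0$, so $\phi(0)\ge\phi(\alpha_i)+\phi'(\alpha_i)(0-\alpha_i)+\frac{B_1}{2}\alpha_i^2$ yields $\Delta_i\ge-\phi'(\alpha_i)\alpha_i+\frac{B_1}{2}\alpha_i^2\ge\frac{B_1}{2}\alpha_i^2$; for Wolfe, $\Delta_i\ge c_1 g\alpha_i$ and $g\ge\frac{3B_1}{4}\alpha_i$ give $\Delta_i\ge\frac{B_1}{4}\alpha_i^2$. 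Thus uniformly $\alpha_i^2\le\frac{4\Delta_i}{B_1}\le\frac{\max\{5,2B_2/B_1\}}{\rho^2B_1}\,\Delta_i$, the second bound.

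For the norm bound, $\lambda_i=\sum_{j\le i}\alpha_jv_j$ (as $\lambda_0=0$) and $\|v_j\|_1=1$, so the triangle inequality and Cauchy--Schwarz give $\|\lambda_i\|_1\le\sum_{j\le i}\alpha_j\le\sqrt i\,\big(\sum_{j\le i}\alpha_j^2\big)^{1/2}$. By the per-step bound just proved (inductively available at every $j\le i$), $\sum_{j\le i}\alpha_j^2$ telescopes to $\frac{\max\{5,2B_2/B_1\}}{\rho^2B_1}\big(\hcL(A\lambda_0)-\hcL(A\lambda_i)\big)$, which is at most $\frac{\max\{5,2B_2/B_1\}\ell(0)}{\rho^2B_1}$ since $\hcL(A\lambda_0)=\ell(0)$ and $\hcL\ge0$; this gives the three displayed inequalities for $\|\lambda_i\|_1$. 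Moreover, because the line searches never increase $\hcL$ and $i\le t$, one has $i\,(\hcL(A\lambda_0)-\hcL(A\lambda_i))\le t\,(\hcL(A\lambda_0)-\hcL(A\lambda_t))$, so $\|\lambda_i\|_1\le R$ and the induction closes.

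The real obstacle is exactly this circularity: the per-step bounds need the $B_1$-curvature, but the hypothesis supplies curvature only for iterates of norm at most $R$, and that norm bound is itself derived from the per-step bounds. The induction above untangles it, and the point that makes it work is structural rather than computational: $R$ is built from the \emph{full} horizon $t$ and the \emph{total} decrease $\hcL(A\lambda_0)-\hcL(A\lambda_t)$, both of which dominate their partial-sum counterparts $i$ and $\hcL(A\lambda_0)-\hcL(A\lambda_i)$ precisely because $\hcL$ is non-increasing along the iterates. Everything else --- the second-order expansions, the uniform $\Delta_i\gtrsim B_1\alpha_i^2$ across the three step choices, the telescoping --- is routine, and is essentially the single-step bookkeeping already recorded in \Cref{fact:singlestep:quadub,fact:singlestep:wolfe}.
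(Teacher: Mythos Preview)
Your proposal is correct and follows essentially the same route as the paper: induction on $i$, using the curvature hypothesis on the sublevel interval $\{\alpha:\phi(\alpha)\le\phi(0)\}$ to get a quadratic lower bound on the one-dimensional restriction, then deducing step-size upper bounds for each line-search option and the norm bound via Cauchy--Schwarz plus telescoping. Your treatment is arguably cleaner in two places: you make the induction closure explicit (verifying $\|\lambda_i\|_1\le R$ via $i(\hcL(A\lambda_0)-\hcL(A\lambda_i))\le t(\hcL(A\lambda_0)-\hcL(A\lambda_t))$, which the paper leaves implicit), and your Wolfe analysis obtains $\alpha_i^2\le 4\Delta_i/B_1$ directly from the strong-convexity lower bound on $[0,\alpha_i]$, whereas the paper routes through $\Delta_i\le G^2/(2B_1)$ and the first Wolfe condition to reach $\alpha_i^2\le 9\Delta_i/(2\rho^2B_1)$; your constant is $\rho$-free and then relaxed to match the stated bound.
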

\begin{proof}
    This proof establishes both properties simultaneously by induction on $i$.
    In the base case $\lambda_i = \lambda_0 = 0$ and there is nothing to
    show, thus suppose $i \geq 1$.

    Define the interval
    $
        I_i := \{ \alpha \geq 0 : \hcL(A(\lambda_{i-1} + \alpha v_{i}))
            \leq \hcL(A\lambda_{i-1})
        \}
        $.
    Combining the inductive hypothesis (controlling $\|\lambda_{i-1}\|_1$)
    with the assumptions on line search candidates
    means the second-order lower bound $B_1$ is active along $I_i$.
    Now consider a Taylor expansion of $\hcL$,
    but in the direction reverse to
    \Cref{fact:singlestep:quadub}, and using the fact that $H$ is binary;
    then for any $\alpha \in I_i$
    and some $z\in[\lambda_{i-1},\lambda_{i-1} +  \alpha v_i]$,
    \begin{align*}
        \hcL(A(\lambda_{i-1} + \alpha v_i))
        &=
        \hcL(A\lambda_{i-1}) + \alpha \ip{A^\top\hcL(A\lambda_{i-1})}{v_i}
        + \frac {\alpha^2}{2m}\sum_i \ell''((Az)_i) (A v_i)^2
        \\
        &\geq
        \hcL(A\lambda_{i-1}) - \alpha \|A^\top\hcL(A\lambda_{i-1})\|_\infty
        + \frac {\alpha^2}{2} B_1.
    \end{align*}
    This last expression defines a univariate quadratic which lies
    below $\hcL(A(\lambda_{i-1} + \alpha v_i))$ along $I_i$ (outside of $I_i$, the
    constraints granting the lower bound $B_1$ may be violated).
    Consequently, $I_i$ is bounded,
    and the optimal step $\bar \alpha_i$ must exist, and moreover
    satisfies
    \begin{equation}
        \bar \alpha_i \leq \alpha_* :=
        \frac{\|A^\top\hcL(A\lambda_{i-1})\|_\infty}{B_1},
        \label{eq:helper:wats:z}
    \end{equation}
    where $\alpha_*$ is the minimizer to the above quadratic.
    Plugging $\alpha_*$ back into the quadratic,
    for any $\alpha\in I_i$,
    \begin{equation}
        \hcL(A(\lambda_{i-1}+\alpha v_i))
        \geq
        \hcL(A\lambda_{i-1})
        -\frac {\|A^\top\hcL(A\lambda_{i-1})\|_\infty^2}{2B_1}.
        \label{eq:helper:wats:z}
    \end{equation}

    Now consider the first two step size options in \Cref{alg:alg:alg};
    combining \cref{eq:helper:wats:z} with \Cref{fact:singlestep:quadub},
    \begin{align*}
        \alpha_i^2
        &\leq \bar\alpha_i^2
        \leq \frac {\|A^\top\hcL(A\lambda_{i-1})\|_\infty^2}{B_1^2}
        \leq \frac {2B_2(\hcL(A\lambda_{i-1}) - \hcL(A\lambda_i))}{\rho^2B_1^2}
    \end{align*}
    as desired.

    For option 3 (the Wolfe search), combining \cref{eq:wolfe:1}
    with \cref{eq:helper:wats:z} grants
    \begin{align*}
        \alpha_i^2
        &\leq
        \frac {9(\hcL(A\lambda_{i-1}) - \hcL(A\lambda_i))^2}
        {\rho^2\|A^\top\nhcL(A\lambda_{i-1})\|_\infty^2}
        \leq
        \frac {9(\hcL(A\lambda_{i-1}) - \hcL(A\lambda_i))}
        {2\rho^2B_1}
        \leq
        \frac {9\|A^\top\nhcL(A\lambda_{i-1})\|_\infty^2}
        {4\rho^2B_1^2},
    \end{align*}
    which establishes the first inductive property for all step sizes.

    The second statement is just Cauchy-Schwarz
    combined with the bound on $\alpha_i^2$:
    \begin{align*}
        \|\lambda_i\|_1
        &\leq \sum_{j=1}^i \alpha_j \|v_j\|_1
        \\
        &\leq \sqrt{i} \sqrt{\sum_{j=1}^i \alpha_j^2}
        \\
        &\leq
        \sqrt{i}
        \sqrt{
        \frac {\max\{5,2B_2/B_1\}\sum_{j=1}^i(\hcL(A\lambda_{j-1}) - \hcL(A\lambda_j))}
        {\rho^2B_1}}
        \\
        &=
        \sqrt{i}
        \sqrt{
        \frac {\max\{5,2B_2/B_1\}(\hcL(A\lambda_{0}) - \hcL(A\lambda_i))}
        {\rho^2B_1}},
    \end{align*}
    and nonnegativity of $\ell$ and the fact that all steps perform descent
    grants $\hcL(A\lambda_0) - \hcL(A\lambda_i) \leq \ell(0)$.
\end{proof}

After some algebra, the upper bound also grants a lower bound; due to this indirection,
it should be possible to improve this bound.  Note that the beginning of this derivation,
when initially lower bounding $\alpha_i$, uses derivations similar
to those used by \citet{zhang_yu_boosting} and \citet{bartlett_traskin_adaboost}.

\begin{lemma}
    \label[lemma]{fact:stepsize:basic_lb}
    Let $\ell \in \Lbds \cap \Ltd$ be given with Lipschitz gradient parameter $B_2$,
    binary class $\cH$,
    time horizon $t$,
    and empirical probability measure $\hmu$ corresponding to a sample of size $m$
    be given.
    Suppose there exists $c_2>0$ with $\|\lambda_i\|_1 \leq c_2 \sqrt{i}$ for
    all $0 \leq i \leq t$.
    Additionally, let $C_2 > 0$ and $\bar\lambda\in\Lambda$ be given with
    $\|\bar\lambda\|_1\leq C_2$
    and
    $\epsilon := \min_{i\in [t-1]} \hcL(A\lambda_i) - \hcL(A\bar\lambda) \geq 0$.
    Then
    \[
        \alpha_i
        \geq \frac {\rho (\hcL(A\lambda_{i-1}) - \hcL(A\bar\lambda))}
        {2B_2 (\|\bar\lambda\|_1 + c_2\sqrt{i-1})}
    \]
    and
    \[
        \sum_{i=1}^t \alpha_i
        \geq
        \frac {\rho \epsilon }{2B_2c_2}
        \left(
            2\sqrt{t-1}
            +
            \frac {c_2}{C_2}
            +
            \frac {2C_2}{c_2}
            \ln\left(
                \frac
                {C_2}{C_2 +c_2 \sqrt{t-1}}
            \right)
        \right),
    \]
    or more simply
    $
        \sum_{i=1}^t \alpha_i
        \geq
        (\rho \epsilon \sqrt{t-1}) / (4B_2c_2)$
        if $C_2 \leq c_2\sqrt{t-1}$.
\end{lemma}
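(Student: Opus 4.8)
The plan is to establish the per-iteration lower bound on $\alpha_i$ first, and then to sum it, estimating the resulting series by an elementary integral.

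\textbf{Per-step bound.} Whichever of the three step-size rules in \Cref{alg:alg:alg} is used, the single-step guarantees (\Cref{fact:singlestep:quadub} for options~1 and~2, \Cref{fact:singlestep:wolfe} with $c_2=1/2$ for option~3), combined with the approximate coordinate-selection inequality $-\nhcL(A\lambda_{i-1})^\top A v_i \ge \rho\|A^\top\nhcL(A\lambda_{i-1})\|_\infty$, yield a uniform bound of the shape $\alpha_i \ge \rho\|A^\top\nhcL(A\lambda_{i-1})\|_\infty/(2B_2)$; this is the ``initial lower bound on $\alpha_i$'' mentioned in the remark before the statement, and is exactly the standard coordinate-descent step estimate. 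To put it in comparator form, I would use convexity of $\hcL\circ A$ and H\"older's inequality,
\[
  \hcL(A\lambda_{i-1}) - \hcL(A\bar\lambda)
  \le \ip{A^\top\nhcL(A\lambda_{i-1})}{\lambda_{i-1}-\bar\lambda}
  \le \|A^\top\nhcL(A\lambda_{i-1})\|_\infty\bigl(\|\lambda_{i-1}\|_1+\|\bar\lambda\|_1\bigr).
\]
Since each line-search option is a descent step, $\hcL(A\lambda_j)$ is nonincreasing in $j$, so $\hcL(A\lambda_{i-1}) - \hcL(A\bar\lambda) \ge \min_{j\in[t-1]}\hcL(A\lambda_j) - \hcL(A\bar\lambda) = \epsilon \ge 0$ for every $i\in\{1,\dots,t\}$ (the index $j=0$ is dominated because $\lambda_0$ is the largest-value iterate). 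Dividing and inserting $\|\lambda_{i-1}\|_1 \le c_2\sqrt{i-1}$ gives $\|A^\top\nhcL(A\lambda_{i-1})\|_\infty \ge (\hcL(A\lambda_{i-1})-\hcL(A\bar\lambda))/(\|\bar\lambda\|_1 + c_2\sqrt{i-1})$, which with the step bound above is precisely the first displayed inequality of \Cref{fact:stepsize:basic_lb}.

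\textbf{Summation and integral estimate.} Summing the per-step bound over $i=1,\dots,t$, then replacing each numerator by $\epsilon$ and $\|\bar\lambda\|_1$ by $C_2$, reduces the claim to a lower bound on $\sum_{i=1}^{t}(C_2+c_2\sqrt{i-1})^{-1}$. I would peel off the $i=1$ term, which equals $1/C_2 = \tfrac1{c_2}\cdot\tfrac{c_2}{C_2}$ and contributes the middle term of the stated bound, and estimate the remaining (monotone) sum from below by comparison with $\int (C_2+c_2\sqrt x)^{-1}\,dx$ over a suitable interval with right endpoint near $t-1$; the substitution $u=\sqrt x$ turns this into $\tfrac2{c_2}\int\bigl(1-\tfrac{C_2}{C_2+c_2u}\bigr)du$, evaluating to $\tfrac{2\sqrt{t-1}}{c_2} + \tfrac{2C_2}{c_2^2}\ln\tfrac{C_2}{C_2+c_2\sqrt{t-1}}$. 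Assembling the pieces gives the detailed lower bound. For the final ``more simply'' clause, assume $C_2 \le c_2\sqrt{t-1}$, discard the nonnegative $c_2/C_2$ term, and control the negative logarithmic term: writing $x := c_2\sqrt{t-1}/C_2 \ge 1$, one has $\tfrac{2C_2}{c_2}\ln\tfrac{C_2}{C_2+c_2\sqrt{t-1}} = -\tfrac{2\sqrt{t-1}}{x}\ln(1+x) \ge -\tfrac32\sqrt{t-1}$, using $\ln(1+x)\le\tfrac34 x$ for $x\ge1$ (true at $x=1$, and the gap grows since $\tfrac34 \ge \tfrac1{1+x}$ there); this leaves $2\sqrt{t-1}-\tfrac32\sqrt{t-1} = \tfrac12\sqrt{t-1}$ inside the parentheses, i.e.\ $\sum_i\alpha_i \ge \rho\epsilon\sqrt{t-1}/(4B_2c_2)$.

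\textbf{Main obstacle.} The arithmetic is routine once the right quantities are in play; the two delicate points are (a) justifying $\hcL(A\lambda_{i-1}) - \hcL(A\bar\lambda) \ge \epsilon$ for \emph{every} index in the sum, which leans on the descent property of all three step rules and on $\epsilon\ge0$ (the index $0$ lies outside the set $[t-1]$ defining $\epsilon$), and (b) pinning down the sum-versus-integral comparison so that it produces the $\sqrt{t-1}$-form with the stated constants rather than a looser $\sqrt{t}$-type bound — this requires care about which interval the integral runs over and about absorbing the off-by-one into the separately handled $i=1$ contribution.
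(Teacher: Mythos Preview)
Your proposal is correct and follows essentially the same route as the paper: the per-step bound via the line-search lemmas together with convexity and H\"older, followed by the integral comparison for $\sum_i (C_2 + c_2\sqrt{i-1})^{-1}$. The only noteworthy divergence is in the final ``more simply'' clause: the paper re-instantiates the detailed bound with the larger value $C_2' := c_2\sqrt{t-1}$ (valid since $\|\bar\lambda\|_1 \le C_2 \le C_2'$) and then uses $1-\ln 2 \ge 1/4$, whereas you bound the logarithmic term directly via $\ln(1+x)\le \tfrac34 x$ for $x\ge 1$; both arguments land on the same constant.
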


\begin{proof}
    For any $1 \leq i \leq t$, note by \Cref{fact:AT} that
    \begin{align*}
        \|A^\top \nhcL(A\lambda_{i-1})\|_\infty
        &= \sup_{\|\lambda\|_1\leq 1} \ip{A^\top \nhcL(A\lambda_{i-1})}{\lambda}
        \\
        &\geq \frac {1}{\|\lambda_{i-1} - \bar \lambda\|_1}
        \ip{\nhcL(A\lambda_{i-1})}{A(\lambda_{i-1} - \bar \lambda)}
        \\
        &\geq \frac {1}{\|\lambda_{i-1} - \bar \lambda\|_1}
        (\hcL(A\lambda_{i-1}) - \hcL(A\bar\lambda)).
    \end{align*}
    Thus, by the lower bounds
    in \Cref{fact:singlestep:quadub,fact:singlestep:wolfe} for every step size,
    \[
        \alpha_i
        \geq \frac {\rho \|A^\top \nhcL(A\lambda_{i-1})\|_\infty}{2B_2}
        \geq \frac {\rho (\hcL(A\lambda_{i-1}) - \hcL(A\bar\lambda))}
        {2B_2 \|\lambda_{i-1} - \bar \lambda\|_1}
        \geq \frac {\rho \epsilon}
        {2B_2 \|\lambda_{i-1} - \bar \lambda\|_1}.
    \]
    Combining this with the provided upper bound on $\|\lambda_i\|_1$,
    \[
        \alpha_i
        \geq \frac {\rho \epsilon}
        {2B_2 (\|\bar\lambda\|_1 + \|\lambda_{i-1}\|_1)}
        \geq \frac {\rho \epsilon}
        {2B_2 (\|\bar\lambda\|_1 + c_2\sqrt{i-1})}
    \]
    As a consequence of this, and recalling the simplification $\|\bar\lambda\|_1\leq C_2$,
    \begin{align*}
        \sum_{i=1}^t \alpha_i
        &\geq \frac {\rho \epsilon }{2B_2c_2}
        \left(
            \frac {c_2}{C_2}
            +
            \sum_{i=1}^{t-1} \frac {c_2}{C_2 + c_2\sqrt{i}}
        \right)
        \\
        &\geq
        \frac {\rho \epsilon }{2B_2c_2}
        \left(
            \frac {c_2}{C_2}
            +
            \int_0^{t-1} \frac {c_2dx}{C_2 + c_2\sqrt{x}}
        \right)
        \\
        &=
        \frac {\rho \epsilon }{2B_2c_2}
        \left(
            \frac {c_2}{C_2}
            +
            \left.
            \left(
            2\sqrt{x} - \frac{2 C_2\ln(\sqrt{x} + C_2/c_2)
            }{c_2}
            \right)\right|_{0}^{t-1}
        \right)
        \\
        &=
        \frac {\rho \epsilon }{2B_2c_2}
        \left(
            \frac {c_2}{C_2}
            +
            2\sqrt{t-1}
            +
            \frac {2C_2}{c_2}
            \ln\left(
                \frac
                {C_2}{C_2 +c_2 \sqrt{t-1}}
            \right)
        \right).
    \end{align*}
    When $C_2\leq c_2\sqrt{t-1}$, it suffices to instantiate the above bound
    with $C_2' := c_2\sqrt{t-1}$ (whereby it still holds that
    $\|\bar\lambda\|_1\leq C_2\leq C_2'$),
    and then rearrange,
    noting
    $1-\ln(2) \geq 1/4$
    and deleting the nonnegative standalone term $c_2/C_2$.
\end{proof}

By combining the above chain of results, the proof of \Cref{fact:nonsep:controls} follows.

\begin{proofof}{\Cref{fact:nonsep:controls}}
    Recalling the structure from \Cref{fact:duality:body},
    let dual optimum $\bar p$ and real $\tau>0$ be
    given so that $\mu([\bar p \geq \tau]) \geq\tau$.
    By Hoeffding's inequality and the first lower bound on $m$,
    with probability at least $1-\delta/4$
    \[
        \hmu([\bar p \geq  \tau])
        \geq \mu([\bar p\geq\tau]) - \sqrt{\frac 1 {2m} \ln \left(\frac 4 \delta\right)}
        \geq \frac \tau 2.
    \]
    Henceforth disregard the corresponding failure event.

    Next define
    \[
        D:=
        \frac{2(R_t + 2c)\|\bar p\|_\infty}{m^{1/2}}\left(
            2\sqrt{2\cV(\cH)\ln(m+1)} +\sqrt{2\ln(4/\delta)}
        \right).
    \]
    The second condition on $m$ grants $D \leq c\tau^2/8$,
    whereby \Cref{fact:pdc:body} grants,
    with probability at least $1-\delta/4$,
    that
    every $\lambda\in\Lambda$ with $\|\lambda\|_1 \leq R_t+ 4c$
    satisfies
    \[
        \left|\int (A \lambda) \bar p d\hmu\right|
        \leq D \leq \frac {c\ttil^2}{8}.
    \]
    Discard the corresponding failure event as well; unioning this with the earlier
    failure event, the remaining steps hold with probability at least $1-\delta/2$.

    It follows from \Cref{fact:nonsep:ptil:smallness}
    that every such $\lambda\in\Lambda$ with $\|\lambda\|_1\leq R_t + 2c$ either
    satisfies
    $\hmu([ |H\lambda| \geq c]) < 1 - \ttil/8$,
    or else $\hcL(A\lambda) \geq 2\ell(0)$.
    This in turn means the preconditions to \Cref{fact:nonsep:control_linesearch}
    are met (with $C:= R_t$),
    and in particular, for any $\lambda \in \Lambda$ with $\|\lambda\|_1 \leq R_t$
    and $\hcL(A\lambda) \leq \hcL(A\lambda_0)$,
    the set of line search candidates
    \[
        S_\lambda := \left\{
            \lambda' \in \Lambda : \hcL(A\lambda') < 2\ell(0),
            \exists \alpha \in \R, h\in \cH \centerdot \lambda' := \lambda + \alpha \bfe_h
        \right\}
    \]
    satisfies $\hmu([|H\lambda'| \geq c]) < 1-\tau/8$ for every $\lambda' \in S_\lambda$.
    But then, for every $\lambda' \in S_\lambda$ (and note $\lambda \in S_\lambda$),
    \begin{align}
        \frac 1 m \sum_{i\in [m]} \ell''((A\lambda)_{x_i,y_i})
        \geq
        \frac 1 m
        \sum_{\substack{i\in [m] \\ |(A\lambda)_{x_i,y_i}| \leq c}}
        \ell''((A\lambda)_{x_i,y_i})
        \geq
        \hmu([|H\lambda| \leq c]) \inf_{z\in [-c,+c]} \ell''(z)
        \geq B_1.
        \notag
    \end{align}
    This establishes the first desired statement.

    For the second statement (upper bounds on $\alpha_i$ and $\|\lambda_i\|_1$),
    note that the above properties
    satisfy the preconditions to \Cref{fact:stepsize:basic_ub},
    whereby the desired upper bounds follow.

    Similarly, for the third statement (lower bounds on $\alpha_i$ and $\|\lambda_i\|_1$),
    the preconditions for \Cref{fact:stepsize:basic_lb} are now met.
\end{proofof}

\subsection{Optimization Guarantees}
As stated previously, the following proof is a reworking
of a proof due to \citet{zhang_yu_boosting}, albeit with the present decoupling of line
search and coordinate selection.

\begin{proofof}{\Cref{fact:nonsep:opt:zhangyu_style}}
    Let $1\leq i \leq t$ be arbitrary.
    The first step of this proof is to develop two lower bounds on
    $\|A^\top \nhcL(A\lambda_{i-1})\|_\infty$.
    First, just as in the proof of
    \Cref{fact:stepsize:basic_lb},
    \begin{align*}
        \|A^\top \nhcL(A\lambda_{i-1})\|_\infty
        &= \sup_{\|\lambda\|_1\leq 1} \ip{A^\top \nhcL(A\lambda_{i-1})}{\lambda}
        \\
        &\geq \frac {1}{\|\lambda_{i-1} - \bar \lambda\|_1}
        \ip{\nhcL(A\lambda_{i-1})}{A(\lambda_{i-1} - \bar \lambda)}
        \\
        &\geq \frac {1}{\|\lambda_{i-1} - \bar \lambda\|_1}
        (\hcL(A\lambda_{i-1}) - \hcL(A\bar\lambda)).
    \end{align*}
    The second lower bound is provided by assumption, and thus,
    by the guarantee on any line search as in
    \Cref{fact:singlestep:quadub,fact:singlestep:wolfe},
    \begin{align*}
        \hcL(A\lambda_{i}) - \hcL(A\bar\lambda)
        &\leq
        \hcL(A\lambda_{i-1}) - \hcL(A\bar\lambda)
        - \frac {\rho^2\|A^\top \nhcL(A\lambda_{i-1})\|_\infty^2}{6B_2}
        \\
        &\leq
        \hcL(A\lambda_{i-1}) - \hcL(A\bar\lambda)
        - \frac {c_3\rho^2\alpha_i(\hcL(A\lambda_{i-1}) - \hcL(A\bar\lambda))}
        {6B_2\|\bar \lambda - \lambda_{i-1}\|_1}
        \\
        &=
        \left(
            \hcL(A\lambda_{{i-1}}) - \hcL(A\bar\lambda)
        \right)
        \left(
            1
            - \frac {c_3\rho^2\alpha_i}
            {6B_2\|\bar \lambda - \lambda_{i-1}\|_1}
        \right)
        \\
        &\leq
        \left(
            \hcL(A\lambda_{i-1}) - \hcL(A\bar\lambda)
        \right)
        \exp
        \left(
            - \frac {c_4\alpha_i}
            {\|\bar \lambda - \lambda_{i-1}\|_1}
        \right).
    \end{align*}
    where the last step took $c_4 := c_3 \rho^2/(6B_2)$ for convenience.
    Iterating this bound,
    \begin{align*}
        \hcL(A\lambda_{t}) - \hcL(A\bar\lambda)
        &\leq
        \left(\hcL(A\lambda_{0}) - \hcL(A\bar\lambda)\right)
        \exp
        \left(
            - c_4\sum_{i=1}^{t}\frac {\alpha_i}
            {\|\bar \lambda - \lambda_{i-1}\|_1}
        \right).
    \end{align*}
    Focusing on the summation, define $S_i := \sum_{j\leq i} \alpha_i$ with
    $S_0 := 0$, whereby $\|\lambda_i\|_1 \leq S_i$.   Using this
    (see also the similar derivation by \citet[Proof of Lemma 4.2]{zhang_yu_boosting}),
    \begin{align*}
        \sum_{i=1}^{t}\frac
        {\alpha_i}{\|\bar \lambda\|_1 + \|\lambda_{i-1}\|_1}
        &\geq
        \sum_{i=1}^{t}\frac
        {\alpha_i}{\|\bar \lambda\|_1 + S_{i-1}}
        \\
        &=
        \sum_{i=1}^{t}\left(
            \frac
            {\|\bar \lambda\|_1 + S_i}
            {\|\bar \lambda\|_1 + S_{i-1}}
            -
            \frac
            {\|\bar \lambda\|_1 + S_{i-1}}
            {\|\bar \lambda\|_1 + S_{i-1}}
        \right)
        \\
        &\geq
        \sum_{i=1}^{t}\ln\left(
            \frac
            {\|\bar \lambda\|_1 + S_i}
            {\|\bar \lambda\|_1 + S_{i-1}}
        \right)
        =
        \ln\left(
            \frac
            {\|\bar \lambda\|_1 + S_i}
            {\|\bar \lambda\|_1}
        \right).
    \end{align*}
    Plugging this into the preceding display and collecting terms,
    the result follows.
\end{proofof}

Note that the substitution $\|\lambda_i\|_1\leq S_i$ at the end of the proof
of \Cref{fact:nonsep:opt:zhangyu_style}
works around the fact that $\sum_i \alpha_i$ could be much larger than
$\|\lambda_i\|_1$; this issue is frequently avoided in the literature by assuming
that $\cH$ is closed under negation, whereby $v_i=\bfe_{h_i}$ in each
round (i.e., rather than $v_i \in \{\pm \bfe_{h_i}\}$).

\subsection{Statistical Guarantees}

\begin{proofof}{\Cref{fact:nonsep:basic}}
    Let $\sigma>0$ be arbitrary, and
    choose $\bar\lambda \in \Lambda$ with $\|\bar\lambda\|_1\leq R_{t-1}$
    so that
    \[
        \cL(A\bar\lambda) \leq \sigma + \inf_{\|\lambda\|_1 \leq R_{t-1}} \cL(A\lambda).
    \]
    By McDiarmid's inequality and the fact that
    $\sup_{x,y} |(A\bar\lambda)_{x,y}| \leq R_{t-1}$, with probability at least
    $1-\delta/6$,
    \[
        \hcL(A\bar\lambda)
        \leq
        R_{t-1} \sqrt{\frac 2 m \ln\left( \frac 6 \delta\right)}
        + \cL(A\bar\lambda) \leq
        \sigma +
        R_{t-1} \sqrt{\frac 2 m \ln\left( \frac 6 \delta\right)}
        + \inf_{\|\lambda\|_1 \leq R_{t-1}} \cL(A\lambda).
    \]
    Now let $b\in (0,1/2)$ be arbitrary,
    and consider two cases for the difference
    $\epsilon := \min_{i\in [t-1]} \hcL(A\lambda_i) - \hcL(A\bar\lambda)$.
    \begin{itemize}
        \item If $\epsilon \leq (t-1)^{-b}$, then
            \[
                \hcL(A\lambda_t)
                \leq \hcL(A\lambda_{t-1}) \leq (t-1)^{-b} + \hcL(A\bar\lambda).
            \]
        \item Otherwise $\epsilon > (t-1)^{-b}$.  Thus
            \Cref{fact:nonsep:controls} grants, with probability at least $1-\delta/2$,
            \[
                \alpha_i^2
                \leq
                \frac {9\|A^\top \nhcL(A\lambda_{i-1})\|_\infty^2}{4\rho^2 B_1^2}
                \qquad\textup{and}\qquad
                \sum_{i=1}^t \alpha_i \geq
                \frac {\rho \epsilon \sqrt{t-1}}{4B_2R_1},
            \]
            which can then  be plugged into \Cref{fact:nonsep:opt:zhangyu_style}
            (with $c_3 := 2\rho B_1/3$),
            together with the lower bound on $\epsilon$, to yield
            \begin{align*}
                \hcL(A\lambda_{t}) - \hcL(A\bar\lambda)
                &\leq
                \left(\hcL(A\lambda_{0}) - \hcL(A\bar\lambda)\right)
                \left(
                    \frac
                    {\|\bar \lambda\|_1}
                    {\|\bar \lambda\|_1 + \sum_{i\leq t} \alpha_i}
                \right)^{6B_2/ (c_3\rho^2)}
                \\
                &\leq
                \left(\hcL(A\lambda_{0}) - \hcL(A\bar\lambda)\right)
                \left(
                    \frac
                    {\|\bar \lambda\|_1}
                    {\|\bar \lambda\|_1 + \rho (t-1)^{1/2-b} / (4B_2R_1)}
                \right)^{9B_2/ (B_1\rho^3)}.
            \end{align*}
    \end{itemize}
    Summing these two bounds gives a relation which holds in general;
    consequently, with probability at least $1-\delta/2$ (due to the
    invocation of \Cref{fact:nonsep:controls}),
    \begin{align*}
        \hcL(A\lambda_{t}) - \hcL(A\bar\lambda)
        \leq (t-1)^{-b}
        +
        \ell(0)
        \left(
            \frac
            {\|\bar \lambda\|_1}
            {\|\bar \lambda\|_1 + \rho (t-1)^{1/2-b} / (4B_2R_1)}
        \right)^{9B_2/ (B_1\rho^3)}.
    \end{align*}
    The first desired claim follows (with probability $1-5\delta/6$)
    by recalling the earlier application of McDiarmid's inequality to $\bar\lambda$,
    combined with $\sigma \downarrow 0$,
    the choice $b:= 1/4$ (which is not optimal, but neither is the exponent
    $9B_2/ (B_1\rho^3)$), and
    standard Rademacher bounds for voting classifiers applied to Lipschitz losses
    \citep[Theorem 4.1, eq. (8), and their proofs, which control for $\cL$]{bbl_esaim},
    which makes use of the bound $\|\lambda_t\|_1 \leq R_t$ (granted by the earlier
    instantiation of \Cref{fact:nonsep:controls}), and simplifying via $t-1 \leq t$,
    \begin{align*}
        \cL(A\lambda_{t})
        &\leq \inf_{\|\lambda\|_1 \leq R_{t-1}} \cL(A\bar\lambda)
        + t^{-1/4}
        + R_{t-1} \sqrt{\frac 2 m \ln\left( \frac 6 \delta\right)}
        \\
        &\qquad
        + \frac {2\beta_2R_t}{m^{1/2}}
        \left(2\sqrt{2\cV(\cH)\ln(m+1)} + \ell(R_t)\sqrt{2\ln(6/\delta)}\right)
        \\
        &\qquad
        + \ell(0)
        \left(
            \frac
            {\|\bar \lambda\|_1}
            {\|\bar \lambda\|_1 + \rho t^{1/4} / (4B_2R_1)}
        \right)^{9B_2/ (B_1\rho^3)}.
    \end{align*}
    Plugging in $t=m^a$ gives the first result.

    For the second guarantee,
    since $\cH\in \Fds(\mu^{\cX})$, then
    \Cref{fact:Fds_weakening} grants
    \[
        \inf \{ \cL(A\lambda) : \lambda\in\Lambda\}
        =
        \inf \left\{ \int \ell (-yf(x)) d\nu(x,y)
        : f \in \Fb\right\},
    \]
    where $\Fb$ is the family of Borel measurable functions from $\cX$ to $\R$.
    From here, since $\ell$ is \emph{classification calibrated}
    \citep[Theorem 2, noting that the present manuscript instead
    takes losses to be nondecreasing rather than nonincreasing]{bartlett_jordan_mcauliffe},
    there exists a function $\psi:\R \to \R$ satisfying
    \[
        \psi\left(
            \cR(A\lambda_t) - \bar\cR 
        \right)
        \leq
        \cL(A\lambda_t) - \inf_{f\in\Fb} \int \ell(-yf(x))d\mu(x,y)
         =
         \cL(A\lambda_t) - \inf_{\lambda\in\Lambda} \cL(A\lambda)
    \]
    (where the last step used the previous display),
    and moreover $\psi(z) \to 0$ as $z \to 0$
    \citep[Theorem 1]{bartlett_jordan_mcauliffe}.
    The specialization for $\llog$ is due to
    \citet[Subsection 3.5 and Corollary 3.1]{zhang_convex_consistency}.

    The final guarantee follows by applying a version of the VC theorem to
    the predictors, and follows the exact strategy as in \Cref{fact:sep:basic}
    (but using failure probability $\delta/6$),
    and making use of the equality
    \[
        \cR(H\hat\lambda) - \cR(H\lambda_t)
        =
        (\cR(H\hat\lambda) - \hcR(H\hat\lambda))
        +
        (\hcR(H\hat\lambda) - \hcR(H\lambda_t))
        +
        (\hcR(H\lambda_t) - \cR(H\lambda_t)),
    \]
    and the fact that $\hcR(H\hat\lambda) \leq \hcR(H\lambda_t)$.
\end{proofof}

\section{Proof of Consistency}

\begin{proofof}{\Cref{fact:consistency}}
    This proof is a standard application of the Borel-Cantelli Lemma; for
    an exposition on such applications, please see the
    proof of consistency of
    AdaBoost due to \citet[Proof of Corollary 12.3]{schapire_freund_book_final}.
    In particular, let any $\epsilon >0$ be given, and let $E_{m,\epsilon}$
    be the event that the output $\hat\lambda_m$, trained on $m$ examples,
    has classification risk $\cR(H\hat\lambda_m)$ exceeding the Bayes risk
    $\cR_\star$ by more than $\epsilon$;
    to prove consistency, it suffices (thanks to Borel-Cantelli)
    to show that $\sum_m \Pr(E_{m,\epsilon}) < \infty$.

    There are two cases to consider: either $\bar\cL =0$, or $\bar \cL > 0$.
    In the case that $\bar\cL = 0$, instantiate the finite sample guarantee
    in \Cref{fact:sep:basic} for each $m\geq 1$ with $\epsilon/2$ and $\delta:= 1/m^2$;
    there is a real $M<\infty$ where $m> M$ provides the preconditions on
    the bound are met and the bound is at most $\epsilon$, and thus
    \[
        \sum_{m=1}^\infty \Pr(E_{m,\epsilon})
        \leq \sum_{m=1}^M 1
        + \sum_{m=M+1}^\infty \frac 1 {m^2}
        \leq M + \frac {\pi^2}{6} < \infty.
    \]

     When $\bar \cL>0$, once again instantiate a relevant finite sample
     guarantee, this time from \Cref{fact:nonsep:basic}, with
     $\delta := 1/m^2$.  It will be necessary to use all three guarantees;
     first, let $M_1$ be sufficiently large so that the third guarantee
     provides $\cR(H\hat\lambda_m) \leq \cR(H\lambda_m') + \epsilon/2$
     with failure probability $m^{-2}$ for
     all $m> M_1$, where $\lambda'_m$ is the last iterate considered by the
     algorithm when run on $m$ examples (thus $\lambda'_m$ is basically $\lambda_{m^a}$,
     modulo rounding issues).  Next, by the second guarantee, there exists
     $\epsilon'$ small enough so that
     \[
         \epsilon' \geq \psi(\cR(A\lambda'_m) - \cR_\star)
         \qquad\Longrightarrow
         \qquad
         \epsilon \geq \cR(A\lambda'_m) - \cR_\star.
     \]
     As such, now consider the first guarantee, where the goal will be to
     establish that $\cL(A\lambda'_m) - \bar\cL \leq \epsilon'$
     for all large $m$.
     But note firstly that the quantity $R_{t-1} \to \infty$ as $m\to\infty$,
     which combined with
     \[
         \inf \left \{\cL(A\lambda) : \lambda \in \Lambda\right\}
         =
         \inf_{C > 0}
         \inf \left \{\cL(A\lambda) : \lambda \in \Lambda, \|\lambda\|_1 \leq C\right\},
     \]
     grants that, if $m> M_2$ (for some $M_2$), then
     $\inf_{\|\lambda\|\leq R_t} \cL(A\lambda) \leq \bar\cL + \epsilon'/2$.
     Finally, the rest of the terms in the first guarantee are at most $\epsilon'/2$ for
     $m> M_3$ (for some $M_3$) with the same $m^{-2}$ failure probability.  As such,
     similarly to before,
     $\sum_m \Pr(E_{m,\epsilon}) \leq \max\{M_1,M_2,M_3\}+\pi^2/6 < \infty$.
\end{proofof}

\end{document}